\theoremstyle{plain}
\newtheorem{theorem}{Theorem}[section]
\newtheorem{proposition}[theorem]{Proposition}
\newtheorem{lemma}[theorem]{Lemma}
\newtheorem{assumption}[theorem]{Assumption}
\theoremstyle{plain}
\newtheorem{definition}[theorem]{Definition}
\theoremstyle{definition}
\newtheorem{remark}[theorem]{Remark}
\numberwithin{equation}{section}
\long\def\acks#1{\vskip 0.3in\noindent{\large\bf Acknowledgments}\vskip 0.2in
\noindent #1}
\newcommand{\JN}[1]{\mathcal{J}^N_{#1}}
\newcommand{\J}[1]{\mathcal{J}^*_{#1}}
\newcommand{\Q}[1]{\mathcal{Q}^*_{#1}}
\newcommand{\uN}[1]{u^N_{#1}}
\newcommand{\uNv}[2]{u^N_{#1}({#2})}
\newcommand{\up}[1]{u^*_{#1}}
\newcommand{\upu}[2]{u^{*,#2}_{#1}}
\newcommand{\upp}[2]{u^*_{#1}({#2})}
\newcommand{\uppu}[3]{u^{*,#2}_{#1}({#3})}
\newcommand{\upt}[1]{\widetilde{u}^*_{#1}}
\newcommand{\uNt}[1]{\widetilde{u}^N_{#1}}
\newcommand{\uptu}[2]{\widetilde{u}^{*,#2}_{#1}}
\newcommand{\uNhat}[1]{\widehat{u}^N_{#1}}
\newcommand{\uNhatt}[1]{\widehat{\widetilde{u}}^N_{#1}}
\newcommand{\uNhatv}[2]{\widehat{u}^N_{#1}(#2)}
\newcommand{\uhatp}[1]{\widehat{u}^*_{#1}}
\newcommand{\uppt}[2]{\widetilde{u}^*_{#1}({#2})}
\newcommand{\uNpt}[2]{\widetilde{u}^N_{#1}({#2})}
\newcommand{\upptu}[3]{\widetilde{u}^{*,#2}_{#1}({#3})}
\newcommand{\uhatpu}[2]{\widehat{u}^{*,#2}_{#1}}
\newcommand{\uhatpp}[2]{\widehat{u}^*_{#1}({#2})}
\newcommand{\uhatppt}[2]{\widehat{\widetilde{u}}^*_{#1}({#2})}
\newcommand{\uNhatpt}[2]{\widehat{\widetilde{u}}^N_{#1}({#2})}
\newcommand{\uhatpptu}[3]{\widehat{\widetilde{u}}^{*,#2}_{#1}({#3})}
\newcommand{\uhatpt}[1]{\widehat{\widetilde{u}}^*_{#1}}
\newcommand{\uhatptu}[2]{\widehat{\widetilde{u}}^{*,#2}_{#1}}
\newcommand{\whatp}[1]{\widehat{w}^*_{#1}}
\newcommand{\whatpp}[2]{\widehat{w}^*_{#1}({#2})}
\newcommand{\vhatp}[1]{\widehat{v}^*_{#1}}
\newcommand{\vhatpp}[2]{\widehat{v}^*_{#1}({#2})}
\newcommand{\uhatpReverted}[1]{\underbar{\widehat{u}}^*_{#1}}
\newcommand{\uhatptReverted}[1]{\widehat{\underbar{\widetilde{u}}}^*_{#1}}
\newcommand{\uhatptuReverted}[2]{\widehat{\underbar{\widetilde{u}}}^{*,#2}_{#1}}
\newcommand{\uhatppReverted}[2]{\underbar{\widehat{u}}^*_{#1}({#2})}
\newcommand{\uhatpptuReverted}[3]{\underbar{\widehat{\widetilde{u}}}^{*,#2}_{#1}({#3})}
\newcommand{\vhatpReverted}[1]{\underbar{\widehat{v}}^*_{#1}}
\newcommand{\vhatppReverted}[2]{\underbar{\widehat{v}}^*_{#1}({#2})}
\newcommand{\uhatpptReverted}[2]{\widehat{\underbar{\widetilde{u}}}^*_{#1}({#2})}
\newcommand{\overbar}[1]{\makebox[0pt]{$\phantom{#1}\mkern 1.5mu\overline{\mkern-1.5mu\phantom{#1}\mkern-1.5mu}\mkern 1.5mu$}#1}
\renewcommand{\underbar}[1]{\makebox[0pt]{$\phantom{#1}\mkern 1.5mu\underline{\mkern-1.5mu\phantom{#1}\mkern-1.5mu}\mkern 1.5mu$}#1}
\newcommand{\overbarscript}[1]{\mkern 1.5mu\overline{\mkern-1.5mu#1\mkern-1.5mu}\mkern 0mu}
\newcommand{\dummy}{\mathord{\color{black!33}\bullet}}
\providecommand*{\vol}[1]{\operatorname{vol}\!\left({#1}\right)}
\providecommand{\esssup}{\operatorname*{ess\,sup}}
\providecommand{\limsup}{\operatorname*{lim\,sup}}
\providecommand{\fpartial}[1]{\partial_{#1}}
\providecommand{\fpartiall}[1]{\partial^2_{#1}}
\providecommand{\fd}[1]{\frac{d}{d {#1}}}
\providecommand{\CA}{{\cal A}}
\providecommand{\CB}{{\cal B}}
\providecommand{\CC}{{\cal C}}
\providecommand{\CJ}{{\cal J}}
\providecommand{\CL}{{\cal L}}
\providecommand{\CM}{{\cal M}}
\providecommand{\CN}{{\cal N}}
\providecommand{\CP}{{\cal P}}
\providecommand{\CQ}{{\cal Q}}
\providecommand{\CS}{{\cal S}}
\providecommand{\CT}{{\cal T}}
\providecommand{\CU}{{\cal U}}
\providecommand{\CV}{{\cal V}}
\providecommand{\CX}{{\cal X}}
\providecommand{\bbE}{\mathbb{E}}
\providecommand{\bbN}{\mathbb{N}}
\providecommand{\bbR}{\mathbb{R}}
\providecommand*{\abs}[1]{\left|{#1}\right|}
\providecommand*{\absbig}[1]{\big|{#1}\big|}
\providecommand*{\absnormal}[1]{|{#1}|}
\providecommand*{\N}[1]{\left\|{#1}\right\|} 
\providecommand*{\Nnormal}[1]{\|{#1}\|} 
\providecommand*{\Nbig}[1]{\big\|{#1}\big\|} 
\providecommand*{\abs}[1]{\left|{#1}\right|} 
\title{\usefont{OT1}{bch}{b}{n}
	\huge
    Global Convergence of Adjoint-Optimized Neural PDEs%
    \footnote{This article is part of the project ``DMS-EPSRC: Asymptotic Analysis of Online Training Algorithms in Machine Learning: Recurrent, Graphical, and Deep Neural Networks'' (NSF DMS-2311500).}
}
\date{}
\author[1]{Konstantin Riedl\thanks{Email: \texttt{Konstantin.Riedl@maths.ox.ac.uk}}}
\author[1]{Justin Sirignano\thanks{Email: \texttt{Justin.Sirignano@maths.ox.ac.uk}}}
\author[2]{Konstantinos Spiliopoulos\thanks{Email: \texttt{kspiliop@bu.edu}}}
\affil[1]{University of Oxford, Mathematical Institute}
\affil[2]{Boston University, Department of Mathematics \& Statistics}
\begin{document}
\maketitle

\begin{abstract}
\noindent
    Many engineering and scientific fields have recently become interested in modeling terms in partial differential equations (PDEs) with neural networks, which requires solving the inverse problem of learning neural network terms from observed data in order to approximate missing or unresolved physics in the PDE model.
    The resulting neural-network PDE model, being a function of the neural network parameters, can be calibrated to the available ground truth data by optimizing over the PDE using gradient descent, where the gradient is evaluated in a computationally efficient manner by solving an adjoint PDE.
    These neural PDE models have emerged as an important research area in scientific machine learning.
    In this paper, we study the convergence of the adjoint gradient descent optimization method for training neural PDE models in the limit where both the number of hidden units and the training time tend to infinity.
    Specifically, for a general class of nonlinear parabolic PDEs with a neural network embedded in the source term, we prove convergence of the trained neural-network PDE solution to the target data (i.e., a global minimizer).
    The global convergence proof poses a unique mathematical challenge that is not encountered in finite-dimensional neural network convergence analyses due to (i) the neural network training dynamics involving a non-local neural network kernel operator in the infinite-width hidden layer limit where the kernel lacks a spectral gap for its eigenvalues and (ii) the nonlinearity of the limit PDE system,
    which leads to a non-convex optimization problem in the neural network function even in the infinite-width hidden layer limit (unlike in typical neural network training cases where the optimization problem becomes convex in the large neuron limit).
    The theoretical results are illustrated and empirically validated by numerical studies.
\end{abstract}

{\noindent\small{\textbf{Keywords:} neural PDEs, neural-network PDEs, nonlinear PDEs, neural network terms, inverse problem, adjoint gradient descent method, infinite-width hidden layer limit}}\\

{\noindent\small{\textbf{AMS subject classifications:} 49M41, 35Q93, 68T07, 90C26, 35K55}}


\section{Introduction}
\label{sec:introduction}

Motivated by the remarkable successes of machine learning and deep learning \cite{lecun2015deep} in speech and image recognition~\cite{hinton2012deep,krizhevsky2012imagenet}, computer vision~\cite{krizhevsky2012imagenet}, natural language processing~\cite{vaswani2017attention}, and biology~\cite{jumper2021highly},
researchers and practitioners have sought to leverage and extend machine learning techniques to scientific disciplines,
leading to the emergence of the field of scientific machine learning (SciML)~\cite{karniadakis2021physics},
where data-driven methods are integrated with physics-based modeling.
SciML seeks to develop machine learning methods with physics-based modeling, integrating the modeling flexibility of neural networks (NNs) and large real-world datasets with well-established partial differential equation (PDE) models derived from physics.
The field has developed a variety of different approaches.
Physics-informed neural networks (PINNs)~\cite{raissi2019physics,karniadakis2021physics,lu2021deepxde,cuomo2022scientific,wang2023expert,jiang2023global,zhao2025convergence}, the deep Galerkin method \cite{sirignano2018dgm}, the
deep Ritz method \cite{yu2018deep}, and neural Q-learning~\cite{cohen2023neural}
exploit the property of an NN as a universal function approximator to approximate the PDE solution of a known PDE with an NN by training the NN parameters to satisfy the differential operator as well as initial and boundary conditions.
In some scenarios, however, the PDE governing the physical phenomenon of interest is either entirely unknown or only partially accessible, leading to an inverse problem.
When the PDE is unknown, operator learning~\cite{boulle2023mathematical,li2020fourier} attempts to learn, leveraging the universality of neural operators~\cite{lu2021learning,kovachki2023neural} as approximators of nonlinear mappings, the PDE solution operator from observed data by minimizing a suitable loss.
In the case of hidden, incomplete or unclosed physics, as in the setting of this paper,
the PDE structure is known while certain terms are unknown (such as coefficients or source terms of the PDE).
Leveraging again the property of NNs as universal function
approximators, NNs can be trained to model the unknown PDE terms using observed data~\cite{brunton2016discovering,schaeffer2017learning,Brunton_Kutz_2019,champion2019data,duraisamy2019turbulence,brunton2020machine,sun2020neupde,srivastava2021generalizable,duraisamy2021perspectives,dong2022optimization,sirignano2023pde,aarset2023learning,dong2024descent,holler2024uniqueness}.

Integration of machine learning with PDEs in science and engineering, combined with
the growing availability of large datasets from field measurements, experiments, and high-fidelity numerical simulations,
can yield more accurate engineering models and inform physical models with data-driven insights~\cite{brunton2016discovering,schaeffer2017learning,champion2019data,brunton2020machine,Brunton_Kutz_2019,sirignano2023pde} across a diverse range of application domains.
NN terms in the PDE can be trained to learn missing, unknown, or unrepresented physics and correct numerical discretization errors.
In computational fluid dynamics, for instance,
NNs are introduced into the governing equations to represent the unclosed terms in PDE models of
turbulent flows such as Reynolds-averaged Navier-Stokes (RANS) and Large-eddy Simulation
(LES) equations~\cite{tracey2015machine,duraisamy2019turbulence,srivastava2021generalizable,kochkov2021machine,duraisamy2021perspectives,sirignano2023deep,hickling2024large,kakka2025neural,nair2023deep,nair2024adjoint}.
Neural PDE or neural-network PDE (NN-PDE) models also have applications
in finance, economics, and biology~\cite{gierjatowicz2020robust,goswami2021data,cohen2023arbitrage,fan2024machine}.

Since the solution of the NN-PDE is a function of parameters of the NN which models certain terms/coefficients in the PDE,
the NN parameters must be calibrated such that the NN-PDE solution matches the available ground truth data as closely as possible.
Such target data may come from real-world experiments or high-fidelity numerical simulations.
In order to solve this inverse problem, i.e., calibrate the NN parameters, one must optimize over the NN-PDE via gradient descent on a suitable loss that quantifies the discrepancy between the NN-PDE solution and the target data.
The gradient descent algorithm requires evaluating the gradient of the objective function, which depends upon the solution of the NN-PDE.
Therefore, the gradient of the NN-PDE solution with respect to the NN parameters needs to be evaluated, which is computationally challenging.
However, this gradient can be efficiently evaluated using the adjoint PDE of the NN-PDE.
Adjoint optimization methods have been developed and applied to NN-PDEs in \cite{sirignano2023pde,sirignano2022online,macart2021embedded,sirignano2020dpm,holland2019towards,srivastava2021generalizable}.
More generally, adjoint optimization has been widely used for PDE optimization~\cite{bosse2014one,brandenburg2009continuous,bueno2012continuous,cagnetti2013adjoint,duta2002harmonic,gauger2012automated,giles2000introduction,giles2010convergence1,giles2010convergence2,hazra2004simultaneous,hazra2007direct,hinze2008optimization,jameson1998optimum,jameson2003aerodynamic,jameson2003reduction,kaland2014one,knopoff2013adjoint,pierce2000adjoint,nadarajah2000comparison,nadarajah2001studies,reuther1996aerodynamic}.
The adjoint gradient descent algorithm solves a linear adjoint PDE at each optimization iteration, evaluates the gradient of the objective function with respect to the NN parameters using the adjoint PDE solution, and then takes a gradient descent step to update the NN parameters.

While adjoint methods have demonstrated effectiveness and efficiency across a wide range of applications, including design and shape optimization, aerodynamics, combustion, and tumor growth modeling in medicine,
a rigorous mathematical analysis of adjoint gradient descent optimization methods in the setting of nonlinear NN-PDEs remains absent.
The analysis in this paper is focused on this topic and provides first-of-its-kind convergence results in the nonlinear NN-PDE regime, where training the NN model leads to a non-convex optimization problem in the NN function, even in the large neuron limit.
This is because the NN-PDE solution, which the loss is a function of, depends nonlinearly on the NN function, which is the design variable of our problem formulation and the quantity being trained. Our global convergence proof must address unique mathematical challenges that are not encountered in finite-dimensional NN convergence analyses due to (i) the NN training dynamics involving a non-local NN kernel operator in the infinite-width hidden layer limit where the kernel lacks a spectral gap for its eigenvalues and (ii) the nonlinearity of the limit PDE system,
which leads to a non-convex optimization problem in the NN function even in the infinite-width hidden layer limit (unlike in typical NN training cases where the optimization problem becomes convex in the large neuron limit~\cite{jacot2018neural,chizat2019lazy}).

We prove that the NN-PDE solution converges weakly to the target data (i.e., a global minimizer) during training. 
The first step is to prove that the adjoint PDE solution vanishes in the weak sense as the training time tends to infinity,
which in turn requires establishing that a quadratic functional of the adjoint involving the positive definite non-local NN kernel operator converges to zero.
The latter is proven by applying a cycle of stopping times analysis.
The cycle of stopping times analysis requires the development of a novel approach for obtaining a regularity bound for this quadratic functional in terms of the learning rate, which is based on a careful PDE analysis of an adjoint PDE system associated with the quadratic functional, thus the analysis of the \emph{adjoint of the original adjoint PDE}.
The derivation of bounds on several norms of the different adjoint PDEs is also required;
see \Cref{sec:contributions,sec:convergence} for more details.
Developing these mathematical methods allows us to prove a much more general and stronger convergence result than done by prior analyses that only proved a substantially weaker notion of convergence for linear PDEs and a very restrictive class of objective functions~\cite{sirignano2023pde};
see \Cref{rem:comparisonconvergencesirignano2023pde} for the details.

In our mathematical analysis, we will consider the second-order semi-linear parabolic neural-network partial differential equation~(NN-PDE)
\begin{alignat}{2}
\label{eq:parabolicPDEN_plain}
\begin{aligned}
    \fpartial{t}\uN{\theta} + \CL\uN{\theta} - q(\uN{\theta})
    &= g_\theta^N
    \qquad&&\text{in }
    D_T, \\
    \uN{\theta}
    &= 0
    \qquad&&\text{on }
    [0,T]\times\partial D, \\
    \uN{\theta}
    &= f
    \qquad&&\text{on }
    \{0\}\times D,
\end{aligned}
\end{alignat}
where $\CL$ denotes a second-order linear PDE operator given in divergence form as
\begin{equation}
    \CL u
    = - \sum_{i,j=1}^d \fpartial{x_j} \left(a^{ij}(t,x) \fpartial{x_i} u\right)
    + \sum_{i=1}^d b^i(t,x) \fpartial{x_i} u 
    + c(t,x)u
\end{equation}
with PDE coefficients $a^{ij}, b^i, c: [0,T]\times D\rightarrow\bbR$,
and where $q:[0,T]\times D\times\bbR\rightarrow\bbR$ denotes the nonlinearity of the PDE.
For notational convenience, we will sometimes (as done in \eqref{eq:parabolicPDEN_plain}) omit writing the physical time and space dependency for the PDE solution as well as the PDE coefficients and terms, i.e., we write $u$ instead of $u(t,x)$ or $q(u)$ instead of $q(t,x,u(t,x))$.
The PDE operator~$\CL$ together with its coefficients~$a^{ij}$, $b^i$ and $c$ as well as the nonlinearity $q$ are assumed to satisfy Assumptions~\ref{asm:PDE_L_parabolicity}, \ref{asm:PDE_coefficients} and \ref{asm:WP_coefficients} as well as Assumptions~\ref{asm:PDE_nonlinearity_q_ubdd}, \ref{asm:PDE_nonlinearity_q_uubdd}, \ref{asm:WP_nonlinearity_q_growth} and \ref{asm:WP_nonlinearity_q_ubdd}, respectively.
On the time-space domain $[0,T]\times D\subset \bbR\times \bbR^d$ we moreover impose Assumptions~\ref{asm:D} and \ref{asm:Dbdd}, and on the initial condition $f$ Assumption~\ref{asm:WP_initialcondition} (see Section \ref{sec:assumptions}).

The PDE~\eqref{eq:parabolicPDEN_plain} is driven by an NN with parameters $\theta$ in the source term.
We design it to be a fully-connected NN $g^N_\theta=g^N_\theta(t,x)$ with a single hidden layer consisting of $N$ neurons, i.e., it takes the form
\begin{equation}
    \label{eq:gN}
    g_\theta^N(t,x)
    = \frac{1}{N^\beta}\sum_{i=1}^N c^i\sigma\big(w^{t,i}t + (w^i)^Tx + \eta^i\big),
\end{equation}
where the NN parameters of the $i$th neuron/unit are collected in the weight vector $\theta^i = (c^i,w^{t,i},w^i,\eta^i) \in \bbR\times\bbR\times\bbR^{d}\times\bbR$
and where $\theta=(\theta^i)_{i=1,\dots,N}$ denotes the collection of all NN parameters,
which are initialized independently according to $\theta_0^i = (c^i_0,w^{t,i}_0,w^i_0,\eta^i_0)\sim\mu_0$ with a measure $\mu_0\in\CP(\bbR\times\bbR\times\bbR^{d}\times\bbR)$ obeying Assumption~\ref{asm:NN_mu0}.
The factor $1/N^\beta$ in \eqref{eq:gN} with $\beta\in(1/2,1)$ is a normalization/scaling, and the NN nonlinearity~$\sigma$ satisfies Assumptions~\ref{asm:NN_sigma} and \ref{asm:NN_sigma'}.

The aforementioned assumptions on the PDE are collected in Assumptions~\ref{def:PDE_assumptions} and \ref{def:WP_assumptions},
and the ones on the NN in Assumption~\ref{def:NN_assumptions}.
They are assumed to hold throughout the manuscript.

Given a continuous target function~$h\in L_2(D_T)$ corresponding to or representing measured or observed data (the ground truth),
we wish to solve the inverse problem of calibrating the NN parameters~$\theta$ so that the solution $\uN{\theta}=\uNv{\theta}{t,x}$ to the NN-PDE~\eqref{eq:parabolicPDEN_plain} closely approximates the prescribed target data~$h$.
For this purpose,
we seek to minimize the loss (least squares loss)
\begin{equation}
    \label{eq:parabolicJ}
    \JN{\theta} = \frac{1}{2} \int_0^T\!\!\!\int_D (\uNv{\theta}{t,x} -h(t,x))^2 \,dxdt
\end{equation}
by training\footnote{Note that the training time, denoted by $\tau$, is distinct from and unrelated to the physical PDE time $t$.} the NN parameters~$\theta$ via continuous-time gradient descent
\begin{equation}
    \label{eq:GD}
    \frac{d}{d\tau} \theta_\tau
    = - \alpha^N_\tau \nabla_{\theta}\JN{\theta_\tau},
\end{equation}
where $\alpha^N_\tau=\frac{\alpha_\tau}{N^{1-2\beta}}$ denotes the learning rate which is assumed to be decreasing in the training time $\tau$ and satisfies the well-known Robbins-Monro conditions~\cite{robbins1951stochastic}
\begin{equation}
    \label{eq:learning_rate}
    \int_{0}^\infty\alpha_\tau \,d\tau=\infty
    \qquad\text{and}\qquad
    \int_{0}^\infty\alpha_\tau^2\,d\tau<\infty.
\end{equation}
Monotonicity as well as the conditions~\eqref{eq:learning_rate} are standard requirements for learning rates in machine learning \cite{bertsekas2000gradient}.
A suitable and classical example fulfilling those conditions is given by $\alpha_\tau=\frac{1}{1+\tau}$.
Computing the gradient $\nabla_{\theta}\JN{\theta}$ w.r.t.\@ the NN parameters~$\theta$ in \eqref{eq:GD} is computationally challenging due to its dependency on the solution~$\uN{\theta}$ of the PDE~\eqref{eq:parabolicPDEN_plain}.
Deriving naively a PDE for $\nabla_{\theta}\uN{\theta}$
by applying the gradient to \eqref{eq:parabolicPDEN_plain}
yields a PDE whose dimension is equal to the number of the NN parameters~$\theta$. Typically, the number of NN parameters is very large (thousands or even hundreds of thousands), leading to a high-dimensional PDE which is computationally costly to solve numerically.
Analogously, trying to estimate the gradient by numerical differentiation with finite differences amounts to an equivalently expensive and thus infeasible task.
A computationally efficient way to evaluate the gradient $\nabla_{\theta}\JN{\theta}$, however, is given by the adjoint method~\cite{giles2000introduction},
which can be regarded as a continuous PDE version of the usual backpropagation algorithm.
As we verify in \Cref{lem:grad_thetaJ},
the gradient $\nabla_{\theta}\JN{\theta}$ w.r.t.\@ the NN parameters~$\theta$ can be computed
according to
\begin{equation}
    \label{eq:nablaJ}
    \nabla_{\theta}\JN{\theta}
    = \int_0^T\!\!\!\int_D \nabla_\theta g_\theta^N(t,x)\uNhatv{\theta}{t,x}\,dxdt,
\end{equation}
which requires solving the adjoint PDE of \eqref{eq:parabolicPDEN_plain},
which is a second-order linear parabolic backward PDE of the form
\begin{alignat}{2}
\label{eq:parabolicadjointN_plain}
\begin{aligned}
    -\fpartial{t}\uNhat{\theta} + \CL^\dagger\uNhat{\theta} - q_u(\uN{\theta})\uNhat{\theta}
    &= (\uN{\theta}-h)
    \qquad&&\text{in }
    D_T, \\
    \uNhat{\theta}
    &= 0
    \qquad&&\text{on }
    [0,T]\times\partial D, \\
    \uNhat{\theta}
    &= 0
    \qquad&&\text{on }
    \{T\}\times D,
\end{aligned}
\end{alignat}
where $\CL^\dagger$ denotes the adjoint of $\CL$ which is given by
\begin{equation}
    \CL^\dagger \widehat{u}
    = - \sum_{i,j=1}^d \fpartial{x_i} \left(a^{ij}(t,x) \fpartial{x_j} \widehat{u}\right)
    - \sum_{i=1}^d b^i(t,x) \fpartial{x_i} \widehat{u} 
    + \left(c(t,x)-\sum_{i=1}^d\fpartial{x_i}b^i(t,x)\right)\widehat{u},
\end{equation}
and where $q_u=\partial_u q:[0,T]\times D\times\bbR\rightarrow\bbR$ denotes the partial derivative of $q$ w.r.t.\@ $u$, i.e., the function $(t,x,u)\mapsto q_u(t,x,u)=\partial_u q(t,x,u)$.

Adjoint optimization is widely used in practice due to its computational efficiency for evaluating the gradient of a PDE objective function.
Computing \eqref{eq:nablaJ} requires solving only one linear PDE of the same dimension as the original PDE~\eqref{eq:parabolicPDEN_plain} that is being optimized, which is computationally tractable.
Evaluating the gradient $\nabla_\theta g_\theta^N$ of the NN~$g_\theta^N$ itself w.r.t.\@ the NN parameters~$\theta$ is typically done efficiently using automatic differentiation frameworks such as TensorFlow or PyTorch.

Before discussing our analytical setup,
let us address the well-posedness (existence and uniqueness) of the PDE system \eqref{eq:parabolicPDEN_plain}\,\&\,\eqref{eq:parabolicadjointN_plain} coupled with the gradient descent update~\eqref{eq:GD} during training.
As we rigorously prove in \Cref{lem:parabolic_wellposedness_N},
there exists, for any finite training time horizon~$\CT$, a unique weak solution~$((\uN{\theta_\tau},\uNhat{\theta_\tau}))_{\tau\in[0,\CT]}$ in the sense of \Cref{def:weak_sol,def:weak_sol_adjoint} in the space $\CC\left([0,\CT],\CS\times\CS\right)$, where $\CS:=L_2([0,T],H^1(D))\cap L_\infty([0,T],L_2(D))$.

For training nonlinear parabolic NN-PDE models of the form~\eqref{eq:parabolicPDEN_plain},
we study in this paper the global convergence of the adjoint gradient descent optimization method~\eqref{eq:GD}, where the gradient of the loss~$\JN{\theta}$ w.r.t.\@ the NN parameters~$\theta$ is computed by solving the adjoint PDE~\eqref{eq:parabolicadjointN_plain} and evaluating formula \eqref{eq:nablaJ}.
We consider the theoretical limit where both the number of neurons~$N$ in the NN~$g_\theta^N$ in \eqref{eq:gN} and the training time~$\tau$ in \eqref{eq:GD} tend to infinity.

As a first step,
let us derive and theoretically justify the limiting training dynamics in the infinite-width hidden layer limit, i.e., as the number of hidden units $N\rightarrow\infty$.
Therefore, denote by
$\mu_\tau^N= \frac{1}{N}\sum_{i=1}^N \delta_{c^i_\tau,w_\tau^{t,i},w^i_\tau,\eta^i_\tau}$
the empirical measure at training time $\tau$ of the NN parameters of our fully-connected NN~\eqref{eq:gN} with a single hidden layer with $N$ neurons and their parameters~$\theta_\tau = (c^i_\tau,w_\tau^{t,i},w^i_\tau,\eta^i_\tau)_{i=1,\dots,N}$.
By computing with chain rule
\begin{equation}
\begin{split}
    \fd{\tau}g_{\theta_\tau}^N(t,x) &= \nabla_\theta g_{\theta_\tau}^N(t,x) \cdot\fd{\tau}\theta_\tau \\
    &= - \alpha^N_\tau \int_0^T\!\!\!\int_D \nabla_\theta g_{\theta_\tau}^N(t,x) \cdot \nabla_\theta g_{\theta_\tau}^N(t',x')\uNhatv{\theta_\tau}{t',x'}\,dx'dt', 
\end{split}
\end{equation}
as done in detail in \eqref{eq:proof:lem:lazytraining:50}--\eqref{eq:proof:lem:lazytraining:53}, we obtain by the fundamental theorem of calculus for the training time evolution of the NN function~$g_{\theta_\tau}^N$ that
\begin{equation}
    \label{eq:parabolicgNtau}
\begin{split}
    g_{\theta_\tau}^N(t,x)
    = g_{\theta_0}^N(t,x) - \int_0^\tau  \alpha_s\big[T_{B(\mu^N_s)}\uNhat{\theta_s}\big](t,x)\,ds
\end{split}
\end{equation}%
with the NN integral operator~$T_{B(\mu)}$ defined as
\begin{equation}
\label{eq:parabolicT_B}
    [T_{B(\mu)}\widehat{u}](t,x)
    = \int_0^T\!\!\!\int_D B(t,x,t',x';\mu)\widehat{u}(t',x') \,dx'dt'
\end{equation}
and
where the symmetric non-local NN kernel (also known as the neural tangent kernel~(NTK)~\cite{jacot2018neural}) is given by
\begin{equation}
\begin{split}
    \label{eq:parabolicB}
    B(t,x,t',x';\mu)
    &= \left\langle k(t,x,t',x';c,w^t,w,\eta),\mu(dc,dw^t,dw,d\eta)\right\rangle
\end{split}
\end{equation}
with
\begin{equation}
\begin{split}
    \label{eq:parabolick}
    k(t,x,t',x';c,w^t,w,\eta)
    &= \sigma(w^tt+w^Tx+\eta)\sigma(w^tt'+w^Tx'+\eta)\\
    &\quad\,+ c^2\sigma'(w^tt+w^Tx+\eta)\sigma'(w^tt'+w^Tx'+\eta)(tt'+x^Tx'+1).
\end{split}
\end{equation}
That means, the NN function~$g_{\theta_\tau}^N$ follows during training the kernel gradient of the least squares loss~\eqref{eq:parabolicJ} using the pre-limit NTK~$B(\mu^N_\tau)$, which is random at initialization and varies during training, as can be seen from \eqref{eq:parabolicgNtau}.
In contrast, in the infinite-width hidden layer limit, i.e., as the number of hidden units $N\rightarrow\infty$ in \eqref{eq:parabolicgNtau},
the kernel becomes deterministic and converges to a limit NTK, which remains constant during training, as can be seen from \eqref{eq:proof:lem:lazytraining:64b}.
This is similar to and in line with the overparameterized training phenomenon~\cite{jacot2018neural,chizat2019lazy} observed for certain scalings in~\eqref{eq:gN},
yet requires, due to the nonlinear PDE setting considered in this manuscript, detailed computations, which we provide in the proof of \Cref{lem:lazytraining} below. \Cref{lem:lazytraining} allows us to represent the limit NN function~$g^*_\tau$ during training by the integro-differential equation\footnote{Note that equation~\eqref{eq:parabolicgtau} can be written after taking the training time derivative equivalently as the infinite-dimensional ODE \mbox{$\fd{\tau} g^*_\tau(t,x) = - \alpha_\tau T_{B_0}\uhatp{\tau} = - \alpha_\tau \int_0^T\!\!\!\int_D B(t,x,t',x';\mu_0)\uhatpp{\tau}{t',x'} \,dx'dt'$} with $\uhatp{\tau}$ depending nonlinearly on $g^*_\tau$ according to the nonlinear PDE system~\eqref{eq:parabolicPDE*}--\eqref{eq:parabolicadjoint*}.}
\begin{equation}
    \label{eq:parabolicgtau}
\begin{split}
    g^*_\tau(t,x)
    &= - \int_0^\tau \alpha_s \big[T_{B_0}\uhatp{s}\big](t,x) \,ds
\end{split}
\end{equation}
with the constant limit NTK $B_0=B(\mu_0)=B(\dummy,\dummy,\dummy,\dummy;\mu_0)$, where $\mu_0$ is the probability distribution for the parameter initialization of the NN.
The representation \eqref{eq:parabolicgtau} of the NN function~$g^*_\tau$ during training reveals a linearization of the NN training dynamics around their initialization. In particular, while the learning rate for individual NN parameters is $\frac{\alpha_\tau}{N^{1-\beta}}$, as can be seen from \eqref{eq:proof:lem:lazytraining:1}, and thus converges to zero as the number of parameters $N\rightarrow\infty$,
the NN function~$g^*_\tau$ itself has the non-zero learning rate~$\alpha_\tau$, as apparent from \eqref{eq:parabolicgtau}. Thus, due to the large number of degrees of freedom in the overparameterized regime, the individual parameters are required to move smaller and smaller distances from their initial locations to achieve a given magnitude change in the neural network output as $N \rightarrow \infty$.

 As the number of hidden units $N\rightarrow\infty$, the NN source term~$g_{\theta_\tau}^N$ of the PDE~\eqref{eq:parabolicPDEN_plain} converges to $g^*_\tau$ while the PDE solution~$\uN{\theta_\tau}$ and the solution to the adjoint PDE~$\uNhat{\theta_\tau}$
converge in $L_2([0,T],H^1(D))$- and $L_\infty([0,T],L_2(D))$-norm to functions $\up{\tau}$ and $\uhatp{\tau}$ solving the PDE system
\begin{alignat}{2}
\label{eq:parabolicPDE*}
\begin{aligned}
    \fpartial{t}\up{\tau} + \CL\up{\tau} - q(\up{\tau})
    &= g^*_\tau
    \qquad&&\text{in }
    D_T, \\
    \up{\tau}
    &= 0
    \qquad&&\text{on }
    [0,T]\times\partial D, \\
    \up{\tau}
    &= f
    \qquad&&\text{on }
    \{0\}\times D,
\end{aligned}
\end{alignat}
and
\begin{alignat}{2}
\label{eq:parabolicadjoint*}
\begin{aligned}
    -\fpartial{t}\uhatp{\tau} + \CL^\dagger\uhatp{\tau} - q_u(\up{\tau})\uhatp{\tau}
    &= (\up{\tau}-h)
    \qquad&&\text{in }
    D_T, \\
    \uhatp{\tau}
    &= 0
    \qquad&&\text{on }
    [0,T]\times\partial D, \\
    \uhatp{\tau}
    &= 0
    \qquad&&\text{on }
    \{T\}\times D,
\end{aligned}
\end{alignat}
which is coupled with the integro-differential equation~\eqref{eq:parabolicgtau} for $g^*_\tau$.
Before making this joint convergence as the number of neurons $N$ tends to infinity mathematically precise in \Cref{lem:lazytraining},
let us address the well-posedness of the PDE system \eqref{eq:parabolicPDE*}--\eqref{eq:parabolicadjoint*} coupled with \eqref{eq:parabolicgtau}.
As we rigorously prove in \Cref{lem:parabolic_wellposedness_infinitewidth},
there exists, for any finite training time horizon~$\CT$, a unique weak solution~$((\up{\tau},\uhatp{\tau}))_{\tau\in[0,\CT]}$ in the sense of \Cref{def:weak_sol,def:weak_sol_adjoint} in the space $\CC\left([0,\CT],\CS\times\CS\right)$.

\begin{theorem}[Overparameterized training regime]
    \label{lem:lazytraining}
    Assume that the learning rate satisfies additionally $\int_{0}^\infty\alpha_\tau^{4/3}\,d\tau<\infty$.
    Let $\CT<\infty$ be a given training time horizon.
    For each $N$, let us denote by $((\uN{\theta_\tau},\uNhat{\theta_\tau}))_{\tau\in[0,\CT]}\in \CC\left([0,\CT],\CS\times\CS\right)$ the unique weak solution to the PDE system \eqref{eq:parabolicPDEN_plain}\,\&\,\eqref{eq:parabolicadjointN_plain} coupled with the gradient descent update~\eqref{eq:GD} in the sense of \Cref{lem:parabolic_wellposedness_N},
    and let us denote by $((\up{\tau},\uhatp{\tau}))_{\tau\in[0,\CT]}\in \CC\left([0,\CT],\CS\times\CS\right)$ the unique weak solution to the PDE system \eqref{eq:parabolicPDE*}--\eqref{eq:parabolicadjoint*} coupled with the integro-differential equation~\eqref{eq:parabolicgtau} in the sense of \Cref{lem:parabolic_wellposedness_infinitewidth}.
    Then, as the number of hidden units $N\rightarrow\infty$,
    \begin{subequations}
        \label{eq:lem:lazytraining}
    \begin{align}
        \sup_{\tau\in[0,\CT]} \bbE \left[\Nbig{\uN{\theta_\tau} - \up{\tau}}_{L_2([0,T],H^1(D))} + \Nbig{\uN{\theta_\tau} - \up{\tau}}_{L_\infty([0,T],L_2(D))}\right]
        & \rightarrow0,\label{eq:lem:lazytraininga} \\
        \sup_{\tau\in[0,\CT]} \bbE \left[\Nbig{\uNhat{\theta_\tau} - \uhatp{\tau}}_{L_2([0,T],H^1(D))} + \Nbig{\uNhat{\theta_\tau} - \uhatp{\tau}}_{L_\infty([0,T],L_2(D))}\right]
        & \rightarrow0, \label{eq:lem:lazytrainingb}\\
        \sup_{\tau\in[0,\CT]} \bbE \Nbig{g_{\theta_\tau}^N - g^*_\tau}_{L_2(D_T)}
        & \rightarrow0. \label{eq:lem:lazytrainingc}
    \end{align}
    \end{subequations}
    Here, the expectation~$\bbE$ is taken w.r.t.\@ to the random initialization of the NN parameters. (The only source of randomness is the random initialization of the NN parameters before training begins.)
\end{theorem}

\Cref{lem:lazytraining} proves the convergence of the solution 
$((\uN{\theta_\tau},\uNhat{\theta_\tau}))_{\tau\in[0,\CT]}$ to $((\up{\tau},\uhatp{\tau}))_{\tau\in[0,\CT]}$ as the number of hidden units $N$ tends to infinity for any finite training time horizon $\CT<\infty$. Therefore, the NN-PDE trained with adjoint gradient descent optimization converges to the solution of the limit PDE system \eqref{eq:parabolicPDE*}--\eqref{eq:parabolicadjoint*} coupled with the integro-differential equation~\eqref{eq:parabolicgtau} as the number of hidden units $N \rightarrow \infty$.
A detailed proof of \Cref{lem:lazytraining} is presented in \Cref{sec:InfiniteWidth}.

\begin{remark}
    \Cref{lem:lazytraining} requires the well-posedness (existence and uniqueness) of both the pre-limit PDE system \eqref{eq:parabolicPDEN_plain}\,\&\,\eqref{eq:parabolicadjointN_plain} coupled with the gradient descent update~\eqref{eq:GD} and the limit PDE system \eqref{eq:parabolicPDE*}--\eqref{eq:parabolicadjoint*} coupled with the integro-differential equation~\eqref{eq:parabolicgtau}. We rigorously state those results in \Cref{lem:parabolic_wellposedness_N,lem:parabolic_wellposedness_infinitewidth}, respectively, with their detailed proofs given in \Cref{sec:WellPosedness_Proof}.

    The additional assumption $\int_{0}^\infty\alpha_\tau^{4/3}\,d\tau<\infty$ on the learning rate, which is satisfied by typical learning rates, is exclusively required for the well-posedness of the pre-limit PDE system \eqref{eq:parabolicPDEN_plain}\,\&\,\eqref{eq:parabolicadjointN_plain} as stated in \Cref{lem:parabolic_wellposedness_N} below and commented on in more details thereafter in \Cref{rem:learning_rate_4/3}. This additional integrability assumption is not used elsewhere in the paper.
\end{remark}

In this large neuron limit,
we show as a second step
that the dynamics~\eqref{eq:parabolicPDE*}--\eqref{eq:parabolicadjoint*} coupled with \eqref{eq:parabolicgtau} converges to a global minimizer of the loss
\begin{equation}
    \label{eq:parabolicJtau}
    \J{\tau}
    = \frac{1}{2} \int_0^T\!\!\!\int_D \left(\upp{\tau}{t,x}-h(t,x)\right)^2 dxdt
\end{equation}
as the training time~$\tau$ tends to infinity.
We thus prove the convergence of the NN-PDE solution~$\up{\tau}$ to the target data~$h$ (i.e., a global minimizer) as $\tau\rightarrow\infty$.
To be more precise, the following main convergence result about the adjoint gradient descent optimization method is proven in this paper.

\begin{theorem}[Global convergence of NN-PDE]
    \label{thm:main}
    Let $((\up{\tau},\uhatp{\tau}))_{\tau\in[0,\infty)}\in \CC\left([0,\infty),\CS\times\CS\right)$ denote the unique weak solution to the PDE system \eqref{eq:parabolicPDE*}--\eqref{eq:parabolicadjoint*} coupled with the integro-differential equation~\eqref{eq:parabolicgtau} in the sense of \Cref{lem:parabolic_wellposedness_infinitewidth} and \Cref{rem:parabolic_wellposedness} on the training time interval $[0,\infty)$.
    Then, the loss $\J{\tau}$ defined in \eqref{eq:parabolicJtau} is monotonically decreasing with $\fd{\tau}\J{\tau} = - \alpha_\tau \Q{\tau} = - \alpha_\tau(\uhatp{\tau},T_{B_0}\uhatp{\tau})_{L_2(D_T)}\leq 0$, and the solution $\up{\tau}$ to \eqref{eq:parabolicPDE*} converges weakly to the target $h$ in $L_2$ as $\tau\rightarrow\infty$, i.e., 
    \begin{equation}
        \up{\tau}\rightharpoonup h
        \text{ in }
        L_2
        \quad
        \text{as }
        \tau\rightarrow\infty.
    \end{equation}
\end{theorem}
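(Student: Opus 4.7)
My plan proceeds in three stages: (i) derive a monotonicity identity for $\J{\tau}$ via the adjoint PDE; (ii) upgrade the resulting integrability of $\alpha_\tau\Q{\tau}$ to the pointwise decay $\Q{\tau}\to 0$ through a cycle-of-stopping-times argument powered by a Lipschitz-in-$\tau$ regularity bound on $\Q{\tau}$; and (iii) convert $\Q{\tau}\to 0$ into the desired weak $L_2$-convergence of $\up{\tau}$ to $h$ by exploiting the injectivity of $T_{B_0}$ and the dual structure of \eqref{eq:parabolicadjoint*}. For stage (i), differentiate \eqref{eq:parabolicJtau} in $\tau$ and let $v_\tau:=\fd{\tau}\up{\tau}$, which solves the linearized forward PDE $\fpartial{t}v_\tau+\CL v_\tau-q_u(\up{\tau})v_\tau=\fd{\tau}g^*_\tau=-\alpha_\tau T_{B_0}\uhatp{\tau}$ with zero initial and boundary data. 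Integrating by parts in $t$ and $x$ against $\uhatp{\tau}$ and using that $\uhatp{\tau}$ solves \eqref{eq:parabolicadjoint*} with zero terminal and lateral data yields $\fd{\tau}\J{\tau}=(\up{\tau}-h,v_\tau)_{L_2(D_T)}=-\alpha_\tau(\uhatp{\tau},T_{B_0}\uhatp{\tau})_{L_2(D_T)}=-\alpha_\tau\Q{\tau}$. The Gram-type form of $k$ in \eqref{eq:parabolick} renders $B_0$ symmetric positive semi-definite, so $\Q{\tau}\geq 0$, $\J{\tau}$ is non-increasing, and $\int_0^\infty\alpha_\tau\Q{\tau}\,d\tau\leq\J{0}<\infty$; combined with the Robbins--Monro condition $\int_0^\infty\alpha_\tau\,d\tau=\infty$, this already forces $\liminf_{\tau\to\infty}\Q{\tau}=0$.

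For stage (ii), the crucial estimate is $|\Q{\tau_2}-\Q{\tau_1}|\leq C\int_{\tau_1}^{\tau_2}\alpha_s\,ds$ with $C$ uniform in $\tau$. To derive it, I differentiate \eqref{eq:parabolicadjoint*} in $\tau$, so that $\fd{\tau}\uhatp{\tau}$ solves a backward parabolic PDE whose source involves $v_\tau$ and $q_{uu}(\up{\tau})v_\tau\uhatp{\tau}$. Parabolic energy estimates for the linear PDEs satisfied by $v_\tau$ and $\fd{\tau}\uhatp{\tau}$, together with uniform-in-$\tau$ a priori control on $\up{\tau}$ and $\uhatp{\tau}$ in $L_\infty([0,T],L_2(D))\cap L_2([0,T],H^1(D))$ (inherited from \Cref{lem:parabolic_wellposedness} and the monotonicity of stage (i)), yield $\|v_\tau\|,\,\|\fd{\tau}\uhatp{\tau}\|\leq C\alpha_\tau$, from which $|\fd{\tau}\Q{\tau}|=2|(T_{B_0}\uhatp{\tau},\fd{\tau}\uhatp{\tau})_{L_2(D_T)}|\leq C\alpha_\tau$ follows. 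Supposing for contradiction $\limsup_\tau\Q{\tau}>2\epsilon>0$, one selects disjoint intervals $[\sigma_n,\tau_n]$ with $\Q{\sigma_n}=\epsilon$, $\Q{\tau_n}=2\epsilon$, and $\Q{s}\in[\epsilon,2\epsilon]$ throughout; the Lipschitz bound forces $\int_{\sigma_n}^{\tau_n}\alpha_s\,ds\geq\epsilon/C$, so that $\int_0^\infty\alpha_s\Q{s}\,ds\geq\sum_n\epsilon^2/C=\infty$, contradicting stage (i). Hence $\Q{\tau}\to 0$.

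For stage (iii), rewrite $\Q{\tau}=\|T_{B_0}^{1/2}\uhatp{\tau}\|_{L_2(D_T)}^2$, so that $T_{B_0}^{1/2}\uhatp{\tau}\to 0$ strongly in $L_2(D_T)$. The explicit feature form of $k$ in \eqref{eq:parabolick}, together with the non-degeneracy of $\mu_0$ and $\sigma$ built into the standing assumptions, implies injectivity of $T_{B_0}$ on $L_2(D_T)$, hence density of the range of $T_{B_0}^{1/2}$; combined with the uniform $L_2$-bound on $\uhatp{\tau}$, this yields $\uhatp{\tau}\rightharpoonup 0$ weakly in $L_2(D_T)$. To transfer the weak convergence to $\up{\tau}$, pair \eqref{eq:parabolicadjoint*} with any $\phi\in C_c^\infty((0,T)\times D)$ and integrate by parts to obtain $(\up{\tau}-h,\phi)_{L_2(D_T)}=(\uhatp{\tau},\fpartial{t}\phi+\CL\phi-q_u(\up{\tau})\phi)_{L_2(D_T)}$. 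Uniform bounds on $\up{\tau}$ in $L_2([0,T],H^1(D))$ and on $\fpartial{t}\up{\tau}$ in $L_2([0,T],H^{-1}(D))$ (read off from \eqref{eq:parabolicPDE*}) give Aubin--Lions precompactness of $\{\up{\tau}\}$ in $L_2(D_T)$, so along any subsequence a further subsequence makes $q_u(\up{\tau_n})\phi$ strongly convergent in $L_2(D_T)$; pairing with the weakly null $\uhatp{\tau_n}$ then drives the right-hand side to $0$. A subsequence-of-subsequence argument and density of $C_c^\infty((0,T)\times D)$ in $L_2(D_T)$ upgrade this to $\up{\tau}\rightharpoonup h$ in $L_2(D_T)$.

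The main obstacle is stage (ii), the Lipschitz-in-learning-rate bound on $\Q{\tau}$. Because the forward--adjoint system is nonlinear and fully coupled through the integro-differential equation \eqref{eq:parabolicgtau}, and because $T_{B_0}$ lacks a spectral gap, no straightforward contractivity argument is available; one must instead juggle the secondary \emph{adjoint of the adjoint} PDE and carry uniform-in-$\tau$ a priori control on several Sobolev norms of $\up{\tau}$, $\uhatp{\tau}$, $v_\tau$, and $\fd{\tau}\uhatp{\tau}$ simultaneously.
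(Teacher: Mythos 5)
Your overall architecture coincides with the paper's: the identity $\fd{\tau}\J{\tau}=-\alpha_\tau\Q{\tau}$ (\Cref{lem:parabolictimeevolutionJt*}), a regularity bound $\absbig{\Q{\tau_2}-\Q{\tau_1}}\lesssim\int_{\tau_1}^{\tau_2}\alpha_s\,ds$ (\Cref{lem:CQ_Regulartiy}), a stopping-time argument giving $\Q{\tau}\to0$ (\Cref{lem:convergenceCJCB}), positive definiteness of $T_{B_0}$ giving $\uhatp{\tau}\rightharpoonup0$ (\Cref{lem:convergence_adjoint}), and the dual pairing of \eqref{eq:parabolicadjoint*} giving $\up{\tau}\rightharpoonup h$ (\Cref{lem:convergence_solution}). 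Two of your choices differ only superficially or simplify the paper: your direct estimate of $\fd{\tau}\uhatp{\tau}$ is the paper's second-level adjoint route in disguise, since $\whatp{\tau}=-(2/\alpha_\tau)\,\fd{\tau}\up{\tau}$ and $\vhatp{\tau}=-(2/\alpha_\tau)\,\fd{\tau}\uhatp{\tau}$ solve \eqref{eq:paraboliccoupledadjoint2} and \eqref{eq:paraboliccoupledadjoint1}, and your bound $\absnormal{\fd{\tau}\Q{\tau}}=2\absnormal{(T_{B_0}\uhatp{\tau},\fd{\tau}\uhatp{\tau})_{L_2(D_T)}}\le C\alpha_\tau$ is literally the paper's $\alpha_\tau\absnormal{(T_{B_0}\uhatp{\tau},\vhatp{\tau})_{L_2(D_T)}}$ after rescaling; your upcrossing contradiction (crossings of $[\varepsilon,2\varepsilon]$ forcing $\int\alpha_\tau\Q{\tau}\,d\tau=\infty$) is a cleaner deterministic variant of the paper's cycle of stopping times and is correct given the regularity bound, $\Q{}\ge0$, $\int_0^\infty\alpha_\tau\Q{\tau}\,d\tau\le\J{0}$ and continuity of $\tau\mapsto\Q{\tau}$; your square-root/dense-range argument for $\uhatp{\tau}\rightharpoonup0$ is equivalent to the paper's eigenexpansion. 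One caveat in your stage (ii): ``parabolic energy estimates'' plus $L_\infty([0,T],L_2)\cap L_2([0,T],H^1)$ control do \emph{not} by themselves give $\N{\fd{\tau}\uhatp{\tau}}_{L_2(D_T)}\le C\alpha_\tau$, because the source of \eqref{eq:parabolicpII*aux} contains the product $q_{uu}(\up{\tau})\uhatp{\tau}\fd{\tau}\up{\tau}$, which such norms do not place in $L_2(D_T)$ in general dimension. The missing ingredient is exactly the paper's \Cref{lem:parabolic_whatLinfty}: since $T_{B_0}\uhatp{\tau}\in L_\infty(D_T)$ (\Cref{lem:TBboundednessLinfty}), maximal $L_p$ parabolic regularity plus Morrey give $\N{\fd{\tau}\up{\tau}}_{L_\infty(D_T)}\le C\alpha_\tau$, making the product an $L_2$ source of size $O(\alpha_\tau)$. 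You gesture at this in your closing paragraph, but as stated your derivation of the key regularity bound is incomplete without it.

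The genuine gap is in stage (iii). The Aubin--Lions precompactness of $\{\up{\tau}\}_{\tau\ge0}$ in $L_2(D_T)$ cannot be ``read off from \eqref{eq:parabolicPDE*}'': it needs bounds on $\N{\up{\tau}}_{L_2([0,T],H^1(D))}$ and $\N{\fpartial{t}\up{\tau}}_{L_2([0,T],H^{-1}(D))}$ that are uniform in $\tau\in[0,\infty)$, and these would require a uniform-in-$\tau$ $L_2(D_T)$ bound on the source $g^*_\tau$. No such bound is available: the decaying loss only yields the uniform $L_2(D_T)$ bound on $\up{\tau}$ itself (\Cref{lem:parabolic_uL2}), while $g^*_\tau=-\int_0^\tau\alpha_s T_{B_0}\uhatp{s}\,ds$ may a priori grow with $\tau$ (even using $\int_0^\infty\alpha_s\Q{s}\,ds<\infty$ one only gets $\N{g^*_\tau}_{L_2(D_T)}\lesssim(\int_0^\tau\alpha_s\,ds)^{1/2}$, which diverges), and the well-posedness construction only controls $\up{}$ on $[0,\CT_k]$ by constants $M_k$ growing in $k$. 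Hence the claimed strong convergence of $q_u(\up{\tau_n})\phi$ along subsequences is unsupported. The step can be repaired without compactness of $\up{\tau}$: either handle the term $(\uhatp{\tau},q_u(\up{\tau})\widetilde\phi)_{L_2(D_T)}$ as the paper does in \Cref{lem:convergence_solution}, or, if you want a compactness route, apply Aubin--Lions to $\uhatp{\tau}$ instead: its uniform $L_2([0,T],H^1(D))$ bound (\Cref{lem:parabolic_uhatL2}) together with the uniformly $L_2$-bounded right-hand side $\up{\tau}-h$ of \eqref{eq:parabolicadjoint*} gives a uniform $L_2([0,T],H^{-1}(D))$ bound on $\fpartial{t}\uhatp{\tau}$, hence relative compactness of $\{\uhatp{\tau}\}$ in $L_2(D_T)$; combined with $\uhatp{\tau}\rightharpoonup0$ this upgrades to strong convergence $\uhatp{\tau}\to0$ in $L_2(D_T)$, after which the $\tau$-dependent bounded multiplier $q_u(\up{\tau})\widetilde\phi$ is harmless.
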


The statement follows from \Cref{lem:parabolictimeevolutionJt*,lem:convergence_solution}.
A detailed proof sketch of \Cref{thm:main} is presented in \Cref{sec:convergence}.

\subsection{Contributions}
\label{sec:contributions}

Motivated by the popularity and effectiveness of the adjoint gradient descent optimization method~\eqref{eq:GD}
for training NN-PDE models and thereby solving the inverse problem of learning the neural network modeled terms from observed data as demonstrated in the literature,
our paper develops a rigorous global convergence analysis of this machine learning algorithm for a general class of nonlinear parabolic NN-PDEs of the form~\eqref{eq:parabolicPDEN_plain}.
To calibrate the NN-PDE to available data,
the method trains the NN parameters~$\theta$ embedded within the PDE by running gradient descent on the least squares loss ($L_2$-loss) $\JN{\theta}$ with the gradient being evaluated in a computationally efficient manner by solving an associated adjoint PDE. This is a highly non-convex optimization problem and therefore, for a finite number of hidden units $N$, the trained NN-PDE may only converge to a local minimizer of the objective function. We study the algorithm's asymptotic convergence behavior in the limit where both the number of hidden units~$N$ of the NN $g_\theta^N$ in \eqref{eq:gN} and the training time~$\tau$ in continuous-time gradient descent~\eqref{eq:GD} tend to infinity. First-of-its-kind convergence results to a global minimizer are proven in the nonlinear setting, which go significantly beyond previous analyses that considered much more restrictive classes of linear PDEs, a very restrictive class of objective functions, and a substantially weaker notion of convergence.

Our first result is about the convergence to the infinite-width hidden layer limit as the number~$N$ of neurons tends to infinity.
We prove that, as $N\rightarrow\infty$, the NN function $g_\theta^N$ converges to its infinite-width hidden layer limit $g^*$ in \eqref{eq:parabolicgtau}, which can be represented during training by an integro-differential equation involving a positive definite non-local NN kernel operator~\eqref{eq:parabolicT_B} that remains constant during training but lacks a spectral gap; that is, its eigenvalues do not have a uniform positive lower bound. The NN-PDE solution~$\uN{\theta}$ and the adjoint~$\uNhat{\theta}$, which solve the PDE system \eqref{eq:parabolicPDEN_plain}\,\&\,\eqref{eq:parabolicadjointN_plain} coupled with the gradient descent update~\eqref{eq:GD},
converge to $\up{}$ and $\uhatp{}$ solving the PDE system \eqref{eq:parabolicPDE*}--\eqref{eq:parabolicadjoint*} coupled with the integro-differential equation~\eqref{eq:parabolicgtau}.

Our second result proves global convergence of the trained NN-PDE solution~$\up{\tau}$ to the target data~$h$ (i.e., a global minimizer) as the training time $\tau$ goes to infinity.
Due to the PDE system~\eqref{eq:parabolicPDE*}--\eqref{eq:parabolicadjoint*} coupled with the integro-differential equation~\eqref{eq:parabolicgtau} being both nonlinear and non-local, several mathematical challenges need to be addressed. In particular, due to the nonlinearity of the PDE, training the NN model leads to a non-convex optimization problem in the NN function~$g^*_\tau$ even in the large neuron limit.
This is very different from typical NN limits (e.g., gradient descent training of a standard feedforward fully-connected network) where the training of the infinite-width NN is shown to satisfy the gradient flow of a convex function. 

Furthermore, the aforementioned lack of a spectral gap in the NN kernel of the non-local NN kernel operator in the infinite-width hidden layer limit poses a unique technical complication that is not encountered in finite-dimensional NN convergence analyses (where the eigenvalues of the NN kernel matrix have a positive lower bound).
By showing that the quadratic functional~$\Q{\tau} = (\uhatp{\tau},T_{B_0}\uhatp{\tau})_{L_2(D_T)}$ of the PDE adjoint~$\uhatp{\tau}$,
which appears in the training time derivative $\fd{\tau}\J{\tau}= -\alpha_\tau \Q{\tau}$ of the loss~$\J{}$ and involves the positive definite non-local NN kernel operator~$T_{B_0}$, converges to zero as the training time $\tau\rightarrow\infty$, we establish the weak convergence of the solution~$\uhatp{\tau}$ to the adjoint PDE~\eqref{eq:parabolicadjoint*} to zero as $\tau\rightarrow\infty$. The weak convergence of the adjoint PDE solution can then be used to prove that the original NN-PDE solution~$\up{\tau}$ to the nonlinear PDE~\eqref{eq:parabolicadjoint*} converges weakly to the target data~$h$ as $\tau\rightarrow\infty$. 

To prove that the functional~$\Q{\tau}$ of the adjoint vanishes as the training time~$\tau$ tends to infinity, we apply a cycle of stopping times analysis. This technique crucially requires the development of a novel approach for obtaining a regularity bound for the functional~$\Q{\tau}$ in terms of the learning rate~$\alpha_\tau$, which involves the analysis of an adjoint associated with the functional~$\Q{\tau}$, thus the analysis of an adjoint PDE system of the adjoint PDE~\eqref{eq:parabolicadjoint*}. This is a second-level adjoint system of the original adjoint PDE. In addition, the proof requires carefully establishing uniform (in the training time $\tau$) bounds on several norms of the different adjoint PDEs.

We expect that the developed mathematical methods can be applied to other PDEs and NN architectures in scientific machine learning. For example, we prove a result of independent interest that (strong) limit points of the trained NN-PDE solution are global minimizers of the loss~$\J{}$ for an even more general class of second-order parabolic NN-PDEs.

Numerical studies that illustrate and support our theoretical findings are also presented in the paper.

\subsection{Organization}
\label{sec:organization}

In \Cref{sec:main}, we discuss in detail the main contributions of this paper.
Therefore, after collecting all assumptions made throughout this paper in \Cref{sec:assumptions},
we derive in \Cref{sec:formulagrad} formula~\eqref{eq:nablaJ} for $\nabla_{\theta}\JN{\theta}$, before providing in \Cref{sec:WellPosedness_N,sec:WellPosedness_infinitewidth}, respectively,
well-posedness results for
the NN-PDE training dynamics in both the finite-width hidden layer regime and the infinite-width hidden layer limit, i.e, for 
the PDE system \eqref{eq:parabolicPDEN_plain}\,\&\,\eqref{eq:parabolicadjointN_plain} coupled with the gradient descent update~\eqref{eq:GD}
and
the PDE system \eqref{eq:parabolicPDE*}--\eqref{eq:parabolicadjoint*} coupled with the integro-differential equation \eqref{eq:parabolicgtau}.  Their proofs are provided in \Cref{sec:WellPosedness_Proof}.
Afterwards, we elaborate on and prove in \Cref{sec:InfiniteWidth} our first main theoretical result, \Cref{lem:lazytraining},
which is about the convergence of $(g_{\theta_\tau}^N, \uN{\theta_\tau}, \uNhat{\theta_\tau})$ to their infinite-width hidden layer limit counterparts $(g^*_\tau, \up{\tau}, \uhatp{\tau})$ as the number~$N$ of neurons tends to infinity.
We conclude with \Cref{sec:convergence},
where we discuss and provide an insightful proof sketch of our second main theoretical result, \Cref{thm:main},
which is concerned with the convergence of the limit NN-PDE solution~$\up{\tau}$ to the target data~$h$ during training, i.e., as the training time $\tau\rightarrow\infty$.
Its proof is based on several auxiliary results which we discuss in detail in the thematic \Cref{sec:infinitewidthNN,sec:decayJ,sec:PDEconsiderations,sec:FunctionalCQtau,sec:cyclestoppingtimes,sec:convergences}.

\Cref{sec:experiments} contains numerical examples demonstrating the theoretical results of the paper. We provide the code implementing the adjoint gradient descent optimization method in the GitHub repository
\url{https://github.com/KonstantinRiedl/NNPDEs}.

As discussed, the proofs of the main results are contained in \Cref{sec:infinitewidthNN,sec:decayJ,sec:PDEconsiderations,sec:FunctionalCQtau,sec:cyclestoppingtimes,sec:convergences}.
\Cref{sec:infinitewidthNN} is dedicated to presenting the mathematical tools related to the NN.
In \Cref{sec:decayJ}, we compute the training time derivative of the loss~$\J{\tau}$ and show that $\fd{\tau}\J{\tau}= -\alpha_\tau \Q{\tau}$ with the quadratic functional $\Q{\tau} = (\uhatp{\tau},T_{B_0}\uhatp{\tau})_{L_2(D_T)}$ of the adjoint.
This implies in particular that $\J{\tau}$ is monotonically non-increasing.
Leveraging this property, we provide in \Cref{sec:PDEconsiderations} uniform (in the training time $\tau$) bounds on several norms of the PDE solution $\up{\tau}$ and the adjoint $\uhatp{\tau}$, 
which eventually, by analyzing a second-level adjoint system of the original adjoint PDE in \Cref{sec:FunctionalCQtau}, permit to establish a regularity bound for the functional $\Q{\tau}$ in terms of the learning rate~$\alpha_\tau$.
Adapting a cycle of stopping times analysis \cite{bertsekas2000gradient,sirignano2022online} while leveraging the aforementioned regularity bound,
we eventually prove in \Cref{sec:cyclestoppingtimes} that $\fd{\tau}\J{\tau}= -\alpha_\tau \Q{\tau}$ implies $\Q{\tau}\rightarrow0$ as the training time $\tau\rightarrow\infty$.
With the positive definiteness of the NN kernel operator $T_{B_0}$ we therefrom infer in \Cref{sec:convergences} the weak convergence $\uhatp{\tau}\rightharpoonup 0$ and thus $\up{\tau}\rightharpoonup h$ by definition of the adjoint PDE~\eqref{eq:parabolicadjoint*}.
We conclude \Cref{sec:convergences} by proving as a result of independent interest, that (strong) limit points~$\up{\infty}$ of the trained NN-PDE solution $\up{\tau}$ satisfy $\up{\infty} \equiv h$ a.e.\@, thus being global minimizers of the loss~$\J{}$.

\subsection{Notation}
\label{sec:notation}

We denote by $D\subset\bbR^d$ the spatial domain of the considered parabolic PDE.
Its boundary is $\partial D$.
$T$ denotes the physical time horizon of the PDE.
$D_T := (0,T)\times D$ denotes the time-space domain.
Its lateral surface is $\partial D_T:= [0,T]\times \partial D$, and $\Gamma_T:=\partial D_T \cup \{(t,x):t=0,x\in D\}$.
Moreover, for $\Delta T'>0$, we introduce the notation $D_{T',T'+\Delta T'}:=(T',T'+\Delta T')\times D$.

For a spatial domain $D$, the spaces $L_p(D)$ and $W^k_p(D)$ denote the classical Lebesgue and Sobolev spaces.
They contain all measurable functions $u:D\rightarrow\bbR$ with finite corresponding norm.
For the norms on those spaces it holds $\N{u}^p_{L_p(D)}=\int_D \abs{u(x)}^p dx$ and $\N{u}^p_{W^k_p(D)}=\sum_{\abs{\alpha}\leq k} \N{D^\alpha u}^p_{L_p(D)}$ or $\N{u}_{L_\infty(D)}=\esssup_{x\in D} \abs{u(x)}$ and $\N{u}_{W^k_\infty(D)}=\max_{\abs{\alpha}\leq k} \N{D^\alpha u}_{L_\infty(D)}$ in the case $p=\infty$.
We abbreviate $H^k(D) = W^k_2(D)$ and denote by $H_0^1(D)$ the space of all functions in $H^1(D)$ with zero trace.
$H^{-1}(D)$ denotes the dual space of $H_0^1(D)$.

For a time-space domain $D_T$, the spaces $L_p(D_T)$ denote the classical Lebesgue spaces.
They contain all measurable functions $u:D_T\rightarrow\bbR$ with finite corresponding norm.
For the norms on those spaces it hold $\N{u}^p_{L_p(D_T)}=\int_0^T\!\!\!\int_D \abs{u(t,x)}^p dxdt$ and $\N{u}_{L_\infty(D_T)}=\esssup_{(t,x)\in D_T} \abs{u(t,x)}$ in the case $p=\infty$.

For a function space~$\CX$ on the space $D$, the spaces $L_p([0,T],\CX)$ denote the Bochner spaces.
Let us associate with a function $u:D_T\rightarrow\bbR$ the mapping $\mathbf{u}:[0,T]\rightarrow \CX$ defined by $\mathbf{u}(t) := u(t,\dummy)$.
In what follows we may abuse notation and write $u$ in place of $\mathbf{u}$.
The Bochner spaces contain all strongly (Bochner) measurable functions with finite Bochner norm. 
For those norms it hold $\N{u}_{L_p([0,T],\CX)}^p=\int_0^T \N{u(t,\dummy)}_\CX^p dt$ or $\N{u}_{L_\infty([0,T],\CX)}=\esssup_{t\in[0,T]}\N{u(t,\dummy)}_\CX$ in the case $p=\infty$, see \cite[Section~5.9.2]{evans2010partial}.

A weak solution to the nonlinear parabolic PDE \eqref{eq:parabolicPDEN_plain} in the sense of \cite[Chapter~7]{evans2010partial} is defined as follows.
\begin{definition}[Weak solution of \eqref{eq:parabolicPDEN_plain}]
    \label{def:weak_sol}
    A function $\uN{\theta}\in L_2([0,T],H_0^1(D))$ with weak derivative $\fpartial{t}\uN{\theta}\in L_2([0,T],H^{-1}(D))$ is a weak solution of the PDE \eqref{eq:parabolicPDEN_plain} provided
    \begin{enumerate}[label=(\roman*),labelsep=10pt,leftmargin=35pt]
        \item $\left\langle \fpartial{t}\uNv{\theta}{t,\dummy},v\right\rangle_{H^{-1}(D),H_0^1(D)} + \CB[\uNv{\theta}{t,\dummy},v;t] - (q(\uNv{\theta}{t,\dummy}),v)_{L_2(D)}= \left(g_\theta(t,\dummy),v\right)_{L_2(D)}$
    \end{enumerate}
    for each $v\in H_0^1(D)$ and a.e.\@ time $t\in[0,T]$, where the bilinear form $\CB$ is given by
    \begin{equation}
        \label{eq:CB}
        \CB[u,v;t]
        := \int_U \sum_{i,j=1}^d  a^{ij}(t,x) \fpartial{x_i} u \fpartial{x_j} v
        + \sum_{i=1}^d b^i(t,x) \fpartial{x_i} u v
        + c(t,x)uv \,dx,
    \end{equation}
    and
    \begin{enumerate}[label=(\roman*),labelsep=10pt,leftmargin=35pt]
        \setcounter{enumi}{1}
        \item $\uNv{\theta}{0,\dummy}=f$.
    \end{enumerate}
\end{definition}

Analogously, we define a weak solution to the linear PDE \eqref{eq:parabolicadjointN_plain} as follows.

\begin{definition}[Weak solution of \eqref{eq:parabolicadjointN_plain}]
    \label{def:weak_sol_adjoint}
    A function $\uNhat{\theta}\in L_2([0,T],H_0^1(D))$ with weak derivative $\fpartial{t}\uNhat{\theta}\in L_2([0,T],H^{-1}(D))$ is a weak solution of the adjoint PDE \eqref{eq:parabolicadjointN_plain} (parabolic backward PDE) provided
    \begin{enumerate}[label=(\roman*),labelsep=10pt,leftmargin=35pt]
        \item \label{item:def:weak_sol_adjoint_1}$\left\langle -\fpartial{t}\uNhatv{\theta}{t,\dummy},v\right\rangle_{H^{-1}(D),H_0^1(D)} + \CB^\dagger[\uNhatv{\theta}{t,\dummy},v;t] - (q_u(\uNv{\theta}{t,\dummy})\uNhatv{\theta}{t,\dummy},v)_{L_2(D)}
        $\\ $\phantom{XXX}= \left(\uNv{\theta}{t,\dummy}-h,v\right)_{L_2(D)}$
    \end{enumerate}
    for each $v\in H_0^1(D)$ and a.e.\@ time $t\in[0,T]$, where $\CB^\dagger$ denotes the adjoint bilinear form satisfying $\CB^\dagger[\widehat{u},u;t]=\CB[u,\widehat{u};t]$,
    and
    \begin{enumerate}[label=(\roman*),labelsep=10pt,leftmargin=35pt]
        \setcounter{enumi}{1}
        \item \label{item:def:weak_sol_adjoint_2}$\uNhatv{\theta}{T,\dummy}=0$.
    \end{enumerate}
\end{definition}

Since we investigate the evolution of the PDE solutions to \eqref{eq:parabolicPDEN_plain}\,\&\,\eqref{eq:parabolicadjointN_plain} during training (see \eqref{eq:GD}),
we are interested in their training time trajectories which we denote by $((\uN{\theta_\tau},\uNhat{\theta_\tau}))_{\tau\in[0,\CT]}$.
The function space $\CC\left([0,\CT],\CS\times\CS\right)$ denotes the space of all such continuous trajectories, i.e., the space of all continuous functions mapping from $[0,\CT]$ to $\CS\times\CS$.

By $C$ we typically denote generic constants, which may vary throughout the proof.
To keep the notation concise,
we indicate 
by $\alpha$ their dependency on $\alpha_0$ or $\int_0^\infty \alpha_\tau^2 \,d\tau$ (see \eqref{eq:learning_rate}),
by $D$ their dependency on $\vol{D}$ or $\abs{D}$ (see Assumption~\ref{asm:Dbdd}),
by $\CL$ their dependency on $\nu$, some norms of $a^{ij}$, $b^i$, $c$, as well as their partial space derivatives, or some norms of $f$ and $h$ (see Assumptions~\ref{asm:PDE_L_parabolicity}, \ref{asm:PDE_coefficients}, \ref{asm:WP_coefficients} and \ref{asm:WP_initialcondition}),
by $q$ their dependency on properties of $q$ (see Assumptions~\ref{asm:PDE_nonlinearity_q_ubdd}, \ref{asm:PDE_nonlinearity_q_uubdd}, \ref{asm:WP_nonlinearity_q_growth} and \ref{asm:WP_nonlinearity_q_ubdd}), 
by $\sigma$ their dependency on properties of the NN nonlinearity $\sigma$ (see Assumptions~\ref{asm:NN_sigma} and \ref{asm:NN_sigma'}),
and
by $\mu_0$ their dependency on properties of $\mu_0$ (see Assumption~\ref{asm:NN_mu0}).

\section{Discussion of the Main Results}
\label{sec:main}

This section is dedicated to the discussion of the main theoretical contributions of this paper.

\subsection{Assumptions}
\label{sec:assumptions}

Let us start by stating all assumptions used throughout this manuscript.
We cluster them into assumptions related to the PDE~\eqref{eq:parabolicPDEN_plain}, which we summarize in Assumptions~\ref{def:PDE_assumptions} and \ref{def:WP_assumptions}, and assumptions on the NN listed thereafter in \Cref{def:NN_assumptions}.

\begin{assumption}[Second-order semi-linear parabolic PDE~\eqref{eq:parabolicPDEN_plain}]
    \label{def:PDE_assumptions}
	Throughout we assume that the time horizon~$T$ of the PDE~\eqref{eq:parabolicPDEN_plain} is finite and that the spatial domain~$D\subset\bbR^d$ of the PDE~\eqref{eq:parabolicPDEN_plain}
	\begin{enumerate}[label=A\arabic*,labelsep=10pt,leftmargin=35pt]
        \item\label{asm:D} is an open connected set with a $C^2$ smooth boundary $\partial D$,
        \item\label{asm:Dbdd} has finite volume $\vol{D}$ and is bounded by $\abs{D}$.
    \end{enumerate}
    Moreover, we assume that
	\begin{enumerate}[label=A\arabic*,labelsep=10pt,leftmargin=35pt,]
        \setcounter{enumi}{2}
        \item\label{asm:PDE_L_parabolicity} the parabolic PDE operator~$\fpartial{t}+\CL$ is uniformly parabolic, i.e., there exists $\nu>0$ such that $\sum_{i,j=1}^d a^{ij}(t,x)\xi_i\xi_j \geq \nu\N{\xi}^2$ for all $(t,x)\in \overbar{D_T}$ and $\xi\in\bbR^d$,
        \item\label{asm:PDE_coefficients} the coefficients $a^{ij},b^i,c \in L_\infty(D_T)$ and $\fpartial{x_k}a^{ij},\fpartial{x_k}b^i\in L_\infty(D_T)$,
        \item\label{asm:PDE_nonlinearity_q_ubdd} the nonlinearity~$q$ is such that $\abs{q_u}\leq c_q$ for a constant $c_q>0$,
        \item\label{asm:PDE_nonlinearity_q_uubdd} the nonlinearity~$q$ is such that $\abs{q_{uu}}\leq c'_{q}$ for a constant $c'_{q}>0$.
    \end{enumerate}
\end{assumption}

\begin{assumption}[Well-posedness of second-order semi-linear parabolic PDE~\eqref{eq:parabolicPDEN_plain}]
    \label{def:WP_assumptions}
    Moreover, we assume that
	\begin{enumerate}[label=W\arabic*,labelsep=10pt,leftmargin=35pt,]
        \item\label{asm:WP_coefficients} the coefficients $a^{ij},b^i,c \in L_\infty(D_T)$ and $\fpartial{x_k}a^{ij},\fpartial{x_k}b^i\in L_\infty(D_T)$ are $(\gamma_1/2,\gamma_1)$-Hölder continuous in $(t,x)$ with $\gamma_1>0$,
        \item\label{asm:WP_nonlinearity_q_growth} the nonlinearity~$q$ is $(\gamma_1/2,\gamma_1)$-Hölder continuous in $(t,x)$ with $\gamma_1>0$ and such that $\abs{q(u)}\leq C_q(1+\abs{u})$ for any $u\in\bbR$ and for a constant $C_q>0$,
        \item\label{asm:WP_nonlinearity_q_ubdd} the nonlinearity~$q$ is such that $q_u$ is continuous,
        \item\label{asm:WP_initialcondition} the initial condition $f\in\CC^2(D)$ with $f|_{\partial D} = 0$ is $\gamma_2$-Hölder continuous.
    \end{enumerate}
\end{assumption}

\begin{remark}
    \label{rem:assumptionsWP}
    The conditions of \Cref{def:WP_assumptions} are required only for the well-posedness proof in \Cref{lem:parabolic_wellposedness_N,lem:parabolic_wellposedness_infinitewidth}.
    If well-posedness as below can be shown under a different set of assumptions, these new assumptions would replace \Cref{def:WP_assumptions}.
\end{remark}

\begin{assumption}[Neural network in \eqref{eq:gN}]\label{def:NN_assumptions}
	Throughout we assume that the NN is such that
	\begin{enumerate}[label=B\arabic*,labelsep=10pt,leftmargin=35pt]
        \item\label{asm:NN_sigma} the nonlinearity~$\sigma$ of the NN is non-constant, bounded (i.e., $\abs{\sigma}\leq C_\sigma$), and $L_\sigma$-Lipschitz continuous,
        \item\label{asm:NN_sigma'} the derivative $\sigma'$ of the nonlinearity~$\sigma$ of the NN is bounded (i.e., $\abs{\sigma'} \leq C_{\sigma'}$) and $L_{\sigma'}$-Lipschitz continuous,
        \item\label{asm:NN_mu0} the randomly initialized NN parameters~$\theta_0^i = (c^i_0,w^{t,i}_0,w^i_0,\eta^i_0)$ are i.i.d.\@ and drawn from a distribution $\mu_0\in\CP(\bbR\times\bbR\times\bbR^{d}\times\bbR)$
        which is such that 
        \begin{enumerate}[label=(\roman*),labelsep=10pt,leftmargin=35pt]
            \item $c^i_0$ is independent from $(w^{t,i}_0,w^i_0,\eta^i_0)$, \label{asm:NN_mu0i}
            \item the marginal distribution $\mu_{0,c}$ of $c^i_0$ is mean-zero and compactly supported,  \label{asm:NN_mu0ii}
            \item the marginal distribution $\mu_{0,(w^t,w,\eta)}$ of $(w^{t,i}_0,w^i_0,\eta^i_0)$ has  bounded $k$th-order moments $\CM_{k}(\mu_{0,(w^t,w,\eta)})$ for $k=\max\{4,d+2\}$, \label{asm:NN_mu0iii}
            \item the marginal distribution $\mu_{0,(w^t,w,\eta)}$ of $(w^{t,i}_0,w^i_0,\eta^i_0)$ assigns positive probability to every set with positive Lebesgue measure. \label{asm:NN_mu0iv}
        \end{enumerate}
    \end{enumerate}
\end{assumption}

\subsection{A Computationally Efficient Formula for \texorpdfstring{$\nabla_{\theta}\JN{\theta}$}{the Gradient of the Loss w.r.t. the NN Parameters}}
\label{sec:formulagrad}

As pointed out in the introduction,
the practicability of the adjoint gradient descent method~\eqref{eq:GD} is thanks to an efficient computation of the gradient~$\nabla_{\theta}\JN{\theta}$.
The following result proves \eqref{eq:nablaJ}.
Its proof is given at the end of \Cref{sec:decayJ}.
\begin{lemma}
\label{lem:grad_thetaJ}
    Let $\uN{\theta},\nabla_\theta g_\theta^N \in L_2(D_T)$ and 
    let $\uNhat{\theta}$ denote a weak solution to \eqref{eq:parabolicadjointN_plain} in the sense of \Cref{def:weak_sol_adjoint}, which satisfies $\fpartial{t}\uNhatv{\theta}{t,\dummy}\in L_2(D)$ for a.e.\@ $t\in[0,T]$.
    Define the loss $\JN{\theta}$ as in \eqref{eq:parabolicJ}.
    Then, the gradient $\nabla_\theta\JN{\theta}$ w.r.t.\@ the NN parameters~$\theta$ can be written as
    \eqref{eq:nablaJ}.
\end{lemma}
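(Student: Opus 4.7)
\textbf{Proof proposal for Lemma \ref{lem:grad_thetaJ}.}

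The plan is to apply the classical adjoint method in the weak-PDE setting. I would first differentiate the error functional $\JN{\theta}$ with respect to a single NN parameter $\theta^j$, obtaining
\begin{equation*}
    \fpartial{\theta^j}\JN{\theta}
    = \int_0^T\!\!\!\int_D (\uNu{\theta}{t,x}-h(t,x))\,\fpartial{\theta^j}\uNu{\theta}{t,x}\,dxdt,
\end{equation*}
so that the task reduces to expressing this sensitivity integral in terms of $\uNhat{\theta}$ and $\nabla_\theta g_\theta^N$. To do so, the next step is to derive the \emph{sensitivity PDE} satisfied by $v := \fpartial{\theta^j}\uN{\theta}$. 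Formally differentiating \eqref{eq:parabolicPDE*_plain} and using the chain rule on $q(\uN{\theta})$ (which is licit since $q_u$ is bounded by \ref{asm:PDE_nonlinearity_q_ubdd}) yields the linear parabolic forward PDE
\begin{alignat*}{2}
    \fpartial{t}v + \CL v - q_u(\uN{\theta})v &= \fpartial{\theta^j} g_\theta^N \qquad&&\text{in } D_T, \\
    v &= 0 \qquad&&\text{on } [0,T]\times\partial D, \\
    v &= 0 \qquad&&\text{on } \{0\}\times D,
\end{alignat*}
whose weak formulation I would justify using standard linear parabolic theory (Galerkin and the a priori bounds implicit in the assumptions), granting $v\in L_2([0,T],H^1_0(D))$ with $\fpartial{t}v\in L_2([0,T],H^{-1}(D))$.

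The core step is the integration-by-parts pairing between $v$ and $\uNhat{\theta}$. I would use $v(t,\dummy)\in H^1_0(D)$ as the test function in item \ref{item:def:weak_sol_adjoint_1} of \Cref{def:weak_sol_adjoint} and, dually, $\uNhatu{\theta}{t,\dummy}\in H^1_0(D)$ as the test function in the sensitivity PDE above, then integrate each identity from $t=0$ to $t=T$. Subtracting the two identities, the two bilinear form contributions cancel by $\CB^\dagger[\uNhat{\theta},v;t]=\CB[v,\uNhat{\theta};t]$, and the nonlinear cross-terms $(q_u(\uN{\theta})\uNhat{\theta},v)_{L_2(D)}$ and $(q_u(\uN{\theta})v,\uNhat{\theta})_{L_2(D)}$ cancel as well. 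What remains is
\begin{equation*}
    \int_0^T\!\!\!\int_D(\uN{\theta}-h)v\,dxdt - \int_0^T\!\!\!\int_D(\fpartial{\theta^j}g_\theta^N)\uNhat{\theta}\,dxdt
    = \int_0^T\!\!\Big(\langle-\fpartial{t}\uNhat{\theta},v\rangle + \langle\fpartial{t}v,\uNhat{\theta}\rangle\Big) dt.
\end{equation*}
The main obstacle is to show the right-hand side vanishes. Since the hypotheses give $\fpartial{t}\uNhat{\theta}\in L_2(D)$ for a.e.\@ $t$ and the sensitivity solution has $\fpartial{t}v\in L_2([0,T],H^{-1}(D))$, the integration-by-parts-in-time formula for Bochner functions (see \cite[Section 5.9.2, Thm 3]{evans2010partial}) applies to the product $t\mapsto(v(t,\dummy),\uNhat{\theta}(t,\dummy))_{L_2(D)}$, giving
\begin{equation*}
    \int_0^T\Big(\langle\fpartial{t}v,\uNhat{\theta}\rangle - \langle\fpartial{t}\uNhat{\theta},v\rangle\Big)\,dt
    = (v(T,\dummy),\uNhatu{\theta}{T,\dummy})_{L_2(D)} - (v(0,\dummy),\uNhatu{\theta}{0,\dummy})_{L_2(D)},
\end{equation*}
which is zero by the initial condition $v(0,\dummy)=0$ of the sensitivity PDE and the terminal condition $\uNhatu{\theta}{T,\dummy}=0$ from item \ref{item:def:weak_sol_adjoint_2} of \Cref{def:weak_sol_adjoint}. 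Combining these identities yields
\begin{equation*}
    \fpartial{\theta^j}\JN{\theta}
    = \int_0^T\!\!\!\int_D \fpartial{\theta^j}g_\theta^N(t,x)\,\uNhatu{\theta}{t,x}\,dxdt,
\end{equation*}
and assembling over all coordinates of $\theta$ gives the claimed formula \eqref{eq:nablaJ}. The delicate points to watch are (i) the well-posedness and regularity of the sensitivity PDE so that the pairings make sense, and (ii) justifying the differentiation of $\JN{\theta}$ under the integral sign, which follows from a dominated-convergence argument using the a priori $L_2$ bounds on $\uN{\theta}$ and the smoothness of $g_\theta^N$ in $\theta$.
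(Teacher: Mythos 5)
Your proposal is correct and follows essentially the same route as the paper: differentiating $\JN{\theta}$, introducing the sensitivity PDE for $\nabla_\theta\uN{\theta}$ (the paper's \eqref{eq:parabolicpI*aux_plain}), testing the adjoint equation with the sensitivity solution and vice versa, cancelling the bilinear forms and $q_u$ cross-terms, and killing the temporal boundary terms via $\nabla_\theta\uN{\theta}(0,\dummy)=0$ and $\uNhatu{\theta}{T,\dummy}=0$. The only cosmetic difference is that you subtract the two integrated weak identities at once, whereas the paper substitutes step by step, and both justify the time integration by parts by the same $L_2$-in-time regularity of $\fpartial{t}\uNhat{\theta}$ and the regularity of the sensitivity solution.
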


\subsection{Well-Posedness of the NN-PDE Training Dynamics in the Finite-Width Hidden Layer Regime}
\label{sec:WellPosedness_N}

We first provide a result about the well-posedness of the NN-PDE training dynamics in the finite-width hidden layer regime, i.e, for the PDE system \eqref{eq:parabolicPDEN_plain}\,\&\,\eqref{eq:parabolicadjointN_plain} coupled with the gradient descent update~\eqref{eq:GD} for the NN parameters of the NN function $g_{\theta_\tau}^N$.

\begin{lemma}[Well-posedness of NN-PDE training dynamics \eqref{eq:parabolicPDEN_plain}\,\&\,\eqref{eq:parabolicadjointN_plain}]
    \label{lem:parabolic_wellposedness_N}
    Let $N\in\bbN$ be fixed and let $\theta_0=(\theta_0^i)_{i=1,\dots,N}$ be initialized such that $\theta_0^i\sim\mu_0$ for each $i=1,\dots,N$. 
    Assume that the learning rate satisfies additionally $\int_{0}^\infty\alpha_\tau^{4/3}\,d\tau<\infty$.
    Let $\CT<\infty$ be a given training time horizon.
    Then there exists a unique weak solution
    \begin{equation}
        \left((\uN{\theta_\tau},\uNhat{\theta_\tau})_{\tau\in[0,\CT]}\right)\in \CC\left([0,\CT],\CS\times\CS\right)
    \end{equation}
    to the PDE system \eqref{eq:parabolicPDEN_plain}\,\&\,\eqref{eq:parabolicadjointN_plain} coupled with the gradient descent update~\eqref{eq:GD} in the sense of \Cref{def:weak_sol,def:weak_sol_adjoint}
    which satisfies $(\partial_t\uNv{\theta_\tau}{t,\dummy},\partial_t\uNhatv{\theta_\tau}{t,\dummy})\in L_2(D) \times L_2(D)$
    for a.e.\@ $t\in[0,T]$ and for every $\tau\in[0,\CT]$.
\end{lemma}

The proof is based on a fixed point argument which allows one to decouple the PDE system \eqref{eq:parabolicPDEN_plain}\,\&\,\eqref{eq:parabolicadjointN_plain} from the gradient descent update~\eqref{eq:GD}.
After invoking classical existence results for the nonlinear PDE system \eqref{eq:parabolicPDEN_plain}\,\&\,\eqref{eq:parabolicadjointN_plain} from \cite{ladyzhenskaia1968linear} for given NN parameter updates $\widetilde{c}_\tau^i,\widetilde{w}_\tau^{t,i},\widetilde{w}_\tau^i$, and $\widetilde{\eta}_\tau^i$,
we eventually employ the Banach fixed point theorem to prove well-posedness of the PDE system \eqref{eq:parabolicPDEN_plain}\,\&\,\eqref{eq:parabolicadjointN_plain} coupled with the gradient descent update~\eqref{eq:GD} on a local training time domain, which is eventually extended by a bootstrapping argument.

\begin{remark}
    \label{rem:learning_rate_4/3}
    The additional assumption $\int_{0}^\infty\alpha_\tau^{4/3}\,d\tau<\infty$ on the learning rate required in \Cref{lem:parabolic_wellposedness_N}
    is slightly stronger than \eqref{eq:learning_rate} but satisfied by typical learning rates such as $\alpha_\tau=\frac{1}{1+\tau}$.
    We leverage this assumption in the proof of \Cref{lem:parabolic_wellposedness_N} to deal with the technical challenges arising from the NN integral operator varying during training.
            
    More precisely, when establishing contractivity estimates for the application of the Banach fixed point theorem, this nonlinearity leads to the appearance of higher-order (up to fourth-order) product terms between the NN parameters and the adjoint PDE solution, see, e.g., \eqref{eq:proof:lem:parabolic_wellposedness_N:CONTRACTION}, where the bound on the right-hand side scales as $C (1+M)^4\CT$ with $M$ denoting a bound on the NN parameters and the adjoint PDE solution up to time $\CT$.
    When extending well-posedness by a bootstrapping argument from a training time interval $[0,\CT_{k-1}]$ to $[0,\CT_{k} = \CT_{k-1}+\Delta\CT_k]$,
    this requires us (see particularly \eqref{Eq:ContractivityFfiniteSystem}) to choose time intervals of the form $\Delta\CT_k \propto 1/(C(1+M_k^4))$, more precisely \eqref{eq:proof:lem:parabolic_wellposedness_N:CTk}. 
    One can see from \eqref{eq:proof:lem:parabolic_wellposedness_N:SELFMAPPINGk} that assuming only \eqref{eq:learning_rate} on the learning rate would inevitably lead to the choice $M_k\propto Ck^{1/2}+C(h,f,\theta_0)$. 
    However, in that case, we  notice that the series $\sum_{k=0}^\infty \Delta\CT_k$ is essentially a geometric series, which does not diverge. Therefore,  this would not allow to extend well-posedness to arbitrary time horizons.
    In contrast, the additional assumption on the learning rate allows us to control more stringently the worst-case growth of the NN parameters, the NN function, and the NN-PDE solution during training, see, e.g., \eqref{eq:proof:lem:parabolic_wellposedness_N:SELFMAPPING} and \eqref{eq:proof:lem:parabolic_wellposedness_N:Mk}, where we estimate that they grow as $M_k\propto Ck^{1/4}+C(h,f,\theta_0)$.
    This can be exploited to balance the appearance of the higher-order product terms with the slower growth,
    enabling us to eventually get that the series $\sum_{k=0}^\infty \Delta\CT_k$ is essentially a harmonic series, which diverges, therefore allowing for a global well-posedness result, see, e.g., \eqref{eq:proof:lem:parabolic_wellposedness_N:harmonicseries}.
\end{remark}

Due to its technical nature, the proof is deferred to \Cref{sec:WellPosedness_Proof_N}.

\subsection{Well-Posedness of the NN-PDE Training Dynamics in the Infinite-Width Hidden Layer Limit}
\label{sec:WellPosedness_infinitewidth}

Let us now provide a result about the well-posedness of the NN-PDE training dynamics in the infinite-width hidden layer limit, i.e, for the PDE system \eqref{eq:parabolicPDE*}--\eqref{eq:parabolicadjoint*} coupled with the integro-differential equation~\eqref{eq:parabolicgtau} for $g^*_\tau$.

\begin{lemma}[Well-posedness of NN-PDE training dynamics \eqref{eq:parabolicPDE*}--\eqref{eq:parabolicadjoint*}]
    \label{lem:parabolic_wellposedness_infinitewidth}
    Let $\CT<\infty$ be a given training time horizon.
    Then there exists a unique weak solution
    \begin{equation}
        \left((\up{\tau},\uhatp{\tau})_{\tau\in[0,\CT]}\right)\in \CC\left([0,\CT],\CS\times\CS\right)
    \end{equation}
    to the PDE system \eqref{eq:parabolicPDE*}--\eqref{eq:parabolicadjoint*} coupled with \eqref{eq:parabolicgtau} in the sense of \Cref{def:weak_sol,def:weak_sol_adjoint}
    which satisfies $(\partial_t\upp{\tau}{t,\dummy},\partial_t\uhatpp{\tau}{t,\dummy})\in L_2(D) \times L_2(D)$
    for a.e.\@ $t\in[0,T]$ and for every $\tau\in[0,\CT]$.
\end{lemma}

The proof resembles the one of \Cref{lem:parabolic_wellposedness_N}
and is based again on a fixed point argument which allows decoupling the PDE system \eqref{eq:parabolicPDE*}--\eqref{eq:parabolicadjoint*} from the integro-differential equation~\eqref{eq:parabolicgtau} as before.
After invoking classical existence results for the nonlinear PDE system \eqref{eq:parabolicPDE*}--\eqref{eq:parabolicadjoint*} from \cite{ladyzhenskaia1968linear} for a given right-hand side $\widetilde{g}_\tau$,
we eventually employ the Banach fixed point theorem to prove well-posedness of the PDE system \eqref{eq:parabolicPDE*}--\eqref{eq:parabolicadjoint*} coupled with the integro-differential equation~\eqref{eq:parabolicgtau} on a local training time domain, which is eventually extended by a bootstrapping argument.

Due to its technical nature, the proof is deferred to \Cref{sec:WellPosedness_Proof_infintiewidth}.

\begin{remark}
    \label{rem:parabolic_wellposedness}
    With the statements of \Cref{lem:parabolic_wellposedness_N} and \Cref{lem:parabolic_wellposedness_infinitewidth} being valid for arbitrary training time horizons~$\CT$,
    we can infer well-posedness of the NN-PDE training dynamics \eqref{eq:parabolicPDEN_plain}\,\&\,\eqref{eq:parabolicadjointN_plain} as well as well-posedness of the NN-PDE training dynamics \eqref{eq:parabolicPDE*}--\eqref{eq:parabolicadjoint*} on the infinite training time interval $[0,\infty)$. In particular, it is proven in Step~2e in the proofs of \Cref{lem:parabolic_wellposedness_N} and \Cref{lem:parabolic_wellposedness_infinitewidth} that the Banach fixed point theorem gives existence of the corresponding solution globally in the training time.
\end{remark}

\subsection{Infinite-Width Neural Network Perspective}
\label{sec:InfiniteWidth}

Our first main theoretical result, \Cref{lem:lazytraining}, rigorously
proves that the PDE system \eqref{eq:parabolicPDE*}--\eqref{eq:parabolicadjoint*} coupled with the integro-differential equation~\eqref{eq:parabolicgtau} is indeed the correct limit of the PDE system \eqref{eq:parabolicPDEN_plain}\,\&\,\eqref{eq:parabolicadjointN_plain} coupled with the gradient descent update~\eqref{eq:GD} as the number of neurons $N\rightarrow\infty$.

\begin{proof}[Proof of \Cref{lem:lazytraining}]
    \textbf{Step 1: Boundedness of gradient descent updates.}
    Let us first prove that the gradient descent updates~\eqref{eq:GD} are uniformly bounded in $N$ and in the training time $\tau$ for $\tau\in[0,\CT]$.
    According to formula~\eqref{eq:GD} and using the definition of the learning rate $\alpha^N_\tau$ it holds for continuous-time gradient descent $\fd{\tau} \theta_\tau = - \frac{\alpha_\tau}{N^{1-2\beta}} \nabla_{\theta}\JN{\theta_\tau}$,
    which allows to explicitly derive expressions for $\fd{\tau} c_\tau^i$, $\fd{\tau} w_\tau^{t,i}$, $\fd{\tau} w_\tau^i$, and $\fd{\tau} \eta_\tau^i$.
    With the fundamental theorem of calculus we therefrom infer
    \begin{subequations}
        \label{eq:proof:lem:lazytraining:1}
    \begin{align}
        c_\tau^i
        &= c_0^i - \frac{1}{N^{1-\beta}} \int_0^\tau \alpha_s \int_0^T\!\!\!\int_D   \sigma\big(w^{t,i}_s t + (w^i_s)^T x + \eta_s^i\big)\uNhatv{\theta_s}{t,x}\, dxdtds, \label{eq:proof:lem:lazytraining:1a} \\
        w_\tau^{t,i}
        &= w^{t,i}_0 - \frac{1}{N^{1-\beta}} \int_0^\tau \alpha_s \int_0^T\!\!\!\int_D   c_s^i\sigma'\big(w^{t,i}_s t + (w^i_s)^T x + \eta_s^i\big)t\uNhatv{\theta_s}{t,x}\, dxdtds, \label{eq:proof:lem:lazytraining:1b}\\
        w_\tau^i
        &= w_0^i - \frac{1}{N^{1-\beta}} \int_0^\tau \alpha_s \int_0^T\!\!\!\int_D   c_s^i\sigma'\big(w^{t,i}_s t + (w^i_s)^T x + \eta_s^i\big)x\uNhatv{\theta_s}{t,x}\, dxdtds, \label{eq:proof:lem:lazytraining:1c}\\
        \eta_\tau^i
        &= \eta_0^i - \frac{1}{N^{1-\beta}} \int_0^\tau \alpha_s \int_0^T\!\!\!\int_D   c^i_s\sigma'\big(w^{t,i}_s t + (w^i_s)^T x + \eta_s^i\big)\uNhatv{\theta_s}{t,x}\, dxdtds. \label{eq:proof:lem:lazytraining:1d}
    \end{align}
    \end{subequations}
    Exploiting that $\sigma$ is bounded as of Assumption \ref{asm:NN_sigma} and that the domain $D$ has bounded volume as of Assumption~\ref{asm:Dbdd},
    we can use \eqref{eq:proof:lem:lazytraining:1a} to bound with Cauchy-Schwarz inequality
    \begin{equation}
        \label{eq:proof:lem:lazytraining:2}
    \begin{split}
        \absbig{c_\tau^i - c_0^i}
        &\leq \frac{1}{N^{1-\beta}} \int_0^\tau \alpha_s \sqrt{\int_0^T\!\!\!\int_D  \big(\sigma\big(w^{t,i}_s t + (w^i_s)^T x + \eta_s^i\big)\big)^2\, dxdt} \N{\uNhat{\theta_s}}_{L_2(D_T)} ds \\
        &\leq \frac{C}{N^{1-\beta}} \int_0^\tau \alpha_s\N{\uNhat{\theta_s}}_{L_2(D_T)} ds
    \end{split}
    \end{equation}
    for a constant $C=C(T,D,\sigma)$.
    
    By following the computations of \textit{Step 1c} in the proof of \Cref{lem:parabolic_wellposedness_infinitewidth} in \Cref{sec:WellPosedness_Proof} that lead to \eqref{eq:proof:lem:parabolic_wellposedness_infinitewidth:NORM_utilde} and \eqref{eq:proof:lem:parabolic_wellposedness_infinitewidth:NORM_uhattilde} for the solutions to the PDE system \eqref{eq:parabolicPDEN_plain}\,\&\,\eqref{eq:parabolicadjointN_plain} coupled with the gradient descent update~\eqref{eq:GD},
    we obtain the bounds
    \begin{equation}
        \label{eq:proof:lem:lazytraining:11}
        \N{\uN{\theta_\tau}}_{L_2([0,T],H^1(D))} + \N{\uN{\theta_\tau}}_{L_\infty([0,T],L_2(D))}
        \leq C\left(\N{f}_{L_2(D)} + \N{g_{\theta_\tau}^N}_{L_2(D_T)} + 1\right)
    \end{equation}
    and
    \begin{equation}
        \label{eq:proof:lem:lazytraining:12}
        \N{\uNhat{\theta_\tau}}_{L_2([0,T],H^1(D))} + \N{\uNhat{\theta_\tau}}_{L_\infty([0,T],L_2(D))}
        \leq C\left(\N{\uN{\theta_\tau}}_{L_2(D_T)} + \N{h}_{L_2(D_T)}\right)
    \end{equation}
    for a constant $C=C(T,\CL,q)$ which is in particular independent of $N$.
    Using the definition of the NN \eqref{eq:gN} we can estimate with Jensen's inequality
    \begin{equation}
        \label{eq:proof:lem:lazytraining:13}
    \begin{split}
        \N{g_{\theta_\tau}^N}_{L_2(D_T)}^2
        &= \int_0^T\!\!\!\int_D \left(\frac{1}{N^\beta}\sum_{i=1}^N c^i_\tau\sigma\big(w^{t,i}_\tau t + (w^i_\tau)^Tx + \eta^i_\tau\big)\right)^2 dxdt \\
        &\leq C\frac{1}{N^{2\beta-2}} \frac{1}{N}\sum_{i=1}^N (c^i_\tau)^2
        = C\frac{1}{N^{2\beta-2}} \gamma_\tau^N,
    \end{split}
    \end{equation}
    for $C=C(T,D,\sigma)<\infty$,
    where we used the boundedness of $\sigma$ as of Assumptions \ref{asm:NN_sigma} and that the domain $D$ has bounded volume as of Assumption~\ref{asm:Dbdd}.
    In the last step, we introduced the notation $\gamma_\tau^N := \frac{1}{N}\sum_{i=1}^N (c^i_\tau)^2$.
    Combining \eqref{eq:proof:lem:lazytraining:11}--\eqref{eq:proof:lem:lazytraining:13},
    we end up with the bound
    \begin{equation}
        \label{eq:proof:lem:lazytraining:14}
        \N{\uNhat{\theta_\tau}}_{L_2([0,T],H^1(D))}^2 + \N{\uNhat{\theta_\tau}}_{L_\infty([0,T],L_2(D))}^2
        \leq C\left(\frac{1}{N^{2\beta-2}} \gamma_\tau^N + \N{f}_{L_2(D)}^2 + \N{h}_{L_2(D_T)}^2 + 1\right)
    \end{equation}
    for a constant $C=C(T,D,\CL,q,\sigma)$.

After squaring both sides of \eqref{eq:proof:lem:lazytraining:2} and using Cauchy-Schwarz inequality we obtain
    \begin{equation}
        \label{eq:proof:lem:lazytraining:31}
    \begin{split}
        \absbig{c_\tau^i - c_0^i}^2
        &\leq \frac{C}{N^{2(1-\beta)}} \int^{\tau}_{0}\alpha_s^{2} \,ds \int_0^\tau \N{\uNhat{\theta_s}}_{L_2(D_T)}^2 ds \\
        &\leq C  \int_0^\tau \gamma_s^N\, ds + \frac{C\tau}{N^{2(1-\beta)}}   \left(\N{f}_{L_2(D)}^2 + \N{h}_{L_2(D_T)}^2 + 1\right),
    \end{split}
    \end{equation}
    where we inserted \eqref{eq:proof:lem:lazytraining:14} and used the second part of \eqref{eq:learning_rate} in the second step.
    Summing over $i=1,\dots,N$ and normalizing by $N$ we can bound
    \begin{equation}
        \label{eq:proof:lem:lazytraining:32}
    \begin{split}
        \gamma_\tau^N
        &\leq 2\gamma_0^N + \frac{2}{N}\sum_{i=1}^N \absbig{c_\tau^i - c_0^i}^2 \\
        &\leq 2\gamma_0^N + C \int_0^\tau \gamma_s^N\, ds + \frac{C\tau}{N^{2(1-\beta)}}  \left(\N{f}_{L_2(D)}^2 + \N{h}_{L_2(D_T)}^2 + 1\right).
    \end{split}
    \end{equation}
    Since $\gamma_0^N$ is compactly supported due to Assumption~\ref{asm:NN_mu0},
    an application of Grönwall's inequality gives the estimate
    \begin{equation}
        \label{eq:proof:lem:lazytraining:33}
        \sup_{\tau\in[0,\CT]} \gamma_\tau^N
        \leq C
    \end{equation}
    for a constant $C=C(\alpha,\CT,T,D,\CL,q,\sigma,\mu_0)$ which is in particular independent of $N$.
    Employing \eqref{eq:proof:lem:lazytraining:33} in \eqref{eq:proof:lem:lazytraining:31} shows after using Young's inequality that
    \begin{equation}
        \label{eq:proof:lem:lazytraining:34}
    \begin{split}
        \absbig{c_\tau^i}^2
        &\leq 2\absbig{c_0^i}^2 + C + \frac{C}{N^{2(1-\beta)}}  \left(\N{f}_{L_2(D)}^2 + \N{h}_{L_2(D_T)}^2 + 1\right),
    \end{split}
    \end{equation}
    for some other, potentially larger, constant $C$.
    Recalling that the parameters~$c_0^i$ are initialized with compact support as of  Assumption~\ref{asm:NN_mu0}, \eqref{eq:proof:lem:lazytraining:34} proves that 
    \begin{equation}
        \label{eq:proof:lem:lazytraining:35}
    \begin{split}
        \sup_{N\in\bbN} \sup_{i=1,\dots,N, \tau\in[0,\CT]} \abs{c_\tau^i}
        &\leq C_c,
    \end{split}
    \end{equation}
    for a constant $C_c=C_c(\alpha,\CT,T,D,\CL,q,\sigma,\mu_0)$ which is in particular independent of $N$.

    Leveraging that the NN parameters~$c^i_\tau$ are uniformly bounded and exploiting that $\sigma'$ is bounded as of Assumption \ref{asm:NN_sigma'} and
    that the domain $D$ is bounded as of Assumption~\ref{asm:Dbdd},
    we can use \eqref{eq:proof:lem:lazytraining:1b}--\eqref{eq:proof:lem:lazytraining:1d} to bound (analogously to \eqref{eq:proof:lem:lazytraining:2}) with Cauchy-Schwarz inequality
    \begin{equation}
        \label{eq:proof:lem:lazytraining:41}
    \begin{split}
        \absbig{w_\tau^{t,i} - w^{t,i}_0} + \Nbig{w_\tau^i - w_0^i} + \absbig{\eta_\tau^i - \eta_0^i}
        & \leq \frac{C}{N^{1-\beta}} \int_0^\tau \alpha_s \N{\uNhat{\theta_s}}_{L_2(D_T)} ds
    \end{split}
    \end{equation}
    for a constant $C=C(T,D,\sigma,C_c)$.
    After squaring both sides of \eqref{eq:proof:lem:lazytraining:41} and using Cauchy-Schwarz inequality we obtain
    \begin{equation}
        \label{eq:proof:lem:lazytraining:42}
    \begin{split}
        & \absbig{w_\tau^{t,i} - w^{t,i}_0}^2 + \Nbig{w_\tau^i - w_0^i}^2 + \absbig{\eta_\tau^i - \eta_0^i}^2
        \leq \frac{C}{N^{2(1-\beta)}} \int^{\tau}_{0}\alpha_{s}^{2} \,ds \int_0^\tau \N{\uNhat{\theta_s}}_{L_2(D_T)}^2 ds \\
        &\qquad\, \leq C  \int_0^\tau \gamma_s^N\, ds + \frac{C\tau}{N^{2(1-\beta)}}  \left(\N{f}_{L_2(D)}^2 + \N{h}_{L_2(D_T)}^2 + 1\right),
    \end{split}
    \end{equation}
    where we inserted \eqref{eq:proof:lem:lazytraining:14} and used the second part of \eqref{eq:learning_rate} in the second step.
    Employing \eqref{eq:proof:lem:lazytraining:33} in \eqref{eq:proof:lem:lazytraining:42} shows after using Young's inequality that for $\tau\in[0,\CT]$ with $\CT<\infty$
    \begin{align}
        \absbig{w_\tau^{t,i}}^2
        &\leq 2\absbig{w^{t,i}_0}^2 + C + \frac{C}{N^{2(1-\beta)}}  \left(\N{f}_{L_2(D)}^2 + \N{h}_{L_2(D_T)}^2 + 1\right),\label{eq:proof:lem:lazytraining:34b} \\
        \N{w_\tau^{i}}^2
        &\leq 2\N{w_0^{i}}^2 + C + \frac{C}{N^{2(1-\beta)}}  \left(\N{f}_{L_2(D)}^2 + \N{h}_{L_2(D_T)}^2 + 1\right),\label{eq:proof:lem:lazytraining:34c} \\
        \absbig{\eta_\tau^i}^2
        &\leq 2\absbig{\eta_0^i}^2 + C + \frac{C}{N^{2(1-\beta)}}  \left(\N{f}_{L_2(D)}^2 + \N{h}_{L_2(D_T)}^2 + 1\right),\label{eq:proof:lem:lazytraining:34d}
    \end{align}
    for a constant $C=C(\alpha,\CT,T,D,\CL,q,\mu_0,C_c)$ which is in particular independent of $N$.

    Recalling that the parameters~$w^{t,i}_0$, $w_0^i$, and $\eta_0^i$ are initialized according to the measure~$\mu_0$, whose marginal distribution $\mu_{0,(w^t,w,\eta)}$ of $(w^{t,i}_0,w^i_0,\eta^i_0)$ has bounded moments as of  Assumption~\ref{asm:NN_mu0}\ref{asm:NN_mu0iii}, \eqref{eq:proof:lem:lazytraining:34b}--\eqref{eq:proof:lem:lazytraining:34d} prove that 
    \begin{equation}
        \label{eq:proof:lem:lazytraining:43}
    \begin{split}
        \sup_{N\in\bbN} \sup_{i=1,\dots,N, \tau\in[0,\CT]} \bbE\left[\abs{w_\tau^{t,i}} +\N{w_\tau^{i}}+\abs{\eta_\tau^{i}}\right]
        \leq C_{w,\eta}
    \end{split}
    \end{equation}
    for a constant $C_{w,\eta}=C_{w,\eta}(\alpha,\CT,T,D,\CL,q,\sigma,\mu_0,C_c)$ which is in particular independent of $N$.

    \textbf{Step 2: Boundedness of the NN~\eqref{eq:gN}.}
    Combining the explicit expressions for $\fd{\tau} c_\tau^i$, $\fd{\tau} w_\tau^{t,i}$, $\fd{\tau} w_\tau^i$, and $\fd{\tau} \eta_\tau^i$,
    we obtain for $\fd{\tau}g_{\theta_\tau}^N$ by taking the training time derivative in \eqref{eq:gN} that
    \begin{equation}  \label{eq:proof:lem:lazytraining:50}
    \begin{split}
        \fd{\tau}g_{\theta_\tau}^N(t,x)
        &= \frac{1}{N^\beta} \sum_{i=1}^N \left(\fd{\tau}c_\tau^i\right)\sigma(\star) + c_\tau^i\sigma'(\star)\left(\left(\fd{\tau}w^{t,i}_\tau \right)t + \left(\fd{\tau}w^i_\tau\right)^T x + \fd{\tau}\eta^i_\tau\right)\\
        &= - \frac{\alpha_\tau}{N} \sum_{i=1}^N \int_0^T\!\!\!\int_D \left(\sigma(\star)\sigma(\star') + (c_\tau^i)^2\sigma'(\star)\sigma'(\star')(tt'+x^Tx'+1)\right) \uNhatv{\theta_\tau}{t',x'}\, dx'dt',
    \end{split}
    \end{equation}
    where we abbreviated $\star=w^{t,i}_\tau t + (w^i_\tau)^T x + \eta^i_\tau$ and $\star'=w^{t,i}_\tau t' + (w^i_\tau)^T x' + \eta^i_\tau$.
    Denoting now by $\mu_\tau^N= \frac{1}{N}\sum_{i=1}^N \delta_{c^i_\tau,w_\tau^{t,i},w^i_\tau,\eta^i_\tau}$ the empirical measure at training time $\tau$ of our fully-connected NN~\eqref{eq:gN} with a single hidden layer with $N$ neuron and their parameters~$\theta_\tau = (c^i_\tau,w_\tau^{t,i},w^i_\tau,\eta^i_\tau)_{i=1,\dots,N}$,
    and using the definition of the NN kernel~$B$ from \eqref{eq:parabolicB},
    we can rewrite the formula for $\fd{\tau}g_{\theta_\tau}^N$ in \eqref{eq:proof:lem:lazytraining:50} as
    \begin{equation}
        \label{eq:proof:lem:lazytraining:52}
    \begin{split}
        \fd{\tau}g_{\theta_\tau}^N(t,x)
        &= -\alpha_\tau\int_0^T\!\!\!\int_D B(t,x,t',x';\mu^N_\tau)\uNhatv{\theta_\tau}{t',x'}\,dx'dt'
        =-\alpha_\tau T_{B(\mu^N_\tau)}\uNhat{\theta_\tau},
    \end{split}
    \end{equation}
    where we used the definition of the NN integral operator~$T_{B}$ from \eqref{eq:parabolicT_B} in the last step.
    Simple integration in the training time~$\tau$ yields by the fundamental theorem of calculus
    \begin{equation}
        \label{eq:proof:lem:lazytraining:53}
    \begin{split}
        g_{\theta_\tau}^N(t,x)
        &= g_{\theta_0}^N(t,x) + \int_0^\tau \fd{s}g_{\theta_s}^N(t,x) \,ds
        = g_{\theta_0}^N(t,x) - \int_0^\tau  \alpha_sT_{B(\mu^N_s)}\uNhat{\theta_s}\,ds.
    \end{split}
    \end{equation} 
    
    It is straightforward to see from the definition of the kernel~$B$ in \eqref{eq:parabolicB} that we can bound 
    \begin{equation}
        \label{eq:proof:lem:lazytraining:54}
    \begin{split}
        \N{B(\mu_\tau^N)}_{L_2(D_T\times D_T)}
        &\leq C \left(1 + \frac{1}{N}\sum_{i=1}^N (c^i_\tau)^2\right)
        \leq C 
    \end{split}
    \end{equation}
    for a constant $C=C(T,D,\sigma,C_c)<\infty$ due to the boundedness  Assumptions \ref{asm:NN_sigma},  \ref{asm:NN_sigma'}, and \ref{asm:Dbdd}, and using \eqref{eq:proof:lem:lazytraining:35} in the last step.
    We can use this in \eqref{eq:proof:lem:lazytraining:53} to bound with Young's inequality, Cauchy-Schwarz inequality and the second part of \eqref{eq:learning_rate},
    \begin{equation}    \label{eq:proof:lem:lazytraining:55}
    \begin{split}
        \N{g_{\theta_\tau}^N}_{L_2(D_T)}^2
        &\leq 2\N{g_{\theta_0}^N}_{L_2(D_T)}^2 + 2\N{\int_0^\tau  \alpha_sT_{B(\mu^N_s)}\uNhat{\theta_s}\,ds}_{L_2(D_T)}^2 \\
        &\leq 2\N{g_{\theta_0}^N}_{L_2(D_T)}^2 + 2\int_0^{\tau}\alpha_{s}^{2}\,ds\int_0^\tau  \N{T_{B(\mu^N_s)}\uNhat{\theta_s}}_{L_2(D_T)}^2 ds \\
        &\leq 2\N{g_{\theta_0}^N}_{L_2(D_T)}^2 + C\int_0^\tau  \N{\uNhat{\theta_s}}_{L_2(D_T)}^2 ds
    \end{split}
    \end{equation}
    for a constant $C=C(\alpha,T,D,\sigma,C_c)<\infty$ which is in particular independent of $N$.
    Using the explicit form \eqref{eq:gN} we notice further that for any $p\geq2$ it holds
    \begin{equation}        \label{eq:proof:lem:lazytraining:56}
    \begin{split}
        \bbE\N{g_{\theta_0}^N}_{L_p(D_T)}^p
        & \leq \frac{C(p)}{N^{p\beta}} \bbE \int_0^T\!\!\!\int_D \left(\sum_{i=1}^N \abs{c_0^i\sigma\big(w^{t,i}_0t + (w_0^i)^Tx + \eta_0^i\big)}^2 \right)^{p/2} dxdt \\
        &\leq \frac{1}{N^{p\beta-p/2}} C \leq C
    \end{split}
    \end{equation}
    for a constant $C=C(p,T,D,\sigma,C_c)<\infty$.
    To obtain the first inequality in \eqref{eq:proof:lem:lazytraining:56} we used the Marcinkiewicz-Zygmund inequality with random variables $z^i(t,x) = c_0^i\sigma\big(w^{t,i}_0t + (w_0^i)^Tx + \eta_0^i\big)$, which are independent thanks to the initial independence of the parameters $\theta_0^i = (c^i_0,w^{t,i}_0,w^i_0,\eta^i_0)$ as of Assumption~\ref{asm:NN_mu0}, mean-zero due to the $c^i_0$ having zero mean and being drawn independently from the other parameters as of Assumptions~\ref{asm:NN_mu0}\ref{asm:NN_mu0ii} and \ref{asm:NN_mu0i}, and have finite $p$th moments due to the $c^i_0$'s being compactly supported as of Assumption~\ref{asm:NN_mu0}\ref{asm:NN_mu0ii} together with the boundedness of $\sigma$ from Assumption~\ref{asm:NN_sigma}.
    The last two reasons also justify the second inequality in \eqref{eq:proof:lem:lazytraining:56}, while the third inequality holds since $\beta\in(1/2,1)$.
    With $p=2$, this allows to conclude \eqref{eq:proof:lem:lazytraining:55} after taking the expectation with the estimate
    \begin{equation}        \label{eq:proof:lem:lazytraining:57}
    \begin{split}
        \sup_{s\in[0,\tau]} \bbE\N{g_{\theta_s}^N}_{L_2(D_T)}^2
        &\leq C \left(1 + \int_0^\tau  \bbE\N{\uNhat{\theta_s}}_{L_2(D_T)}^2 ds \right)
    \end{split}
    \end{equation}
    for a constant $C=C(\alpha,T,D,\sigma,C_c)<\infty$ which is in particular independent of $N$.

    Leveraging this estimate we can bound the norm of the adjoint in expectation as
    \begin{equation}
        \label{eq:proof:lem:lazytraining:58}
    \begin{split}
        &\sup_{s\in[0,\tau]} \bbE\left[\N{\uNhat{\theta_s}}_{L_2([0,T],H^1(D))}^2 + \N{\uNhat{\theta_s}}_{L_\infty([0,T],L_2(D))}^2\right]\\
        &\qquad\,\leq C\left(\sup_{s\in[0,\tau]}\bbE\left[\N{\uN{\theta_s}}_{L_2(D_T)}^2\right] + \N{h}_{L_2(D_T)}^2\right)\\
        &\qquad\,\leq C\left(\sup_{s\in[0,\tau]}\bbE\left[\N{g_{\theta_s}^N}_{L_2(D_T)}^2\right] + \N{f}_{L_2(D)}^2 + \N{h}_{L_2(D_T)}^2 + 1\right)\\
        &\qquad\,\leq C\left( \int_0^\tau  \bbE\N{\uNhat{\theta_s}}_{L_2(D_T)}^2 ds + \N{f}_{L_2(D)}^2 + \N{h}_{L_2(D_T)}^2 + 1\right)\\
        &\qquad\,\leq C\left( \int_0^\tau  \bbE\left[\N{\uNhat{\theta_s}}_{L_2([0,T],H^1(D))}^2 \!+\! \N{\uNhat{\theta_s}}_{L_\infty([0,T],L_2(D))}^2\right] ds \!+\! \N{f}_{L_2(D)}^2\!+\!\N{h}_{L_2(D_T)}^2\!+\!1\right)\!,
    \end{split}
    \end{equation}
    where we used the estimates \eqref{eq:proof:lem:lazytraining:12} and \eqref{eq:proof:lem:lazytraining:11} in the first and second inequality, respectively.
    After employing \eqref{eq:proof:lem:lazytraining:57} to obtain the next-to-last line,
    an application of Grönwall's inequality yields the uniform bound
    \begin{equation}
        \label{eq:proof:lem:lazytraining:59}
        \sup_{s\in[0,\tau]} \bbE\left[\N{\uNhat{\theta_s}}_{L_2([0,T],H^1(D))}^2 + \N{\uNhat{\theta_s}}_{L_\infty([0,T],L_2(D))}^2\right]
        \leq C
    \end{equation}
    for a constant $C=C(\alpha,\CT,T,D,\CL,q,\sigma,C_c)<\infty$ which is in particular independent of $N$.
    Herefore, note that $\bbE\big[\Nnormal{\uNhat{\theta_0}}_{L_2([0,T],H^1(D))}^2 + \Nnormal{\uNhat{\theta_0}}_{L_\infty([0,T],L_2(D))}^2\big] \leq C$ for a constant $C=C(T,D,\CL,q,\sigma,C_c)<\infty$ as of \eqref{eq:proof:lem:lazytraining:12} and \eqref{eq:proof:lem:lazytraining:11} together with the fact that $\bbE\Nnormal{g_{\theta_0}^N}_{L_2(D_T)}^2\leq C$ according to \eqref{eq:proof:lem:lazytraining:56}.
    
    \textbf{Step 3: Convergence as $N\rightarrow\infty$.}
    With the adjoint $\uNhat{\theta_\tau}$ being bounded uniformly (in the number $N$ of NN parameters) in expectation as of \eqref{eq:proof:lem:lazytraining:59},
    we immediately derive from \eqref{eq:proof:lem:lazytraining:2} and \eqref{eq:proof:lem:lazytraining:41} after taking the expectation that
    \begin{equation}
        \label{eq:proof:lem:lazytraining:61}
    \begin{split}
        &\sup_{i=1,\dots,N, \tau\in[0,\CT]} \bbE\left[\absbig{c_\tau^i - c_0^i} + \absbig{w_\tau^{t,i} - w^{t,i}_0} + \Nbig{w_\tau^i - w_0^i} + \absbig{\eta_\tau^i - \eta_0^i}\right]
        \leq \frac{C}{N^{1-\beta}} 
    \end{split}
    \end{equation}
    for a constant $C=C(\alpha,\CT,T,D,\CL,q,\sigma,\mu_{0})<\infty$ which is in particular independent of $N$.

    Let us now prove \eqref{eq:lem:lazytraining}.
    Recall that $(\uN{\theta_\tau},\uNhat{\theta_\tau})$ and $(\up{\tau},\uhatp{\tau})$ are solutions to the PDE system \eqref{eq:parabolicPDEN_plain}\,\&\,\eqref{eq:parabolicadjointN_plain} coupled with the gradient descent update~\eqref{eq:GD} and the PDE system \eqref{eq:parabolicPDE*}--\eqref{eq:parabolicadjoint*} coupled with the integro-differential equation~\eqref{eq:parabolicgtau}, respectively.
    Following the computations of \textit{Step 2c} in the proof of \Cref{lem:parabolic_wellposedness_infinitewidth} in \Cref{sec:WellPosedness_Proof} that lead to i.p.\@ \eqref{eq:proof:lem:parabolic_wellposedness_infinitewidth:CONTRACTIVITY_utilde} and  \eqref{eq:proof:lem:parabolic_wellposedness_infinitewidth:CONTRACTIVITYuinfty5} as well as \eqref{eq:proof:lem:parabolic_wellposedness_infinitewidth:CONTRACTIVITY_uhattilde} we obtain the bounds
    \begin{equation}
        \label{eq:proof:lem:lazytraining:62a}
        \N{\uN{\theta_\tau}-\up{\tau}}_{L_2([0,T],H^1(D))} + \N{\uN{\theta_\tau}-\up{\tau}}_{L_\infty([0,T],L_2(D))}
        \leq C\N{g_{\theta_\tau}^N-g^*_\tau}_{L_2(D_T)},
    \end{equation}
    and, with $p=d+2$ which satisfies $p>d+1$, by employing Morrey's inequality
    \begin{equation}
        \label{eq:proof:lem:lazytraining:62b}
        \N{\uN{\theta_\tau}-\up{\tau}}_{L_\infty(D_T)}
        \leq C\N{g_{\theta_\tau}^N-g^*_\tau}_{L_p(D_T)}
    \end{equation}
    as well as,
    \begin{equation}
        \label{eq:proof:lem:lazytraining:62c}
    \begin{split}
        &\Nbig{\uNhat{\theta_\tau}-\uhatp{\tau}}_{L_2([0,T],H^1(D))} + \Nbig{\uNhat{\theta_\tau}-\uhatp{\tau}}_{L_\infty([0,T],L_2(D))} \\
        &\qquad\, \leq C\left(\N{\uN{\theta_\tau}-\up{\tau}}_{L_2(D_T)} + \Nbig{\uhatp{\tau}}_{L_\infty([0,T], L_2(D))}\N{\uN{\theta_\tau}-\up{\tau}}_{L_\infty(D_T)}\right) \\
        &\qquad\, \leq C\left(\N{\uN{\theta_\tau}-\up{\tau}}_{L_2(D_T)} + \N{\uN{\theta_\tau}-\up{\tau}}_{L_\infty(D_T)}\right)
    \end{split}
    \end{equation}
    at the cost of some other, potentially larger, constant $C=C(\alpha,\CT,T,D,\CL,q,\sigma,\mu_{0},C^B_2)<\infty$.
    Here, we used in the last step the fact that $\uhatp{\tau}$ obeys a deterministic bound, which follows after an application of Grönwall's inequality from the chain of inequalities
    \begin{equation}
    \begin{split}
        \Nnormal{\uhatp{\tau}}_{L_2([0,T],H^1(D))}^2 + \Nnormal{\uhatp{\tau}}_{L_\infty([0,T],L_2(D))}^2
        &\leq C\left(\N{\up{\tau}}_{L_2(D_T)}^2 + \N{h}_{L_2(D_T)}^2\right) \\
        &\leq C\left(\N{g^*_\tau}_{L_2(D_T)}^2 + \N{f}_{L_2(D)}^2 + \N{h}_{L_2(D_T)}^2 + 1 \right) \\
        &\leq C\left(\int_0^\tau \N{\uhatp{s}}_{L_2(D_T)}^2 ds \!+\! \N{f}_{L_2(D)}^2 \!+\! \N{h}_{L_2(D_T)}^2 \!+\! 1 \right),
    \end{split}
    \end{equation}
    where the first two inequalities are obtained by following the computations of \textit{Step 1c} in the proof of \Cref{lem:parabolic_wellposedness_infinitewidth} in \Cref{sec:WellPosedness_Proof} that lead to i.p.\@ \eqref{eq:proof:lem:parabolic_wellposedness_infinitewidth:NORM_uhattilde} and \eqref{eq:proof:lem:parabolic_wellposedness_infinitewidth:NORM_utilde}, while the last step holds true since we have by Cauchy-Schwarz inequality, \Cref{lem:parabolicTB} and the second part of \eqref{eq:learning_rate} that
    $\Nnormal{g^*_\tau}_{L_2(D_T)}^2 = \Nbig{\int_0^\tau \alpha_s T_{B_0}\uhatp{s} \,ds}_{L_2(D_T)}^2 \leq \int_0^{\tau}\alpha_{s}^{2}\,ds\int_0^\tau  \Nbig{T_{B_0}\uhatp{s}}_{L_2(D_T)}^2 ds \leq C\int_0^\tau  \Nbig{\uhatp{s}}_{L_2(D_T)}^2 ds$
    for a constant $C=C(\alpha,C^B_2)$.
    
    Since, as we established in \eqref{eq:proof:lem:lazytraining:53}, the NN~$g_{\theta_\tau}^N$ in the source term of the PDE~\eqref{eq:parabolicadjointN_plain} can be represented during training by an integro-differential equation similarly to the representation of $g^*_\tau$ in \eqref{eq:parabolicgtau}, we can estimate with triangle inequality
    \begin{equation}        \label{eq:proof:lem:lazytraining:63}
    \begin{split}
        \N{g_{\theta_\tau}^N-g^*_\tau}_{L_p(D_T)}
        &= \N{g_{\theta_0}^N - \int_0^\tau  \alpha_s\left(T_{B(\mu^N_s)}\uNhat{\theta_s} - T_{B_0}\uhatp{s} \right)  ds}_{L_p(D_T)} \\
        &\leq \N{g_{\theta_0}^N}_{L_p(D_T)} + \int_0^\tau  \alpha_s \N{\left(T_{B(\mu^N_s)} - T_{B_0} \right) \uNhat{\theta_s}}_{L_p(D_T)}  ds \\
        &\phantom{\leq \; \N{g_{\theta_0}^N}_{L_p(D_T)} }
        + \int_0^\tau \alpha_s \N{T_{B_0}\!\left(\uNhat{\theta_s} - \uhatp{s}\right)}_{L_p(D_T)} ds
    \end{split}
    \end{equation}
    and it remains to control each term individually in expected value.
    To estimate the expectation of the first term on the right-hand side of \eqref{eq:proof:lem:lazytraining:63} we use Jensen's inequality recalling that $p\geq1$ since $p = d+2$ and use \eqref{eq:proof:lem:lazytraining:56} which yields
    \begin{equation}
        \label{eq:proof:lem:lazytraining:64a}
    \begin{split}
        \bbE\N{g_{\theta_0}^N}_{L_p(D_T)}
        \leq\left(\bbE\N{g_{\theta_0}^N}_{L_p(D_T)}^p\right)^{1/p}
        \leq \frac{1}{N^{\beta-1/2}} C
    \end{split}
    \end{equation}
    for a constant $C=C(p,T,D,\sigma,C_c)<\infty$ which is in particular independent of $N$.
    To bound the second term of \eqref{eq:proof:lem:lazytraining:63} in expectation we first note that with triangle inequality it holds
    \begin{equation}        \label{eq:proof:lem:lazytraining:64b_prelim}
    \begin{split}
        &\bbE\N{\left(T_{B(\mu^N_s)} - T_{B_0} \right) \uNhat{\theta_s}}_{L_p(D_T)}\\
        &\qquad\,\leq
        \bbE\N{\left(T_{B(\mu^N_s)} - T_{B(\mu^N_0)} \right) \uNhat{\theta_s}}_{L_p(D_T)}
        + \bbE\N{\left(T_{B(\mu^N_0)} - T_{B_0} \right) \uNhat{\theta_s}}_{L_p(D_T)}.
    \end{split}
    \end{equation}
    For the first term in \eqref{eq:proof:lem:lazytraining:64b_prelim} we bound with two applications of Cauchy-Schwarz inequality
    \begin{allowdisplaybreaks}
    \begin{align}
        &\bbE\N{\left(T_{B(\mu^N_s)} - T_{B(\mu^N_0)} \right) \uNhat{\theta_s}}_{L_p(D_T)} \notag \\
        &\qquad\,= \bbE\left(\int_0^T\!\!\!\int_D  \abs{\int_0^T\!\!\!\int_D \!B(t,x,t',x';\mu^N_s\!-\!\mu^N_0) \uNhatv{\theta_s}{t',x'} \,dx'dt'}^p \!dxdt\right)^{1/p} \notag\\
        &\qquad\,\leq \bbE\left[\left(\int_0^T\!\!\!\int_D  \left(\int_0^T\!\!\!\int_D \big(B(t,x,t',x';\mu^N_s\!-\!\mu^N_0)\big)^2 \,dx'dt'\right)^{p/2} dxdt\right)^{1/p} \N{\uNhat{\theta_s}}_{L_2(D_T)}\right] \notag\\
        &\qquad\,\leq \left(\bbE\left(\int_0^T\!\!\!\int_D  \left(\int_0^T\!\!\!\int_D \big(B(t,x,t',x';\mu^N_s\!-\!\mu^N_0)\big)^2 \,dx'dt'\right)^{p/2} \!\!dxdt\right)^{2/p} \right)^{1/2}\!\!\left(\bbE\N{\uNhat{\theta_s}}_{L_2(D_T)}^2\right)^{1/2} \notag \\
        &\qquad\,\leq C\left(\bbE\left(\int_0^T\!\!\!\int_D  \left(\int_0^T\!\!\!\int_D \big(B(t,x,t',x';\mu^N_s\!-\!\mu^N_0)\big)^2 \,dx'dt'\right)^{p/2} dxdt\right)^{2/p} \right)^{1/2}
    \end{align}
    \end{allowdisplaybreaks}
    for a constant $C=C(\alpha,\CT,T,D,\CL,q,\sigma,C_c)$ after using \eqref{eq:proof:lem:lazytraining:59} in the last step to bound the expected value of the norm of the adjoint.
    Observing further after recalling the definition of $B$ from \eqref{eq:parabolicB} that it holds with \Cref{lem:kLipschitz}
    \begin{equation}
    \begin{split}
        B(t,x,t',x';\mu^N_s\!-\!\mu^N_0)
        &\!\leq\! \frac{1}{N}\sum_{i=1}^N L_k(c^i_s,c^i_0) \left(\absnormal{c^i_s\!-\!c^i_0} \!+\! \absnormal{w^{t,i}_s\!-\!w^{t,i}_0} \!+\! \Nnormal{w^i_s\!-\!w^i_0} \!+\! \absnormal{\eta^i_s\!-\!\eta^i_0}\right) \\
        &\!\leq\! C \frac{1}{N}\sum_{i=1}^N  \left(\absnormal{c^i_s\!-\!c^i_0} \!+\! \absnormal{w^{t,i}_s\!-\!w^{t,i}_0} \!+\! \Nnormal{w^i_s\!-\!w^i_0} \!+\! \absnormal{\eta^i_s\!-\!\eta^i_0}\right)
    \end{split}
    \end{equation}
    for a constant $C=C(T,D,\sigma,C_c)$ after using \eqref{eq:proof:lem:lazytraining:35} in the last step to bound $L_k(c^i_s,c^i_0)$, which is quadratic in $c^i_s$ and $c^i_0$ (see \Cref{lem:kLipschitz}),
    we are left with
    \begin{equation}    \label{eq:proof:lem:lazytraining:64b_prelim1}
    \begin{split}
        &\bbE\N{\left(T_{B(\mu^N_s)} - T_{B(\mu^N_0)} \right) \uNhat{\theta_s}}_{L_p(D_T)}\\
        &\qquad\,\leq C \left(\bbE\left(\frac{1}{N}\sum_{i=1}^N \left(\absnormal{c^i_s-c^i_0} + \absnormal{w^{t,i}_s-w^{t,i}_0} + \Nnormal{w^i_s-w^i_0} + \absnormal{\eta^i_s-\eta^i_0}\right)\right)^{2}\right)^{1/2} \\
        &\qquad\,\leq C \left(\bbE\left(\frac{1}{N}\sum_{i=1}^N \left(\frac{1}{N^{1-\beta}} \int_0^\tau \alpha_s\N{\uNhat{\theta_s}}_{L_2(D_T)} ds\right)\right)^{2}\right)^{1/2}\\
        &\qquad\,= \frac{C}{N^{1-\beta}} \left(\bbE\left( \int_0^\tau \alpha_s\N{\uNhat{\theta_s}}_{L_2(D_T)} ds\right)^{2}\right)^{1/2}\\
        &\qquad\,\leq \frac{C}{N^{1-\beta}} \left(\int_0^\tau \alpha_s^2\,ds \, \bbE  \int_0^\tau \N{\uNhat{\theta_s}}_{L_2(D_T)}^2 ds\right)^{1/2} \\
        &\qquad\,\leq \frac{C}{N^{1-\beta}}. 
    \end{split}
    \end{equation}
    for a constant $C=C(\alpha,\CT,T,D,\CL,q,\sigma,\mu_{0},C_c)$ which is in particular independent of $N$, where we used \eqref{eq:proof:lem:lazytraining:2} and \eqref{eq:proof:lem:lazytraining:41} in the second line, Cauchy-Schwarz inequality in the next-to-last step, and the second part of \eqref{eq:learning_rate} together with the bound~\eqref{eq:proof:lem:lazytraining:59} on the expected value of the norm of the adjoint to obtain the last inequality. 
    In order to tackle the second term in \eqref{eq:proof:lem:lazytraining:64b_prelim}, let us first introduce the random variables $Z^i(t,x,t',x') = k(t,x,t',x';c^i_0,w^{t,i}_0,w^i_0,\eta^i_0)-\int k(t,x,t',x';c,w^t,w,\eta)\,d\mu_0(c,w^t,w,\eta)$, which are independent thanks to the initial independence of the parameters $\theta_0^i = (c^i_0,w^{t,i}_0,w^i_0,\eta^i_0)$ as of Assumption~\ref{asm:NN_mu0}, mean-zero, and have finite $p$th moments due to the $c^i_0$'s being compactly supported as of Assumption~\ref{asm:NN_mu0}\ref{asm:NN_mu0ii} together with the boundedness of $\sigma$ and $\sigma'$ from Assumptions~\ref{asm:NN_sigma} and \ref{asm:NN_sigma'} and the boundedness of the domain $D$ as of Assumption~\ref{asm:Dbdd}.
    After taking the expectation we can estimate with  two applications of Cauchy-Schwarz inequality in the second and third step, \eqref{eq:proof:lem:lazytraining:59} in the fourth step to bound the expected value of the norm of the adjoint, and two applications of Jensen's inequality in the last step (once for the expectation in the setting of a concave function and once for the time-space integral in the setting of a convex function at the cost of a constant depending only on $T\vol{D}$ and $p$; herefore, recall that  $p/2\geq1$ and $2/p\leq1$ since $p = d+2$) that
    \begin{equation}
        \label{eq:proof:lem:lazytraining:64b_prelim2_aux1}
    \begin{split}
        &\bbE\N{\left(T_{B(\mu^N_0)} - T_{B_0} \right) \uNhat{\theta_s}}_{L_p(D_T)}
        \\
        &\qquad\,= \bbE\left(\int_0^T\!\!\!\int_D  \abs{\int_0^T\!\!\!\int_D \!B(t,x,t',x';\mu^N_0\!-\!\mu_0) \uNhatv{\theta_s}{t',x'} \,dx'dt'}^p \!dxdt\right)^{1/p} \\
        &\qquad\,\leq \bbE\left[\left(\int_0^T\!\!\!\int_D  \left(\int_0^T\!\!\!\int_D \big(B(t,x,t',x';\mu^N_0\!-\!\mu_0)\big)^2 \,dx'dt'\right)^{p/2} dxdt\right)^{1/p} \N{\uNhat{\theta_s}}_{L_2(D_T)}\right] \\
        &\qquad\,\leq \left(\bbE\left(\int_0^T\!\!\!\int_D  \left(\int_0^T\!\!\!\int_D \big(B(t,x,t',x';\mu^N_0\!-\!\mu_0)\big)^2 \,dx'dt'\right)^{p/2} dxdt\right)^{2/p} \right)^{1/2}\\
        &\qquad\, \qquad\cdot\left(\bbE\N{\uNhat{\theta_s}}_{L_2(D_T)}^2\right)^{1/2} \\
        &\qquad\,\leq C\left(\bbE\left(\int_0^T\!\!\!\int_D  \left(\int_0^T\!\!\!\int_D \big(B(t,x,t',x';\mu^N_0\!-\!\mu_0)\big)^2 \,dx'dt'\right)^{p/2} dxdt\right)^{2/p} \right)^{1/2} \\
        &\qquad\,= \frac{C}{N}\left(\bbE\left(\int_0^T\!\!\!\int_D  \left(\int_0^T\!\!\!\int_D \left(\sum_{i=1}^N Z^i(t,x,t',x')\right)^2 \,dx'dt'\right)^{p/2} dxdt\right)^{2/p} \right)^{1/2} \\
        &\qquad\,\leq \frac{C}{N}\left(\int_0^T\!\!\!\int_D  \int_0^T\!\!\!\int_D \bbE\abs{\sum_{i=1}^N Z^i(t,x,t',x')}^p \,dx'dt' dxdt \right)^{1/p}
    \end{split}
    \end{equation}
    for a constant $C=C(p,\CT,T,D,\CL,q,\mu_0)$.
    We can now employ the Marcinkiewicz-Zygmund inequality with random variables $Z^i$ to obtain
    \begin{equation}
        \label{eq:proof:lem:lazytraining:64b_prelim2_aux2}
    \begin{split}
        &\bbE\N{\left(T_{B(\mu^N_0)} - T_{B_0} \right) \uNhat{\theta_s}}_{L_p(D_T)}
        \leq \frac{C}{N}\left(\int_0^T\!\!\!\int_D \int_0^T\!\!\!\int_D \bbE \left(\sum_{i=1}^N \abs{Z^i(t,x,t',x')}^2\right)^{p/2} \!\!dx'dt'dxdt\right)^{1/p} \\
        &\qquad\,\leq \frac{C}{N} \left(\int_0^T\!\!\!\int_D \int_0^T\!\!\!\int_D \bbE \left[N^{p/2-1}\sum_{i=1}^N \abs{Z^i(t,x,t',x')}^p\right] dx'dt'dxdt\right)^{1/p} \\
        &\qquad\,\leq \frac{C}{N^{1/2}} \left(\int_0^T\!\!\!\int_D \int_0^T\!\!\!\int_D \bbE \abs{Z^1(t,x,t',x')}^p dx'dt'dxdt \right)^{1/p}
    \end{split}
    \end{equation}
    for some other, potentially larger, constant $C$ after using Hölder's inequality in the next-to-last and the fact that the random variables $Z^i(t,x,t',x')$ are identically distributed in the last step.
    It further holds after recalling the definition of the random variables $Z^i(t,x,t',x')$ with \Cref{lem:kLipschitz} that
    \begin{equation}
    \begin{split}
        \label{eq:proof:lem:lazytraining:64b_prelim2}
        &\bbE\N{\left(T_{B(\mu^N_0)} - T_{B_0} \right) \uNhat{\theta_s}}_{L_p(D_T)}  \\
        &\qquad\,\leq \frac{C}{N^{1/2}} \left(\bbE\abs{\int L_k(c^i_0,c) \left(\absnormal{c^i_0\!-\!c} + \absnormal{w^{t,i}_0\!-\!w^{t}} + \Nnormal{w^i_0\!-\!w} + \absnormal{\eta^i_0\!-\!\eta}\right)d\mu_0(c,w^t,w,\eta)}^p\right)^{1/p} \\
        &\qquad\,\leq \frac{C}{N^{1/2}}
    \end{split}
    \end{equation}
    for a constant $C=C(p,\CT,T,D,\CL,q,\sigma,\mu_0)$ which is in particular independent of $N$.
    In the last step, recalling that $L_k(c^i_0,c)$ is quadratic in $c^i_0$ and $c$ (see \Cref{lem:kLipschitz}), we firstly used that the initial condition~$\mu_0$ has a compactly supported marginal distribution $\mu_{0,c}$ as of Assumptions~\ref{asm:NN_mu0}\ref{asm:NN_mu0ii} and that the parameters~$c_0^i\sim\mu_{0,c}$, and secondly that the marginal distribution $\mu_{0,(w^t,w,\eta)}$ has bounded $p$th moments and that the parameters~$(w^{t,i}_0,w^i_0,\eta^i_0)\sim\mu_{0,(w^t,w,\eta)}$.
    Employing \eqref{eq:proof:lem:lazytraining:64b_prelim1} and \eqref{eq:proof:lem:lazytraining:64b_prelim2} in \eqref{eq:proof:lem:lazytraining:64b_prelim} we eventually obtain the bound
    \begin{equation}
        \label{eq:proof:lem:lazytraining:64b}
    \begin{split}
        \bbE\N{\left(T_{B(\mu^N_s)} - T_{B_0} \right) \uNhat{\theta_s}}_{L_p(D_T)}
        &\leq \frac{C}{N^{1-\beta}} + \frac{C}{N^{1/2}}
    \end{split}
    \end{equation}
    for a constant $C=C(p,\alpha,\CT,T,D,\CL,q ,\sigma,\mu_0,C_c)$ which is in particular independent of $N$.
    
    To estimate the last term of \eqref{eq:proof:lem:lazytraining:63} we can directly employ \Cref{lem:TBboundednessLinfty} to obtain
    \begin{equation}
        \label{eq:proof:lem:lazytraining:64c}
    \begin{split}
        \N{T_{B_0}\!\left(\uNhat{\theta_s} - \uhatp{s}\right)}_{L_p(D_T)}
        &= \left(\int_0^T\!\!\!\int_D  \abs{[T_{B_0}(\uNhat{\theta_s} - \uhatp{s}))](t,x)}^p dxdt\right)^{1/p} \\
        &\leq C \N{\uNhat{\theta_s} - \uhatp{s}}_{L_2(D_T)} \\
    \end{split}
    \end{equation}
    for a constant $C=C(p,T,D,C^{T_B}_\infty)$.

    Combining the estimates \eqref{eq:proof:lem:lazytraining:64a}, \eqref{eq:proof:lem:lazytraining:64b} and \eqref{eq:proof:lem:lazytraining:64c}, and plugging them into \eqref{eq:proof:lem:lazytraining:63} after taking the expectation we eventually arrive at
    \begin{equation}
        \label{eq:proof:lem:lazytraining:65}
    \begin{split}
        \bbE\N{g_{\theta_\tau}^N-g^*_\tau}_{L_p(D_T)}
        &\leq C \left(\frac{1}{N^{\beta-1/2}} + \frac{1}{N^{1-\beta}} + \frac{1}{N^{1/2}}+ \int_0^\tau \alpha_s \bbE\N{\uNhat{\theta_s} - \uhatp{s}}_{L_2(D_T)} ds\right).
    \end{split}
    \end{equation}
    Inserting this now into  \eqref{eq:proof:lem:lazytraining:62a} and \eqref{eq:proof:lem:lazytraining:62b} and, consecutively, the results into \eqref{eq:proof:lem:lazytraining:62c}, we get
    \begin{equation}
    \begin{split}
        &\bbE\left[\Nbig{\uNhat{\theta_\tau}-\uhatp{\tau}}_{L_2([0,T],H^1(D))} + \Nbig{\uNhat{\theta_\tau}-\uhatp{\tau}}_{L_\infty([0,T],L_2(D))}\right]\\
        &\qquad\,\leq C\left(\frac{1}{N^{\beta-1/2}} + \frac{1}{N^{1-\beta}} + \frac{1}{N^{1/2}}+ \int_0^\tau \alpha_s \bbE\N{\uNhat{\theta_s} - \uhatp{s}}_{L_2(D_T)} ds\right),
    \end{split}
    \end{equation}
    which yields after an application of Grönwall's inequality
    \begin{equation}
    \begin{split}
        &\bbE\left[\Nbig{\uNhat{\theta_\tau}-\uhatp{\tau}}_{L_2([0,T],H^1(D))} + \Nbig{\uNhat{\theta_\tau}-\uhatp{\tau}}_{L_\infty([0,T],L_2(D))}\right]\\
        &\qquad\,\leq C\left(\frac{1}{N^{\beta-1/2}} + \frac{1}{N^{1-\beta}} + \frac{1}{N^{1/2}}\right) \exp\left(\int_0^\tau \alpha_s\, ds\right).
    \end{split}
    \end{equation}
    Since $\beta\in(1/2,1)$ and $\CT<\infty$, \eqref{eq:lem:lazytrainingb} follows.
    Utilizing this, \eqref{eq:lem:lazytrainingc} follows from \eqref{eq:proof:lem:lazytraining:65}, and \eqref{eq:lem:lazytraininga} eventually follows therefrom with  \eqref{eq:proof:lem:lazytraining:62a} after taking the expected value on both sides of \eqref{eq:proof:lem:lazytraining:62a}.
    This concludes the proof.
\end{proof}

\subsection{Main Convergence Result for the NN-PDE}
\label{sec:convergence}

We are now ready to discuss our second main theoretical result, \Cref{thm:main}, which is about the convergence of the NN-PDE solution~$\up{\tau}$ to the target data~$h$, i.e., a global minimizer of the loss~$\J{}$ defined in \eqref{eq:parabolicJtau}, as the training time $\tau\rightarrow\infty$.

A few comments about \Cref{thm:main} are in order.
Sufficient conditions for the well-posedness (i.e., uniqueness and existence) of a solution $(\up{},\uhatp{})$ to the PDE system \eqref{eq:parabolicPDE*}--\eqref{eq:parabolicadjoint*} coupled with the integro-differential equation~\eqref{eq:parabolicgtau} are provided by \Cref{lem:parabolic_wellposedness_infinitewidth} and \Cref{rem:parabolic_wellposedness}.
While they, and in particular the additional \Cref{def:WP_assumptions}, are sufficient, they may not be necessary and the well-posedness of the system could be guaranteed under another set of assumptions, see \Cref{rem:assumptionsWP}.

\Cref{thm:main} proves the global convergence of the adjoint gradient descent optimization method~\eqref{eq:GD} in the infinite-width NN hidden layer limit as the training time $\tau\rightarrow\infty$.
While it is, to the best of our knowledge, a first-of-its-kind convergence result in the setting of semi-linear (and therefore strictly nonlinear) parabolic PDEs,
we substantially strengthen beyond that the notion of convergence compared to prior results~\cite{sirignano2023pde}, which considered the setting of linear PDEs, see \Cref{rem:comparisonconvergencesirignano2023pde} for more technical details.

Let us now provide a proof sketch of the statement, which gives an outline of the subsequent \Cref{sec:infinitewidthNN,sec:decayJ,sec:PDEconsiderations,sec:FunctionalCQtau,sec:cyclestoppingtimes,sec:convergences} comprising the central steps involved in the proof of \Cref{thm:main}.

\vspace{0.2cm}

\begin{proof}[Proof sketch of \Cref{thm:main}]
    \textbf{Properties of the infinite-width NN (\Cref{sec:infinitewidthNN}).}
    The training time derivative of the PDE right-hand side~$g^*_\tau=-\int_0^\tau \alpha_s T_{B_0}\uhatp{s} \,ds$ given in \eqref{eq:parabolicgtau} is $\fd{\tau}g^*_\tau = - \alpha_\tau T_{B_0}\uhatp{\tau}$.
    Due to the NN kernel operator $T_{B_0}$ being a Hilbert-Schmidt operator as of \Cref{rem:TBHilbertSchmidt} and \Cref{lem:parabolicTB},
    $T_{B_0}\uhatp{\tau}\in L_2(D_T)$ for every $\tau$ and $\N{T_{B_0}\uhatp{\tau}}_{L_2(D_T)}\leq C^B_2 \N{\uhatp{\tau}}_{L_2(D_T)}$.
    Leveraging that the NN kernel $B_0$ is uniformly bounded in $L_\infty$ as a consequence of Assumption~\ref{def:NN_assumptions} on the NN architecture, we further show in \Cref{lem:TBboundednessLinfty} that $T_{B_0}\uhatp{\tau}\in L_\infty(D_T)$ for every $\tau$ and $\N{T_{B_0}\uhatp{\tau}}_{L_\infty(D_T)}\leq C^{T_B}_\infty \N{\uhatp{\tau}}_{L_2(D_T)}$.
    Furthermore, $T_{B_0}$ is positive definite according to \Cref{lem:parabolicposdefTB} and its eigenfunctions form an orthonormal basis of $L_2(D_T)$ as of \Cref{lem:parabolicTB}.
    
    \textbf{Step 1: Decay of the loss $\J{}$ (\Cref{sec:decayJ}).}
    With chain rule and by leveraging the adjoint PDE~\eqref{eq:parabolicadjoint*},
    we obtain in \Cref{lem:parabolictimeevolutionJt*} for the training time derivative $\fd{\tau}\J{\tau}$ of the loss $\J{}$ defined in \eqref{eq:parabolicJtau} with partial integration that
    \begin{equation}
        \label{eq:proof:main:decayCJ}
    \begin{split}
        \fd{\tau}\J{\tau}
        &= \int_0^T\!\!\!\int_D \left(\upp{\tau}{t,x}-h(t,x)\right)\fd{\tau}\upp{\tau}{t,x}\, dxdt \\
        &= \int_0^T\!\!\!\int_D \uhatpp{\tau}{t,x}\fd{\tau}g^*_\tau(t,x)\, dxdt
        = - \alpha_\tau(\uhatp{\tau},T_{B_0}\uhatp{\tau})_{L_2}
        = - \alpha_\tau\Q{\tau},
    \end{split}
    \end{equation}
    where we used that $\fd{\tau}g^*_\tau = - \alpha_\tau T_{B_0}\uhatp{\tau}$ in the next-to-last step and the definition~\eqref{eq:CQ} of the functional $\Q{\tau}=(\uhatp{\tau},T_{B_0}\uhatp{\tau})_{L_2}$ in the last step.

    \textbf{Step 2: Cycle of stopping times analysis (\Cref{sec:cyclestoppingtimes}).}
    Following the frameworks of \cite{bertsekas2000gradient,sirignano2022online} on gradient convergence in gradient methods,
    we prove in \Cref{lem:convergenceCJCB} that
    \begin{equation}
        \label{eq:proof:main:convergenceQ}
        \lim_{\tau\rightarrow\infty} \Q{\tau}
        = 0.
    \end{equation}
    It is immediate to observe that $\liminf_{\tau\rightarrow\infty} \Q{\tau} = 0$.
    Namely, if there existed an $\varepsilon>0$ such that $\Q{\tau}\geq\varepsilon$ for all $\tau\geq \overbar{\tau}$,
    we would have had by \eqref{eq:proof:main:decayCJ} and the fundamental theorem of calculus that $\J{\tau} = \J{\overbarscript{\tau}} - \int_{\overbarscript{\tau}}^\tau \alpha_s \Q{s} \,ds \leq \J{\overbarscript{\tau}} - \varepsilon\int_{\overbarscript{\tau}}^\tau \alpha_s \,ds \rightarrow - \infty$ as $\tau\rightarrow\infty$ due to condition~\eqref{eq:learning_rate} on the learning rate $\alpha_\tau$.
    This contradicts the positivity of the loss~$\J{}$.
    Thus, the case that the functional $\Q{\tau}$ is larger than some $\varepsilon$ for all but a finite amount of time cannot occur.
    However, it remains to outrule the case that the functional $\Q{\tau}$ spikes above $\varepsilon$ forever, while being small most of the time.
    Let us therefore bring $\limsup_{\tau\rightarrow\infty} \Q{\tau} > 0$ to the contradiction.
    To this end, assume that there exists an $\varepsilon>0$ such that $\Q{\tau}<\varepsilon/2$ for infinitely many $\tau$'s as well as $\Q{\tau}>\varepsilon$ for infinitely many $\tau$'s.
    Then there exists an infinite cycle of stopping times
    \begin{equation}
    \begin{split}
        0=\sigma_0\leq\tau_1\leq\sigma_1\leq\tau_2\leq\sigma_2\leq\tau_3\leq\dots,
    \end{split}
    \end{equation}
    with $\tau_k$ and $\sigma_k$ being defined for $k=1,2,\dots$ according to
    \begin{equation}
    \begin{split}
        \tau_k
        &= \inf\left\{\tau>\sigma_{k-1}: \Q{\tau}\geq\varepsilon\right\}\\
        \sigma_k
        &= \sup\bigg\{\tau\geq\tau_k: \frac{1}{2}\Q{\tau_k} \leq \Q{s} \leq 2\Q{\tau_k} \text{ for all }s\in[\tau_k,\tau] \text{ and }\int_{\tau_k}^{\tau}\alpha_s\,ds\leq \frac{\varepsilon}{2L_\CQ}\bigg\},
    \end{split}
    \end{equation}
    where $L_\CQ>0$ will be defined in \textit{Step 3}.
    By a telescopic sum argument, we have for sufficiently large $\widetilde{n}$ and for all $n\geq \widetilde{n}$ that
    \begin{equation}
        \label{eq:proof:main:telescopicsum}
    \begin{split}
        \J{\tau_{n+1}}
        &= \J{\tau_{\widetilde{n}}} + \sum_{k=\widetilde{n}}^n \big(\J{\tau_{k+1}} - \J{\tau_k}\big)
        = \J{\tau_{\widetilde{n}}} + \sum_{k=\widetilde{n}}^n \Big[\underbrace{\big(\J{\tau_{k+1}} - \J{\sigma_k}\big)}_{\leq 0} + \underbrace{\big(\J{\sigma_k} - \J{\tau_k}\big)}_{\!\!\!\!\!\!\leq - (1-\vartheta)\varepsilon^2/(4 L_\CQ)\!\!\!\!\!\!}\Big]
        \rightarrow - \infty
    \end{split}
    \end{equation}
    as $n\rightarrow\infty$, which is again a contradiction as the loss~$\J{}$ is positive.
    Thus, \eqref{eq:proof:main:convergenceQ} holds.
    The properties under the brackets are derived as follows.
    \begin{itemize}
        \item On the intervals $I^1_{k+1}=[\sigma_{k},\tau_{k+1})$, where $\Q{\tau}\leq\varepsilon$, i.e., where $\Q{\tau}$ is negligibly small, we just show $\J{\tau_{k+1}}-\J{\sigma_{k}} \leq 0$ by using \eqref{eq:proof:main:decayCJ}, the fundamental theorem of calculus and the positive definiteness of $T_{B_0}$.
        \item On the intervals $I^2_k=[\tau_k,\sigma_k)$, on the other hand, where $\Q{\tau_k}/2\leq\Q{\tau}\leq2\Q{\tau_k}$, it holds $\J{\sigma_{k}}-\J{\tau_{k}}\leq - (1-\vartheta)\varepsilon^2/(4 L_\CQ)$ for any $\vartheta\in(0,1)$ as we prove in detail in \Cref{lem:convergenceCJCB}.
        The intuition behind this bound is that on those intervals $\Q{\tau}\geq \Q{\tau_k}/2 \geq\varepsilon/2$, i.e, $\Q{\tau}$ is non-negligibly large,
        while, at the same time, enough training progress is made in the sense that $\int_{\tau_k}^{\sigma_k} \alpha_\tau \, d\tau \geq (1-\vartheta)\varepsilon/(2L_{\CQ})$.
        The former is by definition of the stopping times.
        The latter is either ensured by the definition of the stopping time~$\sigma_k$ or guaranteed, as proven in \Cref{lem:auxcontradiction}, by a regularity bound~\eqref{eq:proof:main:RegularityCB} for the functional~$\Q{\tau}$ in terms of the learning rate~$\alpha_\tau$, which we derive in \textit{Step 3}.
        It allows to lower bound in this case $\int_{\tau_k}^{\sigma_k} \alpha_\tau \, d\tau$ by the change of the functional $\Q{\tau_k}$, which is (up to an arbitrarily small factor $\vartheta$) at least $\varepsilon/2$ on the intervals $I^2_k$.
    \end{itemize}

    \textbf{Step 3: PDE considerations and a regularity bound for the functional $\Q{\tau}$ in terms of the learning rate (\Cref{sec:PDEconsiderations,sec:FunctionalCQtau}).}
    A crucial property of the functional $\Q{\tau}$ in the preceding argument is the regularity bound
    \begin{equation}
        \label{eq:proof:main:RegularityCB}
        \absbig{\Q{\tau_2}-\Q{\tau_1}}
        \leq L_\CQ\int_{\tau_1}^{\tau_2} \alpha_\tau \,d\tau,
    \end{equation}
    which holds for all $0\leq\tau_1\leq\tau_2$.
    To prove \eqref{eq:proof:main:RegularityCB},
    we develop a novel approach in \Cref{lem:CQ_Regulartiy}.
    To this end,
    let us introduce for $\Q{\tau}$ and the coupled PDE system~\eqref{eq:parabolicPDE*}--\eqref{eq:parabolicadjoint*} the second-level adjoint system \eqref{eq:paraboliccoupledadjoint1}--\eqref{eq:paraboliccoupledadjoint2} with variables $(\vhatp{\tau},\whatp{\tau})$.
    With the fundamental theorem of calculus and by leveraging those adjoint PDEs, we can represent
    \begin{equation}
        \Q{\tau_2}-\Q{\tau_1}
        = \int_{\tau_1}^{\tau_2} \fd{\tau} \Q{\tau}\,d\tau
        = \int_{\tau_1}^{\tau_2} \left(\fd{\tau}g^*_\tau,\vhatp{\tau}\right)_{L_2(D_T)}d\tau
        = - \int_{\tau_1}^{\tau_2} \alpha_\tau \left(T_{B_0}\uhatp{\tau},\vhatp{\tau}\right)_{L_2(D_T)}d\tau
    \end{equation}
    and it remains to employ Cauchy-Schwarz inequality and to derive uniform (in the training time $\tau$) $L_2$ bounds on $\uhatp{\tau}$ and $\vhatp{\tau}$.
    In \Cref{lem:parabolic_uhatL2} we establish such bound for $\uhatp{\tau}$, which is a consequence of an energy estimate and $\J{\tau}$ being monotonically non-increasing as of  \Cref{lem:parabolictimeevolutionJt*}.
    In \Cref{lem:parabolic_vhatL2} we prove that also $\vhatp{\tau}$ is uniformly (in the training time $\tau$) bounded in $L_2$.
    While the proof is again based on an energy estimate,
    the technical difficulty arises from the structure of the source term in \eqref{eq:paraboliccoupledadjoint1} which is of the form $\whatp{\tau} + q_{uu}(\up{\tau})\uhatp{\tau}\whatp{\tau}$.
    In order to control this term in $L_2$, we establish in \Cref{lem:parabolic_whatLinfty} uniform (in the training time $\tau$) estimates on the $L_2$ and $L_\infty$ norms of the second-level adjoint $\whatp{\tau}$.
    That we are in particular able to obtain an $L_\infty$ bound is a consequence of the source term~$2T_{B_0}\uhatp{\tau}$ in \eqref{eq:paraboliccoupledadjoint2} being in $L_\infty$ by \Cref{lem:TBboundednessLinfty}.
    Cauchy-Schwarz inequality now yields
    \begin{equation}
    \begin{split}
        \abs{\Q{\tau_2}-\Q{\tau_1}}
        &\leq \int_{\tau_1}^{\tau_2} \alpha_\tau\N{T_{B_0}\uhatp{\tau}}_{L_2(D_T)}\N{\vhatp{\tau}}_{L_2(D_T)}d\tau
        \leq L_\CQ\int_{\tau_1}^{\tau_2} \alpha_\tau \,d\tau
    \end{split}
    \end{equation}
    with $L_\CQ=C^B_2C^{\widehat{u}}C^{\widehat{v}}$ after employing \Cref{lem:parabolicTB,lem:parabolic_uhatL2} and \Cref{lem:parabolic_vhatL2} in the last step.

    \textbf{Step 4: Convergence of the adjoint $\uhatp{\tau}$ and the solution $\up{\tau}$ (\Cref{sec:convergences}).}
    Since the functional $\Q{\tau} = (\uhatp{\tau},T_{B_0}\uhatp{\tau})_{L_2(D_T)}$ converges to zero as $\tau\rightarrow\infty$ according to \eqref{eq:proof:main:convergenceQ} and since the NN kernel operator~$T_{B_0}$ is positive definite as of \Cref{lem:parabolicposdefTB},
    the adjoint $\uhatp{\tau}$ in \eqref{eq:parabolicadjoint*} converges to zero weakly in $L_2$ as $\tau\rightarrow\infty$ as we prove in \Cref{lem:convergence_adjoint}.
    Leveraging the structure of the adjoint PDE \eqref{eq:parabolicadjointN_plain},
    we infer therefrom in \Cref{lem:convergence_solution} that the solution $\up{\tau}$ in \eqref{eq:parabolicPDE*} converges to the target data $h$ weakly in $L_2$,
    which concludes the proof.
\end{proof}

Before turning the focus of the manuscript to the numerical experiments and the proof details thereafter, let us draw in the following remark an analogy to the convergence analysis of gradient descent methods in the setting of finite-dimensional optimization problems, which highlights the challenges arising from our infinite-dimensional PDE-based setting.
\begin{remark}
    To minimize a finite-dimensional objective function or loss $J:\bbR^d\rightarrow\bbR$, we run gradient descent $\fd{\tau}\theta_\tau = -\alpha_\tau \nabla_\theta J(\theta_\tau)$ with learning rate~$\alpha_\tau$.
    
    We can compute with chain rule that $\fd{\tau} J(\theta_\tau) = \nabla_\theta J(\theta_\tau) \fd{\tau}\theta_\tau = -\alpha_\tau \N{\nabla_\theta J(\theta_\tau)}^2$, cf.\@ \eqref{eq:proof:main:decayCJ}, showing the decay of the loss~$J$.
    Assuming that the loss $J$ is $L_J$-smooth, i.e., has a $L_J$-Lipschitz continuous gradient, and that $\nabla_\theta J$ is bounded by $C_{\nabla J}$,
    we can easily verify that $\absbig{\N{\nabla_\theta J(\theta_{\tau_2})}^2-\N{\nabla_\theta J(\theta_{\tau_1})}^2} \leq 2C_{\nabla J}\N{\nabla_\theta J(\theta_{\tau_2})-\nabla_\theta J(\theta_{\tau_1})} \leq 2C_{\nabla J} L_J \N{\theta_{\tau_2}-\theta_{\tau_1}} \leq 2C_{\nabla J} L_J \int_{\tau_1}^{\tau_2} \alpha_\tau \N{\nabla_{\theta} J(\theta_\tau)} d\tau  \leq 2C_{\nabla J}^2 L_J \int_{\tau_1}^{\tau_2} \alpha_\tau \,d\tau$,
    which matches our regularity bound in terms of the learning rate, cf.\@ \eqref{eq:proof:main:RegularityCB}.
    Note that in that case, one arrives at the same statement albeit in a considerably more straightforward way compared to how we derived \eqref{eq:proof:main:RegularityCB} in the infinite dimensional setting studied in our paper.
    Then, following the same steps as in the cycle of stopping times analysis from above, we can  infer that $\lim_{\tau\rightarrow\infty} \N{\nabla_{\theta} J(\theta_\tau)}
    = 0$, cf.\@ \eqref{eq:proof:main:convergenceQ}, i.e., that gradient descent converges to a stationary point.
\end{remark}

\section{Numerical Experiments}
\label{sec:experiments}
Several numerical studies are presented below that illustrate and support our theoretical findings. 
We consider the following
second-order parabolic partial differential equation
\begin{alignat}{2}
\label{eq:heatequation}
\begin{aligned}
    \fpartial{t}u - 0.01 \Delta u - q(u)
    &= g_{\text{target}}
    \qquad&&\text{in }
    [0,T] \times D, \\
    u
    &= 0
    \qquad&&\text{on }
    [0,T]\times\partial D, \\
    u
    &= 0.2 \sin(4\pi x)\sin(2\pi y)
    \qquad&&\text{on }
    \{0\}\times D,
\end{aligned}
\end{alignat}
on the spatial domain $D = [0,0.5] \times [0,1]$ and with time horizon $T=1$.
Two scenarios, namely
\begin{enumerate}[label=(\roman*),labelsep=10pt,leftmargin=35pt]
    \item the (linear) heat equation, i.e., the case where $q\equiv0$, and
    \item the (nonlinear) Allen-Cahn equation with nonlinear term $q(u)=u^3-u$
\end{enumerate}
are investigated.
In either case, the target source term, which is unknown to the practitioner, is given by $g_{\text{target}}(t,x,y) = 1600 x(1-2x)y^2(0.2 + 0.6t-y)^2(1-y)^2$.
To learn it from data by running the adjoint gradient descent optimization method~\eqref{eq:GD} with the gradient being computed according to \eqref{eq:nablaJ},
we model $g_{\text{target}}$ by an NN $g_\theta^N$ with $N$ neurons of the form \eqref{eq:gN}.
As hyperparameter we choose $\beta=2/3$, as activation function~$\sigma$ we choose the $\tanh$, and the NN parameters are initialized according to $c^i_0\sim \CU([-1,1])$, $w^{t,i}_0\sim\CN(0,1)$, $w^{i}_0\sim\CN(0,\text{Id}_{2\times2})$ and $\eta^{i}_0\sim\CN(0,1)$ for all $i\in\{1,\dots,N\}$, which is in accordance with \Cref{def:NN_assumptions}.

We use the \texttt{Adam} optimizer with hyperparameters $\beta_1=0.9$ and $\beta_2=0.999$.
The learning rate is initially set to $\alpha^N_0=\frac{0.01}{N^{1-2\beta}}$ and decreased adaptively by the \texttt{ReduceLROnPlateau} scheduler with factor $0.95$ and patience $100$, which we gradually reduce during training.
For numerical stability, the gradients are clipped using \texttt{ZClip} \cite{kumar2025zclip}, an algorithm for robust gradient norm statistics estimation, which employs z-score-based anomaly detection and leverages exponential moving averages.
For its hyperparameters, we choose a smoothing factor of $\alpha=0.98$ and a z-score threshold of $0.4$. Qualitatively similar results can be obtained with other optimizers such as \texttt{RMSprop} or \texttt{SGD}.

\begin{figure}[p!]
\centering
\valign{#\cr
  \hsize=0.45\columnwidth
  \begin{subfigure}{0.7\columnwidth}
  \centering
  \includegraphics[width=.9\textwidth, height=.54\textwidth]{./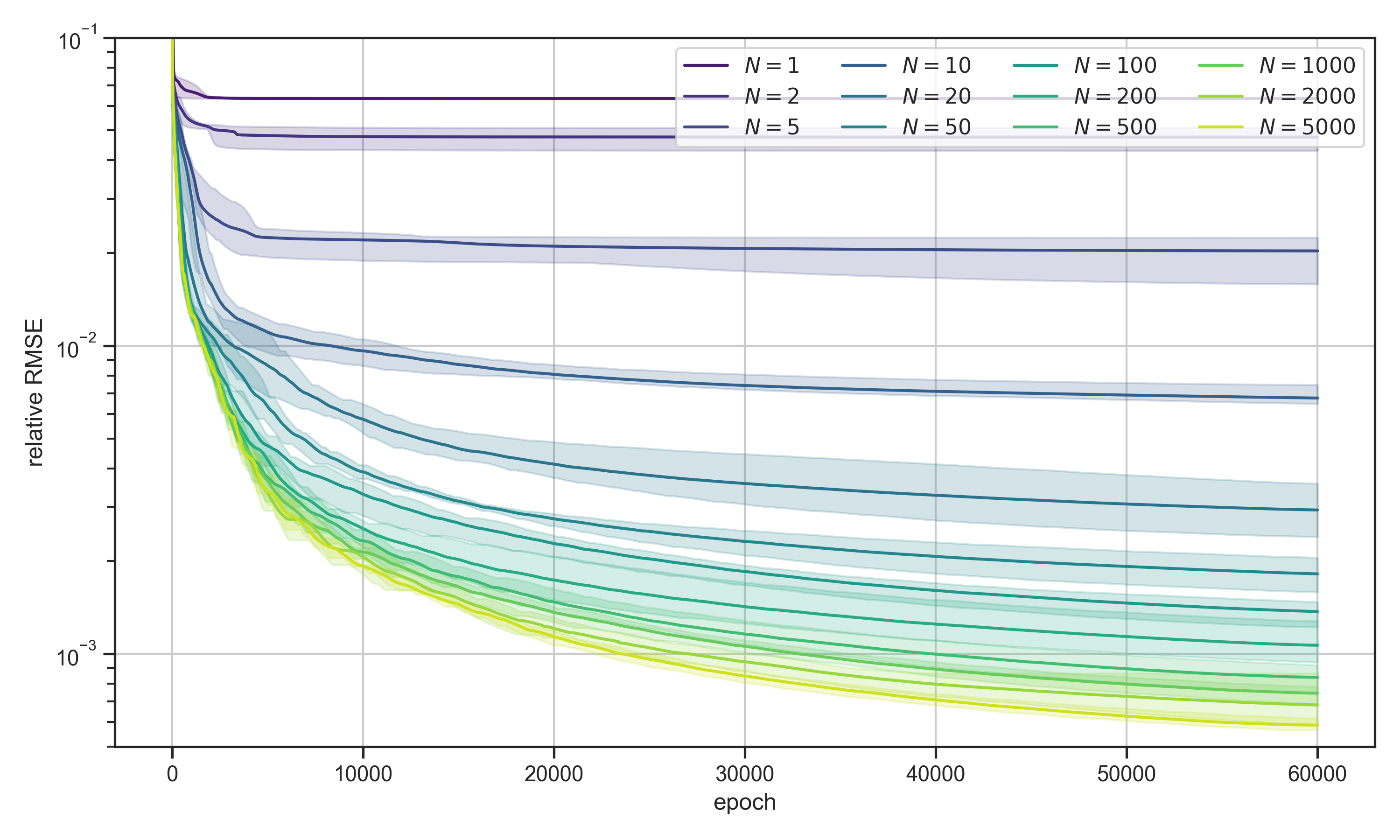}
  \caption{\footnotesize Best RMSE for $N\in\{1,2,5,10,20,50,100,200,500,1000,2000,5000\}$.}
  \label{fig:HeatEquation_a}
  \end{subfigure}\cr\noalign{\hfill}
  \hspace{0.1\columnwidth}
  \hsize=0.45\columnwidth
  \begin{subfigure}{0.3\columnwidth}
  \centering
  \includegraphics[width=.9\textwidth, height=.54\textwidth]{./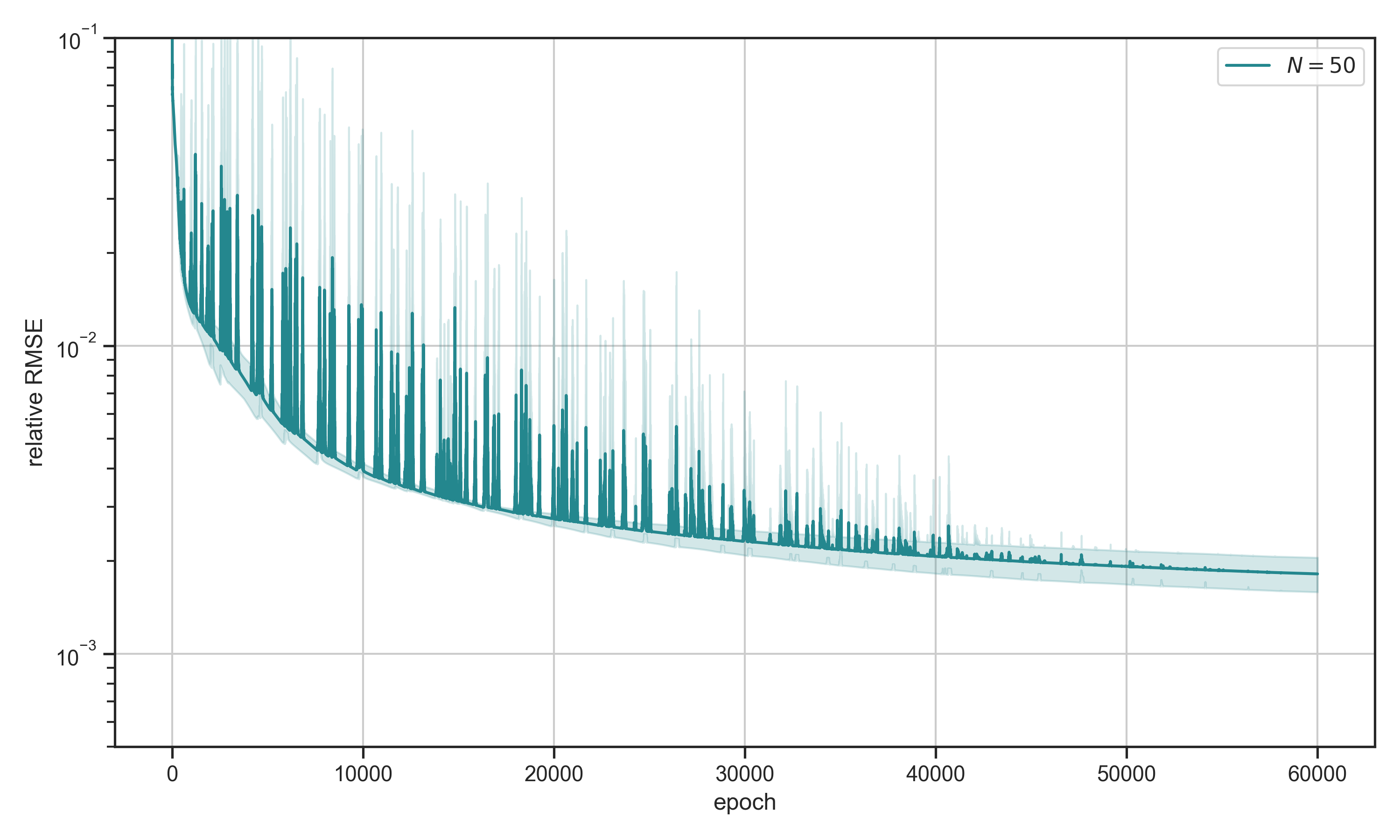}
  \caption{\footnotesize RMSE for $N=50$.}
  \label{fig:HeatEquation_b}
  \end{subfigure}\vfill
  \hspace{0.1\columnwidth}
  \begin{subfigure}{0.3\columnwidth}
  \centering
  \includegraphics[width=.9\textwidth, height=.54\textwidth]{./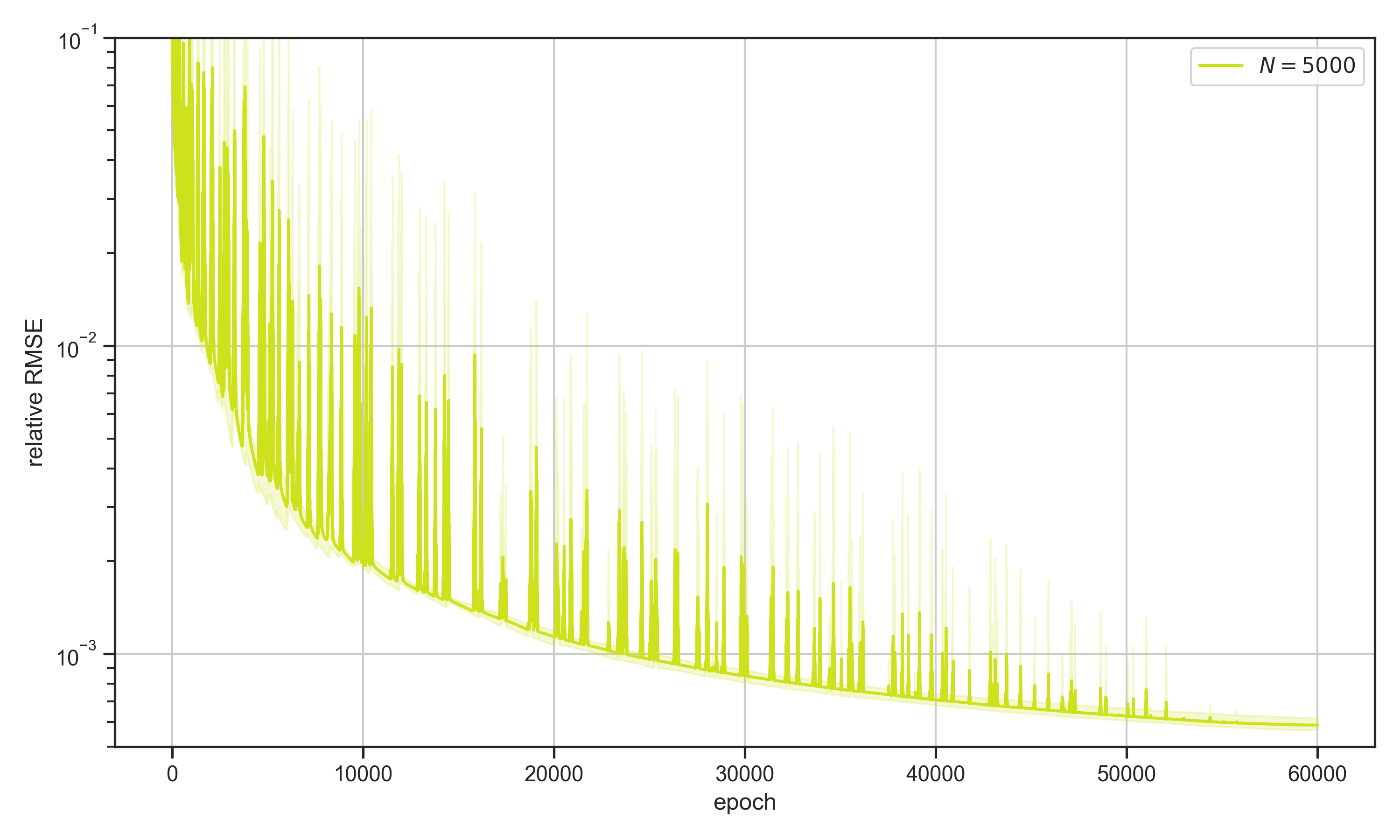}
  \caption{\footnotesize RMSE for $N=5000$.}
  \label{fig:HeatEquation_c}
  \end{subfigure}\cr
}
\caption{Decay of the relative $\text{RMSE}(\theta)$ during training of the NN~$g^N_\theta$ for different numbers of neurons~$N$ (colored in blue to green as $N$ increases) in case of the linear heat equation, i.e., scenario~(i). 
In \textbf{(a)}, we depict for a range of different numbers of neurons~$N\in\{1,...,5000\}$ the relative RMSE of the best model observed during training up until the current epoch.
That is, if the RMSE of a model~$\theta_k$ at epoch $k$ is $\text{RMSE}(\theta_k)$, then the plot displays $\min_{\ell \leq k} \text{RMSE}(\theta_{\ell})$ at each epoch $k$.
As we increase the number of neurons~$N$, we observe an improvement in the respective RMSE.
In \textbf{(b)} and \textbf{(c)}, we plot the instantaneous relative RMSE of the current model at each epoch for $N=50$ and $N=5000$ neurons. I.e., these plots display $\text{RMSE}(\theta_{k})$ at each epoch $k$.\\
In all three plots, we display the mean across five individual training runs with different initializations as a solid line together with the maximal deviation therefrom by a shaded area.}
\label{fig:HeatEquation}
\end{figure}
\begin{figure}[p!]
\centering
\valign{#\cr
  \hsize=0.45\columnwidth
  \begin{subfigure}{0.7\columnwidth}
  \centering
  \includegraphics[width=.9\textwidth, height=.54\textwidth]{./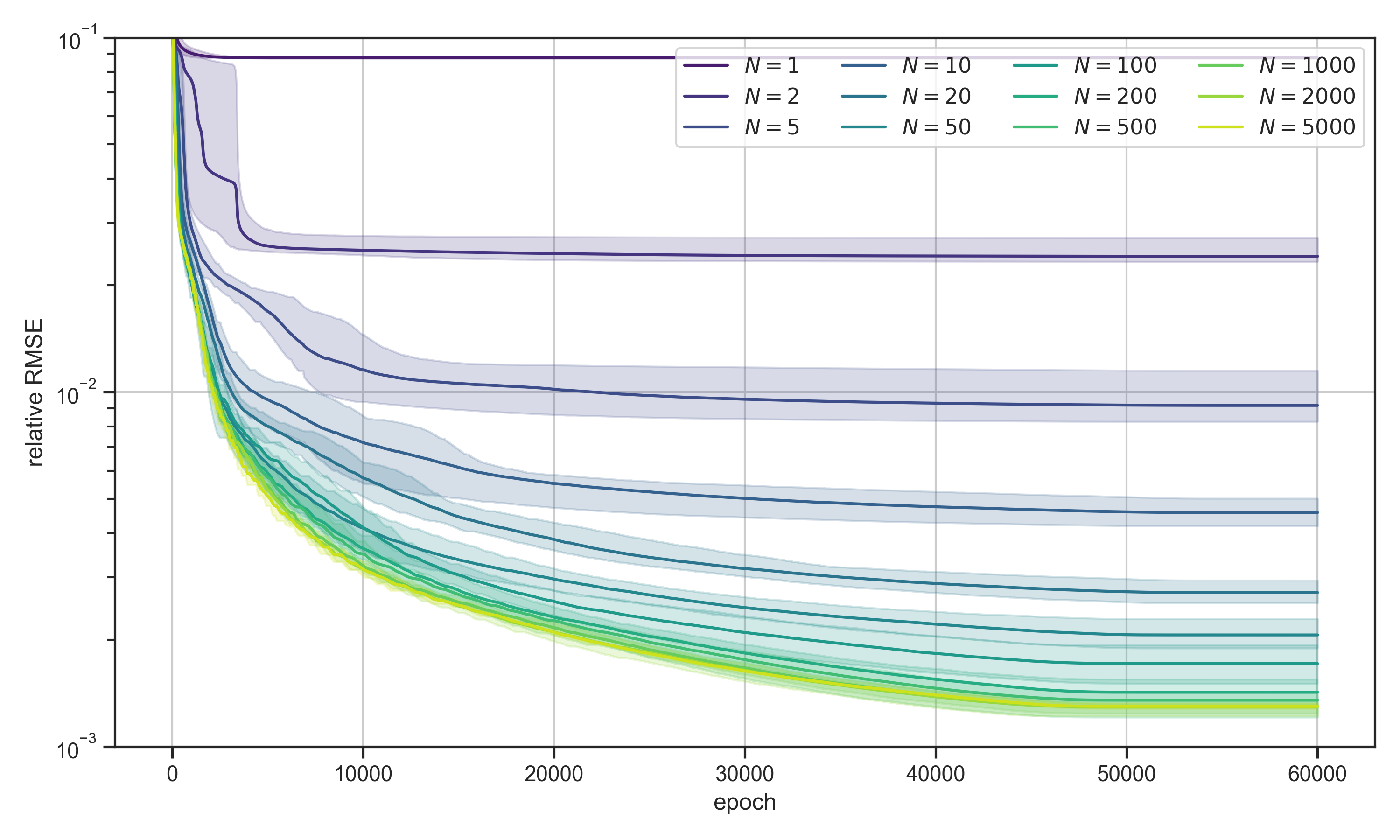}
  \caption{\footnotesize Best RMSE for $N\in\{1,2,5,10,20,50,100,200,500,1000,2000,5000\}$.}
  \label{fig:AllenCahnEquation_a}
  \end{subfigure}\cr\noalign{\hfill}
  \hspace{0.1\columnwidth}
  \hsize=0.45\columnwidth
  \begin{subfigure}{0.3\columnwidth}
  \centering
  \includegraphics[width=.9\textwidth, height=.54\textwidth]{./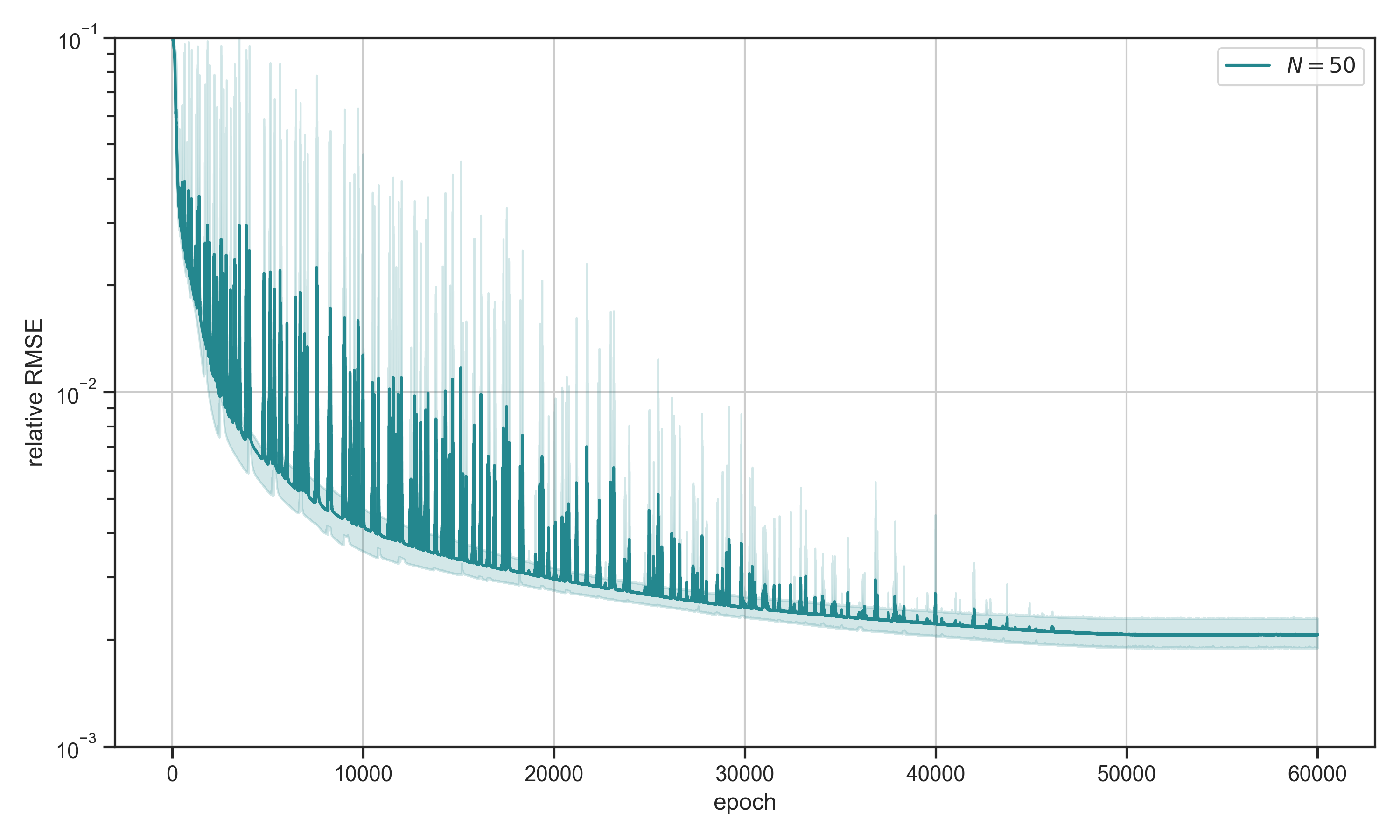}
  \caption{\footnotesize RMSE for $N=50$.}
  \label{fig:AllenCahnEquation_b}
  \end{subfigure}\vfill
  \hspace{0.1\columnwidth}
  \begin{subfigure}{0.3\columnwidth}
  \centering
  \includegraphics[width=.9\textwidth, height=.54\textwidth]{./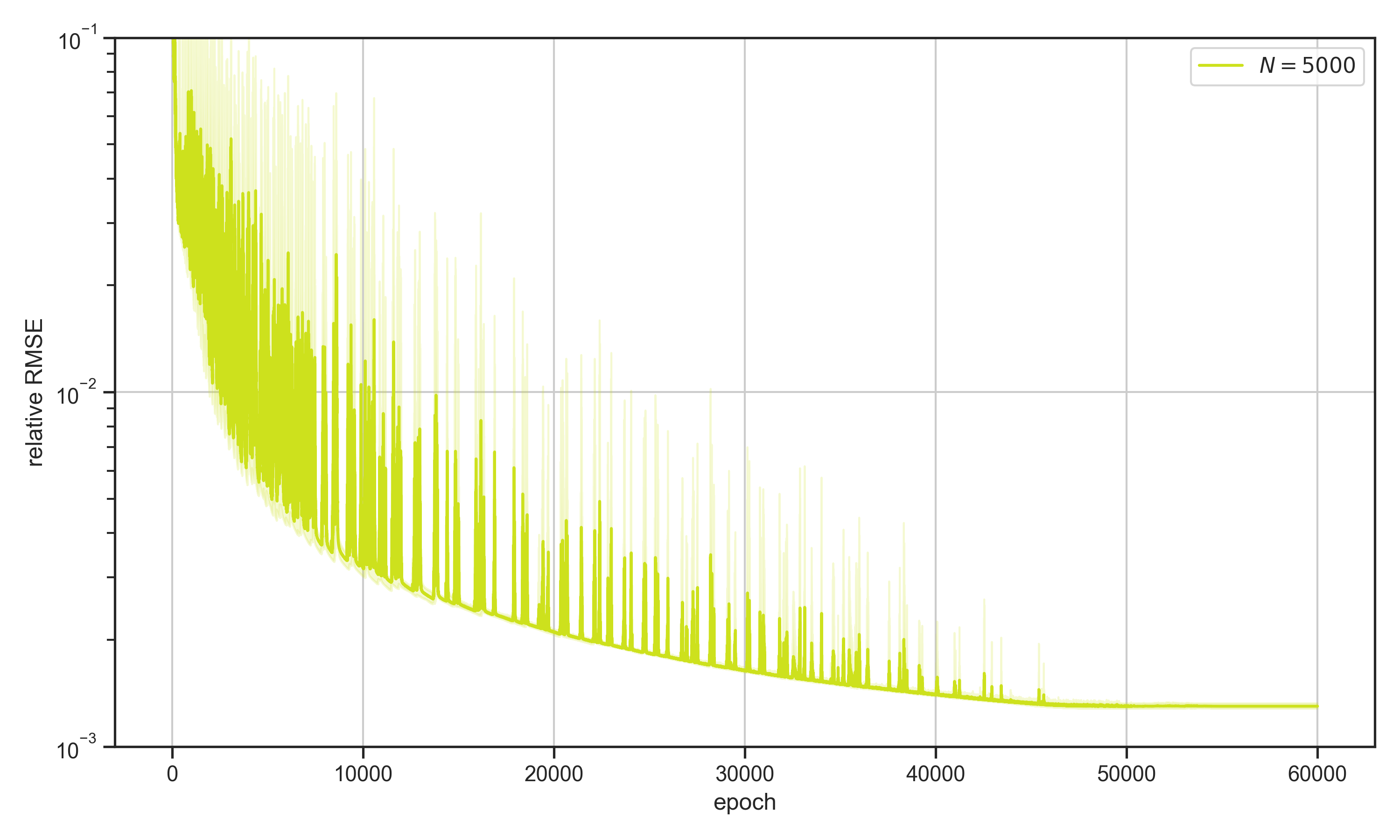}
  \caption{\footnotesize RMSE for $N=5000$.}
  \label{fig:AllenCahnEquation_c}
  \end{subfigure}\cr
}
\caption{We repeat the experiment of \Cref{fig:HeatEquation} for the nonlinear Allen-Cahn equation, i.e., scenario~(ii).}
\label{fig:AllenCahnEquation}
\end{figure}
In \Cref{fig:HeatEquation,fig:AllenCahnEquation}, respectively,
we depict for the linear heat equation, i.e. scenario~(i), and the nonlinear Allen-Cahn equation with nonlinear term~$q(u)=u^3-u$, i.e. scenario~(ii),
the relative root mean square error~(RMSE)
\begin{equation}
    \text{RMSE}(\theta)
    = \frac{1}{\N{h}_{L_\infty(D_T)}}\sqrt{2\JN{\theta}}
    = \frac{1}{\N{h}_{L_\infty(D_T)}}\sqrt{\int_0^T\!\!\!\int_D (\uNv{\theta}{t,x} -h(t,x))^2 \,dxdt}
\end{equation}
during training of the NN~$g^N_\theta$ for a range of different numbers of neurons~$N$.
Our results are averaged across five runs with different seeds and visualized as described in the captions.

We observe that the NN-PDE solution~$\uNhat{\theta}$ converges to the target data~$h$ as the number of neurons~$N$ in the NN~$g^N_\theta$ increases (see \Cref{fig:HeatEquation_a,fig:AllenCahnEquation_a}),
which confirms our theoretical expectations, cf.\@ \Cref{lem:lazytraining,thm:main}.
For a visualization of the target source term~$g_{\text{target}}$, the NN~$g^N_\theta$ as well as the target data~$h$ and the NN-PDE solution~$\uN{\theta}$, we refer the reader to the GitHub repository
\url{https://github.com/KonstantinRiedl/NNPDEs}.
In both experimental scenarios (i) and (ii), the loss decreases quickly from the beginning.
For small values of $N$, convergence saturates earlier at higher loss plateaus, while the error continues to decrease to very low loss plateaus even after $60,000$ epochs for larger $N$.
The spikes and oscillations present in the trajectory of the raw losses (see \Cref{fig:HeatEquation_b,fig:HeatEquation_c,fig:AllenCahnEquation_b,fig:AllenCahnEquation_c}) are reminiscent of the edge of stability phenomenon observed in \cite{cohen2021gradient,cohen2022adaptive} when using gradient methods to mimimize neural network
training objectives.
They observe for a wide range of machine learning tasks that the training loss does not behave monotonically over short timescales, but consistently decreases over long ones, due to exhibiting a self-stabilization property~\cite{damian2022self}.
This is in accordance with our observations.
These artifacts are intensified by the strong non-convexity of our underlying optimization problem in both the linear and nonlinear PDE scenario.
Although this non-convexity is expected to vanish in the infinite-width hidden layer limit in the case of the linear heat equation due to a convexification of the optimization problem,
for moderately sized finite-dimensional neural networks, this non-convexity appears to still have an effect on the training.
In the nonlinear case, where the problem does not convexify, this may lead to the observed more frequent oscillations in the loss.

\section{Neural Network in the PDE Source Term}
\label{sec:infinitewidthNN}

This section is about the mathematical tools related to the neural network~(NN) $g_\theta^N$ defined in \eqref{eq:gN}, which constitutes the source term of the PDE~\eqref{eq:parabolicPDEN_plain}.

\subsection{Properties of the NN Kernel \texorpdfstring{$B$}{}}

The NN kernel~$B$, a.k.a.\@ the neural tangent kernel~(NTK) is given as in \eqref{eq:parabolicB}.
In \Cref{lem:BboundednessLinfty,lem:BboundednessL2,lem:BLipschitz} we establish some properties of the NN kernel~$B_0$ at initialization that will be useful throughout the manuscript.
First, we show that the kernel~$B_0$ is uniformly bounded,
which is a direct consequence of Assumptions~\ref{asm:NN_sigma}, \ref{asm:NN_sigma'} and \ref{asm:NN_mu0}\ref{asm:NN_mu0ii}.

\begin{lemma}[$L_\infty$-boundedness of $B_0$]
    \label{lem:BboundednessLinfty}
    The kernel $B_0=B(\mu_0)$ defined in \eqref{eq:parabolicB} is uniformly bounded in $L_\infty$, i.e., it holds
    \begin{equation}
        \abs{B_0(t,x,t',x')}
        = \abs{B(t,x,t',x';\mu_0)}
        \leq C^{B}_\infty
    \end{equation}
    for all $(t,x),(t',x')\in D_T$ for a constant $C^{B}_\infty=C^{B}_\infty(T,D,\sigma,\mu_0)$.
\end{lemma}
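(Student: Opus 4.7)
The plan is to bound $k$ pointwise in $(c,w^t,w,\eta)$ using the boundedness hypotheses on $\sigma$ and $\sigma'$ together with the geometry of $D_T$, and then integrate against $\mu_0$, exploiting the compact support of the marginal $\mu_{0,c}$. Concretely, I would split $k$ from \eqref{eq:parabolick} into its two summands and handle them separately.

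For the first summand, $\sigma(w^tt+w^Tx+\eta)\sigma(w^tt'+w^Tx'+\eta)$, Assumption~\ref{asm:NN_sigma} yields the uniform bound $C_\sigma^{2}$ in all arguments. For the second summand, I would use Assumption~\ref{asm:NN_sigma'} to bound the two $\sigma'$-factors by $C_{\sigma'}^{2}$, and control the remaining factor $(tt'+x^Tx'+1)$ by $T^{2}+|D|^{2}+1$ using the finiteness of the physical time horizon~$T$ and the boundedness of the spatial domain~$D$ provided by Assumption~\ref{asm:Dbdd}. Combining these pointwise estimates gives
\begin{equation*}
    k(t,x,t',x';c,w^t,w,\eta) \;\leq\; C_\sigma^{2} + c^{2}\, C_{\sigma'}^{2}\bigl(T^{2}+|D|^{2}+1\bigr)
\end{equation*}
uniformly for $(t,x),(t',x')\in D_T$ and all $(w^t,w,\eta)$.

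Integrating this bound against $\mu_0$ and invoking Assumption~\ref{asm:NN_mu0}\ref{asm:NN_mu0ii}, which ensures that $\mu_{0,c}$ is compactly supported and in particular that $\int c^{2}\,d\mu_{0,c}(c)<\infty$, then gives the claim with
\begin{equation*}
    C^{B}_{\infty} \;:=\; C_\sigma^{2} + C_{\sigma'}^{2}\bigl(T^{2}+|D|^{2}+1\bigr)\!\int c^{2}\,d\mu_{0,c}(c),
\end{equation*}
which displays exactly the dependence $C^{B}_{\infty}(T,D,\sigma,\mu_0)$ asserted in the statement.

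There is no substantive obstacle here: the lemma is essentially a matter of collecting the relevant standing assumptions and tracking the dependencies of the resulting constant. Its purpose is to record the explicit form of $C^{B}_{\infty}$ so that subsequent results which invoke it—most immediately \Cref{lem:TBboundednessLinfty}, and thereafter the $L_\infty$ bounds on $T_{B_0}\uhatp{\tau}$ used throughout the convergence analysis—can quote a single universal constant.
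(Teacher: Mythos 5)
Your argument is correct and coincides with the paper's reasoning: the paper states the lemma as a direct consequence of Assumptions~\ref{asm:NN_sigma}, \ref{asm:NN_sigma'} and \ref{asm:NN_mu0}\ref{asm:NN_mu0ii} without writing out the details, and your proposal simply makes that explicit by bounding the two summands of $k$ pointwise and integrating the $c^2$-factor against the compactly supported marginal $\mu_{0,c}$. The resulting constant $C^{B}_\infty(T,D,\sigma,\mu_0)$ matches the asserted dependence, so nothing further is needed.
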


Lemma \ref{lem:BboundednessLinfty} directly implies that the kernel~$B_0$ is bounded in $L_2$.
\begin{lemma}[$L_2$-boundedness of $B_0$]
    \label{lem:BboundednessL2}
    The kernel $B_0=B(\mu_0)$ defined in \eqref{eq:parabolicB} is bounded in $L_2$, i.e., $B_0\in L_2(D_T\times D_T)$.
    We abbreviate $C^{B}_2 = \N{B_0}_{L_2(D_T\times D_T)}$.
\end{lemma}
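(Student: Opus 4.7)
The plan is to deduce this statement as an immediate corollary of the preceding $L_\infty$-bound in \Cref{lem:BboundednessLinfty}, using the fact that the time-space domain $D_T=(0,T)\times D$ has finite Lebesgue measure. Since $T<\infty$ by hypothesis and $\vol{D}<\infty$ by Assumption~\ref{asm:Dbdd}, the product domain $D_T\times D_T$ also has finite measure, and so any uniformly bounded measurable function on it automatically lies in every $L_p$ space for $p\in[1,\infty]$.

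Concretely, I would first invoke \Cref{lem:BboundednessLinfty} to obtain the pointwise bound $B_0(t,x,t',x')\leq C^{B}_\infty$ for all $(t,x),(t',x')\in D_T$, where the constant $C^{B}_\infty$ depends only on $T$, $D$, $\sigma$, and $\mu_0$. Squaring and integrating over $D_T\times D_T$ then gives
\begin{equation*}
    \N{B_0}_{L_2(D_T\times D_T)}^2
    = \int_0^T\!\!\!\int_D\!\!\int_0^T\!\!\!\int_D B_0(t,x,t',x')^2\,dx'dt'dxdt
    \leq (C^{B}_\infty)^2\, T^2\,\vol{D}^2,
\end{equation*}
which is finite and yields the desired conclusion with $C^{B}_2\leq C^{B}_\infty\, T\,\vol{D}$.

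There is no substantive obstacle here — the only thing to note is that measurability of $B_0$ on $D_T\times D_T$ follows from the joint measurability of the map $(t,x,t',x',c,w^t,w,\eta)\mapsto k(t,x,t',x';c,w^t,w,\eta)$ (which is continuous by Assumptions~\ref{asm:NN_sigma} and \ref{asm:NN_sigma'}) together with Fubini's theorem applied to the integral against $\mu_0$ in \eqref{eq:parabolicB}, guaranteeing that $B_0$ is a well-defined measurable function to which the $L_\infty$ bound may be applied.
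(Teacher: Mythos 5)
Your proposal is correct and matches the paper's (implicit) argument exactly: the paper simply states that \Cref{lem:BboundednessLinfty} directly implies the $L_2$-bound, which is precisely your observation that a uniformly bounded measurable kernel on the finite-measure product domain $D_T\times D_T$ lies in $L_2$, with $C^B_2\leq C^B_\infty T\vol{D}$. Your added remark on measurability via joint continuity of $k$ and integration against $\mu_0$ is a harmless extra detail the paper leaves tacit.
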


\begin{remark}
    \label{rem:BHilbertSchmidt}
    Since the NN kernel~$B$ is symmetric, as easily verifiable by noting that $B(t,x,t',x';\mu)=B(t',x',t,x;\mu)$, and since $B_0\in L_2(D_T\times D_T)$ as establish in \Cref{lem:BboundednessL2},
    $B_0$ is a Hilbert-Schmidt kernel.
\end{remark}

We further show that the kernel~$B_0$ is Lipschitz continuous in the time and space variables.

\begin{lemma}[Lipschitz continuity of $B_0$]
    \label{lem:BLipschitz}
    The kernel $B_0=B(\mu_0)$ defined in \eqref{eq:parabolicB} is $L_B$-Lipschitz continuous, i.e., it holds
    \begin{equation}
        \abs{B(t^1,x^1,t',x';\mu_0)-B(t^2,x^2,t',x';\mu_0)}
        \leq L_B \left(\abs{t^1-t^2} + \N{x^1-x^2}\right)
    \end{equation}
    for all $(t^1,x^2),(t^2,x^2),(t',x')\in D_T$ for a constant $L_B=L_B(T,D,\sigma,\mu_0)$.
\end{lemma}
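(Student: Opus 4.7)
The plan is to bound the difference $|B(t^1,x^1,t',x';\mu_0) - B(t^2,x^2,t',x';\mu_0)|$ by bounding the integrand $|k(t^1,x^1,t',x';c,w^t,w,\eta) - k(t^2,x^2,t',x';c,w^t,w,\eta)|$ pointwise in the parameters $(c,w^t,w,\eta)$ and then integrating against $\mu_0$. From the definition~\eqref{eq:parabolick}, the kernel $k$ consists of two summands, which I would handle separately.

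For the first summand $\sigma(w^t t + w^T x + \eta)\sigma(w^t t' + w^T x' + \eta)$, I would factor out $\sigma(w^t t' + w^T x' + \eta)$, which is bounded by $C_\sigma$ (Assumption~\ref{asm:NN_sigma}). The remaining $\sigma$-difference is controlled by Lipschitz continuity of $\sigma$ (Assumption~\ref{asm:NN_sigma}), yielding an upper bound of the form
\begin{equation*}
    C_\sigma L_\sigma \bigl(\absnormal{w^t}\abs{t^1-t^2} + \Nnormal{w}\Nnormal{x^1-x^2}\bigr)
    \leq C_\sigma L_\sigma \bigl(\absnormal{w^t} + \Nnormal{w}\bigr)\bigl(\abs{t^1-t^2} + \Nnormal{x^1-x^2}\bigr).
\end{equation*}
For the second summand $c^2\sigma'(w^t t + w^T x + \eta)\sigma'(w^t t' + w^T x' + \eta)(tt' + x^T x' + 1)$, I would further split the difference into two pieces by adding and subtracting the mixed term $c^2\sigma'(w^t t^2 + w^T x^2 + \eta)\sigma'(w^t t' + w^T x' + \eta)(t^1 t' + (x^1)^T x' + 1)$. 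The first piece uses Lipschitz continuity of $\sigma'$ (Assumption~\ref{asm:NN_sigma'}) together with boundedness of the second $\sigma'$ factor (Assumption~\ref{asm:NN_sigma'}) and boundedness of the polynomial factor $tt' + x^T x' + 1$ on the bounded domain $D_T$ (Assumption~\ref{asm:Dbdd}). The second piece uses boundedness of both $\sigma'$ factors and the obvious Lipschitz estimate $|t^1 t' - t^2 t'| + |(x^1-x^2)^T x'| \leq T\abs{t^1 - t^2} + \abs{D}\Nnormal{x^1-x^2}$.

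Combining both summands yields a pointwise bound of the form
\begin{equation*}
    \abs{k(t^1,x^1,t',x';c,w^t,w,\eta) - k(t^2,x^2,t',x';c,w^t,w,\eta)}
    \leq C\bigl(1 + c^2\bigr)\bigl(1 + \absnormal{w^t} + \Nnormal{w}\bigr)\bigl(\abs{t^1-t^2} + \Nnormal{x^1-x^2}\bigr),
\end{equation*}
where $C$ depends on $T$, $D$, $\sigma$. Integrating this bound against $\mu_0$ and using independence of $c$ from $(w^t,w,\eta)$ (Assumption~\ref{asm:NN_mu0}\ref{asm:NN_mu0i}), the compact support of $\mu_{0,c}$ (Assumption~\ref{asm:NN_mu0}\ref{asm:NN_mu0ii}) to control $\int c^2\,d\mu_{0,c}$, and the existence of second-order moments of $\mu_{0,(w^t,w,\eta)}$ (Assumption~\ref{asm:NN_mu0}\ref{asm:NN_mu0iii}) to control $\int(1+\absnormal{w^t}+\Nnormal{w})\,d\mu_{0,(w^t,w,\eta)}$ via Jensen's inequality, I obtain the claimed Lipschitz bound with constant $L_B = L_B(T,D,\sigma,\mu_0)$.

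The main obstacle is bookkeeping rather than conceptual: the second summand of $k$ is a product of three factors depending on $(t,x)$ (two $\sigma'$ evaluations and the polynomial $tt' + x^T x' + 1$), so a clean telescoping argument is needed to separate the Lipschitz contributions from each factor, and one must ensure that the resulting $c^2$, $\absnormal{w^t}$, and $\Nnormal{w}$ factors are integrable against $\mu_0$, which is why independence of $c$ and finite second moments of $(w^t,w,\eta)$ enter. No spectral or regularity information on $\sigma''$ is required.
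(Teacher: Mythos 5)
Your proposal is correct and follows essentially the same route as the paper: a pointwise Lipschitz bound on the integrand $k$ via boundedness and Lipschitz continuity of $\sigma$ and $\sigma'$, yielding a bound of the form $C(1+c^2)\left((1+\absnormal{w^t})\abs{t^1-t^2}+(1+\Nnormal{w})\N{x^1-x^2}\right)$, which is then integrated against $\mu_0$ using compact support of $\mu_{0,c}$ and bounded moments of $\mu_{0,(w^t,w,\eta)}$. (The appeal to independence of $c$ from $(w^t,w,\eta)$ is unnecessary, since compact support of $\mu_{0,c}$ already lets you bound $c^2$ by a constant, but this does not affect correctness.)
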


\begin{proof}
    The Lipschitzness and boundedness of $\sigma$ and $\sigma'$ via Assumptions~\ref{asm:NN_sigma} and \ref{asm:NN_sigma'} together with Jensen's inequality give
    \begin{equation}
    \begin{split}
        &\abs{B(t^1,x^1,t',x';\mu_0)-B(t^2,x^2,t',x';\mu_0)} \\
        &\qquad\,\leq \int C(1+c^2)\left((1+\absnormal{w^t})\abs{t^1-t^2} + (1+\N{w})\N{x^1-x^2}\right)d\mu_0(w^t,w,\eta,c)
    \end{split}
    \end{equation}
    for a constant $C=C(T,D,\sigma)$.
    Since $\mu_0$ is such that the marginal distribution $\mu_{0,c}$ is compactly supported and the marginal distribution $\mu_{0,(w^t,w,\eta)}$ has bounded moments according to Assumption~\ref{asm:NN_mu0}, the statement follows.
\end{proof}

To wrap up this section, let us show that $k$ is (locally) Lipschitz continuous in the NN parameters. 

\begin{lemma}[Lipschitz continuity of $k$]
    \label{lem:kLipschitz}
    The function $k$ defined in \eqref{eq:parabolick} is $L_k$-Lipschitz continuous, i.e., it holds
    \begin{equation}
    \begin{split}
        &\abs{k(t,x,t',x';c^1,w^{t,1},w^1,\eta^1)-k(t,x,t',x';c^2,w^{t,2},w^2,\eta^2)}\\
        &\qquad\,\leq L_k(c^1,c^2) \left(\abs{c^1-c^2} + \abs{w^{t,1}-w^{t,2}} + \N{w^1-w^2} + \abs{\eta^1-\eta^2}\right)
    \end{split}
    \end{equation}
    for all $(c^1,c^2),(w^{t,1},w^{t,2}),(w^1,w^2),(\eta^1,\eta^2)$ and for all $(t,x),(t',x')\in D_T$ for a constant $L_k(c^1,c^2)=L_k(T,D,\sigma,c^1,c^2)$ that is quadratic in $c^1$ and $c^2$.
\end{lemma}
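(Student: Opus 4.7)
The plan is to split $k$ into its two additive pieces and control each separately by adding and subtracting intermediate terms, exploiting the boundedness and Lipschitz continuity of $\sigma$ and $\sigma'$ together with the boundedness of the time--space domain $D_T$ (Assumptions~\ref{asm:Dbdd}, \ref{asm:NN_sigma}, \ref{asm:NN_sigma'}).

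Abbreviate $z^j := w^{t,j}t + (w^j)^Tx + \eta^j$ and $(z^j)' := w^{t,j}t' + (w^j)^Tx' + \eta^j$ for $j=1,2$. First I would observe that, uniformly in $(t,x)\in\overline{D_T}$,
\begin{equation}
    \abs{z^1-z^2}
    \leq (1+\abs{t}+\N{x})\bigl(\abs{w^{t,1}-w^{t,2}}+\N{w^1-w^2}+\abs{\eta^1-\eta^2}\bigr)
    \leq C_{D,T}\,\Delta_\theta,
\end{equation}
where $\Delta_\theta:=\abs{c^1-c^2}+\abs{w^{t,1}-w^{t,2}}+\N{w^1-w^2}+\abs{\eta^1-\eta^2}$ and $C_{D,T}$ depends only on $T$ and $\abs{D}$. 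An identical bound holds for $(z^1)'-(z^2)'$.

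For the first summand $\sigma(z)\sigma(z')$, I would write the telescoping identity
\begin{equation}
    \sigma(z^1)\sigma((z^1)')-\sigma(z^2)\sigma((z^2)')
    = \bigl(\sigma(z^1)-\sigma(z^2)\bigr)\sigma((z^1)') + \sigma(z^2)\bigl(\sigma((z^1)')-\sigma((z^2)')\bigr),
\end{equation}
and apply the $L_\sigma$-Lipschitz continuity and boundedness $\abs{\sigma}\leq C_\sigma$ from Assumption~\ref{asm:NN_sigma} together with the preceding bound on $\abs{z^1-z^2}$ and $\abs{(z^1)'-(z^2)'}$. This yields a bound of the form $C_1\Delta_\theta$ with $C_1=C_1(T,D,\sigma)$, independent of the $c^j$'s.

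For the second summand $c^2\sigma'(z)\sigma'(z')(tt'+x^Tx'+1)$, I would first separate the $c^2$ factor via
\begin{equation}
    (c^1)^2\sigma'(z^1)\sigma'((z^1)') - (c^2)^2\sigma'(z^2)\sigma'((z^2)')
    = \bigl((c^1)^2-(c^2)^2\bigr)\sigma'(z^1)\sigma'((z^1)') + (c^2)^2\bigl(\sigma'(z^1)\sigma'((z^1)')-\sigma'(z^2)\sigma'((z^2)')\bigr),
\end{equation}
and note $(c^1)^2-(c^2)^2=(c^1-c^2)(c^1+c^2)$, which produces the factor $\abs{c^1+c^2}\leq\abs{c^1}+\abs{c^2}$. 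The remaining $\sigma'$-difference is handled by the same telescoping trick as above, using boundedness $\abs{\sigma'}\leq C_{\sigma'}$ and $L_{\sigma'}$-Lipschitz continuity from Assumption~\ref{asm:NN_sigma'}. Multiplying by $\abs{tt'+x^Tx'+1}\leq 1+T^2+\abs{D}^2$ (Assumptions~\ref{asm:Dbdd}), this contributes a bound of the form $\bigl(C_2\abs{c^1+c^2}+C_3((c^2)^2)\bigr)\Delta_\theta$; symmetrizing (or taking an additional telescopic step between $(c^1)^2$ and $(c^2)^2$) gives a coefficient quadratic in $(c^1,c^2)$, say $C_4(1+(c^1)^2+(c^2)^2)$, with $C_4=C_4(T,D,\sigma)$.

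Adding the two contributions yields the claim with $L_k(c^1,c^2):=C_1 + C_4(1+(c^1)^2+(c^2)^2)$, which is quadratic in $c^1$ and $c^2$ as required. There is no real obstacle here; the argument is routine bookkeeping, and the only subtlety to keep in mind is that the quadratic growth in $c^1,c^2$ cannot be avoided because $k$ itself is quadratic in $c$.
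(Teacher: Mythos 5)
Your argument is correct and follows exactly the route the paper indicates: the paper's own proof is omitted with the remark that the claim "follows directly using the Lipschitzness and boundedness of $\sigma$ and $\sigma'$" (Assumptions~\ref{asm:NN_sigma}, \ref{asm:NN_sigma'}), and your telescoping of the two summands together with the factorization $(c^1)^2-(c^2)^2=(c^1-c^2)(c^1+c^2)$ and the bound $\abs{z^1-z^2}\leq C_{D,T}\,\Delta_\theta$ is precisely the omitted bookkeeping. The resulting constant $L_k(c^1,c^2)=C_1+C_4(1+(c^1)^2+(c^2)^2)$ is quadratic in $c^1,c^2$ and depends only on $T$, $D$, $\sigma$, as required.
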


\begin{proof}
    The proof follows  directly using the Lipschitzness and boundedness of $\sigma$ and $\sigma'$ via Assumptions~\ref{asm:NN_sigma}, \ref{asm:NN_sigma'}. Details are omitted.
\end{proof}

\subsection{Properties of the NN Integral Operator \texorpdfstring{$T_{B_0}$}{}}

The NN integral operator~$T_{B_0}$ is given as in \eqref{eq:parabolicT_B}.
\begin{remark}
    \label{rem:TBHilbertSchmidt}
    With the kernel $B_0$ being, as discussed in \Cref{rem:BHilbertSchmidt}, symmetric and in $L_2$, i.e., a Hilbert-Schmidt kernel,
    the associated operator $T_{B_0}:L_2(\overbar{D_T})\rightarrow L_2(\overbar{D_T})$ is a Hilbert-Schmidt integral operator.
\end{remark}

In \Cref{lem:parabolicTB,lem:TBboundednessLinfty,lem:TBLipschitz,lem:parabolicposdefTB} we establish some properties of the NN integral operator~$T_{B_0}$ that will be useful throughout the manuscript.
First, we show that the eigenfunctions of the NN integral operator~$T_{B_0}$ come with real eigenvalues and form an orthonormal basis of $L_2(D_T)$.
\begin{lemma}[Properties of $T_{B_0}$]
    \label{lem:parabolicTB}
    The operator $T_{B_0}$ defined in \eqref{eq:parabolicT_B} with $B_0=B(\mu_0)$ is a self-adjoint compact linear operator with operator norm $\N{T_{B_0}} \leq \N{T_{B_0}}_{\mathrm{HS}} = \N{B_0}_{L_2(D_T\times D_T)}\!=C^B_2$, where $\N{\dummy}_{\mathrm{HS}}$ denotes the Hilbert-Schmidt norm.
    Furthermore, the eigenfunctions $\left\{e_k(t,x)\right\}_{k=1}^\infty$ of $T_{B_0}$ have real eigenvalues~$\left\{\lambda_k\right\}_{k=1}^\infty$ and form an orthonormal basis of $L_2(D_T)$.
\end{lemma}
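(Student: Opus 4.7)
The plan is to invoke the classical spectral theory for Hilbert--Schmidt integral operators on $L_2(D_T)$ and verify each hypothesis for $T_{B_0}$ using the properties of the kernel $B_0$ already established.

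First I would observe that $T_{B_0}$ is obviously linear in its argument, and from \Cref{rem:BHilbertSchmidt} the kernel $B_0$ is symmetric and lies in $L_2(D_T \times D_T)$. Self-adjointness of $T_{B_0}$ follows directly from the symmetry $B_0(t,x,t',x')=B_0(t',x',t,x)$ combined with Fubini's theorem: for any $u,v \in L_2(D_T)$,
\begin{equation*}
    (T_{B_0} u, v)_{L_2(D_T)}
    = \int_{D_T}\!\!\int_{D_T} B_0(t,x,t',x') u(t',x') v(t,x)\, dx'dt'dxdt
    = (u, T_{B_0} v)_{L_2(D_T)}.
\end{equation*}

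Next, for the operator norm bound, I would apply Cauchy--Schwarz in the inner integral: for $u \in L_2(D_T)$,
\begin{equation*}
    \N{T_{B_0} u}_{L_2(D_T)}^2
    \leq \int_{D_T} \!\Big(\!\int_{D_T} B_0(t,x,t',x')^2 \,dx'dt'\Big) \N{u}_{L_2(D_T)}^2 \,dxdt
    = \N{B_0}_{L_2(D_T\times D_T)}^2 \N{u}_{L_2(D_T)}^2,
\end{equation*}
yielding $\N{T_{B_0}} \leq \N{B_0}_{L_2(D_T \times D_T)} = C^B_2$. The identity $\N{T_{B_0}}_{\mathrm{HS}} = \N{B_0}_{L_2(D_T \times D_T)}$ is the standard formula for the Hilbert--Schmidt norm of an integral operator with $L_2$ kernel, and the inequality $\N{T_{B_0}} \leq \N{T_{B_0}}_{\mathrm{HS}}$ is the elementary bound comparing the operator norm to the Hilbert--Schmidt norm.

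For compactness, I would cite the well-known fact that every Hilbert--Schmidt operator on a separable Hilbert space is compact (it is a norm limit of finite-rank operators, which one can see by expanding $B_0$ in a tensor-product orthonormal basis of $L_2(D_T \times D_T)$ and truncating). Finally, having shown $T_{B_0}$ is a compact self-adjoint operator on the separable Hilbert space $L_2(D_T)$, the Hilbert--Schmidt spectral theorem for compact self-adjoint operators yields that there exists a countable orthonormal basis $\{e_k\}_{k=1}^\infty$ of $L_2(D_T)$ consisting of eigenfunctions of $T_{B_0}$ with real eigenvalues $\{\lambda_k\}_{k=1}^\infty$ (accumulating only at $0$). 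I do not expect any serious obstacle here: each step is a textbook consequence of $B_0 \in L_2(D_T \times D_T)$ being symmetric, both of which were already recorded in \Cref{lem:BboundednessL2} and \Cref{rem:BHilbertSchmidt}.
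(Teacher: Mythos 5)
Your proposal is correct and follows essentially the same route as the paper: both reduce the statement to the classical facts that a symmetric $L_2$ kernel yields a self-adjoint Hilbert--Schmidt (hence compact) integral operator with $\N{T_{B_0}}\leq\N{T_{B_0}}_{\mathrm{HS}}=\N{B_0}_{L_2(D_T\times D_T)}$, and then invoke the spectral theorem for compact self-adjoint operators to obtain the orthonormal eigenbasis with real eigenvalues. The only difference is that you spell out the textbook verifications (Fubini, Cauchy--Schwarz, finite-rank approximation) that the paper simply cites.
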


\begin{proof}
    Since the kernel $B_0$ is symmetric and of Hilbert-Schmidt type as verified in \Cref{lem:BboundednessL2}, see also \Cref{rem:BHilbertSchmidt},
    the operator $T_{B_0}$ is a self-adjoint, compact linear operator.
    The Hilbert-Schmidt norm is $\N{T_{B_0}}_{\mathrm{HS}} = \N{B_0}_{L_2(D_T\times D_T)}$ and provides an upper bound to the operator norm.
    Furthermore, the spectral theorem ensures the existence of an orthonormal basis of $L_2(D_T)$ consisting of eigenvectors of $T_{B_0}$ with real eigenvalue, see \cite[Theorem~6.12]{brezis2011functional}.
\end{proof}

In fact, as we show  next,
the eigenvalues of the NN integral operator $T_{B_0}$ can be shown to be strictly positive~\cite[pages 27--28]{sirignano2023pde}.
\begin{lemma}[Positive definiteness of $T_{B_0}$]
    \label{lem:parabolicposdefTB}
    The eigenvalues $\left\{\lambda_k\right\}_{k=1}^\infty$ of the operator $T_{B_0}$ defined in \eqref{eq:parabolicT_B} with $B_0=B(\mu_0)$ are strictly positive, i.e., it holds $\lambda_k>0$ for all $k$.
    Moreover, it holds  $\lambda_k\leq \N{B_0}_{L_2(D_T\times D_T)}$ for all $k$.
\end{lemma}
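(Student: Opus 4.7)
The upper bound $\lambda_k\le\|B\|_{L_2(D_T\times D_T)}=C^B_2$ is immediate: by \Cref{lem:parabolicTB} the operator $T_{B_0}$ is self-adjoint and bounded with $\|T_{B_0}\|\le \|T_{B_0}\|_{\mathrm{HS}}=C^B_2$, so $|\lambda_k|\le C^B_2$ for every $k$. Thus the content of the lemma lies in strict positivity, which (since $T_{B_0}$ is self-adjoint and compact with eigenfunctions spanning $L_2(D_T)$) is equivalent to showing that
\begin{equation*}
    (u,T_{B_0}u)_{L_2(D_T)}>0\qquad\text{for every }u\in L_2(D_T)\setminus\{0\}.
\end{equation*}

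The plan is to exploit the explicit feature-map structure of the NTK. Substituting \eqref{eq:parabolicB}--\eqref{eq:parabolick} and applying Fubini, I would rewrite
\begin{equation*}
\begin{split}
    (u,T_{B_0}u)_{L_2(D_T)}
    &=\int\!\bigl(F_\sigma(w^t,w,\eta)\bigr)^2 d\mu_0 + \int c^2\!\left[\bigl(F_{\sigma'}^{(0)}\bigr)^2+\bigl(F_{\sigma'}^{(t)}\bigr)^2+\sum_{i=1}^d\bigl(F_{\sigma'}^{(x_i)}\bigr)^2\right]d\mu_0,
\end{split}
\end{equation*}
where $F_\sigma(w^t,w,\eta)=\int_{D_T} u(t,x)\sigma(w^tt+w^Tx+\eta)\,dxdt$ and similarly $F_{\sigma'}^{(\cdot)}$ are integrals against $\sigma'(w^tt+w^Tx+\eta)\cdot(1,t,x_i)$. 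Both integrals are manifestly non-negative, giving $(u,T_{B_0}u)_{L_2}\ge 0$; this already yields $\lambda_k\ge 0$.

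To upgrade to strict positivity, suppose $(u,T_{B_0}u)_{L_2(D_T)}=0$. Then in particular $F_\sigma(w^t,w,\eta)=0$ for $\mu_0$-a.e.\@ $(w^t,w,\eta)$. By Assumption~\ref{asm:NN_mu0}\ref{asm:NN_mu0iv} the measure $\mu_{0,(w^t,w,\eta)}$ assigns positive mass to every set of positive Lebesgue measure, so this vanishing actually holds Lebesgue-a.e. Moreover, since $\sigma$ is bounded (Assumption~\ref{asm:NN_sigma}) and $u\in L_2(D_T)$ with $D_T$ bounded (Assumption~\ref{asm:Dbdd}), dominated convergence shows that $(w^t,w,\eta)\mapsto F_\sigma(w^t,w,\eta)$ is continuous on $\mathbb{R}^{d+2}$; hence
\begin{equation*}
    \int_{D_T}u(t,x)\,\sigma(w^tt+w^Tx+\eta)\,dx\,dt=0\qquad\text{for every }(w^t,w,\eta)\in\mathbb{R}^{d+2}.
\end{equation*}

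The final step is to invoke a universal-approximation argument: since $\sigma$ is continuous (it is Lipschitz) and bounded and non-constant (Assumption~\ref{asm:NN_sigma}), and bounded non-constant functions cannot be polynomials, $\sigma$ is continuous and non-polynomial. By the theorem of Leshno--Lin--Pinkus--Schocken, finite linear combinations of shifted ridges $\{\sigma(w^tt+w^Tx+\eta):(w^t,w,\eta)\in\mathbb{R}^{d+2}\}$ are dense in $C(\overline{D_T})$, hence in $L_2(D_T)$. The display above then says $u$ is $L_2$-orthogonal to a dense subspace of $L_2(D_T)$, forcing $u=0$. Consequently $T_{B_0}$ is strictly positive and each eigenvalue satisfies $\lambda_k=(e_k,T_{B_0}e_k)_{L_2(D_T)}/\|e_k\|_{L_2(D_T)}^2>0$.

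\textbf{Main obstacle.} The only step that is not bookkeeping is the density/universal-approximation step, and in particular the upgrade from $\mu_0$-a.e.\@ to \emph{every} parameter $(w^t,w,\eta)$. The upgrade itself is easy (full Lebesgue support plus continuity of $F_\sigma$), but one must ensure that the hypotheses of \Cref{def:NN_assumptions} suffice for a universal approximation conclusion on $\overline{D_T}$; the key observation that makes this go through cleanly is that ``bounded + non-constant'' already rules out $\sigma$ being a polynomial, which is the sharp condition in the Leshno--Lin--Pinkus--Schocken theorem. Note that the second (NTK gradient) piece of $B_0$ is never needed for this argument, but would give an independent proof via $\sigma'$ if $\sigma'$ were additionally required to be non-constant.
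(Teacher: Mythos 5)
Your proposal is correct and follows essentially the same route as the paper: lower-bound the quadratic form by the $\sigma$-feature term, upgrade the $\mu_0$-a.e.\@ vanishing of $\int_{D_T}u\,\sigma(w^tt+w^Tx+\eta)\,dxdt$ to all parameters via Assumption~\ref{asm:NN_mu0}\ref{asm:NN_mu0iv} plus continuity, conclude $u=0$ from a universal-approximation-type property of bounded non-constant $\sigma$, and obtain the eigenvalue upper bound from \Cref{lem:parabolicTB}. The only (interchangeable) difference is the final citation: the paper applies Hornik's result that bounded non-constant $\sigma$ is discriminatory to the finite signed measure $u\,dxdt$, whereas you invoke the Leshno--Lin--Pinkus--Schocken density theorem via non-polynomiality of $\sigma$.
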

\begin{proof}
    We first prove that the eigenvalues are strictly positive, i.e., $\lambda_k>0$ for all $k$.
    Using the definition of the kernel $B_0=B(\mu_0)$ in \eqref{eq:parabolicB} we can directly compute that for any function $\widehat{u}=\widehat{u}(t,x)$ it holds
    \begin{equation}       \label{eq:proof:lem:parabolicposdefTB:1}
    \begin{split}
        (\widehat{u},T_{B_0}\widehat{u})_{L_2(D_T)}
        &= \int_0^T\!\!\!\int_D \widehat{u}(t,x) \int_0^T\!\!\!\int_D B_0(t,x,t',x')\widehat{u}(t',x') \,dx'dt'\,dxdt\\
        &= \int \left(\int_0^T\!\!\!\int_D\sigma(w^tt+w^Tx+\eta)\widehat{u}(t,x)\,dxdt\right)^2\\
        &\qquad\,+ \left(\int_0^T\!\!\!\int_Dc\sigma'(w^tt+w^Tx+\eta)t\widehat{u}(t,x)\,dxdt\right)^2\\
        &\qquad\,+ \N{\int_0^T\!\!\!\int_Dc\sigma'(w^tt+w^Tx+\eta)x\widehat{u}(t,x)\,dxdt}^2\\
        &\qquad\,+ \left(\int_0^T\!\!\!\int_Dc\sigma'(w^tt+w^Tx+\eta)\widehat{u}(t,x)\,dxdt\right)^2d\mu_{0}(c,w^t,w,\eta)\\
        &\geq \int \left(\int_0^T\!\!\!\int_D\sigma(w^tt+w^Tx+\eta)\widehat{u}(t,x)\,dxdt\right)^2d\mu_{0,(w^t,w,\eta)}(w^t,w,\eta)\\
        &\geq0,
    \end{split}
    \end{equation}
    where the inequality in the next-to-last step holds due to the non-negativity of the individual summands in lines~3--5.
    Since also the summand in line 2 is non-negative, the last inequality holds, which verifies $\lambda_k=(e_k,T_{B_0}e_k)_{L_2(D_T)}\geq0$.
    
    Let us now show further that $(\widehat{u},T_{B_0}\widehat{u})_{L_2(D_T)}=0$ if and only if $\widehat{u}=0$ everywhere.
    The ``if'' direction is immediate.
    For the ``only if'' direction, we proceed by contradiction.
    Consider a function $\widehat{u}$ which is not everywhere $0$ but suppose that $(\widehat{u},T_{B_0}\widehat{u})_{L_2(D_T)}=0$.
    The latter implies due to the inequality \eqref{eq:proof:lem:parabolicposdefTB:1} that
    \begin{equation}        \label{eq:proof:lem:parabolicposdefTB:2}
        \int_0^T\!\!\!\int_D\sigma(w^tt+w^Tx+\eta)\widehat{u}(t,x)\,dxdt
        = 0
        \qquad\text{for all }
        w^t\in\bbR,
        w\in\bbR^d,
        \eta\in\bbR,
    \end{equation}
    since the marginal distribution $\mu_{0,(w^t,w,\eta)}$ assigns positive probability to every set with positive Lebesgue measure as of Assumption~\ref{asm:NN_mu0}\ref{asm:NN_mu0iv} and continuity of the integrand w.r.t.\@ the NN parameters~$w^t,w,\eta$.
    Since $\sigma$ is non-constant and bounded as of Assumption~\ref{asm:NN_sigma}, it is, according to \cite[Theorem~5]{hornik1991approximation}, discriminatory in the sense of \cite{cybenko1989approximation,hornik1991approximation}.
    This ensures (note that $\widehat{u}(t,x)\,dxdt$ is a finite signed measure since $\widehat{u}\in L_1(D_T)$ by Jensen's inequality and the fact that $\widehat{u}\in L_2(D_T)$ and $D_T$ being bounded as of Assumption~\ref{asm:Dbdd}) that \eqref{eq:proof:lem:parabolicposdefTB:2} implies that $\widehat{u}=0$ by the definition of $\sigma$ being discriminatory, see \cite{hornik1991approximation}.
    Since this is a contradiction, $(\widehat{u},T_{B_0}\widehat{u})_{L_2(D_T)}>0$ if $\widehat{u}$ is not everywhere $0$.
    In particular, for the eigenfunctions~$e_k$ it thus holds
    $\lambda_k = (e_k,T_{B_0}e_k)_{L_2(D_T)}>0$,
    which proves the first part of the statement.

    It remains to show that the eigenvalues are bounded from above, i.e., $\lambda_k<\infty$.
    For this note that by Cauchy-Schwarz inequality it holds
    \begin{equation}
    \begin{split}
        \lambda_k
        = (e_k,T_{B_0}e_k)_{L_2(D_T)}
        &\leq \N{e_k}_{L_2(D_T)}\N{T_{B_0}e_k}_{L_2(D_T)}\\
        &\leq \N{B_0}_{L_2(D_T\times D_T)}\N{e_k}_{L_2(D_T)}^2
        = \N{B_0}_{L_2(D_T\times D_T)},
    \end{split}
    \end{equation}
    where the last inequality is due to \Cref{lem:parabolicTB}.
    This concludes the statement.
\end{proof}

Next, let us show that the NN integral operator $T_{B_0}$ maps $L_2$ to $L_\infty$.
\begin{lemma}[$L_\infty$-Boundedness of $T_{B_0}\widehat{u}$]   \label{lem:TBboundednessLinfty}
    Let $\widehat{u}\in L_2(D_T)$.
    Then $T_{B_0}\widehat{u}$ defined in \eqref{eq:parabolicT_B} with $B_0=B(\mu_0)$ is uniformly bounded in $L_\infty$, i.e., it holds
    \begin{equation}        \absbig{[T_{B_0}\widehat{u}](t,x)}
        \leq C^{T_B}_\infty \N{\widehat{u}}_{L_2(D_T)}
    \end{equation}
    for all $(t,x)\in D_T$ for a constant $C^{T_B}_\infty=C^{T_B}_\infty(T,D,C^{B}_\infty)$.
\end{lemma}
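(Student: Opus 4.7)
The plan is to apply the Cauchy-Schwarz inequality pointwise in $(t,x)$, combined with the uniform pointwise bound on the NN kernel $B_0$ already established in \Cref{lem:BboundednessLinfty}. Since the kernel is bounded uniformly in all four of its time-space arguments, the resulting constant depends only on $T$, $D$, and $C^B_\infty$.

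Concretely, I would fix $(t,x)\in D_T$, unfold the definition~\eqref{eq:parabolicT_B} to write
\begin{equation*}
    [T_{B_0}\widehat{u}](t,x)
    = \int_0^T\!\!\!\int_D B_0(t,x,t',x')\,\widehat{u}(t',x')\,dx'dt',
\end{equation*}
and apply the Cauchy-Schwarz inequality in the $(t',x')$-variables to obtain
\begin{equation*}
    \absbig{[T_{B_0}\widehat{u}](t,x)}
    \leq \left(\int_0^T\!\!\!\int_D B_0(t,x,t',x')^2\,dx'dt'\right)^{1/2} \N{\widehat{u}}_{L_2(D_T)}.
\end{equation*}

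For the inner integral, I would invoke \Cref{lem:BboundednessLinfty}, which gives $B_0(t,x,t',x')\leq C^B_\infty$ uniformly in all arguments, together with Assumptions~\ref{asm:D} and~\ref{asm:Dbdd} ensuring $\vol{D}<\infty$. This yields
\begin{equation*}
    \int_0^T\!\!\!\int_D B_0(t,x,t',x')^2\,dx'dt'
    \leq (C^B_\infty)^2\, T\,\vol{D},
\end{equation*}
a bound that is crucially independent of $(t,x)$. Hence
\begin{equation*}
    \absbig{[T_{B_0}\widehat{u}](t,x)}
    \leq C^{T_B}_\infty \N{\widehat{u}}_{L_2(D_T)}
    \qquad\text{with}\qquad
    C^{T_B}_\infty := C^B_\infty\sqrt{T\,\vol{D}},
\end{equation*}
uniformly for all $(t,x)\in D_T$, which is precisely the claim.

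There is no substantive obstacle to this proof: the uniform $L_\infty$-estimate on $B_0$ does all the work, and the pointwise Cauchy-Schwarz step immediately upgrades an $L_2$ bound on $\widehat{u}$ into an $L_\infty$ bound on $T_{B_0}\widehat{u}$. Conceptually, this is the standard observation that a Hilbert-Schmidt integral operator with a bounded kernel maps $L_2$ boundedly into $L_\infty$, and the constant in this setting depends only on the prescribed quantities $T$, $D$, and $C^B_\infty$.
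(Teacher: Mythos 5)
Your proof is correct and is essentially identical to the paper's argument: a pointwise Cauchy--Schwarz estimate in the $(t',x')$-variables followed by the uniform kernel bound of \Cref{lem:BboundednessLinfty}, yielding the same constant $C^{T_B}_\infty=\sqrt{T\vol{D}}\,C^{B}_\infty$. Nothing further is needed.
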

\begin{proof}
    Using Cauchy-Schwarz inequality and employing \Cref{lem:BboundednessLinfty} afterwards, we bound
    \begin{equation}
    \begin{split}        \absbig{[T_{B_0}\widehat{u}](t,x)}
        &\leq\N{\widehat{u}}_{L_2(D_T)} \sqrt{\int_0^T\!\!\!\int_D \left(B_0(t,x,t',x')\right)^2 dx'dt'}\\
        &\leq \N{\widehat{u}}_{L_2(D_T)} \sqrt{T\vol{D}}C^{B}_\infty
    \end{split}
    \end{equation}    
    for each $(t,x)\in D_T$, which proves the assertion with $C^{T_B}_\infty=\sqrt{T\vol{D}}C^{B}_\infty$.
\end{proof}

To wrap up this section, we furthermore show that~$T_{B_0}\widehat{u}$ is Lipschitz continuous.
\begin{lemma}[Lipschitz continuity of $T_{B_0}\widehat{u}$]
    \label{lem:TBLipschitz}
    Let $\widehat{u}\in L_2(D_T)$.
    Then $T_{B_0}\widehat{u}$ defined in \eqref{eq:parabolicT_B} with $B_0=B(\mu_0)$ is $L_{T_B}$-Lipschitz continuous, i.e., it holds
    \begin{equation}
        \abs{[T_{B_0}\widehat{u}](t^1,x^1)-[T_{B_0}\widehat{u}](t^2,x^2)}
        \leq L_{T_B} \left(\abs{t^1-t^2} + \N{x^1-x^2}\right)
    \end{equation}
    for all $(t^1,x^2),(t^2,x^2)\in D_T$ for a constant $L_{T_B}=L_{T_B}(T,D,L_B)$.
\end{lemma}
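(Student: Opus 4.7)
The plan is to proceed by a direct computation that mirrors the $L_\infty$-bound proof of \Cref{lem:TBboundednessLinfty}, but with the kernel replaced by its difference across two base points. Concretely, for any $(t^1,x^1),(t^2,x^2)\in D_T$, I will write
\begin{equation*}
    [T_{B_0}\widehat{u}](t^1,x^1) - [T_{B_0}\widehat{u}](t^2,x^2)
    = \int_0^T\!\!\!\int_D \bigl(B_0(t^1,x^1,t',x')-B_0(t^2,x^2,t',x')\bigr)\widehat{u}(t',x')\,dx'dt'
\end{equation*}
using the definition \eqref{eq:parabolicT_B}, and then apply Cauchy--Schwarz in $L_2(D_T)$ to separate $\widehat{u}$ from the kernel difference.

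Next, I will invoke the Lipschitz estimate from \Cref{lem:BLipschitz} pointwise in $(t',x')$, which yields
\begin{equation*}
    \bigl(B_0(t^1,x^1,t',x')-B_0(t^2,x^2,t',x')\bigr)^2
    \leq L_B^2\bigl(\abs{t^1-t^2}+\N{x^1-x^2}\bigr)^2
\end{equation*}
uniformly in $(t',x')$. Integrating over $D_T$ and using Assumption~\ref{asm:Dbdd} (so that $\int_0^T\!\!\int_D dx'dt' = T\vol{D}<\infty$), the square of the $L_2$-norm of the kernel difference in $(t',x')$ is bounded by $L_B^2\,T\vol{D}\,(\abs{t^1-t^2}+\N{x^1-x^2})^2$. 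Combining this with the Cauchy--Schwarz step gives
\begin{equation*}
    \abs{[T_{B_0}\widehat{u}](t^1,x^1)-[T_{B_0}\widehat{u}](t^2,x^2)}
    \leq L_B\sqrt{T\vol{D}}\,\N{\widehat{u}}_{L_2(D_T)}\bigl(\abs{t^1-t^2}+\N{x^1-x^2}\bigr),
\end{equation*}
so that one may take $L_{T_B}=L_B\sqrt{T\vol{D}}\,\N{\widehat{u}}_{L_2(D_T)}$, which depends on $T$, $D$, $L_B$ (and on $\N{\widehat{u}}_{L_2(D_T)}$, as one would expect since the operator is linear in $\widehat{u}$).

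There is no real obstacle here: the result is a routine consequence of the two previously established kernel estimates (Lipschitzness of $B_0$ from \Cref{lem:BLipschitz} and boundedness of the domain from Assumption~\ref{asm:Dbdd}), combined with a single application of Cauchy--Schwarz. The only mild subtlety worth flagging is bookkeeping for the constant: whereas \Cref{lem:BboundednessLinfty}--\ref{lem:TBboundednessLinfty} produce an $L_\infty$-bound whose constant absorbs the volume factor, the Lipschitz constant inherits both the volume factor $\sqrt{T\vol{D}}$ and the $L_2$-norm of $\widehat{u}$ from the Cauchy--Schwarz step. This is fully analogous to how \Cref{lem:TBboundednessLinfty} is derived and requires no further PDE input.
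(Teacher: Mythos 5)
Your proof is correct and follows essentially the same route as the paper: Cauchy--Schwarz to split off $\N{\widehat{u}}_{L_2(D_T)}$, then the pointwise kernel Lipschitz bound from \Cref{lem:BLipschitz} integrated over the bounded domain, giving $L_B\sqrt{T\vol{D}}\,\N{\widehat{u}}_{L_2(D_T)}$ exactly as in the paper's one-line argument. Your remark that the constant also carries the factor $\N{\widehat{u}}_{L_2(D_T)}$ (which the statement's notation $L_{T_B}(T,D,L_B)$ glosses over) is accurate and consistent with how the lemma is later applied to functions with a uniform $L_2$ bound.
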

\begin{proof}
    Using Cauchy-Schwarz inequality and employing \Cref{lem:BLipschitz} afterwards, we bound
    \begin{equation}
    \begin{split}
        &\abs{[T_{B_0}\widehat{u}](t^1,x^1)-[T_{B_0}\widehat{u}](t^2,x^2)}\\
        &\qquad\,\leq\N{\widehat{u}}_{L_2(D_T)} \sqrt{\int_0^T\!\!\!\int_D \left(B(t^1,x^1,t',x';\mu_0)-B(t^2,x^2,t',x';\mu_0)\right)^2 dx'dt'} \\
        &\qquad\,\leq \N{\widehat{u}}_{L_2(D_T)}  L_B \sqrt{T \vol{D}} \left(\abs{t^1-t^2} + \N{x^1-x^2}\right)
    \end{split}
    \end{equation}
    for $(t^1,x^2),(t^2,x^2)\in D_T$.
\end{proof}
\section{Decay of the Loss \texorpdfstring{$\J{\tau}$}{}}
\label{sec:decayJ}

\Cref{lem:parabolictimeevolutionJt*} in this section establishes that the loss $\J{\tau}$ defined in \eqref{eq:parabolicJtau} is monotonically non-increasing in the training time $\tau$.

We state the result for a training time interval $I$, which may be either $[0,\CT]$ or $[0,\infty)$.
\begin{proposition}[Decay of the loss $\J{\tau}$]    \label{lem:parabolictimeevolutionJt*}
    Let $((\up{\tau},\uhatp{\tau}))_{\tau\in I}\in \CC\left(I,\CS\times\CS\right)$ denote the unique weak solution to the PDE system \eqref{eq:parabolicPDE*}--\eqref{eq:parabolicadjoint*} coupled with the integro-differential equation~\eqref{eq:parabolicgtau} in the sense of \Cref{lem:parabolic_wellposedness_infinitewidth} and \Cref{rem:parabolic_wellposedness} on the training time interval $I$.
    Define the loss $\J{\tau}$ as in \eqref{eq:parabolicJtau}.
    Then, for the training time derivative $\fd{\tau}\J{\tau}$ it holds
    \begin{equation}
    \label{eq:parabolicdtJ*}
    \begin{split}
        \fd{\tau}\J{\tau}
        &= - \alpha_\tau(\uhatp{\tau},T_{B_0}\uhatp{\tau})_{L_2(D_T)} \\
        &= -\alpha_\tau \int_0^T\!\!\!\int_D \uhatpp{\tau}{t,x} \int_0^T\!\!\!\int_D B(t,x,t',x';\mu_0)\uhatpp{\tau}{t',x'} \,dx'dt'dxdt
    \end{split}
    \end{equation}
    for all $\tau\in I$
    with the operator $T_{B_0}$ defined in \eqref{eq:parabolicT_B} and where the kernel $B_0=B(\mu_0)$ is as in \eqref{eq:parabolicB}.
    In particular, we have
    \begin{equation}
        \fd{\tau}\J{\tau}\leq 0
    \end{equation}
    for all $\tau\in I$.
\end{proposition}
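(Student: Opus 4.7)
The plan is to apply the chain rule to differentiate $\J{\tau}$ in the training time, then use the adjoint PDE to rewrite the resulting integral of $(\up{\tau}-h)\fd{\tau}\up{\tau}$ in terms of $\uhatp{\tau}$ and $\fd{\tau} g^*_\tau$, and finally invoke the integro-differential representation \eqref{eq:parabolicgtau} for $g^*_\tau$ together with the positive definiteness of $T_{B_0}$. Concretely, chain rule gives
\begin{equation*}
\fd{\tau}\J{\tau} = \int_0^T\!\!\!\int_D \bigl(\upp{\tau}{t,x}-h(t,x)\bigr)\,\fd{\tau}\upp{\tau}{t,x}\,dxdt,
\end{equation*}
and the target formula \eqref{eq:parabolicdtJ*} would follow once we identify this with $(\uhatp{\tau},\fd{\tau} g^*_\tau)_{L_2(D_T)}$ and use $\fd{\tau} g^*_\tau = -\alpha_\tau T_{B_0}\uhatp{\tau}$ obtained by differentiating \eqref{eq:parabolicgtau} in $\tau$.

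Next I would formally differentiate the forward PDE \eqref{eq:parabolicPDE*} in $\tau$. Writing $v_\tau := \fd{\tau}\up{\tau}$, the training-time derivative of the source term coincides with $\fd{\tau}g^*_\tau$, so $v_\tau$ satisfies the linearized forward equation
\begin{equation*}
\fpartial{t} v_\tau + \CL v_\tau - q_u(\up{\tau})v_\tau = \fd{\tau} g^*_\tau \quad\text{in }D_T,\qquad v_\tau=0\text{ on }[0,T]\times\partial D,\qquad v_\tau=0\text{ on }\{0\}\times D,
\end{equation*}
since the initial condition $f$ and the Dirichlet boundary data do not depend on $\tau$. Now I would test this linearized equation against $\uhatp{\tau}$ and, simultaneously, test the adjoint PDE \eqref{eq:parabolicadjoint*} against $v_\tau$. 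Integration by parts in $t$ over $[0,T]$ together with the terminal condition $\uhatp{\tau}(T,\cdot)=0$ and initial condition $v_\tau(0,\cdot)=0$ eliminates all time-boundary contributions, while the vanishing Dirichlet traces of both $\uhatp{\tau}$ and $v_\tau$ eliminate all spatial boundary terms arising from $\CL$ versus $\CL^\dagger$ (these being adjoints on $H^1_0(D)$ in the sense of \Cref{def:weak_sol_adjoint}). This yields
\begin{equation*}
\int_0^T\!\!\!\int_D (\up{\tau}-h)\,v_\tau\,dxdt = \int_0^T\!\!\!\int_D \uhatp{\tau}\,\fd{\tau}g^*_\tau\,dxdt = -\alpha_\tau \int_0^T\!\!\!\int_D \uhatp{\tau}\,[T_{B_0}\uhatp{\tau}]\,dxdt,
\end{equation*}
which is precisely the claimed identity \eqref{eq:parabolicdtJ*}. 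The sign statement $\fd{\tau}\J{\tau}\leq 0$ is then immediate from the positive definiteness of $T_{B_0}$ established in \Cref{lem:parabolicposdefTB}, which ensures $(\uhatp{\tau},T_{B_0}\uhatp{\tau})_{L_2(D_T)}\geq 0$.

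The main obstacle is the rigorous justification of the formal chain-rule and integration-by-parts steps in the weak framework of \Cref{def:weak_sol,def:weak_sol_adjoint}. The existence and regularity of $v_\tau=\fd{\tau}\up{\tau}$ (and the analogous $\tau$-derivative of the adjoint) need to be extracted from the well-posedness theory of \Cref{lem:parabolic_wellposedness}: because $((\up{\tau},\uhatp{\tau}))_{\tau\in I}\in \CC(I,\CS\times\CS)$ and $g^*_\tau$ is differentiable in $\tau$ with $\fd{\tau}g^*_\tau\in L_2(D_T)$ (a direct consequence of \eqref{eq:parabolicgtau} together with the $L_2$-boundedness of $T_{B_0}$ from \Cref{lem:parabolicTB}), standard parabolic regularity gives $v_\tau\in \CS$ with $\fpartial{t}v_\tau\in L_2([0,T],H^{-1}(D))$, which is the minimal regularity needed to carry out the integration by parts (using the standard duality pairing argument $\frac{d}{dt}\langle u,\widehat u\rangle_{L_2(D)} = \langle \fpartial{t}u,\widehat u\rangle + \langle u, \fpartial{t}\widehat u\rangle$ for such Bochner-regular functions, cf.\@ \cite[Section~5.9.2]{evans2010partial}). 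Once this regularity is secured, the identification of the duality pairings with the bilinear forms $\CB$ and $\CB^\dagger$ via the adjointness relation $\CB^\dagger[\widehat{u},u;t]=\CB[u,\widehat{u};t]$ closes the argument.
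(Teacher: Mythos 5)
Your proposal is correct and follows essentially the same route as the paper's proof: chain rule on $\J{\tau}$, the duality pairing between the adjoint PDE \eqref{eq:parabolicadjoint*} and the linearized-in-$\tau$ forward equation for $\fd{\tau}\up{\tau}$ (with zero initial/terminal and boundary data killing all boundary terms), the identity $\fd{\tau}g^*_\tau=-\alpha_\tau T_{B_0}\uhatp{\tau}$, and positive definiteness of $T_{B_0}$ for the sign. The only (immaterial) difference is in how the integration by parts in time is justified: you invoke the $H^{-1}$-duality product rule, whereas the paper uses the stronger regularity $\partial_t\uhatpp{\tau}{t,\dummy}\in L_2(D)$ from \Cref{lem:parabolic_wellposedness} to identify the pairing with the $L_2(D)$ inner product before integrating by parts classically.
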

\begin{proof}
    Taking the training time derivative of our loss $\J{\tau}$, i.e., the derivative w.r.t.\@ the training time~$\tau$, 
    we obtain by chain rule and by using that $\uhatp{\tau}$ is a weak solution to the adjoint PDE~\eqref{eq:parabolicadjoint*} in the sense of \Cref{def:weak_sol_adjoint} with right-hand side $(\up{\tau}-h)$ that 
    \begin{equation}        \label{eq:proof:parabolicdtJ*1}
    \begin{split}
        \fd{\tau}\J{\tau}
        &= \fd{\tau} \frac{1}{2} \int_0^T\!\!\!\int_D (\upp{\tau}{t,x} \!-\! h(t,x))^2 \,dxdt \\
        &= \int_0^T\!\!\!\int_D (\upp{\tau}{t,x} \!-\! h(t,x)) \fd{\tau}\upp{\tau}{t,x} \, dxdt 
        = \int_0^T \!\left(\upp{\tau}{t,\dummy} \!-\! h(t,\dummy), \fd{\tau}\upp{\tau}{t,\dummy}\right)_{L_2(D)}\!dt\\
        &= \int_0^T \left\langle-\fpartial{t}\uhatpp{\tau}{t,\dummy},\fd{\tau}\upp{\tau}{t,\dummy}\right\rangle_{H^{-1}(D),H_0^1(D)} + \CB^\dagger\left[\uhatpp{\tau}{t,\dummy},\fd{\tau}\upp{\tau}{t,\dummy};t\right] \\
        & \qquad\quad\, - \left(q_u(t,\dummy,\upp{\tau}{t,\dummy})\uhatpp{\tau}{t,\dummy}, \fd{\tau}\upp{\tau}{t,\dummy}\right)_{L_2(D)} dt \\
        &= \int_0^T \left\langle\fpartial{t}\fd{\tau}\upp{\tau}{t,\dummy},\uhatpp{\tau}{t,\dummy}\right\rangle_{H^{-1}(D),H_0^1(D)} + \CB\left[\fd{\tau}\upp{\tau}{t,\dummy},\uhatpp{\tau}{t,\dummy};t\right] \\
        & \qquad\quad\, - \left(q_u(t,\dummy,\upp{\tau}{t,\dummy})\fd{\tau}\upp{\tau}{t,\dummy},\uhatpp{\tau}{t,\dummy}\right)_{L_2(D)} dt \\
        &= \int_0^T \left(\fd{\tau}g^*_\tau(t,\dummy),\uhatpp{\tau}{t,\dummy}\right)_{L_2(D)} dt
        = \int_0^T\!\!\!\int_D \left(\fd{\tau}g^*_\tau(t,x)\right)\uhatpp{\tau}{t,x}\, dxdt.
    \end{split}
    \end{equation}
    
    For the weak solution property in the third line of \eqref{eq:proof:parabolicdtJ*1}, we note that $\fd{\tau}\up{\tau}$, the weak solution to the linear parabolic PDE
    \begin{alignat}{2}
        \label{eq:parabolicpI*aux}
    \begin{aligned}
        \fpartial{t}\fd{\tau}\up{\tau} + \CL\fd{\tau}\up{\tau} - q_u(\up{\tau})\fd{\tau}\up{\tau}
        &= \fd{\tau}g^*_\tau
        \qquad&&\text{in }
        D_T, \\
        \fd{\tau}\up{\tau}
        &= 0
        \qquad&&\text{on }
        [0,T]\times\partial D, \\
        \fd{\tau}\up{\tau}
        &= \fd{\tau}f = 0
        \qquad&&\text{on }
        \{0\}\times D,
    \end{aligned}
    \end{alignat}
    which is obtained by taking in \eqref{eq:parabolicPDE*} the derivative w.r.t.\@ the training time $\tau$,
    can be used as a test function in the weak formulation of \eqref{eq:parabolicadjoint*}, see \Cref{def:weak_sol_adjoint}, since $\fd{\tau}\upp{\tau}{t,\dummy}\in H_0^1(D)$ for a.e.\@ $t\in[0,T]$.
    Existence, uniqueness and regularity of a weak solution to \eqref{eq:parabolicpI*aux} in a sense analogous to \Cref{def:weak_sol_adjoint} follow from classical results,
    see, e.g., \cite[Chapter~7.1, Theorem~3]{evans2010partial} and \cite[Chapter~7.1, Theorem~4]{evans2010partial}, as $\fd{\tau} g^*_\tau = - \alpha_\tau T_{B_0}\uhatp{\tau} \in L_2(D_T)$ by \Cref{lem:parabolicTB}.
    
    For the step in the fourth line of \eqref{eq:proof:parabolicdtJ*1}
    we first recall that 
    since $\fd{\tau}\upp{\tau}{t,\dummy}\in H_0^1(D)$ for a.e.\@ $t\in[0,T]$ and since $\fpartial{t}\uhatpp{\tau}{t,\dummy}\in L_2(D)$ for a.e.\@ $t\in[0,T]$, the dual pairing between $H^{-1}(D)$ and $H_0^1(D)$ coincides with the $L_2(D)$ scalar product \cite[Chapter~5.9, Theorem 1(iii)]{evans2010partial}.
    This allows to compute with partial integration, which applies since $\uhatpp{\tau}{t,\dummy}, \fd{\tau}\upp{\tau}{t,\dummy}\in H_0^1(D)$ for a.e.\@ $t\in[0,T]$, that
    \begin{allowdisplaybreaks}
    \label{eq:parabolicpI*aux_T51}
    \begin{align}
        &\int_0^T\left\langle\fpartial{t}\uhatpp{\tau}{t,\dummy},\fd{\tau}\upp{\tau}{t,\dummy}\right\rangle_{H^{-1}(D),H_0^1(D)} dt\notag\\
        &\qquad\,= \int_0^T\left(\fpartial{t}\uhatpp{\tau}{t,\dummy},\fd{\tau}\upp{\tau}{t,\dummy}\right)_{L_2(D)}dt
        = \int_0^T\!\!\!\int_D \left(\fpartial{t}\uhatpp{\tau}{t,x}\right)\fd{\tau}\upp{\tau}{t,x} \, dxdt\notag\\
        &\qquad\,= \int_D \underbrace{\uhatpp{\tau}{t,x} \fd{\tau}\upp{\tau}{t,x}\Big|_0^T}_{\substack{=0\\\text{since } \uhatp{\tau}=0 \text{ on }\{T\}\times D \text{ and}\\\text{since }\fd{\tau}\up{\tau}=0 \text{ on } \{0\}\times D}} \, dx -\int_0^T\!\!\!\int_D \uhatpp{\tau}{t,x}\fpartial{t}\fd{\tau}\upp{\tau}{t,x} \, dxdt \notag\\
        &\qquad\,=-\int_0^T\!\!\!\int_D  \left(\fpartial{t}\fd{\tau}\upp{\tau}{t,x}\right)\uhatpp{\tau}{t,x} \, dxdt
        = -\int_0^T\left(\fpartial{t}\fd{\tau}\upp{\tau}{t,\dummy},\uhatpp{\tau}{t,\dummy}\right)_{L_2(D)}\,dt \notag\\
        &\qquad\,= -\int_0^T\left\langle\fpartial{t}\fd{\tau}\upp{\tau}{t,\dummy},\uhatpp{\tau}{t,\dummy}\right\rangle_{H^{-1}(D),H_0^1(D)} dt,
    \end{align}
    \end{allowdisplaybreaks}
    where the last step holds again since now $\uhatpp{\tau}{t,\dummy}\in H_0^1(D)$ for a.e.\@ $t\in[0,T]$ and $\fpartial{t}\fd{\tau}\uhatp{\tau}$ is in $L_2(D)$ for a.e.\@ $t\in[0,T]$ by \cite[Chapter IV, Theorem 9.1]{ladyzhenskaia1968linear} with $p=2$.
    Those computations are analogous to the ones of \Cref{lem:parabolic_wellposedness_infinitewidth} for the PDE \eqref{eq:parabolicpI*aux} due to its with \eqref{eq:parabolicadjoint*} identical structure and since $\fd{\tau} g^*_\tau\in L_2(D_T)$.
    Secondly, by definition of the adjoint bilinear form~$\CB^\dagger$ (see \Cref{def:weak_sol_adjoint}) it holds
    \begin{equation}
        \label{eq:parabolicpI*aux_T52}
        \CB^\dagger\left[\uhatpp{\tau}{t,\dummy},\fd{\tau}\upp{\tau}{t,\dummy};t\right]
        =\CB\left[\fd{\tau}\upp{\tau}{t,\dummy},\uhatpp{\tau}{t,\dummy};t\right]
    \end{equation}
    for a.e.\@ $t\in[0,T]$ since $\uhatpp{\tau}{t,\dummy}, \fd{\tau}\upp{\tau}{t,\dummy}\in H_0^1(D)$. 

    The penultimate step of \eqref{eq:proof:parabolicdtJ*1} holds since $\fd{\tau}\uhatp{\tau}$ is a weak solution to the PDE~\eqref{eq:parabolicpI*aux} and since $\uhatpp{\tau}{t,\dummy}$ is a suitable test function as it is in $H_0^1(D)$ for a.e.\@ $t\in[0,T]$.

    Now, recalling the definition of the right-hand side~$g^*_\tau$ from \eqref{eq:parabolicgtau} and taking its training time derivative to obtain $\fd{\tau} g^*_\tau = - \alpha_\tau T_{B_0}\uhatp{\tau}$, as well as recalling the definition of the operator $T_{B_0}$ from \eqref{eq:parabolicT_B},
    we can continue \eqref{eq:proof:parabolicdtJ*1} to obtain
    \begin{equation}
    \label{eq:proof:parabolicdtJ*2}
    \begin{split}
        \fd{\tau}\J{\tau}
        &= - \alpha_\tau \int_0^T\!\!\!\int_D \uhatpp{\tau}{t,x} \int_0^T\!\!\!\int_D B(t,x,t',x';\mu_0)\uhatpp{\tau}{t',x'} \,dx'dt'dxdt \\
        &= - \alpha_\tau (\uhatp{\tau},T_{B_0}\uhatp{\tau})_{L_2(D_T)},
    \end{split}
    \end{equation}
    which concludes the first part of the proof.

    The second part now follows immediately thanks to the operator $T_{B_0}$ being positive definite as of \Cref{lem:parabolicposdefTB}.
\end{proof}

Following analogous steps we can prove \Cref{lem:grad_thetaJ}.

\begin{proof}[Proof of \Cref{lem:grad_thetaJ}]
    Taking the gradient of the loss $\JN{\theta}$ w.r.t.\@ the NN parameters~$\theta$, 
    we obtain by chain rule and by using that $\uNhat{\theta}$ is a weak solution to the adjoint PDE~\eqref{eq:parabolicadjointN_plain} in the sense of \Cref{def:weak_sol_adjoint} that
    \begin{equation}        \label{eq:proof:parabolicnablaJ*1}
    \begin{split}
        \nabla_\theta\JN{\theta}
        &= \nabla_\theta \frac{1}{2} \int_0^T\!\!\!\int_D (\uNv{\theta}{t,x} \!-\! h(t,x))^2 \,dxdt \\
        &= \int_0^T\!\!\!\int_D (\uNv{\theta}{t,x} \!-\! h(t,x))\nabla_\theta \uNv{\theta}{t,x} \,dxdt 
        = \!\int_0^T \!\!\big(\uNv{\theta}{t,\dummy} \!-\! h(t,\dummy), \nabla_\theta\uNv{\theta}{t,\dummy} \big)_{L_2(D)} \, dt \\
        &= \int_0^T \big\langle\!-\!\fpartial{t}\uNhatv{\theta}{t,\dummy},\nabla_\theta\uNv{\theta}{t,\dummy}\big\rangle_{H^{-1}(D),H_0^1(D)} + \CB^\dagger[\uNhatv{\theta}{t,\dummy},\nabla_\theta\uNv{\theta}{t,\dummy};t] \\
        & \qquad\quad\, - \big(q_u(t,\dummy,\uNv{\theta}{t,\dummy})\uNhatv{\theta}{t,\dummy}, \nabla_\theta\uNv{\theta}{t,\dummy}\big)_{L_2(D)} \, dt \\
        &= \int_0^T \big\langle\fpartial{t}\nabla_\theta\uNv{\theta}{t,\dummy},\uNhatv{\theta}{t,\dummy}\big\rangle_{H^{-1}(D),H_0^1(D)} + \CB[\nabla_\theta\uNv{\theta}{t,\dummy},\uNhatv{\theta}{t,\dummy};t] \\
        & \qquad\quad\, - \big(q_u(t,\dummy,\uNv{\theta}{t,\dummy}) \nabla_\theta\uNv{\theta}{t,\dummy},\uNhatv{\theta}{t,\dummy}\big)_{L_2(D)} \, dt \\
        &= \int_0^T\big(\nabla_\theta g_\theta^N(t,\dummy),\uNhatv{\theta}{t,\dummy}\big)_{L_2(D)}\, dt
        = \int_0^T\!\!\!\int_D \left(\nabla_\theta g_\theta^N(t,x)\right)\uNhatv{\theta}{t,x}\, dxdt.
    \end{split}
    \end{equation}
    For the weak solution property in the third line of \eqref{eq:proof:parabolicnablaJ*1}, we note that $\nabla_\theta\uN{\theta}$, the weak solution to the linear parabolic PDE
    \begin{alignat}{2}
        \label{eq:parabolicpI*aux_plain}
    \begin{aligned}
        \fpartial{t}\nabla_\theta\uN{\theta} + \CL\nabla_\theta\uN{\theta} - q_u(\uN{\theta})\nabla_\theta\uN{\theta}
        &= \nabla_\theta g_\theta^N
        \qquad&&\text{in }
        D_T, \\
        \nabla_\theta\uN{\theta}
        &= 0
        \qquad&&\text{on }
        [0,T]\times\partial D, \\
        \nabla_\theta\uN{\theta}
        &= \nabla_\theta f = 0
        \qquad&&\text{on }
        \{0\}\times D,
    \end{aligned}
    \end{alignat}
    which is obtained by taking in \eqref{eq:parabolicPDEN_plain} the gradient w.r.t.\@ the NN parameters $\theta$,
    can be used as a test function in the weak formulation of \eqref{eq:parabolicadjointN_plain}, see \Cref{def:weak_sol_adjoint}, since $\nabla_\theta\uNv{\theta}{t,\dummy}\in H_0^1(D)$ for a.e.\@ $t\in[0,T]$.
    Existence, uniqueness and regularity of a weak solution to \eqref{eq:parabolicpI*aux_plain} in a sense analogous to \Cref{def:weak_sol_adjoint} follow from classical results as $\nabla_\theta g_\theta^N\in L_2(D_T)$,
    see, e.g., \cite[Chapter~7.1, Theorem~3]{evans2010partial} and \cite[Chapter~7.1, Theorem~4]{evans2010partial}.

    For the step in the fourth line of \eqref{eq:proof:parabolicnablaJ*1} we use partial integration and the definition of the adjoint bilinear form~$\CB^\dagger$ with the same argumentation as in the proof of \Cref{lem:parabolictimeevolutionJt*}.

    The penultimate step of \eqref{eq:proof:parabolicnablaJ*1} holds since $\nabla_\theta\uN{\theta}$ is a weak solution to the PDE~\eqref{eq:parabolicpI*aux_plain} and since $\uNhatv{\theta}{t,\dummy}$ is a suitable test function as it is in $H_0^1(D)$ for a.e.\@ $t\in[0,T]$.
\end{proof}
\section{PDE Considerations}
\label{sec:PDEconsiderations}
Leveraging that the loss $\J{\tau}$ defined in \eqref{eq:parabolicJtau} is non-increasing in the training time $\tau$ as established in \Cref{lem:parabolictimeevolutionJt*},
we provide in \Cref{sec:BoundednessSolution,sec:BoundednessAdjoint} uniform (in the training time~$\tau$) estimates for the norms of the PDE solution $\up{\tau}$ to \eqref{eq:parabolicPDE*} and its adjoint $\uhatp{\tau}$ in \eqref{eq:parabolicadjoint*}.
Those bounds are in particular independent of and thus uniform in the training time $\tau$, depending only on properties of the PDE and the NN initialization at training time~$\tau=0$.

We state the results for a training time interval $I$, which may be either $[0,\CT]$ or $[0,\infty)$.

\subsection{Boundedness of the PDE Solution \texorpdfstring{$\up{\tau}$}{} Uniformly in the Training Time}
\label{sec:BoundednessSolution}

The following uniform (in the training time $\tau$) bound on the $L_2$ norm of the PDE solution $\up{\tau}$ to \eqref{eq:parabolicPDE*}
is an immediate consequence of the loss $\J{\tau}$ being monotonically non-increasing.

\begin{lemma}
    \label{lem:parabolic_uL2}
    Let $((\up{\tau},\uhatp{\tau}))_{\tau\in I}\in \CC\left(I,\CS\times\CS\right)$ denote the unique weak solution to the PDE system \eqref{eq:parabolicPDE*}--\eqref{eq:parabolicadjoint*} coupled with the integro-differential equation~\eqref{eq:parabolicgtau} in the sense of \Cref{lem:parabolic_wellposedness_infinitewidth} and \Cref{rem:parabolic_wellposedness} on the training time interval $I$.
    Then the solution~$\up{\tau}$ is uniformly (in the training time $\tau$) bounded in $L_2(D_T)$ on that interval $I$, i.e., it holds
    \begin{equation}
        \sup_{\tau\in I}\N{\up{\tau}}_{L_2(D_T)}\leq C^{u}
    \end{equation}
    for the constant $C^{u}=4\J{0} + 2\N{h}_{L_2(D_T)}^2$.
\end{lemma}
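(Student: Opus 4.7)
The approach is to exploit the monotonicity of the loss~$\J{\tau}$ established in Proposition~\ref{lem:parabolictimeevolutionJt*}, from which the bound on $\up{\tau}$ follows by triangle inequality. No additional PDE analysis is required beyond what has already been done.

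First, I would observe that by definition of the error functional in \eqref{eq:parabolicJtau},
\begin{equation}
    \J{\tau} = \tfrac{1}{2}\N{\up{\tau}-h}_{L_2(D_T)}^2,
\end{equation}
so the monotonicity $\J{\tau}\leq \J{0}$ (granted by Proposition~\ref{lem:parabolictimeevolutionJt*}, since $\fd{\tau}\J{\tau}\leq 0$ for all $\tau\in I$) yields the uniform a-priori control
\begin{equation}
    \sup_{\tau\in I}\N{\up{\tau}-h}_{L_2(D_T)}^2 \;\leq\; 2\J{0}.
\end{equation}

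Next, writing $\up{\tau} = (\up{\tau}-h) + h$ and invoking the triangle inequality together with the elementary bound $(a+b)^2\leq 2a^2 + 2b^2$, one gets
\begin{equation}
    \N{\up{\tau}}_{L_2(D_T)}^2 \;\leq\; 2\N{\up{\tau}-h}_{L_2(D_T)}^2 + 2\N{h}_{L_2(D_T)}^2 \;\leq\; 4\J{0} + 2\N{h}_{L_2(D_T)}^2,
\end{equation}
which is precisely the claimed constant $C^{u}$. Taking the supremum over $\tau\in I$ concludes the proof.

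There is no genuine obstacle here: the entire content of the lemma is already encoded in Proposition~\ref{lem:parabolictimeevolutionJt*}. The non-trivial step (the decay formula $\fd{\tau}\J{\tau} = -\alpha_\tau(\uhatp{\tau},T_{B_0}\uhatp{\tau})_{L_2(D_T)}\leq 0$, which uses the positive definiteness of $T_{B_0}$ from Lemma~\ref{lem:parabolicposdefTB}) has been carried out there, and the present statement is just its natural reformulation as a uniform a-priori bound on $\up{\tau}$ itself, which will subsequently serve as a building block when deriving the analogous uniform bound on the adjoint~$\uhatp{\tau}$ in Section~\ref{sec:BoundednessAdjoint}.
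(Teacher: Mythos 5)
Your proof is correct and is essentially identical to the paper's own argument: both rely solely on $\J{\tau}\leq\J{0}$ from \Cref{lem:parabolictimeevolutionJt*} and the elementary inequality $(a+b)^2\leq 2a^2+2b^2$ applied to $\up{\tau}=(\up{\tau}-h)+h$, yielding the same constant $C^u=4\J{0}+2\N{h}_{L_2(D_T)}^2$.
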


\begin{proof}
    For the solution~$\up{\tau}$ to \eqref{eq:parabolicPDE*} we can compute with Young's inequality
    \begin{equation}
    \begin{split}
        \N{\up{\tau}}_{L_2(D_T)}^2
        &= \int_0^T\!\!\!\int_D \left(\upp{\tau}{t,x}\right)^2dxdt
        = \int_0^T\!\!\!\int_D \left(\upp{\tau}{t,x}-h(t,x)+h(t,x)\right)^2dxdt\\
        &\leq \int_0^T\!\!\!\int_D 2(\upp{\tau}{t,x}-h(t,x))^2 + 2(h(t,x))^2\,dxdt
        = 4\J{\tau} + 2\N{h}_{L_2(D_T)}^2 \\
        &\leq 4\J{0} + 2\N{h}_{L_2(D_T)}^2,
    \end{split}
    \end{equation}
    where the last step is a consequence of $\J{\tau}$ being monotonically non-increasing on the training time interval $I$ according to \Cref{lem:parabolictimeevolutionJt*}.
\end{proof}

\subsection{Boundedness of the Adjoint \texorpdfstring{$\uhatp{\tau}$}{} Uniformly in the Training Time}
\label{sec:BoundednessAdjoint}

Uniform (in the training time $\tau$) bounds on the $L_2([0,T],H^1(D))$- and $L_\infty([0,T],L_2(D))$-norms of the adjoint $\uhatp{\tau}$ in \eqref{eq:parabolicadjoint*}
are obtained via an energy estimate for the linear parabolic PDE~\eqref{eq:parabolicadjoint*} leveraging that the loss $\J{\tau}$ is monotonically non-increasing in the training time~$\tau$.
\begin{lemma}
    \label{lem:parabolic_uhatL2}
    Let $((\up{\tau},\uhatp{\tau}))_{\tau\in I}\in \CC\left(I,\CS\times\CS\right)$ denote the unique weak solution to the PDE system \eqref{eq:parabolicPDE*}--\eqref{eq:parabolicadjoint*} coupled with the integro-differential equation~\eqref{eq:parabolicgtau} in the sense of \Cref{lem:parabolic_wellposedness_infinitewidth} and \Cref{rem:parabolic_wellposedness} on the training time interval $I$.
    Then the adjoint~$\uhatp{\tau}$ in \eqref{eq:parabolicadjoint*} is uniformly (in the training time $\tau$) bounded in $L_2([0,T],H^1(D))$ and $L_\infty([0,T],L_2(D))$ on that interval $I$,
    i.e., it holds
    \begin{equation}
        \label{eq:lem:parabolic_uhatL2}
        \sup_{\tau\in I} \left(\N{\uhatp{\tau}}_{L_2([0,T],H^1(D))} + \N{\uhatp{\tau}}_{L_\infty([0,T],L_2(D))} \right)
        \leq C^{\widehat{u}}
    \end{equation}
    for a constant $C^{\widehat{u}}=C^{\widehat{u}}(T,\CL,\J{0})$.
\end{lemma}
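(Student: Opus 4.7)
The plan is to run a standard backward-in-time energy estimate for the linear parabolic PDE~\eqref{eq:parabolicadjoint*}, driven by a source $\up{\tau}-h$ whose $L_2(D_T)$-norm is uniformly controlled by $\sqrt{2\J{0}}$ thanks to the monotonicity established in \Cref{lem:parabolictimeevolutionJt*}.

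Concretely, I would first test the weak formulation of \eqref{eq:parabolicadjoint*} (in the sense of \Cref{def:weak_sol_adjoint}) against $v=\uhatpp{\tau}{t,\dummy}\in H_0^1(D)$, which is a valid test function for a.e.\@ $t\in[0,T]$. Using that $\fpartial{t}\uhatpp{\tau}{t,\dummy}\in L_2(D)$ is assumed by the well-posedness statement, the dual pairing collapses to the $L_2(D)$ inner product, yielding the pointwise-in-$t$ identity
\begin{equation*}
    -\tfrac{1}{2}\tfrac{d}{dt}\NLtwo[D]{\uhatpp{\tau}{t,\dummy}}^2
    + \CB^\dagger[\uhatpp{\tau}{t,\dummy},\uhatpp{\tau}{t,\dummy};t]
    - \big(q_u(\upp{\tau}{t,\dummy})\uhatpp{\tau}{t,\dummy},\uhatpp{\tau}{t,\dummy}\big)_{L_2(D)}
    = \big(\upp{\tau}{t,\dummy}-h(t,\dummy),\uhatpp{\tau}{t,\dummy}\big)_{L_2(D)}.
\end{equation*}
The uniform parabolicity from Assumption~\ref{asm:PDE_L_parabolicity} gives the Gårding-type bound $\CB^\dagger[\uhatp{\tau},\uhatp{\tau};t]\geq \nu\NLtwo[D]{\nabla\uhatpp{\tau}{t,\dummy}}^2 - C_1\NLtwo[D]{\uhatpp{\tau}{t,\dummy}}^2$, where $C_1$ depends only on the $L_\infty$-norms of the coefficients (Assumption~\ref{asm:PDE_coefficients}) after absorbing the first-order convective term via Young's inequality into the principal part. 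The nonlinear contribution is absorbed analogously using Assumption~\ref{asm:PDE_nonlinearity_q_ubdd}, and the right-hand side is handled by Cauchy--Schwarz and Young.

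The second step is to integrate in $t$ from $t$ to $T$, using the terminal condition $\uhatpp{\tau}{T,\dummy}=0$ from \Cref{def:weak_sol_adjoint}\ref{item:def:weak_sol_adjoint_2}, to obtain
\begin{equation*}
    \tfrac{1}{2}\NLtwo[D]{\uhatpp{\tau}{t,\dummy}}^2
    + \nu\!\int_t^T\!\!\NLtwo[D]{\nabla\uhatpp{\tau}{s,\dummy}}^2\,ds
    \;\leq\; C_2\!\int_t^T\!\!\NLtwo[D]{\uhatpp{\tau}{s,\dummy}}^2\,ds
    + \tfrac{1}{2}\!\int_t^T\!\!\NLtwo[D]{\upp{\tau}{s,\dummy}-h(s,\dummy)}^2\,ds,
\end{equation*}
where the last integral equals $2\J{\tau}$ by definition~\eqref{eq:parabolicJtau} and is therefore bounded by $2\J{0}$ uniformly in $\tau\in I$ thanks to \Cref{lem:parabolictimeevolutionJt*}. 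Discarding the $H^1$-term, a backward Grönwall inequality applied on $[0,T]$ produces $\sup_{t\in[0,T]}\NLtwo[D]{\uhatpp{\tau}{t,\dummy}}^2\leq C_3\J{0}$ with $C_3=C_3(T,\CL)$. Reinserting this bound on the right-hand side of the displayed inequality and dropping the $L_\infty$-norm term on the left gives the corresponding $L_2([0,T],H^1(D))$-bound. Both bounds are uniform in $\tau\in I$, and summing them yields~\eqref{eq:lem:parabolic_uhatL2} with $C^{\widehat{u}}=C^{\widehat{u}}(T,\CL,\J{0})$. No step presents a genuine obstacle here; the only technical point is the routine justification that the chain-rule identity for $\tfrac{d}{dt}\NLtwo[D]{\uhatpp{\tau}{t,\dummy}}^2$ is legitimate under the regularity $\uhatp{\tau}\in \CS$ with $\fpartial{t}\uhatp{\tau}\in L_2(D_T)$, which follows from the standard embedding \cite[Ch.~5.9, Thm~3]{evans2010partial}.
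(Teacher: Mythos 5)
Your proof is correct and follows essentially the same route as the paper: test the adjoint equation against $\uhatp{\tau}$ itself, control the bilinear and $q_u$ terms via uniform parabolicity and the boundedness assumptions, bound the source by $2\J{\tau}\leq 2\J{0}$ using \Cref{lem:parabolictimeevolutionJt*}, and conclude by Grönwall. The only cosmetic difference is that the paper first reverses time to work with a forward problem with zero initial data and forward Grönwall, whereas you integrate from $t$ to $T$ on the backward equation directly (and your Gårding constant should be $\nu/2$ rather than $\nu$ after absorbing the convective term, a harmless bookkeeping point).
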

\begin{proof}
    Let us first reverse the adjoint parabolic backward PDE \eqref{eq:parabolicadjoint*} in time to obtain with a time transformation
    for $\uhatpReverted{\tau}=\uhatppReverted{\tau}{t,x}=\uhatpp{\tau}{T-t,x}$ the parabolic forward PDE
    \begin{alignat}{2}  \label{eq:parabolicadjoint*Reversed2}
    \begin{aligned}        \fpartial{t}\uhatpReverted{\tau} + \underbar{\CL}^*\uhatpReverted{\tau} - \underbar{q}_u(\upp{\tau}{T-\dummy,\dummy})\uhatpReverted{\tau}
        &= (\upp{\tau}{T-\dummy,\dummy}-h(T-\dummy,\dummy))
        \quad&&\text{in }
        D_T, \\
        \uhatpReverted{\tau}
        &= 0
        \quad&&\text{on }
        [0,T]\times\partial D, \\
        \uhatpReverted{\tau}
        &= 0
        \quad&&\text{on }
        \{0\}\times D,
    \end{aligned}
    \end{alignat}
    where $\underbar{\CL}^*=\underbar{\CL}^*(t,x)=\CL^\dagger(T-t,x)$ and $\underbar{q}=\underbar{q}(t,x,u)=\underbar{q}(T-t,x,u)$.

    Let us now start by estimating $\N{\uhatppReverted{\tau}{t,\dummy}}^2_{L_2(D)} = \int_D (\uhatppReverted{\tau}{t,x})^2 \,dx$.
    With chain rule and by using that $\uhatpReverted{\tau}$ is a weak solution to the time-reversed adjoint PDE~\eqref{eq:parabolicadjoint*Reversed2} we have
    \begin{equation}        \label{eq:proof:lem:parabolic_uhatL2:1}
    \begin{split}
        \fpartial{t}\N{\uhatppReverted{\tau}{t,\dummy}}^2_{L_2(D)}
        &= 
        2\left(\uhatppReverted{\tau}{t,\dummy} ,\fpartial{t}\uhatppReverted{\tau}{t,\dummy}\right)_{L_2(D)}
        = 
        2\left\langle\fpartial{t}\uhatppReverted{\tau}{t,\dummy},\uhatppReverted{\tau}{t,\dummy}\right\rangle_{H^{-1}(D),H_0^1(D)}\\
        &= - 2\underbar{\CB}^*[\uhatppReverted{\tau}{t,\dummy},\uhatppReverted{\tau}{t,\dummy};t] + 2(\underbar{q}_u(t,\dummy,\upp{\tau}{T-t,\dummy})\uhatppReverted{\tau}{t,\dummy},\uhatppReverted{\tau}{t,\dummy})_{L_2(D)}\\
        &\quad\,+ 2\left(\upp{\tau}{T-t,\dummy}-h(T-t,\dummy),\uhatppReverted{\tau}{t,\dummy}\right)_{L_2(D)},
    \end{split}
    \end{equation}
    where $\underbar{\CB}^*[\widehat{u},u;t]=\CB^\dagger[\widehat{u},u;T-t]$.
    For the second step recall that since $\uhatppReverted{\tau}{t,\dummy}\in H_0^1(D)$ for a.e.\@ $t\in[0,T]$ and since $\fpartial{t}\uhatppReverted{\tau}{t,\dummy}\in L_2(D)$ for a.e.\@ $t\in[0,T]$, the dual pairing between $H^{-1}(D)$ and $H_0^1(D)$ coincides with the $L_2(D)$ scalar product \cite[Chapter~5.9, Theorem 1(iii)]{evans2010partial}.
    For the third step, i.e., the weak solution property, note that $\uhatppReverted{\tau}{t,\dummy}$ is a valid test function since $\uhatppReverted{\tau}{t,\dummy}\in H_0^1(D)$ for a.e.\@ $t\in[0,T]$.
    To upper bound the right-hand side of \eqref{eq:proof:lem:parabolic_uhatL2:1}, we consider each of the three terms separately.
    For the first term, by using the definition of the bilinear form $\CB$ as well as that by Assumption~\ref{asm:PDE_L_parabolicity} the PDE operator is uniformly parabolic and that by Assumption~\ref{asm:PDE_coefficients} the coefficients are in $L_\infty$, we can estimate with Cauchy-Schwarz and Young's inequality
    \begin{allowdisplaybreaks}
    \label{eq:proof:lem:parabolic_uhatL2:21}
    \begin{align}
        &-\underbar{\CB}^*[\uhatppReverted{\tau}{t,\dummy},\uhatppReverted{\tau}{t,\dummy};t]
        =-\CB^\dagger[\uhatppReverted{\tau}{t,\dummy},\uhatppReverted{\tau}{t,\dummy};T-t]
        =-\CB[\uhatppReverted{\tau}{t,\dummy},\uhatppReverted{\tau}{t,\dummy};T-t]\notag\\
        &\qquad\qquad\,=-\int_U \sum_{i,j=1}^d  a^{ij}(T-t,x) \fpartial{x_i} \uhatppReverted{\tau}{t,x} \fpartial{x_j} \uhatppReverted{\tau}{t,x}\notag\\
        &\qquad\qquad\,\qquad\,+ \sum_{i=1}^d b^i(T-t,x) \fpartial{x_i} \uhatppReverted{\tau}{t,x} \uhatppReverted{\tau}{t,x} + c(T-t,x)\uhatppReverted{\tau}{t,x}\uhatppReverted{\tau}{t,x} \,dx\notag\\
        &\qquad\qquad\,\leq\int_U -\nu \N{\nabla_x \uhatppReverted{\tau}{t,x}}^2+\frac{\nu}{2} \N{\nabla_x \uhatppReverted{\tau}{t,x}}^2\notag\\
        &\qquad\qquad\,\qquad\,+ \frac{1}{2\nu} \sum_{i=1}^d\N{b^i}_{L_\infty(D_T)}(\uhatppReverted{\tau}{t,x})^2 + \N{c}_{L_\infty(D_T)}(\uhatppReverted{\tau}{t,x})^2 \,dx\notag\\
        &\qquad\qquad\,\leq -\frac{\nu}{2} \abs{\uhatppReverted{\tau}{t,\dummy}}^2_{H^1(D)} + \left(\frac{1}{2\nu} \sum_{i=1}^d\N{b^i}_{L_\infty(D_T)}+\N{c}_{L_\infty(D_T)}\right)\N{\uhatppReverted{\tau}{t,\dummy}}^2_{L_2(D)},
    \end{align}
    \end{allowdisplaybreaks}
    where for the middle term in the next-to-last step we note that with Young's inequality it holds
    \begin{equation}
    \begin{split}
        \sum_{i=1}^d b^i(T-t,x) \fpartial{x_i} \uhatppReverted{\tau}{t,x} \uhatppReverted{\tau}{t,x}
        &\leq
        \sum_{i=1}^d \left(\frac{\nu}{2}(\fpartial{x_i} \uhatppReverted{\tau}{t,x})^2 + \frac{1}{2\nu} (b^i(T-t,x) \uhatppReverted{\tau}{t,x})^2\right)\\
        &\leq \frac{\nu}{2} \N{\nabla_x \uhatppReverted{\tau}{t,x}}^2 + \frac{1}{2\nu} \sum_{i=1}^d\N{b^i}_{L_\infty(D_T)}(\uhatppReverted{\tau}{t,x})^2.
    \end{split}
    \end{equation}
    For the second term, by using that by Assumption~\ref{asm:PDE_nonlinearity_q_ubdd} $q_u$ is bounded, we can estimate
    \begin{equation}        \label{eq:proof:lem:parabolic_uhatL2:22}
    \begin{split}
        (\underbar{q}_u(t,\dummy,\upp{\tau}{T-t,\dummy})\uhatppReverted{\tau}{t,\dummy},\uhatppReverted{\tau}{t,\dummy})_{L_2(D)}
        &= (q_u(T-t,\dummy,\upp{\tau}{T-t,\dummy})\uhatppReverted{\tau}{t,\dummy},\uhatppReverted{\tau}{t,\dummy})_{L_2(D)}\\
        &\leq c_q\N{\uhatppReverted{\tau}{t,\dummy}}^2_{L_2(D)}.
    \end{split}
    \end{equation}
    For the third and last term, by Cauchy-Schwarz and Young's inequality we can derive the upper bound
    \begin{equation}       \label{eq:proof:lem:parabolic_uhatL2:23}
    \begin{split}
        &\left(\upp{\tau}{T-t,\dummy}-h(T-t,\dummy),\uhatppReverted{\tau}{t,\dummy}\right)_{L_2(D)}
        \leq \N{\upp{\tau}{T-t,\dummy}-h(T-t,\dummy)}_{L_2(D)}\N{\uhatppReverted{\tau}{t,\dummy}}_{L_2(D)}\\
        &\qquad\qquad\,\leq \frac{1}{2}\left(\N{\upp{\tau}{T-t,\dummy}-h(T-t,\dummy)}^2_{L_2(D)}+\N{\uhatppReverted{\tau}{t,\dummy}}^2_{L_2(D)}\right).
    \end{split}
    \end{equation}
   
    Combining the bounds established in \eqref{eq:proof:lem:parabolic_uhatL2:21}--\eqref{eq:proof:lem:parabolic_uhatL2:23} and inserting them into \eqref{eq:proof:lem:parabolic_uhatL2:1}, we can continue bounding \eqref{eq:proof:lem:parabolic_uhatL2:1} as 
    \begin{equation}
        \label{eq:proof:lem:parabolic_uhatL2:3}
    \begin{split}
        &\fpartial{t}\N{\uhatppReverted{\tau}{t,\dummy}}^2_{L_2(D)} + \frac{\nu}{2} \abs{\uhatppReverted{\tau}{t,\dummy}}^2_{H^1(D)} \\
        &\qquad\,\leq \left(\frac{1}{\nu} \sum_{i=1}^d\N{b^i}_{L_\infty(D_T)}+2\N{c}_{L_\infty(D_T)}\right) \N{\uhatppReverted{\tau}{t,\dummy}}^2_{L_2(D)} + 2c_q\N{\uhatppReverted{\tau}{t,\dummy}}^2_{L_2(D)}\\
        &\qquad\,\quad\,+ \left(\N{\upp{\tau}{T-t,\dummy}-h(T-t,\dummy)}^2_{L_2(D)}+\N{\uhatppReverted{\tau}{t,\dummy}}^2_{L_2(D)}\right)\\
        &\qquad\,\leq C\N{\uhatppReverted{\tau}{t,\dummy}}^2_{L_2(D)} + \N{\upp{\tau}{T-t,\dummy}-h(T-t,\dummy)}^2_{L_2(D)}
    \end{split}
    \end{equation}
    for a constant $C=C(\CL,q)$.
    Defining $\widehat{N}_\tau(t) = \Nnormal{\uhatppReverted{\tau}{t,\dummy}}^2_{L_2(D)} + \frac{\nu}{2} \int_0^t\absnormal{\uhatppReverted{\tau}{s,\dummy}}^2_{H^1(D)}\,ds$,    \eqref{eq:proof:lem:parabolic_uhatL2:3} translates to
    \begin{equation}           \label{eq:proof:lem:parabolic_wellposedness_infinitewidth:NORM2_6}
        \fpartial{t} \widehat{N}_\tau(t)
        \leq C\widehat{N}_\tau(t) + \N{\upp{\tau}{T-t,\dummy}-h(T-t,\dummy)}_{L_2(D)}^2.
    \end{equation}
    We can now employ Grönwall's inequality 
    to obtain
    \begin{equation}
        \label{eq:proof:lem:parabolic_wellposedness_infinitewidth:NORM2_8}
    \begin{split}
        \widehat{N}_\tau(t)
        &\leq \left(\widehat{N}_\tau(0)+\N{\up{\tau}-h}_{L_2(D_T)}^2\right)e^{Ct}
        \leq \left(\widehat{N}_\tau(0)+\N{\up{\tau}-h}_{L_2(D_T)}^2\right)e^{CT}.
    \end{split}
    \end{equation}
    Recalling that $\widehat{N}_\tau(0)=0$ by the initial condition in \eqref{eq:parabolicadjoint*Reversed2} shows
    \begin{equation}
        \label{eq:proof:adjointL2bounduniform}
    \begin{split}
        \N{\uhatp{\tau}}_{L_2([0,T],H^1(D))} + \N{\uhatp{\tau}}_{L_\infty([0,T],L_2(D))} 
        \leq 2\J{\tau}e^{Ct} \leq 2\J{0}e^{CT}
    \end{split}
    \end{equation}
    where the last step is a consequence of $\J{\tau}$ being monotonically non-increasing on the training time interval $I$ according to \Cref{lem:parabolictimeevolutionJt*}, which concludes the proof.
\end{proof}
\section{The Functional \texorpdfstring{$\Q{\tau} = (\uhatp{\tau},T_{B_0}\uhatp{\tau})_{L_2(D_T)}$}{Q}}
\label{sec:FunctionalCQtau}

This section is dedicated to proving in \Cref{lem:CQ_Regulartiy} in \Cref{sec:RegularityCQtau}
a regularity bound for the functional
\begin{equation}
    \label{eq:CQ}
\begin{split}
    \Q{\tau}
    = (\uhatp{\tau},T_{B_0}\uhatp{\tau})_{L_2(D_T)}
\end{split}
\end{equation}
of the form
\begin{equation}
    \absbig{\Q{\tau_2}-\Q{\tau_1}}
    \leq L_\CQ\int_{\tau_1}^{\tau_2} \alpha_\tau \,d\tau,
\end{equation}
which holds for all $0\leq\tau_1\leq\tau_2$,
for a constant $L_\CQ>0$ as specified after \eqref{eq:lem:CQ_Regulartiy}.
Here, the operator $T_{B_0}$ is defined in \eqref{eq:parabolicT_B} and the kernel $B_0=B(\mu_0)$ is as in \eqref{eq:parabolicB}.

In order to derive this bound, let us introduce for the functional~$\Q{\tau}$ defined in \eqref{eq:CQ} and the PDE system~\eqref{eq:parabolicPDE*}--\eqref{eq:parabolicadjoint*}
the second-level adjoint system with variables $(\vhatp{},\whatp{})$ given by
\begin{alignat}{2}
    \label{eq:paraboliccoupledadjoint1}
\begin{aligned}
    -\fpartial{t}\vhatp{\tau} + \CL^\dagger\vhatp{\tau} - q_u(\up{\tau})\vhatp{\tau}
    &= \whatp{\tau} + q_{uu}(\up{\tau})\uhatp{\tau}\whatp{\tau}
    \qquad&&\text{in }
    D_T, \\
    \vhatp{\tau}
    &= 0
    \qquad&&\text{on }
    [0,T]\times\partial D, \\
    \vhatp{\tau}
    &= 0
    \qquad&&\text{on }
    \{T\}\times D,
\end{aligned}
\end{alignat}
and
\begin{alignat}{2}
    \label{eq:paraboliccoupledadjoint2}
\begin{aligned}
    \fpartial{t}\whatp{\tau} + \CL\whatp{\tau} - q_u(\up{\tau})\whatp{\tau}
    &= 2T_{B_0}\uhatp{\tau}
    \qquad&&\text{in }
    D_T, \\
    \whatp{\tau}
    &= 0
    \qquad&&\text{on }
    [0,T]\times\partial D, \\
    \whatp{\tau}
    &= 0
    \qquad&&\text{on }
    \{0\}\times D.
\end{aligned}
\end{alignat}

Before discussing the main statement of this section, \Cref{lem:CQ_Regulartiy},
we establish in \Cref{sec:parabolic_whatLinfty,sec:parabolic_vhatL2} uniform (in the training time~$\tau$) estimates for several norms of the second-level adjoints $\whatp{\tau}$ in \eqref{eq:paraboliccoupledadjoint1} and $\vhatp{\tau}$ in \eqref{eq:paraboliccoupledadjoint2}, respectively.

\subsection{Boundedness of the Second-Level Adjoint \texorpdfstring{$\whatp{\tau}$}{of the Adjoint} Uniformly in the Training Time}
\label{sec:parabolic_whatLinfty}

We show well-posedness of the second-level adjoint $\whatp{\tau}$ in \eqref{eq:paraboliccoupledadjoint2} and derive uniform (in the training time $\tau$) bounds on its $L_\infty(D_T)$-, $L_2(D_T)$- and $L_\infty([0,T],L_2(D))$-norms.
The uniformity of the bound in time and space is a consequence of the right-hand side of \eqref{eq:paraboliccoupledadjoint2} being in $L_\infty$ as of \Cref{lem:TBboundednessLinfty},
while the uniformity in the training time $\tau$ follows from the uniformity of the bound on the adjoint~$\uhatp{\tau}$ as of \Cref{lem:parabolic_uhatL2}.
\begin{lemma}
    \label{lem:parabolic_whatLinfty}
    Let $((\up{\tau},\uhatp{\tau}))_{\tau\in I}\in \CC\left(I,\CS\times\CS\right)$ denote the unique weak solution to the PDE system \eqref{eq:parabolicPDE*}--\eqref{eq:parabolicadjoint*} coupled with the integro-differential equation~\eqref{eq:parabolicgtau} in the sense of \Cref{lem:parabolic_wellposedness_infinitewidth} and \Cref{rem:parabolic_wellposedness} on the training time interval $I$.
    Then the linear parabolic PDE~\eqref{eq:paraboliccoupledadjoint2} admits for every $\tau\in I$ a unique weak solution~$\whatp{\tau}$
    in a sense analogous to \Cref{def:weak_sol_adjoint} with right-hand side $2T_{B_0}\uhatp{\tau}$,
    which satisfies $\partial_t\whatpp{\tau}{t,\dummy} \in L_2(D)$
    for a.e.\@ $t\in[0,T]$.

    In addition, the adjoint~$\whatp{\tau}$ in \eqref{eq:paraboliccoupledadjoint2} is uniformly (in the training time $\tau$) bounded in $L_\infty(D_T)$ on that interval $I$, i.e., it holds
    \begin{equation}
        \sup_{\tau\in I}\N{\whatp{\tau}}_{L_\infty(D_T)}
        \leq C^{\widehat{w}}_\infty
    \end{equation}
    for a constant $C^{\widehat{w}}_\infty=C^{\widehat{w}}_\infty(T,\CL,q,C^B_2,C^{\widehat{u}})$.
    Furthermore, the adjoint~$\whatp{\tau}$ is uniformly (in the training time $\tau$) bounded in $L_2(D_T)$ and $L_\infty([0,T],L_2(D))$ on that interval $I$,
    i.e., it holds
    \begin{equation}
        \sup_{\tau\in I}\left(\N{\whatp{\tau}}_{L_2(D_T)} + \N{\whatp{\tau}}_{L_\infty([0,T],L_2(D))}\right)\leq C^{\widehat{w}}
    \end{equation}
    for a constant $C^{\widehat{w}}=C^{\widehat{w}}(D,C^{\widehat{w}}_\infty)$.
\end{lemma}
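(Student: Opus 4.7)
The proof unfolds in three steps: well-posedness of~\eqref{eq:paraboliccoupledadjoint2}, a uniform $L_\infty(D_T)$ bound for $\whatp{\tau}$, and then the $L_2(D_T)$ and $L_\infty([0,T],L_2(D))$ bounds as immediate consequences of step two combined with the finite volume of $D$. The observation underpinning everything is that the source $2T_{B_0}\uhatp{\tau}$ on the right-hand side of~\eqref{eq:paraboliccoupledadjoint2} is not merely in $L_2$ but in $L_\infty$, uniformly in $\tau\in I$: combining \Cref{lem:TBboundednessLinfty} with the uniform bound on $\uhatp{\tau}$ from \Cref{lem:parabolic_uhatL2} yields $\N{2T_{B_0}\uhatp{\tau}}_{L_\infty(D_T)}\leq 2C^{T_B}_\infty\N{\uhatp{\tau}}_{L_2(D_T)}\leq 2C^{T_B}_\infty\sqrt{T}\,C^{\widehat{u}}=:M$, where one uses $\N{\dummy}_{L_2(D_T)}\leq\sqrt{T}\N{\dummy}_{L_\infty([0,T],L_2(D))}$. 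In particular the source also lies in $L_2(D_T)$, which, together with uniform parabolicity (Assumption~\ref{asm:PDE_L_parabolicity}), boundedness of the coefficients of $\CL$ (Assumption~\ref{asm:PDE_coefficients}), and boundedness of $q_u(\up{\tau})$ (Assumption~\ref{asm:PDE_nonlinearity_q_ubdd}), places~\eqref{eq:paraboliccoupledadjoint2} within the scope of classical linear parabolic theory (cf.\ \cite[Chapter~7.1, Theorems~3 and~4]{evans2010partial} or \cite[Chapter~IV]{ladyzhenskaia1968linear}), yielding the unique weak solution $\whatp{\tau}$ with the stated regularity including $\fpartial{t}\whatpp{\tau}{t,\dummy}\in L_2(D)$ for a.e.\ $t$.

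The \emph{uniform $L_\infty(D_T)$ estimate} is the main technical step. Since the zero-order coefficient $c(t,x)-q_u(\up{\tau})$ may have either sign, I first substitute $v_\tau:=e^{-\lambda t}\whatp{\tau}$ with $\lambda:=c_q+\N{c}_{L_\infty(D_T)}$; a short computation shows $v_\tau$ satisfies the same divergence-form parabolic equation but with zero-order coefficient $c(t,x)-q_u(\up{\tau})+\lambda\geq 0$, zero initial and lateral boundary data, and source $e^{-\lambda t}\cdot 2T_{B_0}\uhatp{\tau}$ still uniformly bounded by $M$ in $L_\infty(D_T)$. The weak maximum principle for divergence-form parabolic PDEs with bounded coefficients and non-negative zero-order term---realized concretely via Stampacchia's truncation method using test functions of the form $(v_\tau-Ks)^+$, or via the general $L_\infty$ estimates in \cite[Chapter~III, Theorem~7.1]{ladyzhenskaia1968linear}---then produces $\N{v_\tau}_{L_\infty(D_T)}\leq C(T,\CL)\cdot M$. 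Undoing the substitution gives the uniform bound $\N{\whatp{\tau}}_{L_\infty(D_T)}\leq e^{\lambda T}C(T,\CL)\cdot M=:C^{\widehat{w}}_\infty$, depending only on $T$, $\CL$, $q$, $C^B_2$ (entering through $C^{T_B}_\infty$), and $C^{\widehat{u}}$, as required.

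Finally, the \emph{$L_2(D_T)$ and $L_\infty([0,T],L_2(D))$ bounds} follow at once from $\vol{D}<\infty$ (Assumption~\ref{asm:Dbdd}): $\N{\whatp{\tau}}_{L_2(D_T)}\leq\sqrt{T\vol{D}}\,\N{\whatp{\tau}}_{L_\infty(D_T)}$ and $\N{\whatp{\tau}}_{L_\infty([0,T],L_2(D))}\leq\sqrt{\vol{D}}\,\N{\whatp{\tau}}_{L_\infty(D_T)}$, uniformly in $\tau$, so one may take $C^{\widehat{w}}:=\sqrt{\vol{D}}\,(1+\sqrt{T})\,C^{\widehat{w}}_\infty$. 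The principal obstacle in the plan is the rigorous justification of the $L_\infty$ estimate for divergence-form parabolic PDEs with merely bounded (not Hölder) coefficients and $L_\infty$ source: the pointwise maximum principle does not directly apply, and one must instead invoke Stampacchia truncation or Moser iteration, which are classical but technically involved.
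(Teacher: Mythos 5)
Your proposal is correct, but the key step is argued by a genuinely different route than the paper. Both proofs hinge on the same crucial observation, which you identify: by \Cref{lem:TBboundednessLinfty} and \Cref{lem:parabolic_uhatL2} the source $2T_{B_0}\uhatp{\tau}$ is bounded in $L_\infty(D_T)$ uniformly in $\tau$. The paper exploits this to place the source in $L_p(D_T)$ for every $p\geq 2$, invokes the maximal $W^{1,2}_p$ regularity theory of \cite[Chapter~IV, Theorem~9.1]{ladyzhenskaia1968linear} with $p>d+1$, and then obtains the uniform $L_\infty(D_T)$ bound from the Morrey-type embedding $W^{1,2}_{p}(D_T)\hookrightarrow L_\infty(\overbar{D}_T)$; the case $p=2$ of the same theorem simultaneously delivers $\partial_t\whatpp{\tau}{t,\dummy}\in L_2(D)$. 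You instead shift the zero-order coefficient by the substitution $e^{-\lambda t}$ so that it becomes nonnegative and derive the $L_\infty$ bound from a weak maximum principle/Stampacchia truncation argument (or \cite[Chapter~III, Theorem~7.1]{ladyzhenskaia1968linear}); this is more elementary in that it avoids $p>d+1$ and the Sobolev embedding, and indeed a clean implementation is to test the equation for $v_\tau-Kt$ (with $K$ the $L_\infty$ bound of the shifted source) against its positive part and apply Grönwall, which only uses the energy structure, the boundedness of the coefficients, and the sign of the zero-order term. What the paper's route buys is that the single $W^{1,2}_p$ estimate gives both the $L_\infty$ bound and the time-derivative regularity (and is reused elsewhere in the manuscript), whereas in your plan the regularity claim $\partial_t\whatpp{\tau}{t,\dummy}\in L_2(D)$ does not follow from the Evans weak-solution theorems alone (they only give $\partial_t\whatp{\tau}\in L_2([0,T],H^{-1}(D))$) and must still be supplied by the $W^{1,2}_2$ theory of \cite[Chapter~IV, Theorem~9.1]{ladyzhenskaia1968linear}, which you cite but should make explicit. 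The final step, deducing the $L_2(D_T)$ and $L_\infty([0,T],L_2(D))$ bounds from the $L_\infty$ bound and $\vol{D}<\infty$, coincides with the paper's argument.
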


\begin{proof}
    \textit{Step 1a: Existence of a unique weak solution $\whatp{\tau}$.} 
    Existence and uniqueness of a weak solution to \eqref{eq:paraboliccoupledadjoint2} in a sense analogous to \Cref{def:weak_sol_adjoint} follow from classical results,
    see, e.g., \cite[Chapter~7.1, Theorem~3]{evans2010partial} and \cite[Chapter~7.1, Theorem~4]{evans2010partial}, as $2T_{B_0}\uhatp{\tau}\in L_2(D_T)$ according to \Cref{lem:parabolicTB,lem:parabolic_uhatL2}.

    The remainder of the statement follows from an application of Morrey's inequality after leveraging \cite[Chapter IV, Theorem 9.1]{ladyzhenskaia1968linear} for any $p\geq2$.
    
    \textit{Step 1b: Existence of a unique solution $\whatp{\tau}\in W^{1,2}_{p}(D_T)$ for any $p\geq 2$.}
    We first notice that, in the notation of \cite[Chapter IV, Theorem 9.1]{ladyzhenskaia1968linear}, the coefficients~$a_{ij}(t,x)=a^{ij}(t,x)$ of the linear PDE operator of the parabolic PDE~\eqref{eq:paraboliccoupledadjoint2} are bounded continuous functions in $D_T$ for all $i,j=1,\dots,d$,
    while the coefficients~$a_i(t,x)=b^i(t,x)-\sum_{j=1}^d\fpartial{x_j}a^{ji}(t,x)$ and $a(t,x)=c(t,x) - \sum_{i=1}^d \fpartial{x_i}b^i(t,x) - q_u(t,x,u(t,x))$ have finite norms $\N{a_i}_{L_r(D_T)}$ and $\N{a}_{L_s(D_T)}$ for any $r,s>0$.
    This is due to the uniform boundedness of the coefficients per Assumptions~\ref{asm:PDE_coefficients} and \ref{asm:PDE_nonlinearity_q_ubdd} combined with the boundedness of the domain per Assumption~\ref{asm:Dbdd}, see the subsequent computations with $T'=0$ and $\Delta T'=T$.
    Moreover, since it hold
    $\Nnormal{a_i}_{L_r(D_{T',T'+\Delta T'})} \leq \big(\Nnormal{b^i}_{L_\infty(D_T)}+\sum_{j=1}^d\Nnormal{\fpartial{x_j}a^{ji}}_{L_\infty(D_T)}\big)(\Delta T'\vol{D})^{1/r}$
    for all $i=1,\dots,d$ and
    $\Nnormal{a}_{L_s(D_{T',T'+\Delta T'})}\leq \big(\Nnormal{c}_{L_\infty(D_T)}+\sum_{i=1}^d\Nnormal{\fpartial{x_i}b^{i}}_{L_\infty(D_T)} + c_q\big)(\Delta T'\vol{D})^{1/s}$, 
    $\N{a_i}_{L_r(D_{T',T'+\Delta T'})}$ and $\N{a}_{L_s(D_{T',T'+\Delta T'})}$ tend to zero as $\Delta T'\rightarrow0$.
    Furthermore, $\partial D$ is sufficiently smooth as of Assumption~\ref{asm:D}.
    The right-hand side $f=2T_{B_0}\uhatp{\tau}\in L_p(D_T)$ for any $p\geq2$ due to being uniformly (in the training time $\tau$) bounded in $L_\infty$ as of \Cref{lem:TBboundednessLinfty}, \Cref{lem:parabolic_uhatL2} and the domain $D_T$ being bounded as of Assumption~\ref{asm:Dbdd},
    which ensures
    \begin{equation}
        \label{eq:proof:w_uniformbdd_TBubound}
    \begin{split}
        \N{2T_{B_0}\uhatp{\tau}}_{L_p(D_T)}
        &= \left(\int_0^T\!\!\!\int_D \absbig{[2T_{B}\uhatp{\tau}](t,x)}^p\,dxdt\right)^{1/p}\\
        &\leq 2C^{T_{B_0}}_\infty\N{\uhatp{\tau}}_{L_2(D_T)}(T\vol{D})^{1/p}
        \leq2C^{T_{B_0}}_\infty C^{\widehat{u}}(T\vol{D})^{1/p}.
    \end{split}
    \end{equation}
    Moreover, both the initial and the boundary conditions $\phi=0\in W^{2-2/p}_p(D)$ and $\Phi=0\in W^{1-1/(2p),2-1/p}_p(\partial D_T)$ trivially satisfy the compatibility condition $\phi|_{\partial D}=\Phi|_{t=0}$.
    Thus, \cite[Chapter IV, Theorem 9.1]{ladyzhenskaia1968linear} ensures  that $\whatp{\tau}\in W^{1,2}_{p}(D_T)$ obeys the bound
    \begin{equation}
        \label{eq:proof:w_uniformbdd_bound}
    \begin{split}
        \N{\whatp{\tau}}_{W^{1,2}_{p}(D_T)}
        &\leq C\N{2T_{B_0}\uhatp{\tau}}_{L_p(D_T)}
    \end{split}
    \end{equation}
    for a constant $C=C(T,\CL,q)$.
    With the uniform (in the training time $\tau$) bound \eqref{eq:proof:w_uniformbdd_TBubound} at our disposal,
    $\N{\whatp{\tau}}_{W^{1,2}_{p}(D_T)}$ can be controlled uniformly (in the training time $\tau$) as
    \begin{equation}
        \label{eq:proof:w_uniformbdd_bound_ctd}
    \begin{split}
        \N{\whatp{\tau}}_{W^{1,2}_{p}(D_T)}
        &\leq 2CC^{T_{B}}_\infty C^{\widehat{u}}(T\vol{D})^{1/p}.
    \end{split}
    \end{equation}
    This in particular proves that $\whatp{\tau}\in W^{1,2}_{2}(D_T)$ obeying \eqref{eq:proof:w_uniformbdd_bound_ctd} with $p=2$, concluding the first part of the statement since $\partial_t\whatpp{\tau}{t,\dummy} \in L_2(D)$ has to necessarily hold for a.e.\@ $t\in[0,T]$.

    \textit{Step 2a: Boundedness of the $L_\infty(D_T)$ norm of $\whatp{\tau}$.}
    With the conditions of \cite[Chapter IV, Theorem 9.1]{ladyzhenskaia1968linear} being fulfilled for any $p\geq2$ as we verified before, they are in particular fulfilled for $p>d+1$.
    Since we have for such $p$ the continuous embedding $W^{1,2}_{p}(D_{T})\hookrightarrow W^{1,1}_{p}(D_{T})\hookrightarrow L_{\infty}(\overbar{D}_T)$ by Morrey's inequality~\cite[Theorem 9.12]{brezis2011functional}, we have the first inequality in
    \begin{equation}
    \begin{split}
        \N{\whatp{\tau}}_{L_\infty(D_T)}
        &\leq c(d,p)\N{\whatp{\tau}}_{W^{1,2}_{p}(D_T)}
        \leq 2c(d,p)CC^{T_{B_0}}_\infty C^{\widehat{u}}(T\vol{D})^{1/p},
    \end{split}
    \end{equation}
    with the second one being due to \eqref{eq:proof:w_uniformbdd_bound_ctd}.
    As the right-hand side is bounded uniformly (in the training time $\tau$),
    and since $\whatp{\tau}\in W^{1,2}_{p}(D_{T})$ has a continuous version~\cite[Chapter~5.6, Theorem~5]{evans2010partial},
    this concludes the second part of the statement.

    \textit{Step 2b: Boundedness of the $L_2(D_T)$ and $L_\infty([0,T],L_2(D))$ norms of $\whatp{\tau}$.}
    The last part of the statement follows since
    $\N{\whatp{\tau}}_{L_\infty([0,T],L_2(D))}\leq \sqrt{\vol{D}}\N{\whatp{\tau}}_{L_\infty(D_T)}$ and $\N{\whatp{\tau}}_{L_2(D_T)}\leq \sqrt{T\vol{D}}\N{\whatp{\tau}}_{L_\infty(D_T)}$.
\end{proof}

\subsection{Boundedness of the Second-Level Adjoint \texorpdfstring{$\vhatp{\tau}$}{of the Solution} Uniformly in the Training Time}
\label{sec:parabolic_vhatL2}

We now show well-posedness of the second-level adjoint $\vhatp{\tau}$ in \eqref{eq:paraboliccoupledadjoint1} and derive uniform (in the training time $\tau$) bounds on its $L_2([0,T],H^1(D))$- and $L_\infty([0,T],L_2(D))$-norms.
\begin{lemma}
    \label{lem:parabolic_vhatL2}
    Let $((\up{\tau},\uhatp{\tau}))_{\tau\in I}\in \CC\left(I,\CS\times\CS\right)$ denote the unique weak solution to the PDE system \eqref{eq:parabolicPDE*}--\eqref{eq:parabolicadjoint*} coupled with the integro-differential equation~\eqref{eq:parabolicgtau} in the sense of \Cref{lem:parabolic_wellposedness_infinitewidth} and \Cref{rem:parabolic_wellposedness}
    on the training time interval $I$.
    Then the linear parabolic PDE~\eqref{eq:paraboliccoupledadjoint1} admits for every $\tau\in I$ a unique weak solution~$\vhatp{\tau}$
    in a sense analogous to \Cref{def:weak_sol_adjoint} with right-hand side $\whatp{\tau} + q_{uu}(\up{\tau})\uhatp{\tau}\whatp{\tau}$,
    which satisfies $\partial_t\vhatpp{\tau}{t,\dummy} \in L_2(D)$
    for a.e.\@ $t\in[0,T]$.

    In addition, the adjoint~$\vhatp{\tau}$ in \eqref{eq:paraboliccoupledadjoint1} is uniformly (in the training time $\tau$) bounded in $L_2([0,T],H_1(D))$ and $L_\infty([0,T],L_2(D))$ on that interval $I$,
    i.e., it holds
    \begin{equation}
        \sup_{\tau\in I}\left(\N{\vhatp{\tau}}_{L_2([0,T],H^1(D))} + \N{\vhatp{\tau}}_{L_\infty([0,T],L_2(D))} \right)\leq C^{\widehat{v}}
    \end{equation}
    for a constant $C^{\widehat{v}}=C^{\widehat{v}}(T,\CL,q,C^{\widehat{u}},C^{\widehat{w}},C^{\widehat{w}}_\infty)$.
\end{lemma}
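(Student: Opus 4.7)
The strategy closely mirrors the time-reversal energy estimate used in the proof of \Cref{lem:parabolic_uhatL2}; the essential novelty lies in controlling the structure of the source term $F_\tau := \whatp{\tau} + q_{uu}(\up{\tau})\uhatp{\tau}\whatp{\tau}$ uniformly in the training time $\tau$. First, I would verify that $F_\tau \in L_2(D_T)$ with a bound independent of $\tau$. Using Assumption~\ref{asm:PDE_nonlinearity_q_uubdd} that $\abs{q_{uu}}\leq c'_q$, together with the $L_\infty(D_T)$ bound on $\whatp{\tau}$ from \Cref{lem:parabolic_whatLinfty} and the $L_2(D_T)$ bound on $\uhatp{\tau}$ inherited from \Cref{lem:parabolic_uhatL2}, Hölder's inequality yields
\begin{equation*}
    \N{F_\tau}_{L_2(D_T)}
    \leq \N{\whatp{\tau}}_{L_2(D_T)} + c'_q \N{\whatp{\tau}}_{L_\infty(D_T)}\N{\uhatp{\tau}}_{L_2(D_T)}
    \leq C^{\widehat{w}} + c'_q C^{\widehat{w}}_\infty C^{\widehat{u}}.
\end{equation*}

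With this in hand, existence and uniqueness of a weak solution $\vhatp{\tau}\in L_2([0,T],H_0^1(D))$ with $\partial_t\vhatp{\tau}\in L_2([0,T],H^{-1}(D))$ follow from classical parabolic theory \cite[Chapter~7.1, Theorem~3--4]{evans2010partial} after time-reversing \eqref{eq:paraboliccoupledadjoint1} into a forward PDE (analogous to \eqref{eq:parabolicadjoint*Reversed2}), with zero initial data now at $t=0$. The stronger regularity $\partial_t\vhatpp{\tau}{t,\dummy}\in L_2(D)$ for a.e.\@ $t$ is obtained by invoking \cite[Chapter IV, Theorem~9.1]{ladyzhenskaia1968linear} with $p=2$, exactly as in \emph{Step 1b} of the proof of \Cref{lem:parabolic_whatLinfty}; the coefficient assumptions hold by Assumptions~\ref{asm:PDE_coefficients} and \ref{asm:PDE_nonlinearity_q_ubdd}, the domain assumptions by Assumption~\ref{asm:D}--\ref{asm:Dbdd}, and the compatibility of the zero initial/boundary data is trivial.

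For the uniform bound, let $\vhatpReverted{\tau}(t,x) := \vhatpp{\tau}{T-t,x}$ be the time-reversed solution, which satisfies a forward parabolic PDE with right-hand side $F_\tau(T-\dummy,\dummy)$. Testing the weak formulation against $\vhatppReverted{\tau}{t,\dummy}\in H_0^1(D)$ and using that $\partial_t\vhatppReverted{\tau}{t,\dummy}\in L_2(D)$ (so the dual pairing coincides with the $L_2$ inner product), I would compute $\partial_t\N{\vhatppReverted{\tau}{t,\dummy}}_{L_2(D)}^2$ and bound the three resulting terms in direct analogy with \eqref{eq:proof:lem:parabolic_uhatL2:21}--\eqref{eq:proof:lem:parabolic_uhatL2:23}: the bilinear form contributes $-\tfrac{\nu}{2}\SNHonev{\vhatppReverted{\tau}{t,\dummy}}$ plus a lower-order $L_2$ term (by uniform parabolicity of Assumption~\ref{asm:PDE_L_parabolicity} and the $L_\infty$ bounds of Assumption~\ref{asm:PDE_coefficients}); the $q_u$ term contributes a $c_q\N{\vhatppReverted{\tau}{t,\dummy}}_{L_2(D)}^2$ contribution by Assumption~\ref{asm:PDE_nonlinearity_q_ubdd}; and the source pairs via Cauchy--Schwarz and Young's inequality as $\tfrac{1}{2}\N{F_\tau(T-t,\dummy)}_{L_2(D)}^2+\tfrac{1}{2}\N{\vhatppReverted{\tau}{t,\dummy}}_{L_2(D)}^2$. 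Introducing $\widehat{N}^v_\tau(t):=\N{\vhatppReverted{\tau}{t,\dummy}}_{L_2(D)}^2+\tfrac{\nu}{2}\int_0^t\SNHonev{\vhatppReverted{\tau}{s,\dummy}}ds$, integrating the resulting differential inequality, and applying Grönwall's inequality with $\widehat{N}^v_\tau(0)=0$ yields
\begin{equation*}
    \widehat{N}^v_\tau(T)
    \leq e^{CT}\N{F_\tau}_{L_2(D_T)}^2
    \leq e^{CT}\big(C^{\widehat{w}}+c'_qC^{\widehat{w}}_\infty C^{\widehat{u}}\big)^2,
\end{equation*}
which is uniform in $\tau\in I$ and immediately implies the claimed $L_2([0,T],H^1(D))$ and $L_\infty([0,T],L_2(D))$ bounds.

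The main obstacle is the mixed product term $q_{uu}(\up{\tau})\uhatp{\tau}\whatp{\tau}$ in the source: neither factor is a priori small, and controlling their product in $L_2(D_T)$ would fail if both $\uhatp{\tau}$ and $\whatp{\tau}$ were merely known to lie in $L_2$. This is precisely why \Cref{lem:parabolic_whatLinfty} was first established at the $L_\infty$ level — the key estimate $\N{\uhatp{\tau}\whatp{\tau}}_{L_2(D_T)}\leq\N{\whatp{\tau}}_{L_\infty(D_T)}\N{\uhatp{\tau}}_{L_2(D_T)}$ closes the hierarchy of uniform bounds needed for the regularity bound on $\Q{\tau}$ in \Cref{lem:CQ_Regulartiy}.
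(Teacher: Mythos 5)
Your proposal is correct and follows essentially the same route as the paper's proof: time-reversal of \eqref{eq:paraboliccoupledadjoint1}, existence and regularity via \cite[Chapter~7.1, Theorems~3--4]{evans2010partial} and \cite[Chapter IV, Theorem~9.1]{ladyzhenskaia1968linear}, and an energy estimate in which the source $\whatp{\tau}+q_{uu}(\up{\tau})\uhatp{\tau}\whatp{\tau}$ is controlled by the splitting $\N{\uhatp{\tau}\whatp{\tau}}_{L_2}\leq\N{\whatp{\tau}}_{L_\infty}\N{\uhatp{\tau}}_{L_2}$ together with Assumption~\ref{asm:PDE_nonlinearity_q_uubdd}, \Cref{lem:parabolic_uhatL2}, and \Cref{lem:parabolic_whatLinfty}, followed by Grönwall. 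The only (immaterial) difference is that you bound the source in $L_2(D_T)$ once globally, whereas the paper bounds it pointwise in the PDE time before integrating.
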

\begin{proof}
    Let us first reverse the adjoint parabolic backward PDE \eqref{eq:paraboliccoupledadjoint1} in time to obtain with a time transformation
    for $\vhatpReverted{\tau}=\vhatppReverted{\tau}{t,x}=\vhatpp{\tau}{T-t,x}$ the parabolic forward PDE
    \begin{alignat}{2}
    \label{eq:paraboliccoupledadjoint1*Reversed}
    \begin{aligned}
        \fpartial{t}\vhatpReverted{\tau} + \underbar{\CL}^*\vhatpReverted{\tau} - \underbar{q}_u(\upp{\tau}{T-\dummy,\dummy})\vhatpReverted{\tau}
        &= \whatpp{\tau}{T-\dummy,\dummy}+ \underbar{q}_{uu}(\upp{\tau}{T-\dummy,\dummy})\cdot\\
        &\qquad\qquad\quad\,\cdot\uhatpp{\tau}{T-\dummy,\dummy}\whatpp{\tau}{T-\dummy,\dummy}
        \quad&&\text{in }
        D_T, \\
        \vhatpReverted{\tau}
        &= 0
        \quad&&\text{on }
        [0,T]\times\partial D, \\
        \vhatpReverted{\tau}
        &= 0
        \quad&&\text{on }
        \{0\}\times D,
    \end{aligned}
    \end{alignat}
    where $\underbar{\CL}^*=\underbar{\CL}^*(t,x)=\CL^\dagger(T-t,x)$ and $\underbar{q}=\underbar{q}(t,x,u)=\underbar{q}(T-t,x,u)$.

    \textit{Step 1: Existence of a unique solution $\vhatp{\tau}$.}
    Existence, uniqueness and regulartiy of a weak solution to \eqref{eq:paraboliccoupledadjoint1} in a sense analogous to \Cref{def:weak_sol_adjoint} follow analogously to \textit{Steps 1a} and \textit{b} of the proof of \Cref{lem:parabolic_whatLinfty} from classical results, namely \cite[Chapter~7.1, Theorem~3]{evans2010partial} and \cite[Chapter~7.1, Theorem~4]{evans2010partial} as well as \cite[Chapter IV, Theorem 9.1]{ladyzhenskaia1968linear} for $p=2$.
    Herefore note that $\whatp{\tau} + q_{uu}(\up{\tau})\uhatp{\tau}\whatp{\tau}\in L_2(D_T)$ by combining \Cref{lem:parabolic_whatLinfty,lem:parabolic_uhatL2} with Assumption~\ref{asm:PDE_nonlinearity_q_uubdd}. 

    \textit{Step 2: Boundedness of the $L_2([0,T],H^1(D))$ and $L_\infty([0,T],L_2(D))$ norms of $\vhatp{\tau}$.}
    Let us now estimate $\N{\vhatppReverted{\tau}{t,\dummy}}^2_{L_2(D)} = \int_D (\vhatppReverted{\tau}{t,x})^2 \,dx$.
    With chain rule and by using that $\vhatpReverted{\tau}$ is a weak solution to the time-reversed adjoint PDE~\eqref{eq:paraboliccoupledadjoint1*Reversed} we have 
    \begin{equation}
        \label{eq:proof:lem:parabolic_vhatL2:1}
    \begin{split}
        &\fpartial{t}\N{\vhatppReverted{\tau}{t,\dummy}}^2_{L_2(D)}
        =2\left(\vhatppReverted{\tau}{t,\dummy} ,\fpartial{t}\vhatppReverted{\tau}{t,\dummy}\right)_{L_2(D)}
        = 
        2\left\langle\fpartial{t}\vhatppReverted{\tau}{t,\dummy},\vhatppReverted{\tau}{t,\dummy}\right\rangle_{H^{-1}(D),H_0^1(D)}\\
        &\quad\,= - 2\underbar{\CB}^*[\vhatppReverted{\tau}{t,\dummy},\vhatppReverted{\tau}{t,\dummy};t] + 2(\underbar{q}_u(t,\dummy,\upp{\tau}{T\!-\!t,\dummy})\vhatppReverted{\tau}{t,\dummy},\vhatppReverted{\tau}{t,\dummy})_{L_2(D)}\\
        &\quad\,\quad\,+ 2\left(\whatpp{\tau}{T\!-\!t,\dummy} + \underbar{q}_{uu}(t,\dummy,\upp{\tau}{T\!-\!t,\dummy})\uhatpp{\tau}{T\!-\!t,\dummy}\whatpp{\tau}{T\!-\!t,\dummy},\vhatppReverted{\tau}{t,\dummy}\right)_{L_2(D)},
    \end{split}
    \end{equation}
    where $\underbar{\CB}^*[\widehat{u},u;t]=\CB^\dagger[\widehat{u},u;T-t]$ and 
    where we recall for the second step that since $\vhatppReverted{\tau}{t,\dummy}\in H_0^1(D)$ for a.e.\@ $t\in[0,T]$ and since $\fpartial{t}\vhatppReverted{\tau}{t,\dummy}\in L_2(D)$ for a.e.\@ $t\in[0,T]$, the dual pairing between $H^{-1}(D)$ and $H_0^1(D)$ coincides with the $L_2(D)$ scalar product \cite[Chapter~5.9, Theorem 1(iii)]{evans2010partial}.
    For the third step, i.e., the weak solution property, note that $\vhatppReverted{\tau}{t,\dummy}$ is a valid test function since $\vhatppReverted{\tau}{t,\dummy}\in H_0^1(D)$ for a.e.\@ $t\in[0,T]$.
    To upper bound the right-hand side of \eqref{eq:proof:lem:parabolic_vhatL2:1}, we consider again each of the three terms separately.
    Analogously to \eqref{eq:proof:lem:parabolic_uhatL2:21} we have for the first term
    \begin{equation}
        \label{eq:proof:lem:parabolic_vhatL2:21}
    \begin{split}
        &-\underbar{\CB}^*[\vhatppReverted{\tau}{t,\dummy},\vhatppReverted{\tau}{t,\dummy};t]\\
        &\qquad\qquad\,\leq -\frac{\nu}{2} \abs{\vhatppReverted{\tau}{t,\dummy}}^2_{H^1(D)} + \left(\frac{1}{2\nu} \sum_{i=1}^d\N{b^i}_{L_\infty(D_T)}+\N{c}_{L_\infty(D_T)}\right)\N{\vhatppReverted{\tau}{t,\dummy}}^2_{L_2(D)},
    \end{split}
    \end{equation}
    where we used the definition of the bilinear form $\CB$ as well as that by Assumption~\ref{asm:PDE_L_parabolicity} the PDE operator is uniformly parabolic and that by Assumption~\ref{asm:PDE_coefficients} the coefficients are in $L_\infty$.
    For the second term we have as in \eqref{eq:proof:lem:parabolic_uhatL2:22}  with Assumption~\ref{asm:PDE_nonlinearity_q_ubdd} that
    \begin{equation}
        \label{eq:proof:lem:parabolic_vhatL2:22}
    \begin{split}
        (\underbar{q}_u(t,\dummy,\upp{\tau}{T-t,\dummy})\vhatppReverted{\tau}{t,\dummy},\vhatppReverted{\tau}{t,\dummy})_{L_2(D)}
        &\leq c_q\N{\vhatppReverted{\tau}{t,\dummy}}^2_{L_2(D)}.
    \end{split}
    \end{equation}
    For the third and last term, using Assumption~\ref{asm:PDE_nonlinearity_q_uubdd}, by Cauchy-Schwarz, Hölder's and Young's inequality we upper bound
    \begin{align}       \label{eq:proof:lem:parabolic_vhatL2:23}
        &\left(\whatpp{\tau}{T\!-\!t,\dummy} + \underbar{q}_{uu}(T\!-\!t,\dummy,\upp{\tau}{T\!-\!t,\dummy})\uhatpp{\tau}{T\!-\!t,\dummy}\whatpp{\tau}{T\!-\!t,\dummy},\vhatppReverted{\tau}{t,\dummy}\right)_{L_2(D)}\notag\\
        &\quad\,\leq \left(\N{\whatpp{\tau}{T\!-\!t,\dummy}}_{L_2(D)} + c_q' \N{\uhatpp{\tau}{T\!-\!t,\dummy}\whatpp{\tau}{T\!-\!t,\dummy}}_{L_2(D)}\right)\N{\vhatppReverted{\tau}{t,\dummy}}_{L_2(D)} \notag\\
        &\quad\,\leq \left(\N{\whatpp{\tau}{T\!-\!t,\dummy}}_{L_2(D)} + c_q' \N{\uhatpp{\tau}{T\!-\!t,\dummy}}_{L_2(D)}\N{\whatpp{\tau}{T\!-\!t,\dummy}}_{L_\infty(D)}\right)\N{\vhatppReverted{\tau}{t,\dummy}}_{L_2(D)} \notag\\
        &\quad\,\leq \frac{1}{2}\left(\left(\N{\whatpp{\tau}{T\!-\!t,\dummy}}_{L_2(D)} \!+\! c_q' \N{\uhatpp{\tau}{T\!-\!t,\dummy}}_{L_2(D)}\N{\whatpp{\tau}{T\!-\!t,\dummy}}_{L_\infty(D)}\right)^2\!+\!\N{\vhatppReverted{\tau}{t,\dummy}}^2_{L_2(D)}\right)\notag\\
        &\quad\,\leq \frac{1}{2}\left(\left(C^{\widehat{w}} + c_q' C^{\widehat{u}}C^{\widehat{w}}_\infty\right)^2+\N{\vhatppReverted{\tau}{t,\dummy}}^2_{L_2(D)}\right),
    \end{align}
    where we employed \Cref{lem:parabolic_uhatL2} to bound the $L_2$ norm of $\uhatp{\tau}$ and \Cref{lem:parabolic_whatLinfty} to control the $L_2$ and $L_\infty$ norms of $\whatp{\tau}$.
    Combining the bounds established in \eqref{eq:proof:lem:parabolic_vhatL2:21}--\eqref{eq:proof:lem:parabolic_vhatL2:23} and inserting them into \eqref{eq:proof:lem:parabolic_vhatL2:1}, we can continue bounding \eqref{eq:proof:lem:parabolic_vhatL2:1} as
    \begin{align}
        \label{eq:proof:lem:parabolic_vhatL2:3}
        &\fpartial{t}\N{\vhatppReverted{\tau}{t,\dummy}}^2_{L_2(D)} + \frac{\nu}{2} \abs{\vhatppReverted{\tau}{t,\dummy}}^2_{H^1(D)} 
        \leq \left(\frac{1}{\nu} \sum_{i=1}^d\N{b^i}_{L_\infty(D_T)}+2\N{c}_{L_\infty(D_T)}\right) \N{\vhatppReverted{\tau}{t,\dummy}}^2_{L_2(D)}\!\!\!\!\!\!\!\!\!\!\!\!\!\!\!\!\notag\\
        &\qquad\, \quad\,+ 2c_q\N{\vhatppReverted{\tau}{t,\dummy}}^2_{L_2(D)} + \left(\left(C^{\widehat{w}} + c_q' C^{\widehat{u}}C^{\widehat{w}}_\infty\right)^2+\N{\vhatppReverted{\tau}{t,\dummy}}^2_{L_2(D)}\right)\notag\\
        &\qquad\, \leq C\N{\vhatppReverted{\tau}{t,\dummy}}^2_{L_2(D)} + \left(C^{\widehat{w}} + c_q' C^{\widehat{u}}C^{\widehat{w}}_\infty\right)^2
    \end{align}
    for a constant $C=C(\CL,q)$.
    Recalling that $\vhatppReverted{\tau}{0,\dummy}=0$ by the initial condition in \eqref{eq:paraboliccoupledadjoint1*Reversed}, an application of Grönwall's inequality shows
    \begin{equation}
    \begin{split}
        \N{\vhatp{\tau}}_{L_2([0,T],H^1(D))} + \N{\vhatp{\tau}}_{L_\infty([0,T],L_2(D))}
        &\leq \left(C^{\widehat{w}} + c_q' C^{\widehat{u}}C^{\widehat{w}}_\infty\right)^2Te^{CT},
    \end{split}
    \end{equation}
    which concludes the proof.
\end{proof}
\subsection{Regularity Bound for the Functional \texorpdfstring{$\Q{\tau}$}{Q} in Terms of the Learning Rate}
\label{sec:RegularityCQtau}

We now have all technical tools at hand to 
derive a regularity bound for the functional~$\Q{\tau}$ in terms of the learning rate~$\alpha_\tau$,
which is the main result of this section.

\begin{proposition}
    \label{lem:CQ_Regulartiy}
    Let $((\up{\tau},\uhatp{\tau}))_{\tau\in I}\in \CC\left(I,\CS\times\CS\right)$ denote the unique weak solution to the PDE system \eqref{eq:parabolicPDE*}--\eqref{eq:parabolicadjoint*} coupled with the integro-differential equation~\eqref{eq:parabolicgtau} in the sense of \Cref{lem:parabolic_wellposedness_infinitewidth} and \Cref{rem:parabolic_wellposedness}
    on the training time interval $I$.
    Then the functional~$\Q{\tau}$ as defined in \eqref{eq:CQ} obeys the regularity bound
    \begin{equation}
        \label{eq:lem:CQ_Regulartiy}
        \absbig{\Q{\tau_2}-\Q{\tau_1}}
        \leq L_\CQ\int_{\tau_1}^{\tau_2} \alpha_\tau \,d\tau
    \end{equation}
    for all $\tau_1,\tau_2\in I$ with $0\leq\tau_1\leq\tau_2$ for a constant $L_\CQ=L_\CQ(C^{\widehat{u}},C^{\widehat{v}},C^B_2)$.
\end{proposition}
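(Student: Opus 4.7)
The strategy is to bound $|\Q{\tau_2}-\Q{\tau_1}|$ via the fundamental theorem of calculus applied to $\tau\mapsto\Q{\tau}$, and to express $\fd{\tau}\Q{\tau}$ in a form that isolates a single factor of $\alpha_\tau$. The natural candidate is an adjoint-of-the-adjoint identity of the form
\begin{equation*}
    \fd{\tau}\Q{\tau}
    \;=\; -\alpha_\tau \left(\vhatp{\tau},\, T_{B_0}\uhatp{\tau}\right)_{L_2(D_T)},
\end{equation*}
after which Cauchy--Schwarz and the already-available uniform bounds on $\uhatp{\tau}$ (Lemma \ref{lem:parabolic_uhatL2}) and $\vhatp{\tau}$ (Lemma \ref{lem:parabolic_vhatL2}), together with the operator-norm bound $\N{T_{B_0}}\le C^B_2$ from Lemma \ref{lem:parabolicTB}, will give the claim with $L_\CQ=C^B_2 C^{\widehat u} C^{\widehat v}$.

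\textbf{Derivation of the identity.} Because $T_{B_0}$ is self-adjoint and $\tau$-independent, differentiating in $\tau$ gives $\fd{\tau}\Q{\tau}=2\left(T_{B_0}\uhatp{\tau},\fd{\tau}\uhatp{\tau}\right)_{L_2(D_T)}$. The key idea is that $2T_{B_0}\uhatp{\tau}$ is exactly the source term of the second-level adjoint equation \eqref{eq:paraboliccoupledadjoint2} for $\whatp{\tau}$, so we may substitute and integrate by parts in $(t,x)$. Using the homogeneous terminal condition of $\uhatp{\tau}$ (so that $\fd{\tau}\uhatp{\tau}=0$ on $\{T\}\times D$) and the homogeneous initial condition of $\whatp{\tau}$, together with the matched boundary conditions, the same manipulation that yields \eqref{eq:proof:parabolicdtJ*1} in Proposition \ref{lem:parabolictimeevolutionJt*} produces
\begin{equation*}
    \fd{\tau}\Q{\tau}
    \;=\; \left(\whatp{\tau},\; -\fpartial{t}\fd{\tau}\uhatp{\tau} + \CL^{\dagger}\fd{\tau}\uhatp{\tau} - q_u(\up{\tau})\fd{\tau}\uhatp{\tau}\right)_{L_2(D_T)}.
\end{equation*}
Differentiating the adjoint PDE \eqref{eq:parabolicadjoint*} with respect to the training time $\tau$ identifies the expression in parentheses as $\fd{\tau}\up{\tau} + q_{uu}(\up{\tau})\fd{\tau}\up{\tau}\,\uhatp{\tau}$, which is precisely the source term of the second-level adjoint PDE \eqref{eq:paraboliccoupledadjoint1} for $\vhatp{\tau}$. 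A second integration by parts, now using the homogeneous boundary/initial conditions of $\fd{\tau}\up{\tau}$ from \eqref{eq:parabolicpI*aux} together with the terminal condition for $\vhatp{\tau}$, yields
\begin{equation*}
    \fd{\tau}\Q{\tau}
    \;=\; \left(\vhatp{\tau},\; \fpartial{t}\fd{\tau}\up{\tau} + \CL\fd{\tau}\up{\tau} - q_u(\up{\tau})\fd{\tau}\up{\tau}\right)_{L_2(D_T)}
    \;=\; \left(\vhatp{\tau},\, \fd{\tau}g^*_\tau\right)_{L_2(D_T)},
\end{equation*}
and since $\fd{\tau}g^*_\tau = -\alpha_\tau T_{B_0}\uhatp{\tau}$ by \eqref{eq:parabolicgtau}, the displayed identity follows.

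\textbf{Concluding bound and main obstacle.} Given the identity, Cauchy--Schwarz, the operator-norm estimate $\N{T_{B_0}\uhatp{\tau}}_{L_2(D_T)}\le C^B_2 \N{\uhatp{\tau}}_{L_2(D_T)}\le C^B_2 C^{\widehat u}$, and the uniform bound $\N{\vhatp{\tau}}_{L_2(D_T)}\le C^{\widehat v}$ (note $\N{\,\cdot\,}_{L_2(D_T)}\le \sqrt{T}\N{\,\cdot\,}_{L_\infty([0,T],L_2(D))}$) give $|\fd{\tau}\Q{\tau}|\le L_\CQ\,\alpha_\tau$, and integrating from $\tau_1$ to $\tau_2$ yields \eqref{eq:lem:CQ_Regulartiy}. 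The genuinely delicate part is not the final estimate but the justification of the two integrations by parts in the identity derivation: one must verify that $\fd{\tau}\uhatp{\tau}$ and $\fd{\tau}\up{\tau}$ admit weak solutions in $L_2([0,T],H^1_0(D))$ with weak time derivatives in $L_2([0,T],H^{-1}(D))$ and that their boundary/initial/terminal traces vanish in the required sense, so that the dual pairings collapse to $L_2(D)$ inner products as in \eqref{eq:parabolicpI*aux_T51}. This regularity is in hand: $\fd{\tau}\up{\tau}$ solves the linear parabolic PDE \eqref{eq:parabolicpI*aux} with source $\fd{\tau}g^*_\tau=-\alpha_\tau T_{B_0}\uhatp{\tau}\in L_\infty(D_T)\subset L_2(D_T)$ by Lemma \ref{lem:TBboundednessLinfty}, and a parallel argument for the linearization of \eqref{eq:parabolicadjoint*} in $\tau$ handles $\fd{\tau}\uhatp{\tau}$, in each case invoking the classical well-posedness theory (as already used in \textit{Step 1a--b} of Lemma \ref{lem:parabolic_whatLinfty}).
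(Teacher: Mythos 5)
Your proposal is correct and follows essentially the same route as the paper: it derives $\fd{\tau}\Q{\tau}=(\fd{\tau}g^*_\tau,\vhatp{\tau})_{L_2(D_T)}=-\alpha_\tau(T_{B_0}\uhatp{\tau},\vhatp{\tau})_{L_2(D_T)}$ via the second-level adjoint system \eqref{eq:paraboliccoupledadjoint1}--\eqref{eq:paraboliccoupledadjoint2} and the $\tau$-differentiated PDEs \eqref{eq:parabolicpI*aux} and \eqref{eq:parabolicpII*aux}, then concludes by Cauchy--Schwarz with \Cref{lem:parabolicTB,lem:parabolic_uhatL2,lem:parabolic_vhatL2}, exactly as in the paper's proof of \Cref{lem:CQ_Regulartiy}. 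The only imprecision is the phrase identifying $\fd{\tau}\up{\tau}+q_{uu}(\up{\tau})\uhatp{\tau}\fd{\tau}\up{\tau}$ as ``precisely the source term'' of \eqref{eq:paraboliccoupledadjoint1} (whose source actually involves $\whatp{\tau}$); what is used is the equality of pairings $\left(\whatp{\tau},\fd{\tau}\up{\tau}+q_{uu}(\up{\tau})\uhatp{\tau}\fd{\tau}\up{\tau}\right)_{L_2(D_T)}=\left(\whatp{\tau}+q_{uu}(\up{\tau})\uhatp{\tau}\whatp{\tau},\fd{\tau}\up{\tau}\right)_{L_2(D_T)}$, after which the argument goes through as written.
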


\begin{proof}
    By the fundamental theorem of calculus, it holds for all $0\leq\tau_1\leq\tau_2$ that
    \begin{equation}
        \label{eq:proof:Regularity:B1}
    \begin{split}
        \Q{\tau_2}-\Q{\tau_1}
        &= \int_{\tau_1}^{\tau_2} \fd{\tau} \Q{\tau}\,d\tau
    \end{split}
    \end{equation}
    and it thus remains to compute and estimate $\fd{\tau} \Q{\tau}$.
    Recalling that $\Q{\tau} = (\uhatp{\tau},T_{B_0}\uhatp{\tau})_{L_2(D_T)}$ as defined in \eqref{eq:CQ}, we obtain for its training time derivative by chain rule and by using that $\whatp{\tau}$ and $\vhatp{\tau}$ are weak solutions (in a sense analogous to \Cref{def:weak_sol_adjoint}) to the second-level adjoint system consisting of the PDEs \eqref{eq:paraboliccoupledadjoint2} and \eqref{eq:paraboliccoupledadjoint1} that
    \begin{equation}
    \begin{split}
        \label{eq:proof:Regularity:B2}
        \fd{\tau} \Q{\tau}
        &= \fd{\tau} \int_0^T\!\!\!\int_D \uhatpp{\tau}{t,x}[T_{B_0}\uhatp{\tau}](t,x)\,dxdt \\
        &= \int_0^T\!\!\!\int_D 2[T_{B_0}\uhatp{\tau}](t,x)\fd{\tau}\uhatpp{\tau}{t,x}\,dxdt
        = \int_0^T\left(2[T_{B_0}\uhatp{\tau}](t,\dummy),\fd{\tau}\uhatpp{\tau}{t,\dummy}\right)_{L_2(D)}dt\\
        &=\int_0^T\left\langle\fpartial{t}\whatpp{\tau}{t,\dummy},\fd{\tau}\uhatpp{\tau}{t,\dummy}\right\rangle_{H^{-1}(D),H_0^1(D)} + \CB\left[\whatpp{\tau}{t,\dummy},\fd{\tau}\uhatpp{\tau}{t,\dummy};t\right] \\
        & \qquad\quad\, -\left(q_u(t,\dummy,\upp{\tau}{t,\dummy})\whatpp{\tau}{t,\dummy},\fd{\tau}\uhatpp{\tau}{t,\dummy}\right)_{L_2(D)}dt  \\
        &=\int_0^T\left\langle\fpartial{t}\whatpp{\tau}{t,\dummy},\fd{\tau}\uhatpp{\tau}{t,\dummy}\right\rangle_{H^{-1}(D),H_0^1(D)} + \CB\left[\whatpp{\tau}{t,\dummy},\fd{\tau}\uhatpp{\tau}{t,\dummy};t\right] \\
        & \qquad\quad\, -\left(q_u(t,\dummy,\upp{\tau}{t,\dummy})\whatpp{\tau}{t,\dummy},\fd{\tau}\uhatpp{\tau}{t,\dummy}\right)_{L_2(D)}dt
        \\
        &\quad\,+\int_0^T\left\langle-\fpartial{t}\vhatpp{\tau}{t,\dummy},\fd{\tau}\upp{\tau}{t,\dummy}\right\rangle_{H^{-1}(D),H_0^1(D)} + \CB^\dagger\left[\vhatpp{\tau}{t,\dummy},\fd{\tau}\upp{\tau}{t,\dummy};t\right] \\
        & \qquad\quad\, -\left(q_u(t,\dummy,\upp{\tau}{t,\dummy})\vhatpp{\tau}{t,\dummy},\fd{\tau}\upp{\tau}{t,\dummy}\right)_{L_2(D)}\\
        & \qquad\quad\, -\left(\whatpp{\tau}{t,\dummy} + q_{uu}(t,\dummy,\upp{\tau}{t,\dummy})\uhatpp{\tau}{t,\dummy}\whatpp{\tau}{t,\dummy},\fd{\tau}\upp{\tau}{t,\dummy}\right)_{L_2(D)} dt.
    \end{split}
    \end{equation}
    For the weak solution property in the third line of \eqref{eq:proof:Regularity:B2}, we note that $\fd{\tau}\uhatp{\tau}$, the weak solution to the linear parabolic PDE
    \begin{alignat}{2}
        \label{eq:parabolicpII*aux}
    \begin{aligned}
        -\fpartial{t}\fd{\tau}\uhatp{\tau} + \CL^\dagger\fd{\tau}\uhatp{\tau} - q_u(\up{\tau})\fd{\tau}\uhatp{\tau}
        &= \fd{\tau}\up{\tau} + q_{uu}(\up{\tau})\uhatp{\tau}\fd{\tau}\up{\tau}
        \qquad&&\text{in }
        D_T, \\
        \fd{\tau}\uhatp{\tau}
        &= 0
        \qquad&&\text{on }
        [0,T]\times\partial D, \\
        \fd{\tau}\uhatp{\tau}
        &= 0
        \qquad&&\text{on }
        \{T\}\times D,
    \end{aligned}
    \end{alignat}
    which is obtained by taking in \eqref{eq:parabolicPDE*} the derivative w.r.t.\@ the training time $\tau$,
    can be used as a test function in the weak formulation of \eqref{eq:parabolicadjoint*}, see \Cref{def:weak_sol_adjoint}, since $\fd{\tau}\uhatpp{\tau}{t,\dummy}\in H_0^1(D)$ for a.e.\@ $t\in[0,T]$.
    Existence and uniqueness of a weak solution to \eqref{eq:parabolicpII*aux} in a sense analogous to \Cref{def:weak_sol_adjoint} follow from classical results,
    see, e.g., \cite[Chapter~7.1, Theorem~3]{evans2010partial} and \cite[Chapter~7.1, Theorem~4]{evans2010partial}, as $\fd{\tau}\up{\tau} + q_{uu}(\up{\tau})\uhatp{\tau}\fd{\tau}\up{\tau}\in L_2(D_T)$.
    That the right-hand side is indeed in $L_2$ follows directly after noting that the PDE~\eqref{eq:parabolicpI*aux} for $\fd{\tau}\up{\tau}$ has a structure identical to \eqref{eq:paraboliccoupledadjoint2} with right-hand side $\fd{\tau}g^*_\tau = - \alpha_\tau T_{B_0}\uhatp{\tau}$, i.e., the same up to a constant factor.
    Following the lines of the proof of \Cref{lem:parabolic_whatLinfty} this ensures that $\fd{\tau}\up{\tau}\in L_\infty(D_T)$.
    For the weak solution property in the fifth step of \eqref{eq:proof:Regularity:B2}, we note that $\fd{\tau}\up{\tau}$, the weak solution to the linear parabolic PDE \eqref{eq:parabolicpI*aux} can be used as a test function in the weak formulation of \eqref{eq:paraboliccoupledadjoint1}, since $\fd{\tau}\upp{\tau}{t,\dummy}\in H_0^1(D)$ for a.e.\@ $t\in[0,T]$, see the discussion after \eqref{eq:parabolicpI*aux}.

    We now perform partial integration.
    For this purpose, first recall that 
    since $\fd{\tau}\uhatpp{\tau}{t,\dummy}\in H_0^1(D)$ for a.e.\@ $t\in[0,T]$ and since $\fpartial{t}\whatpp{\tau}{t,\dummy}\in L_2(D)$ for a.e.\@ $t\in[0,T]$ according to \Cref{lem:parabolic_whatLinfty}, the dual pairing between $H^{-1}(D)$ and $H_0^1(D)$ coincides with the $L_2(D)$ scalar product \cite[Chapter~5.9, Theorem 1(iii)]{evans2010partial}.
    This allows to compute with partial integration, which applies since $\whatpp{\tau}{t,\dummy}, \fd{\tau}\uhatpp{\tau}{t,\dummy}\in H_0^1(D)$ for a.e.\@ $t\in[0,T]$, that
    \begin{equation}       \label{eq:proof:Regularity:B2_a1}
    \begin{split}        &\int_0^T\left\langle\fpartial{t}\whatpp{\tau}{t,\dummy},\fd{\tau}\uhatpp{\tau}{t,\dummy}\right\rangle_{H^{-1}(D),H_0^1(D)}dt\\
        &\qquad\,= \int_0^T\left(\fpartial{t}\whatpp{\tau}{t,\dummy},\fd{\tau}\uhatpp{\tau}{t,\dummy}\right)_{L_2(D)}dt
        = \int_0^T\!\!\!\int_D \left(\fpartial{t}\whatpp{\tau}{t,x}\right)\fd{\tau}\uhatpp{\tau}{t,x} \, dxdt\\
        &\qquad\,= \int_D \underbrace{\whatpp{\tau}{t,x} \fd{\tau}\uhatpp{\tau}{t,x}\Big|_0^T}_{\substack{=0\\\text{since } \whatp{\tau}=0 \text{ on }\{0\}\times D \text{ and}\\\text{since }\fd{\tau}\uhatp{\tau}=0 \text{ on } \{T\}\times D}} \, dx -\int_0^T\!\!\!\int_D \whatpp{\tau}{t,x}\fpartial{t}\fd{\tau}\uhatpp{\tau}{t,x} \, dxdt \\
        &\qquad\,=-\int_0^T\!\!\!\int_D  \left(\fpartial{t}\fd{\tau}\uhatpp{\tau}{t,x}\right)\whatpp{\tau}{t,x} \, dxdt
        = -\int_0^T\left(\fpartial{t}\fd{\tau}\uhatpp{\tau}{t,\dummy},\whatpp{\tau}{t,\dummy}\right)_{L_2(D)}\,dt \\
        &\qquad\,= -\int_0^T\left\langle\fpartial{t}\fd{\tau}\uhatpp{\tau}{t,\dummy},\whatpp{\tau}{t,\dummy}\right\rangle_{H^{-1}(D),H_0^1(D)}dt,
    \end{split}
    \end{equation}
    where the last step holds again since now $\whatpp{\tau}{t,\dummy}\in H_0^1(D)$ for a.e.\@ $t\in[0,T]$ and $\fpartial{t}\fd{\tau}\uhatp{\tau}$ is in $L_2(D)$ for a.e.\@ $t\in[0,T]$, which follows again analogously to \Cref{lem:parabolic_vhatL2} for the PDE \eqref{eq:parabolicpII*aux} due to its with \eqref{eq:paraboliccoupledadjoint1} identical structure and $\fd{\tau}\up{\tau} + q_{uu}(\up{\tau})\uhatp{\tau}\fd{\tau}\up{\tau}\in L_2(D_T)$.
    Similarly, since
    $\fd{\tau}\upp{\tau}{t,\dummy}\in H_0^1(D)$ for a.e.\@ $t\in[0,T]$ (see the discussion after \eqref{eq:parabolicpI*aux}) and since $\fpartial{t}\vhatpp{\tau}{t,\dummy}\in L_2(D)$ for a.e.\@ $t\in[0,T]$ according to \Cref{lem:parabolic_vhatL2}, we may compute analogously to \eqref{eq:parabolicpI*aux_T51} that
    \begin{equation}
        \label{eq:proof:Regularity:B2_a2}
    \begin{split}
        &\int_0^T\left\langle\fpartial{t}\vhatpp{\tau}{t,\dummy},\fd{\tau}\upp{\tau}{t,\dummy}\right\rangle_{H^{-1}(D),H_0^1(D)}dt
        = -\int_0^T\left\langle\fpartial{t}\fd{\tau}\upp{\tau}{t,\dummy},\vhatpp{\tau}{t,\dummy}\right\rangle_{H^{-1}(D),H_0^1(D)}dt
    \end{split}
    \end{equation}
    since $\vhatpp{\tau}{t,\dummy}, \fd{\tau}\upp{\tau}{t,\dummy}\in H_0^1(D)$ for a.e.\@ $t\in[0,T]$.
    Secondly, by definition of the adjoint bilinear form~$\CB^\dagger$ (see \Cref{def:weak_sol_adjoint}) it hold
    $\CB\left[\whatpp{\tau}{t,\dummy},\fd{\tau}\uhatpp{\tau}{t,\dummy};t\right]= \CB^\dagger\left[\fd{\tau}\uhatpp{\tau}{t,\dummy},\whatpp{\tau}{t,\dummy};t\right]$ and $\CB^\dagger\left[\vhatpp{\tau}{t,\dummy},\fd{\tau}\upp{\tau}{t,\dummy};t\right] = \CB\left[\fd{\tau}\upp{\tau}{t,\dummy},\vhatpp{\tau}{t,\dummy};t\right] $
    for a.e.\@ $t\in[0,T]$ since $\whatpp{\tau}{t,\dummy}$, $\fd{\tau}\uhatpp{\tau}{t,\dummy}$, $\vhatpp{\tau}{t,\dummy}$, $\fd{\tau}\upp{\tau}{t,\dummy} \in H_0^1(D)$. 
    With
    \eqref{eq:proof:Regularity:B2_a1}, \eqref{eq:proof:Regularity:B2_a2} and the former,
    we can continue \eqref{eq:proof:Regularity:B2} as
    \begin{allowdisplaybreaks}
    \begin{align}
 \label{eq:proof:Regularity:B3}
        \fd{\tau} \Q{\tau}
        &=\int_0^T\left\langle-\fpartial{t}\fd{\tau}\uhatpp{\tau}{t,\dummy},\whatpp{\tau}{t,\dummy}\right\rangle_{H^{-1}(D),H_0^1(D)} + \CB^\dagger\left[\fd{\tau}\uhatpp{\tau}{t,\dummy},\whatpp{\tau}{t,\dummy};t\right] \nonumber\\
        & \qquad\quad\, -\left(q_u(t,\dummy,\upp{\tau}{t,\dummy})\fd{\tau}\uhatpp{\tau}{t,\dummy},\whatpp{\tau}{t,\dummy}\right)_{L_2(D)}dt
        \nonumber\\
        &\quad\,+\int_0^T\left\langle\fpartial{t}\fd{\tau}\upp{\tau}{t,\dummy},\vhatpp{\tau}{t,\dummy}\right\rangle_{H^{-1}(D),H_0^1(D)} + \CB\left[\fd{\tau}\upp{\tau}{t,\dummy},\vhatpp{\tau}{t,\dummy};t\right] \\
        & \qquad\quad\, -\left(q_u(t,\dummy,\upp{\tau}{t,\dummy})\fd{\tau}\upp{\tau}{t,\dummy},\vhatpp{\tau}{t,\dummy}\right)_{L_2(D)}\nonumber\\
        & \qquad\quad\, -\left(\fd{\tau}\upp{\tau}{t,\dummy} + q_{uu}(t,\dummy,\upp{\tau}{t,\dummy})\uhatpp{\tau}{t,\dummy}\fd{\tau}\upp{\tau}{t,\dummy},\whatpp{\tau}{t,\dummy}\right)_{L_2(D)} dt.\nonumber
    \end{align}
    \end{allowdisplaybreaks}%
    A simple reordering of the terms for later convenience gives
    \begin{equation}
        \label{eq:proof:Regularity:B4}
    \begin{split}
        \fd{\tau} \Q{\tau}
        &=\int_0^T\left\langle-\fpartial{t}\fd{\tau}\uhatpp{\tau}{t,\dummy},\whatpp{\tau}{t,\dummy}\right\rangle_{H^{-1}(D),H_0^1(D)} + \CB^\dagger\left[\fd{\tau}\uhatpp{\tau}{t,\dummy},\whatpp{\tau}{t,\dummy};t\right] \\
        & \qquad\, -\bigg(q_u(t,\dummy,\upp{\tau}{t,\dummy})\fd{\tau}\uhatpp{\tau}{t,\dummy} \\
        &\qquad\, \qquad\qquad+ q_{uu}(t,\dummy,\upp{\tau}{t,\dummy})\uhatpp{\tau}{t,\dummy}\fd{\tau}\upp{\tau}{t,\dummy},\whatpp{\tau}{t,\dummy}\bigg)_{L_2(D)}dt
        \\
        &\quad\,+\int_0^T\left\langle\fpartial{t}\fd{\tau}\upp{\tau}{t,\dummy},\vhatpp{\tau}{t,\dummy}\right\rangle_{H^{-1}(D),H_0^1(D)} + \CB\left[\fd{\tau}\upp{\tau}{t,\dummy},\vhatpp{\tau}{t,\dummy};t\right] \\
        & \qquad\, -\left(q_u(t,\dummy,\upp{\tau}{t,\dummy})\fd{\tau}\upp{\tau}{t,\dummy},\vhatpp{\tau}{t,\dummy}\right)_{L_2(D)} dt\\
        &\quad\,- \int_0^T\left(\fd{\tau}\upp{\tau}{t,\dummy},\whatpp{\tau}{t,\dummy}\right)_{L_2(D)} dt.
    \end{split}
    \end{equation}
    Leveraging now in the first and second line of \eqref{eq:proof:Regularity:B4} the weak formulation of \eqref{eq:parabolicpII*aux}, with test function $\whatpp{\tau}{t,\dummy}$ (suitable due to \Cref{lem:parabolic_whatLinfty}),
    and in the third and fourth line the weak formulation of \eqref{eq:parabolicpI*aux}, with test function $\vhatpp{\tau}{t,\dummy}$ (suitable due to \Cref{lem:parabolic_vhatL2}),
    we arrive at
    \begin{equation}
        \label{eq:proof:Regularity:B5}
    \begin{split}
        \fd{\tau} \Q{\tau}
        &=\int_0^T\left(\fd{\tau}\upp{\tau}{t,\dummy},\whatpp{\tau}{t,\dummy}\right)_{L_2(D)}dt
        + \int_0^T\left(\fd{\tau}g^*_\tau(t,\dummy),\vhatpp{\tau}{t,\dummy}\right)_{L_2(D)} dt\\
        &\quad\,- \int_0^T\left(\fd{\tau}\upp{\tau}{t,\dummy},\whatpp{\tau}{t,\dummy}\right)_{L_2(D)} dt\\
        &=\int_0^T\left(\fd{\tau}g^*_\tau(t,\dummy),\vhatpp{\tau}{t,\dummy}\right)_{L_2(D)} dt
        = \left(\fd{\tau}g^*_\tau,\vhatp{\tau}\right)_{L_2(D_T)}.
    \end{split}
    \end{equation}
    With the expression derived in \eqref{eq:proof:Regularity:B5} for $\fd{\tau}\Q{\tau}$, we can now obtain a bound on \eqref{eq:proof:Regularity:B1}. 
    Recalling that $\fd{\tau}g^*_\tau = - \alpha_\tau T_{B_0}\uhatp{\tau}$ by taking the training time derivative of $g_{\tau}$ as defined in \eqref{eq:parabolicgtau}
    and employing Cauchy-Schwarz inequality yields
    \begin{equation}
        \label{eq:proof:Regularity:B22}
    \begin{split}
        \absbig{\Q{\tau_2}\!-\!\Q{\tau_1}}
        = \abs{\int_{\tau_1}^{\tau_2} \!\fd{\tau} \Q{\tau}\,d\tau}
        &= \abs{\int_{\tau_1}^{\tau_2} \left(\fd{\tau}g^*_\tau,\vhatp{\tau}\right)_{L_2(D_T)}d\tau}
        = \abs{\int_{\tau_1}^{\tau_2} \left(\alpha_\tau T_{B_0}\uhatp{\tau},\vhatp{\tau}\right)_{L_2(D_T)}d\tau}\\
        &\leq \int_{\tau_1}^{\tau_2} \!\!\alpha_\tau\N{T_{B_0}}\N{\uhatp{\tau}}_{L_2(D_T)}\!\N{\vhatp{\tau}}_{L_2(D_T)}d\tau
        \leq C^B_2C^{\widehat{u}}C^{\widehat{v}} \!\int_{\tau_1}^{\tau_2} \!\!\alpha_\tau \,d\tau,
    \end{split}
    \end{equation}
    where the last inequality is due to
    the operator norm of $T_{B_0}$ being bounded by the $L_2$-norm of the kernel $B$ as of \Cref{lem:parabolicTB},
    $\sup_{\tau\in I}\N{\uhatp{\tau}}_{L_2(D_T)}\leq C^{\widehat{u}}$ according to \Cref{lem:parabolic_uhatL2},
    and $\sup_{\tau\in I}\N{\vhatp{\tau}}_{L_2(D_T)}\leq C^{\widehat{v}}$ according to \Cref{lem:parabolic_vhatL2}.
\end{proof}
\section{Cycle of Stopping Times Analysis}
\label{sec:cyclestoppingtimes}

Exploiting the regularity bound for the functional $\Q{\tau}$ in terms of the learning rate~$\alpha_\tau$ established in \Cref{lem:CQ_Regulartiy}
together with the fact that $\fd{\tau}\J{\tau} = - \alpha_\tau\Q{\tau}$ as shown in \Cref{lem:parabolictimeevolutionJt*},
we prove in \Cref{lem:convergenceCJCB} of this section by using a cycle of stopping times analysis as conducted in \cite{bertsekas2000gradient,sirignano2022online} that this entails $\Q{\tau}\rightarrow0$ as $\tau\rightarrow\infty$ provided that the learning rate $(\alpha_\tau)_{\tau\geq0}$ is decreasing and such that $\int_{0}^{\infty}\alpha_\tau\,d\tau=\infty$.
\begin{proposition}
    \label{lem:convergenceCJCB}
    Let $((\up{\tau},\uhatp{\tau}))_{\tau\in [0,\infty)}\in \CC\left([0,\infty),\CS\times\CS\right)$ denote the unique weak solution to the PDE system \eqref{eq:parabolicPDE*}--\eqref{eq:parabolicadjoint*} coupled with the integro-differential equation~\eqref{eq:parabolicgtau} in the sense of \Cref{rem:parabolic_wellposedness}
    on the training time interval $[0,\infty)$.
    Then,
    \begin{equation}        \lim_{\tau\rightarrow\infty} \Q{\tau}
        = 0
    \end{equation}
    and thus also $\lim_{\tau\rightarrow\infty}\fd{\tau}\J{\tau}=0$.
\end{proposition}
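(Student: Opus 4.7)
The plan is to combine the dissipation identity from Proposition 4.2 with the regularity bound from Proposition 6.4 through a cycle-of-stopping-times contradiction argument, in the spirit of \cite{bertsekas2000gradient,sirignano2022online} and as outlined in the proof sketch of Theorem 1.2.

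First, I would note that $\Q{\tau}\geq 0$ (positive definiteness of $T_{B_0}$ from Lemma 3.4) and that $\J{\tau}\geq 0$ is monotonically non-increasing by Proposition 4.2, so $\J{\tau}$ converges to some finite limit $\J{\infty}\geq 0$ as $\tau\rightarrow\infty$. The fundamental theorem of calculus then yields
\begin{equation*}
    \int_0^\infty \alpha_s \Q{s}\,ds = \J{0}-\J{\infty} < \infty.
\end{equation*}
As a warm-up I would rule out $\liminf_{\tau\to\infty}\Q{\tau}>0$: if $\Q{\tau}\geq \varepsilon$ for all $\tau\geq \overbar{\tau}$, then $\int_{\overbarscript{\tau}}^\infty \alpha_s \Q{s}\,ds\geq \varepsilon\int_{\overbarscript{\tau}}^\infty \alpha_s\,ds=\infty$ by the Robbins--Monro condition, contradicting integrability.

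The substantive step is excluding $\limsup_{\tau\to\infty}\Q{\tau}>0$. Assume for contradiction that there exists $\varepsilon>0$ such that $\Q{\tau}<\varepsilon/2$ and $\Q{\tau}>\varepsilon$ each occur for arbitrarily large $\tau$, and define inductively $\sigma_0=0$ and, for $k\geq 1$,
\begin{equation*}
    \tau_k=\inf\{\tau>\sigma_{k-1}:\Q{\tau}\geq \varepsilon\},\qquad
    \sigma_k=\sup\Big\{\tau\geq \tau_k:\tfrac{1}{2}\Q{\tau_k}\leq \Q{s}\leq 2\Q{\tau_k}\ \forall s\in[\tau_k,\tau]\ \text{and}\ \int_{\tau_k}^\tau \alpha_s\,ds\leq \tfrac{\varepsilon}{2L_\CQ}\Big\}.
\end{equation*}
Both stopping times are finite by the oscillation assumption together with $\int_0^\infty\alpha_\tau\,d\tau=\infty$. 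The key inequality to establish (this is an auxiliary lemma such as Lemma 7.2 alluded to in the proof sketch) is that for every $\vartheta\in(0,1)$ and all sufficiently large $k$,
\begin{equation*}
    \int_{\tau_k}^{\sigma_k}\alpha_s\,ds\geq \frac{(1-\vartheta)\varepsilon}{2L_\CQ}.
\end{equation*}
This is automatic if the integral saturates at $\sigma_k$; otherwise the bracketing condition terminates first, i.e., $\Q{\sigma_k}\in\{\tfrac{1}{2}\Q{\tau_k},2\Q{\tau_k}\}$, and Proposition 6.4 gives
\begin{equation*}
    \frac{\varepsilon}{2}\leq \frac{\Q{\tau_k}}{2}=\absbig{\Q{\sigma_k}-\Q{\tau_k}}\leq L_\CQ\int_{\tau_k}^{\sigma_k}\alpha_s\,ds,
\end{equation*}
so that $\int_{\tau_k}^{\sigma_k}\alpha_s\,ds\geq \varepsilon/(2L_\CQ)$ (and the extra $(1-\vartheta)$ slack is needed only because $\sigma_k$ is defined as a supremum).

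Combining this with the lower bound $\Q{s}\geq \Q{\tau_k}/2\geq \varepsilon/2$ on $[\tau_k,\sigma_k]$ gives, via Proposition 4.2,
\begin{equation*}
    \J{\sigma_k}-\J{\tau_k}=-\int_{\tau_k}^{\sigma_k}\alpha_s\Q{s}\,ds\leq -\frac{(1-\vartheta)\varepsilon^2}{4L_\CQ},
\end{equation*}
while $\J{\tau_{k+1}}-\J{\sigma_k}\leq 0$ trivially since $\alpha_s\Q{s}\geq 0$. Telescoping from a sufficiently large index $\widetilde{n}$ to $n$ produces $\J{\tau_{n+1}}\leq \J{\tau_{\widetilde{n}}}-(n-\widetilde{n}+1)(1-\vartheta)\varepsilon^2/(4L_\CQ)\to -\infty$, contradicting $\J{\tau}\geq 0$. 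Hence $\limsup_{\tau\to\infty}\Q{\tau}=0$, and together with $\Q{\tau}\geq 0$ this yields $\Q{\tau}\to 0$. The identity $\fd{\tau}\J{\tau}=-\alpha_\tau\Q{\tau}$ immediately gives the second claim. The main obstacle—and the reason a direct application of the finite-dimensional Bertsekas--Tsitsiklis argument does not suffice—is securing the regularity bound of Proposition 6.4 in this infinite-dimensional, non-local PDE setting, which is precisely what the second-level adjoint analysis of Section 6 provides.
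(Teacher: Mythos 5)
Your proposal is correct and follows essentially the same route as the paper: the dissipation identity $\fd{\tau}\J{\tau}=-\alpha_\tau\Q{\tau}$ (\Cref{lem:parabolictimeevolutionJt*}), the same cycle of stopping times, the telescoping contradiction with $\J{}\geq0$, and the regularity bound of \Cref{lem:CQ_Regulartiy} as the crucial ingredient. The only (harmless) deviation is in the auxiliary step: where the paper's \Cref{lem:auxcontradiction} argues on $[\tau_k,\sigma_k+\eta]$ and uses the decreasing learning rate to obtain the factor $(1-\vartheta)$, you argue directly at $\sigma_k$ via the active-constraint dichotomy (integral saturation versus $\Q{\sigma_k}$ hitting the bracket boundary, justified by the continuity of $\tau\mapsto\Q{\tau}$ implied by \Cref{lem:CQ_Regulartiy}), which actually yields the full budget $\varepsilon/(2L_\CQ)$ so that your $(1-\vartheta)$ is unnecessary; note also the minor slip that in the case $\Q{\sigma_k}=2\Q{\tau_k}$ your displayed ``$\tfrac{\Q{\tau_k}}{2}=\absbig{\Q{\sigma_k}-\Q{\tau_k}}$'' should be an inequality ``$\leq$'', which does not affect the conclusion.
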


\begin{proof}
    The proof borrows the cycle of stopping times argument from \cite[Proposition~1]{bertsekas2000gradient} and \cite[Theorem~3.1]{sirignano2022online}, which crucially depends on the regularity bound~\eqref{eq:lem:CQ_Regulartiy} for the functional $\Q{\tau}$ in terms of the learning rate as apparent in the proof of \Cref{lem:auxcontradiction}.
    
    \textbf{Setup.}
    Let $\varepsilon>0$ and set $A=\varepsilon/(2L_\CQ)>0$.
    We define the cycle of stopping times
    \begin{equation}
    \begin{split}
        0=\sigma_0\leq\tau_1\leq\sigma_1\leq\tau_2\leq\sigma_2\leq\tau_3\leq\dots,
    \end{split}
    \end{equation}
    where $\tau_k$ and $\sigma_k$ are defined for $k=1,2,\dots$ according to
    \begin{equation}
    \begin{split}
        \tau_k
        &= \inf\left\{\tau>\sigma_{k-1}: \Q{\tau}\geq\varepsilon\right\}\\
        \sigma_k
        &= \sup\bigg\{\tau\geq\tau_k: \frac{1}{2}\Q{\tau_k} \leq \Q{s} \leq 2\Q{\tau_k} \text{ for all }s\in[\tau_k,\tau] \text{ and }\int_{\tau_k}^{\tau}\alpha_s\,ds\leq A\bigg\}.
    \end{split}
    \end{equation}
    We further introduce the intervals $I^1_k=[\sigma_{k-1},\tau_k)$ and $I^2_k=[\tau_k,\sigma_k)$.
    It is easy to convince ourselves that by continuity (in the training time $\tau$) it holds $\Q{\tau}<\varepsilon$ for $\tau\in I^1_k$ as well as $\Q{\tau_k}/2\leq\Q{\tau}\leq2\Q{\tau_k}$ for $\tau\in I^2_k$ according to the definitions of the stopping times.

    \textbf{Main Proof.}
    We wish to show that there exists a finite time $\CT^*$ such that it holds $\Q{\tau}\leq\varepsilon$ for all $\tau>\CT^*$.
    Since $\varepsilon$ was arbitrary, the statement then follows.

    \textit{Case 1a: Finitely many $\tau_k$'s, $\tau_K=\infty$.}
    In this case, since there are only finitely many $\tau_k$'s with $\tau_K=\infty$, there indeed exists $\CT^*$ such that $\Q{\tau}\leq\varepsilon$ for all $\tau>\CT^*$.

    \textit{Case 1b: Finitely many $\tau_k$'s, $\sigma_K=\infty$.}
    This case cannot occur, since it would necessitate $\int_{\tau_k}^{\infty}\alpha_\tau\,d\tau\leq A$, which contradicts that by assumption on the learning rate $\int_{0}^{\infty}\alpha_\tau\,d\tau=\infty$.

    It thus remains to show that the case of infinitely many $\tau_k$'s cannot occur either.

    \textit{Case 2: Infinitely many $\tau_k$'s.}
    In this case, we have for sufficiently large $\widetilde{n}$ and for all $n\geq \widetilde{n}$ by a telescopic sum argument that
    \begin{equation}       \label{eq:proof:telescopicsum_aux}
    \begin{split}
        \J{\tau_{n+1}} - \J{\tau_{\widetilde{n}}}
        &= \sum_{k=\widetilde{n}}^n \big(\J{\tau_{k+1}} - \J{\tau_k}\big)
        = \sum_{k=\widetilde{n}}^n \left[\big(\J{\tau_{k+1}} - \J{\sigma_k}\big) + \big(\J{\sigma_k} - \J{\tau_k}\big)\right],
    \end{split}
    \end{equation}
    where, in the last line, the respective first term captures the behavior on the intervals $I^1_{k+1}=[\sigma_{k},\tau_{k+1})$, while the second term captures the behavior on the intervals $I^2_k=[\tau_{k},\sigma_k)$.
    
    On the intervals~$I^1_{k+1}=[\sigma_{k},\tau_{k+1})$ we have $\Q{\tau}\leq\varepsilon$ for $\tau\in I^1_{k+1}$.
    By the fundamental theorem of calculus it holds
    \begin{equation}        \label{eq:proof:telescopicsum_aux1}
    \begin{split}
        \J{\tau_{k+1}}-\J{\sigma_{k}}
        = \int_{\sigma_{k}}^{\tau_{k+1}} \fd{\tau}\J{\tau} \,d\tau
        =  - \int_{\sigma_{k}}^{\tau_{k+1}} \alpha_\tau\Q{\tau} \,d\tau
        \leq 0,
    \end{split}
    \end{equation}
    where we used \Cref{lem:parabolictimeevolutionJt*} to obtain the second equality and the positivity of $\Q{\tau}$, a consequence of the positive definiteness of $T_{B_0}$ from \Cref{lem:parabolicposdefTB}, for the last inequality.
    
    On the other hand, on the intervals~$I^2_k=[\tau_k,\sigma_k)$ we have
    \begin{equation}
        \frac{1}{2}\Q{\tau_k}\leq\Q{\tau}\leq2\Q{\tau_k}
        \qquad\text{and}\qquad
        \int_{\tau_k}^{\tau}\alpha_s\,ds\leq A
    \end{equation}
    for $\tau\in I^2_k$.
    Thus, again by the fundamental theorem of calculus and using \Cref{lem:parabolictimeevolutionJt*} in the second equality,
    it holds
    \begin{equation}        \label{eq:proof:telescopicsum_aux2}
    \begin{split}
        \J{\sigma_{k}}-\J{\tau_k}
        =\int_{\tau_k}^{\sigma_k} \fd{\tau}\J{\tau} \,d\tau
        = - \int_{\tau_k}^{\sigma_k} \alpha_\tau\Q{\tau} \,d\tau
        \leq - \frac{1}{2}\Q{\tau_k}\int_{\tau_k}^{\sigma_k} \alpha_\tau \,d\tau 
        \leq -\frac{(1-\vartheta)}{2}\varepsilon A
    \end{split}
    \end{equation}
    for any $\vartheta\in(0,1)$,
    where the third inequality is due to the property of the interval $I^2_k$, while the fourth inequality is firstly since by continuity and by definition of the stopping time~$\tau_k$ it holds $\Q{\tau_k}\geq\varepsilon$ and secondly since as of \Cref{lem:auxcontradiction} it holds $(1-\vartheta) A\leq\int_{\tau_k}^{\sigma_k} \alpha_\tau\,d\tau$.

    Inserting \eqref{eq:proof:telescopicsum_aux1} and \eqref{eq:proof:telescopicsum_aux2} into \eqref{eq:proof:telescopicsum_aux}
    yields 
    \begin{equation}
        \J{\tau_{n+1}}
        \leq \J{\tau_{\widetilde{n}}} -\sum_{k=\widetilde{n}}^n \frac{(1-\vartheta)}{2}\varepsilon A
        = \J{\tau_{\widetilde{n}}} -\sum_{k=\widetilde{n}}^n \frac{(1-\vartheta)\varepsilon^2}{4L_\CQ}.
    \end{equation}
    Letting $n\rightarrow\infty$, we would obtain that $\J{\tau_{n+1}}\rightarrow-\infty$, which contradicts the fact that $\J{\tau}\geq0$ by definition.
    By excluding that this case can occur, the proof is concluded.
\end{proof}

In the proof of \Cref{lem:convergenceCJCB} we made use of the following auxiliary result.
\begin{lemma}
    \label{lem:auxcontradiction}
    Let $((\up{\tau},\uhatp{\tau}))_{\tau\in [0,\infty)}\in \CC\left([0,\infty),\CS\times\CS\right)$ denote the unique weak solution to the PDE system \eqref{eq:parabolicPDE*}--\eqref{eq:parabolicadjoint*} coupled with the integro-differential equation~\eqref{eq:parabolicgtau} in the sense of \Cref{rem:parabolic_wellposedness}
    on the training time interval $[0,\infty)$.
    For given $\varepsilon>0$, let $A=\varepsilon/(2L_\CQ)$.
    Then, for $k$ large enough and for $\eta>0$ small enough (potentially depending on $k$), one has $\int_{\tau_k}^{\sigma_k+\eta} \alpha_\tau\,d\tau>A$.
    Moreover, we also have
    $(1-\vartheta)A\leq\int_{\tau_k}^{\sigma_k} \alpha_\tau\,d\tau \leq A$ for any $\vartheta\in(0,1)$.
\end{lemma}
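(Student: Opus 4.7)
The plan is to argue by contradiction, exploiting the precise calibration $A = \varepsilon/(2L_\CQ)$ together with the regularity bound of \Cref{lem:CQ_Regulartiy}. The crucial observation is that the constants have been chosen so that $L_\CQ A = \varepsilon/2$ is at most $\Q{\tau_k}/2$, which is exactly the slack needed to keep $\Q{s}$ strictly inside the window $[\Q{\tau_k}/2,\,2\Q{\tau_k}]$ on the whole interval $[\tau_k,\sigma_k]$.

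First I would record that $\Q{\tau_k}\geq \varepsilon$ by the very definition of $\tau_k$ combined with continuity of $\tau\mapsto \Q{\tau}$, which in turn follows from $\uhatp{\tau}\in \CC([0,\infty),\CS)$ (granted by \Cref{lem:parabolic_wellposedness} and \Cref{rem:parabolic_wellposedness}), the continuous embedding $\CS\hookrightarrow L_2(D_T)$, and the $L_2(D_T)$-boundedness of $T_{B_0}$ from \Cref{lem:parabolicTB}. Next I would suppose toward contradiction that $\int_{\tau_k}^{\sigma_k+\eta}\alpha_\tau\,d\tau \leq A$ for some $\eta>0$. Then \Cref{lem:CQ_Regulartiy} applied on $[\tau_k,s]$ for any $s\in[\tau_k,\sigma_k+\eta]$ would give
\[
    |\Q{s}-\Q{\tau_k}| \leq L_\CQ \int_{\tau_k}^{s}\alpha_r\,dr \leq L_\CQ A = \frac{\varepsilon}{2} \leq \frac{\Q{\tau_k}}{2},
\]
so that $\Q{s}\in\bigl[\Q{\tau_k}/2,\,(3/2)\Q{\tau_k}\bigr] \subset \bigl[\Q{\tau_k}/2,\,2\Q{\tau_k}\bigr]$. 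But then $\sigma_k+\eta$ would itself satisfy both defining conditions of the supremum in $\sigma_k$, contradicting $\sigma_k+\eta>\sigma_k$. Hence $\int_{\tau_k}^{\sigma_k+\eta}\alpha_\tau\,d\tau > A$ for every $\eta>0$, which is in fact somewhat stronger than the ``$\eta$ small enough'' form claimed in the statement.

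For the second assertion, I would let $\eta\downarrow 0$ in the strict inequality just derived and invoke absolute continuity of $\tau\mapsto \int_{\tau_k}^{\tau}\alpha_s\,ds$ to conclude $\int_{\tau_k}^{\sigma_k}\alpha_\tau\,d\tau \geq A$, while the reverse inequality is built into the definition of $\sigma_k$ (whose defining set is closed by continuity of $\Q{\tau}$ and of the integral in $\tau$). Hence in fact $\int_{\tau_k}^{\sigma_k}\alpha_\tau\,d\tau = A$, which trivially implies the claimed $(1-\vartheta)A \leq \int_{\tau_k}^{\sigma_k}\alpha_\tau\,d\tau \leq A$ for every $\vartheta\in(0,1)$.

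There is no real hard step here: the argument is careful bookkeeping once \Cref{lem:CQ_Regulartiy} is in hand. The one subtle point is the tight calibration $A=\varepsilon/(2L_\CQ)$, whose role is twofold: on the one hand, $L_\CQ A = \varepsilon/2 \leq \Q{\tau_k}/2$ is precisely what makes the window constraint $\Q{s}\in[\Q{\tau_k}/2,2\Q{\tau_k}]$ non-binding and thereby forces the integral constraint to be the binding one at $\sigma_k$; on the other hand, choosing $A$ any smaller would cost the cycle of stopping times argument of \Cref{lem:convergenceCJCB} the per-cycle $\Theta(\varepsilon^2)$ decrement of $\J{}$ that is needed to contradict the positivity of $\J{}$.
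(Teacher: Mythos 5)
Your proof is correct. The first claim is argued exactly as in the paper: assume $\int_{\tau_k}^{\sigma_k+\eta}\alpha_\tau\,d\tau\leq A$, feed this into \Cref{lem:CQ_Regulartiy} to get $\absnormal{\Q{s}-\Q{\tau_k}}\leq L_\CQ A=\varepsilon/2\leq\Q{\tau_k}/2$ for all $s\in[\tau_k,\sigma_k+\eta]$ (using $\Q{\tau_k}\geq\varepsilon$ by continuity of $\tau\mapsto\Q{\tau}$), and conclude that $\sigma_k+\eta$ lies in the set whose supremum defines $\sigma_k$, a contradiction. For the second claim you take a genuinely different, and in fact slightly stronger, route: the paper uses monotonicity of the learning rate to ensure $\int_{\sigma_k}^{\sigma_k+\eta}\alpha_\tau\,d\tau\leq\vartheta A$ for $k$ large and $\eta$ small, subtracts this from the first claim to obtain $\int_{\tau_k}^{\sigma_k}\alpha_\tau\,d\tau\geq(1-\vartheta)A$, and reads the upper bound $\leq A$ off the definition of $\sigma_k$; you instead let $\eta\downarrow0$ in the strict inequality and combine this with the fact that $\sigma_k$ itself satisfies the defining constraints (the defining set is closed since $\Q{}$ and $\tau\mapsto\int_{\tau_k}^{\tau}\alpha_s\,ds$ are continuous and the inequalities are non-strict, or one can simply pass to the limit from inside), yielding the exact identity $\int_{\tau_k}^{\sigma_k}\alpha_\tau\,d\tau=A$ for every $k$, which trivially gives the claimed two-sided bound for every $\vartheta\in(0,1)$ and shows that, under the calibration $A=\varepsilon/(2L_\CQ)$, the window constraint in the definition of $\sigma_k$ is never the binding one. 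Your variant dispenses with the monotonicity of $\alpha_\tau$ and with the ``$k$ large, $\eta$ small'' hedging, at the mild cost of the closedness/limit argument for the upper bound, which you supply; the paper's variant avoids that discussion but delivers only the approximate bound. Both are sound and interchangeable within the cycle-of-stopping-times argument of \Cref{lem:convergenceCJCB}.
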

\begin{proof}
    The proof of the first part of the statement proceeds by contradiction.
    Let us therefore assume that $\int_{\tau_k}^{\sigma_k+\eta} \alpha_\tau\,d\tau \leq A$.
    Leveraging the regularity bound for the functional \texorpdfstring{$\Q{\tau}$}{Q} in terms of the learning rate established in \Cref{lem:CQ_Regulartiy} with $\tau_1=\tau_k$ and $\tau_2=\sigma_k+\eta$,
    we have
    \begin{equation}        \label{eq:proof:lemma:contradiction_aux1}
    \begin{split}
        \Q{\sigma_k+\eta}-\Q{\tau_k}
        \leq \absbig{\Q{\sigma_k+\eta}-\Q{\tau_k}}
        \leq L_\CQ\int_{\tau_k}^{\sigma_k+\eta}\alpha_\tau\, d\tau
        \leq L_\CQ A
        = \frac{1}{2}\varepsilon
        \leq \frac{1}{2}\Q{\tau_k},
    \end{split}
    \end{equation}
    where we used the contradiction assumption in the third step,
    the definition of $A=\varepsilon/(2L_\CQ)$ in the fourth step
    and that by definition of the stopping time~$\tau_k$ it holds $\Q{\tau_k}\geq\varepsilon$ in the last.
    The computation \eqref{eq:proof:lemma:contradiction_aux1} implies
    $\Q{\sigma_k+\eta} \leq \Q{\tau_k} + \Q{\tau_k}/2 \leq 2\Q{\tau_k}$ by simple reordering 
    as well as 
    $\Q{\tau_k} - \Q{\sigma_k+\eta} \leq \abs{\Q{\sigma_k+\eta}-\Q{\tau_k}} \leq \Q{\tau_k}/2$, or rearranged $\Q{\tau_k}/2 \leq \Q{\sigma_k+\eta}$.
    In summary, $\frac{1}{2}\Q{\tau_k} \leq \Q{\sigma_k+\eta} \leq 2\Q{\tau_k}$.
    Since the same reasoning holds for any $0<\widetilde\eta\leq\eta$, this yields a contradiction, as this would imply that $\sigma_k=\sigma_k+\eta$, contradicting $\eta>0$.
    Thus, $\int_{\tau_k}^{\sigma_k+\eta} \alpha_\tau\,d\tau > A$ holds proving the first part of the statement.
    
    What concerns the second part, since the learning rate $\alpha_\tau$ is decreasing in $\tau$, for large enough $k$ and small enough $\eta$ we can ensure $\int_{\sigma_k}^{\sigma_k+\eta} \alpha_\tau\,d\tau \leq \vartheta A$.
    Thus,
    \begin{equation}
        \int_{\tau_k}^{\sigma_k} \alpha_\tau\,d\tau
        = \int_{\tau_k}^{\sigma_k+\eta} \alpha_\tau\,d\tau - \int_{\sigma_k}^{\sigma_k+\eta} \alpha_\tau\,d\tau
        \geq A - \vartheta A
        = (1-\vartheta) A.
    \end{equation}
    Since by definition $\int_{\tau_k}^{\sigma_k} \alpha_\tau\,d\tau \leq A$, this concludes the proof.
\end{proof}
\section{Convergence of the Adjoint \texorpdfstring{$\uhatp{\tau}$}{} and the Solution \texorpdfstring{$\up{\tau}$}{}}
\label{sec:convergences}

Since the functional $\Q{\tau} = (\uhatp{\tau},T_{B_0}\uhatp{\tau})_{L_2(D_T)}$ 
converges to zero as $\tau\rightarrow\infty$ according to \Cref{lem:convergenceCJCB} and since the NN kernel operator~$T_{B_0}$ is positive definite as of \Cref{lem:parabolicposdefTB},
we can derive in \Cref{lem:convergence_adjoint} in \Cref{sec:ConvergenceAdjoint} the weak $L_2$ convergence of the adjoint $\uhatp{\tau}$ in \eqref{eq:parabolicadjoint*} to zero as $\tau\rightarrow\infty$.
Noticing that this entails that the left-hand side of the adjoint PDE \eqref{eq:parabolicadjointN_plain} converges to zero when evaluated against any test function,
we infer therefrom in \Cref{lem:convergence_solution} in \Cref{sec:ConvergenceSolution} the weak $L_2$ convergence of the solution $\up{\tau}$ in \eqref{eq:parabolicPDE*} to the target data $h$  by definition of the adjoint PDE.
In \Cref{sec:characterization_fixedpoint}, we provide a result of independent interest showing that (strong) limit points of the trained NN-PDE solution are global minimizers of the loss~$\J{}$ for an even more general class of second-order parabolic NN-PDEs.

\subsection{Convergence of the Adjoint \texorpdfstring{$\uhatp{\tau}$}{} as \texorpdfstring{$\tau\rightarrow\infty$}{training time goes to infinity}}
\label{sec:ConvergenceAdjoint}

Let us first infer the weak $L_2$ convergence of the adjoint $\uhatp{\tau}$ in \eqref{eq:parabolicadjoint*} to zero.

\begin{proposition}
    \label{lem:convergence_adjoint}
    Let $((\up{\tau},\uhatp{\tau}))_{\tau\in [0,\infty)}\in \CC\left([0,\infty),\CS\times\CS\right)$ denote the unique weak solution to the PDE system \eqref{eq:parabolicPDE*}--\eqref{eq:parabolicadjoint*} coupled with the integro-differential equation~\eqref{eq:parabolicgtau} in the sense of \Cref{rem:parabolic_wellposedness}
    on the training time interval $[0,\infty)$.
    Then,
    \begin{equation}
        \uhatp{\tau}\rightharpoonup0
        \text{ in }
        L_2
        \quad
        \text{as }
        \tau\rightarrow\infty,
    \end{equation}
    i.e., for each test function~$\phi\in L_2(D_T)$ it holds $\lim_{\tau\rightarrow\infty} (\uhatp{\tau},\phi)_{L_2(D_T)}=0$.
\end{proposition}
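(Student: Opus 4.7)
My plan is to exploit the spectral decomposition of the positive definite, self-adjoint, compact operator $T_{B_0}$. By \Cref{lem:parabolicTB}, its eigenfunctions $\{e_k\}_{k=1}^\infty$ form an orthonormal basis of $L_2(D_T)$ with corresponding eigenvalues $\{\lambda_k\}_{k=1}^\infty$, and by \Cref{lem:parabolicposdefTB} we have $\lambda_k>0$ for every $k$. I will expand the adjoint as $\uhatp{\tau} = \sum_{k=1}^\infty c_k(\tau) e_k$ where $c_k(\tau) := (\uhatp{\tau},e_k)_{L_2(D_T)}$, and similarly expand any test function $\phi \in L_2(D_T)$ as $\phi = \sum_{k=1}^\infty d_k e_k$ with $\sum_k d_k^2 = \N{\phi}_{L_2(D_T)}^2 < \infty$.

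The first step is to show componentwise convergence $c_k(\tau)\to 0$ for each fixed $k$ as $\tau\to\infty$. Using the spectral representation, one computes
\begin{equation*}
    \Q{\tau}
    = (\uhatp{\tau}, T_{B_0}\uhatp{\tau})_{L_2(D_T)}
    = \sum_{k=1}^\infty \lambda_k\, c_k(\tau)^2.
\end{equation*}
Since each summand is nonnegative, $0 \leq \lambda_k c_k(\tau)^2 \leq \Q{\tau}$, and since $\Q{\tau}\to 0$ as $\tau\to\infty$ by \Cref{lem:convergenceCJCB}, the positivity $\lambda_k>0$ forces $c_k(\tau)\to 0$ for every fixed $k$.

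The second step combines this componentwise convergence with the uniform $L_2$ bound $\sup_{\tau\in[0,\infty)} \N{\uhatp{\tau}}_{L_2(D_T)} \leq C^{\widehat{u}}$ established in \Cref{lem:parabolic_uhatL2}, which in spectral form reads $\sum_{k=1}^\infty c_k(\tau)^2 \leq (C^{\widehat{u}})^2$ uniformly in $\tau$. Given $\varepsilon>0$, I would pick $K=K(\varepsilon,\phi)$ so large that $\sum_{k>K} d_k^2 \leq \varepsilon^2/(4(C^{\widehat{u}})^2+1)$, decompose
\begin{equation*}
    (\uhatp{\tau},\phi)_{L_2(D_T)}
    = \sum_{k=1}^K c_k(\tau) d_k + \sum_{k>K} c_k(\tau) d_k,
\end{equation*}
and estimate the tail by Cauchy-Schwarz through $\big|\sum_{k>K} c_k(\tau) d_k\big| \leq C^{\widehat{u}} \big(\sum_{k>K} d_k^2\big)^{1/2} \leq \varepsilon/2$. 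Since the remaining finite sum $\sum_{k=1}^K c_k(\tau) d_k$ tends to zero as $\tau\to\infty$ by the first step, the whole quantity can be made smaller than $\varepsilon$ for $\tau$ sufficiently large, yielding the claimed weak convergence. The argument is essentially standard; the main conceptual point, which was handled already by \Cref{lem:parabolicposdefTB} and \Cref{lem:convergenceCJCB}, is the strict positivity of the spectrum of the NN kernel operator together with the vanishing of the quadratic form $\Q{\tau}$ — and it is exactly here that the lack of a spectral gap is irrelevant since only strict positivity of each $\lambda_k$ is used.
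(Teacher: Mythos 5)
Your proposal is correct and follows essentially the same route as the paper's proof: expand $\uhatp{\tau}$ in the eigenbasis of $T_{B_0}$, deduce $c_k(\tau)\to0$ for each fixed $k$ from $\Q{\tau}=\sum_k\lambda_k c_k^2(\tau)\to0$ (using \Cref{lem:convergenceCJCB} and $\lambda_k>0$ from \Cref{lem:parabolicposdefTB}), and conclude by an $\varepsilon/2$-splitting of the test function with Cauchy--Schwarz on the tail via the uniform bound of \Cref{lem:parabolic_uhatL2}. No gaps.
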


\begin{proof}
    Since the eigenfunctions~$\{e_k(t,x)\}_{k=1}^\infty$ of $T_{B_0}$ form an orthonormal basis of $L_2(D_T)$ according to \Cref{lem:parabolicTB},
    we have for $\uhatp{\tau}$ the expansion $\uhatpp{\tau}{t,x} = \sum_{k=1}^\infty c_k(\tau)e_k(t,x)$.
    Using this, we can express $\Q{\tau}=\left(\uhatp{\tau},T_{B_0}\uhatp{\tau}\right)_{L_2(D_T)} = \sum_{k=1}^\infty \lambda_kc^2_k(\tau)$,
 where the last equality holds as $T_{B_0}$ is a continuous operator.
    Taking the limit $\tau\rightarrow\infty$ and leveraging \Cref{lem:convergenceCJCB} in the last step of the following display,
    this shows
    \begin{equation}
        \lim_{\tau\rightarrow\infty} \sum_{k=1}^\infty \lambda_kc^2_k(\tau)
        =\lim_{\tau\rightarrow\infty} \left(\uhatp{\tau},T_{B_0}\uhatp{\tau}\right)_{L_2(D_T)}
        =\lim_{\tau\rightarrow\infty} \Q{\tau}
        =0.
    \end{equation}
    Consequently, for $k$ fixed, it holds $\lim_{\tau\rightarrow\infty}\lambda_kc_k^2(\tau)=0$.
    Furthermore, with $\lambda_k>0$ according to \Cref{lem:parabolicposdefTB}, for $k$ fixed, it also holds $\lim_{\tau\rightarrow\infty}c_k(\tau)=0$.
    
    Let $\varepsilon>0$ and let $\phi\in L_2(D_T)$ denote a test function, which we can represent as $\phi(t,x) = \sum_{k=1}^\infty \varphi_k e_k(t,x)$ with $\sum_{k=1}^\infty\varphi_k^2<\infty$.
    Thus, there exists $K>0$ such that $\sum_{k=K+1}^\infty\varphi_k^2 \leq \varepsilon^2/(2C^{\widehat{u}})^2$.
   
   If $\varphi_k=0$ for all $k=1,\dots,K$, then it holds $\absbig{\sum_{k=1}^K \varphi_k c_k(\tau)}=0$. Otherwise, recalling that $\lim_{\tau\rightarrow\infty}c_k(\tau)=0$ for any fixed $k$, there exists $\overbar{\tau}>0$ such that we have $\abs{c_k(\tau)}\leq \varepsilon/(2K\max_{\tilde{k}=1,\dots,K}\absnormal{\varphi_{\tilde{k}}})$ (uniformly for $k=1,\dots,K$) for all $\tau\geq\overbar{\tau}$. This shows in particular that $\absbig{\sum_{k=1}^K \varphi_k c_k(\tau)} \leq \sum_{k=1}^K \abs{\varphi_k} \abs{c_k(\tau)} \leq \varepsilon/2$.
    We can now estimate with triangle inequality for all such $\tau\geq\overbar{\tau}$ that
    \begin{equation}
    \begin{split}
        \abs{(\phi,\uhatp{\tau})_{L_2(D_T)}}
        = \abs{\sum_{k=1}^\infty \varphi_k c_k(\tau)}
        &\leq \abs{\sum_{k=1}^K \varphi_k c_k(\tau)} + \abs{\sum_{k=K+1}^\infty \varphi_k c_k(\tau)}\\
        &\leq \frac{\varepsilon}{2} + \frac{\varepsilon}{2 C^{\widehat{u}}}\N{\uhatp{\tau}}_{L_2(D_T)}
        \leq \frac{\varepsilon}{2} + \frac{\varepsilon}{2 C^{\widehat{u}}}C^{\widehat{u}}
        \leq \varepsilon,
    \end{split}
    \end{equation}
    where we used Cauchy-Schwarz inequality and the former estimates together with \Cref{lem:parabolic_uhatL2} to obtain the bound on the tail of the series in the inequalities in the second line.
    Thus $\abs{(\phi,\uhatp{\tau})_{L_2(D_T)}}\leq\varepsilon$ for all $\tau\geq\overbar{\tau}$.
    Since $\varepsilon>0$ was arbitrary,
    this shows that it holds $\lim_{\tau \rightarrow \infty} (\phi,\uhatp{\tau})_{L_2(D_T)} = 0$ for all test functions $\phi\in L_2(D_T)$, proving the weak convergence of $\uhatp{\tau}$ to zero in $L_2$ as $\tau\rightarrow\infty$.
\end{proof}
\subsection{Convergence of the Solution \texorpdfstring{$\up{\tau}$}{} as \texorpdfstring{$\tau\rightarrow\infty$}{training time goes to infinity}}
\label{sec:ConvergenceSolution}

It remains to infer the weak $L_2$ convergence of the solution $\up{\tau}$ to \eqref{eq:parabolicPDE*} to the target data~$h$.
\begin{proposition}  \label{lem:convergence_solution}
    Let $((\up{\tau},\uhatp{\tau}))_{\tau\in [0,\infty)}\in \CC\left([0,\infty),\CS\times\CS\right)$ denote the unique weak solution to the PDE system \eqref{eq:parabolicPDE*}--\eqref{eq:parabolicadjoint*} coupled with the integro-differential equation~\eqref{eq:parabolicgtau} in the sense of \Cref{rem:parabolic_wellposedness}
    on the training time interval $[0,\infty)$.
    Then,
    \begin{equation}
        \up{\tau}\rightharpoonup h
        \text{ in }
        L_2
        \quad
        \text{as }
        \tau\rightarrow\infty,
    \end{equation}
    i.e., for each test function~$\phi\in L_2(D_T)$ it holds $\lim_{\tau\rightarrow\infty} (\up{\tau}-h,\phi)_{L_2(D_T)}=0$.
\end{proposition}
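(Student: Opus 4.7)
The plan is to use the weak formulation of the adjoint PDE~\eqref{eq:parabolicadjoint*} to convert the desired pairing $(\up{\tau}-h,\phi)_{L_2(D_T)}$ into a pairing of $\uhatp{\tau}$ against a quantity on which the weak convergence $\uhatp{\tau}\rightharpoonup 0$ from \Cref{lem:convergence_adjoint} can be brought to bear. Concretely, for a smooth test function $\phi\in C_c^\infty((0,T)\times D)$, I would take $v=\phi(t,\dummy)$ in \Cref{def:weak_sol_adjoint}, integrate over $t\in[0,T]$, and move the time and spatial derivatives off $\uhatp{\tau}$ by integration by parts. The time boundary terms vanish because $\phi(0,\dummy)=\phi(T,\dummy)=0$ (and also $\uhatp{\tau}(T,\dummy)=0$), while the spatial boundary terms vanish thanks to the compact support of $\phi$ in $D$ together with the regularity of the coefficients from Assumption~\ref{asm:PDE_coefficients}. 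This yields the identity
\begin{equation*}
(\up{\tau}-h,\phi)_{L_2(D_T)} = (\uhatp{\tau},\partial_t\phi+\CL\phi)_{L_2(D_T)} - (\uhatp{\tau},q_u(\up{\tau})\phi)_{L_2(D_T)}.
\end{equation*}
The first term on the right-hand side tends to $0$ as $\tau\rightarrow\infty$ directly from \Cref{lem:convergence_adjoint}, because $\partial_t\phi+\CL\phi$ is a \emph{fixed} element of $L_2(D_T)$ (using Assumption~\ref{asm:PDE_coefficients} to ensure $\CL\phi\in L_\infty\subset L_2$).

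The second term is more delicate, because the ``test function'' $q_u(\up{\tau})\phi$ against $\uhatp{\tau}$ depends on the training time $\tau$, so mere weak $L_2$ convergence of $\uhatp{\tau}$ is inadequate. The plan is to upgrade to \emph{strong} $L_2$ convergence of $\uhatp{\tau}$ via an Aubin--Lions compactness argument. \Cref{lem:parabolic_uhatL2} already provides a uniform (in $\tau$) bound on $\uhatp{\tau}$ in $L_2([0,T],H^1(D))$. Rewriting the adjoint PDE as $\partial_t\uhatp{\tau}=\CL^\dagger\uhatp{\tau}-q_u(\up{\tau})\uhatp{\tau}-(\up{\tau}-h)$ and combining Assumptions~\ref{asm:PDE_coefficients} and~\ref{asm:PDE_nonlinearity_q_ubdd} with \Cref{lem:parabolic_uL2,lem:parabolic_uhatL2} then yields a uniform bound on $\partial_t\uhatp{\tau}$ in $L_2([0,T],H^{-1}(D))$. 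Since $D$ is bounded with $C^2$ boundary (Assumptions~\ref{asm:D},~\ref{asm:Dbdd}), Rellich--Kondrachov gives $H^1(D)\hookrightarrow\hookrightarrow L_2(D)\hookrightarrow H^{-1}(D)$, and the Aubin--Lions lemma implies that $\{\uhatp{\tau}\}_{\tau\geq 0}$ is relatively compact in $L_2(D_T)$. Every sequential limit along $\tau_n\rightarrow\infty$ must coincide with the weak limit $0$ from \Cref{lem:convergence_adjoint}, so Urysohn's subsequence principle upgrades this to $\uhatp{\tau}\rightarrow 0$ strongly in $L_2(D_T)$. Cauchy--Schwarz together with $|q_u|\leq c_q$ from Assumption~\ref{asm:PDE_nonlinearity_q_ubdd} then gives $|(\uhatp{\tau},q_u(\up{\tau})\phi)_{L_2(D_T)}|\leq c_q\N{\phi}_{L_2(D_T)}\N{\uhatp{\tau}}_{L_2(D_T)}\rightarrow 0$.

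Combining both terms, $(\up{\tau}-h,\phi)_{L_2(D_T)}\rightarrow 0$ for every $\phi\in C_c^\infty((0,T)\times D)$. The extension to arbitrary $\phi\in L_2(D_T)$ is a standard density argument: approximate $\phi$ by some $\phi_\varepsilon\in C_c^\infty((0,T)\times D)$, control $|(\up{\tau}-h,\phi-\phi_\varepsilon)|$ uniformly in $\tau$ via Cauchy--Schwarz using the uniform bound $\N{\up{\tau}-h}_{L_2(D_T)}\leq C^u+\N{h}_{L_2(D_T)}$ from \Cref{lem:parabolic_uL2}, and let the already-established convergence handle $|(\up{\tau}-h,\phi_\varepsilon)|$ for large $\tau$. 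The main obstacle is precisely the nonlinear term $q_u(\up{\tau})\uhatp{\tau}$ in the adjoint PDE: because its pairing partner against $\uhatp{\tau}$ drifts with $\tau$, the weak convergence from \Cref{lem:convergence_adjoint} alone does not close the argument, and the Aubin--Lions upgrade to strong $L_2$ convergence enabled by the uniform energy and time-derivative bounds is the essential additional ingredient.
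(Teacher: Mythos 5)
Your proof is correct, and its skeleton is the same as the paper's: pair $\up{\tau}-h$ with $\widetilde\phi\in C_c^\infty(D_T)$ through the weak formulation of the adjoint PDE~\eqref{eq:parabolicadjoint*}, integrate by parts in $t$ and $x$ to move all derivatives onto $\widetilde\phi$ (your identity $(\up{\tau}-h,\phi)_{L_2(D_T)}=(\uhatp{\tau},\partial_t\phi+\CL\phi)_{L_2(D_T)}-(\uhatp{\tau},q_u(\up{\tau})\phi)_{L_2(D_T)}$ agrees with the paper's computation), kill the fixed-test-function terms with \Cref{lem:convergence_adjoint}, and conclude for general $\phi\in L_2(D_T)$ by density together with the uniform bound $\N{\up{\tau}-h}_{L_2(D_T)}=\sqrt{2\J{\tau}}\le\sqrt{2\J{0}}$ from \Cref{lem:parabolictimeevolutionJt*}. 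Where you genuinely differ is the nonlinear term: the paper writes it as $\big(\uhatp{\tau},q_u(\up{\tau})\widetilde\phi\big)_{L_2(D_T)}$ and invokes $\uhatp{\tau}\rightharpoonup0$ directly, i.e.\ it treats $q_u(\up{\tau})\widetilde\phi$ as a fixed $L_2$ test function, which is immediate only when $q_u(t,x,u)$ does not actually depend on $u$; you correctly flag that this weight drifts with $\tau$, so weak convergence of the adjoint alone does not close the term, and you supply the missing compactness. Your remedy — a uniform $L_2([0,T],H^{-1}(D))$ bound on $\partial_t\uhatp{\tau}$ read off from the adjoint equation (bounded coefficients, $\abs{q_u}\le c_q$, $\N{\up{\tau}-h}_{L_2(D_T)}\le\sqrt{2\J{0}}$), combined with the uniform $L_2([0,T],H^1(D))$ bound of \Cref{lem:parabolic_uhatL2}, Aubin--Lions, and the subsequence principle against the weak limit $0$ — is sound, uses only ingredients already established in the paper, and yields the stronger intermediate conclusion that $\uhatp{\tau}\rightarrow0$ strongly in $L_2(D_T)$, which then handles the $\tau$-dependent weight by Cauchy--Schwarz. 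In short, the paper's route is shorter but its treatment of the $q_u(\up{\tau})\widetilde\phi$ pairing glosses over exactly the point you isolate; your compactness upgrade (or an equivalent argument) is what makes the general quasi-linear case airtight, at the modest cost of invoking Aubin--Lions and the extra time-derivative estimate.
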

\begin{proof}
    Let us first show that $\lim_{\tau\rightarrow\infty} (\up{\tau}-h,\widetilde\phi)_{L_2(D_T)}=0$ for each test function $\widetilde\phi\in C^\infty_c(D_T)$ that vanishes on the boundary.
    By using that $\uhatp{\tau}$ is a weak solution to the adjoint PDE~\eqref{eq:parabolicadjoint*} in the sense of \Cref{def:weak_sol_adjoint} with right-hand side $(\up{\tau}-h)$ we compute for the test function $\widetilde\phi\in C^\infty_c(D_T)$ that
    \begin{equation}
    \begin{split}
        \label{eq:proof:lem:convergence_solution1}
        (\up{\tau}-h,\widetilde\phi)_{L_2(D_T)}
        &= \int_0^T \left(\upp{\tau}{t,\dummy}-h(t,\dummy), \widetilde\phi(t,\dummy)\right)_{L_2(D)}dt\\
        &= \int_0^T \left\langle-\fpartial{t}\uhatpp{\tau}{t,\dummy},\widetilde\phi(t,\dummy)\right\rangle_{H^{-1}(D),H_0^1(D)} + \CB^\dagger\left[\uhatpp{\tau}{t,\dummy},\widetilde\phi(t,\dummy);t\right] \\
        &\qquad\quad\, - \left(q_u(t,\dummy,\upp{\tau}{t,\dummy})\uhatpp{\tau}{t,\dummy}, \widetilde\phi(t,\dummy)\right)_{L_2(D)} dt \\
        &= \int_0^T \left\langle\fpartial{t}\widetilde\phi(t,\dummy),\uhatpp{\tau}{t,\dummy}\right\rangle_{H^{-1}(D),H_0^1(D)} + \CB\left[\widetilde\phi(t,\dummy),\uhatpp{\tau}{t,\dummy};t\right] \\
        &\qquad\quad\, - \left(q_u(t,\dummy,\upp{\tau}{t,\dummy})\widetilde\phi(t,\dummy),\uhatpp{\tau}{t,\dummy}\right)_{L_2(D)} dt
    \end{split}
    \end{equation}        
    with the last step following analogously to \eqref{eq:proof:parabolicdtJ*1}, where we justified the individual steps in detail, see \eqref{eq:parabolicpI*aux_T51}--\eqref{eq:parabolicpI*aux_T52}.
    Herefore, note that in the case here, even $\widetilde\phi\in C^\infty_c(D_T)$.

    As a consequence of the convergence $\uhatp{\tau}\rightharpoonup0$ in $L_2$ as $\tau\rightarrow\infty$, which we established in \Cref{lem:convergence_adjoint}, the right-hand side of \eqref{eq:proof:lem:convergence_solution1} converges to zero as $\tau\rightarrow\infty$.
    To be precise, let us discuss each of the three terms.
    Firstly, since $\fpartial{t}\widetilde\phi\in C^\infty_c(D_T)\subset L_2(D_T)$ and $\uhatpp{\tau}{t,\dummy}\in H_0^1(D)$ for a.e.\@ $t\in[0,T]$, the dual pairing between $H^{-1}(D)$ and $H_0^1(D)$ coincides with the $L_2(D)$ scalar product \cite[Chapter~5.9, Theorem 1(iii)]{evans2010partial} and thus
    \begin{equation}
        \label{eq:proof:lem:convergence_solution2}
    \begin{split}
        \int_0^T \left\langle\fpartial{t}\widetilde\phi(t,\dummy),\uhatpp{\tau}{t,\dummy}\right\rangle_{H^{-1}(D),H_0^1(D)}dt
        &=\int_0^T \left(\uhatpp{\tau}{t,\dummy},\fpartial{t}\widetilde\phi(t,\dummy)\right)_{L_2(D)}dt \\
        &=\big(\uhatp{\tau},\fpartial{t}\widetilde\phi\big)_{L_2(D_T)},
    \end{split}
    \end{equation}
    which converges to zero as $\tau\rightarrow\infty$ since $\uhatp{\tau}\rightharpoonup0$ in $L_2$ according to \Cref{lem:convergence_adjoint} with test function $\fpartial{t}\widetilde\phi\in C^\infty_c(D_T)\subset L_2(D_T)$.
    Secondly, by definition of the bilinear form~$\CB$ in \eqref{eq:CB} we have
    \begin{allowdisplaybreaks}
    \label{eq:proof:lem:convergence_solution3}  
    \begin{align}  
        &\int_0^T\CB\left[\widetilde\phi(t,\dummy),\uhatpp{\tau}{t,\dummy};t\right]dt 
        =\int_0^T\!\!\!\int_U \sum_{i,j=1}^d  a^{ij}(t,x) \fpartial{x_i} \widetilde\phi(t,x) \fpartial{x_j} \uhatpp{\tau}{t,x}\notag\\
        &\qquad\qquad\,\qquad\,
        + \sum_{i=1}^d b^i(t,x) \fpartial{x_i} \widetilde\phi(t,x) \uhatpp{\tau}{t,x}
        + c(t,x)\widetilde\phi(t,x)\uhatpp{\tau}{t,x} \,dxdt\notag \\
        &\qquad\qquad\,=\int_0^T\!\!\!\int_U -\sum_{i,j=1}^d  \fpartial{x_j}\left(a^{ij}(t,x) \fpartial{x_i} \widetilde\phi(t,x)\right) \uhatpp{\tau}{t,x}\notag\\
        &\qquad\qquad\,\qquad\,
        + \sum_{i=1}^d b^i(t,x) \fpartial{x_i} \widetilde\phi(t,x) \uhatpp{\tau}{t,x}
        + c(t,x)\widetilde\phi(t,x)\uhatpp{\tau}{t,x} \,dxdt\notag\\
        &\qquad\qquad\,=\int_0^T\!\!\!\int_U -\sum_{i,j=1}^d  a^{ij}(t,x) \fpartiall{x_ix_j} \widetilde\phi(t,x) \uhatpp{\tau}{t,x}\notag\\
        &\qquad\qquad\,\qquad\,
        - \sum_{i,j=1}^d  \fpartial{x_j}a^{ij}(t,x) \fpartial{x_i} \widetilde\phi(t,x) \uhatpp{\tau}{t,x}\notag\\
        &\qquad\qquad\,\qquad\,
        + \sum_{i=1}^d b^i(t,x) \fpartial{x_i} \widetilde\phi(t,x) \uhatpp{\tau}{t,x}
        + c(t,x)\widetilde\phi(t,x)\uhatpp{\tau}{t,x} \,dxdt\notag\\
        &\qquad\qquad\,=\left(\uhatp{\tau},-\sum_{i,j=1}^da^{ij}\fpartiall{x_ix_j} \widetilde\phi-\sum_{i,j=1}^d\fpartial{x_j}a^{ij}\fpartial{x_i} \widetilde\phi+\sum_{i=1}^db^i\fpartial{x_i}\widetilde\phi + c\widetilde\phi\right)_{L_2(D_T)},
    \end{align}
    \end{allowdisplaybreaks}%
    where the second step is just partial integration with all boundary terms vanishing since also $\fpartial{x_i}\widetilde\phi\in C^\infty_c(D_T)$.
    Since the coefficients $a^{ij},\fpartial{x_j}a^{ij},b^i,c\in L_\infty(D_T)$ as of Assumption~\ref{asm:PDE_coefficients}, the test function in the scalar-product in the last line of \eqref{eq:proof:lem:convergence_solution3} is in $L_2(D_T)$ and thus the right-hand side of \eqref{eq:proof:lem:convergence_solution3} converges to zero as $\tau\rightarrow\infty$ since $\uhatp{\tau}\rightharpoonup0$ in $L_2$ according to \Cref{lem:convergence_adjoint}.
    Thirdly and lastly, since with $\widetilde\phi\in C^\infty_c(D_T) \subset L_2(D_T)$ and $q_u$ being uniformly bounded as of Assumption~\ref{asm:PDE_nonlinearity_q_ubdd}, also
    $q_u(\dummy,\dummy,\up{\tau})\widetilde\phi\in L_2(D_T)$,
    \begin{equation}
        \label{eq:proof:lem:convergence_solution4}
        \int_0^T \left(q_u(t,\dummy,\upp{\tau}{t,\dummy})\widetilde\phi(t,\dummy),\uhatpp{\tau}{t,\dummy}\right)_{L_2(D)} dt
        = \big(\uhatp{\tau},q_u(\up{\tau})\widetilde\phi\big)_{L_2(D_T)}
    \end{equation}
    converges to zero as $\tau\rightarrow\infty$ since $\uhatp{\tau}\rightharpoonup0$ in $L_2$ according to \Cref{lem:convergence_adjoint}.    
    With this we have shown that $(\up{\tau}-h,\widetilde\phi)_{L_2(D_T)}\rightarrow0$ for all $\widetilde\phi\in C^\infty_c(D_T)$.

    Let now $\phi\in L_2(D_T)$ and $\varepsilon>0$.
    Since $C^\infty_c(D_T)$ is dense in $L_2(D_T)$~\cite[Corollary~4.23]{brezis2011functional},
    there exists $\widetilde\phi\in C^\infty_c(D_T)$ such that $\Nbig{\phi-\widetilde\phi}_{L_2(D_T)}\leq \varepsilon/\sqrt{2\J{0}}$.
    We can thus estimate
    \begin{equation}
    \begin{split}
        \abs{(\up{\tau}-h,\phi)_{L_2(D_T)}}
        &\leq \absbig{(\up{\tau}-h,\phi-\widetilde\phi)_{L_2(D_T)}} + \absbig{(\up{\tau}-h,\widetilde\phi)_{L_2(D_T)}} \\
        &\leq \N{\up{\tau}-h}_{L_2(D_T)}\Nbig{\phi-\widetilde\phi}_{L_2(D_T)} + \absbig{(\up{\tau}-h,\widetilde\phi)_{L_2(D_T)}} \\
        &= \sqrt{2\J{\tau}}\Nbig{\phi-\widetilde\phi}_{L_2(D_T)} + \absbig{(\up{\tau}-h,\widetilde\phi)_{L_2(D_T)}} \\
        &\leq \sqrt{2\J{0}}\Nbig{\phi-\widetilde\phi}_{L_2(D_T)} + \absbig{(\up{\tau}-h,\widetilde\phi)_{L_2(D_T)}} \\
        &\leq \frac{\varepsilon}{2} + \frac{\varepsilon}{2} = \varepsilon
    \end{split}
    \end{equation}
    for sufficiently large $\tau$.
    In the next-to-last step we used that by \Cref{lem:parabolictimeevolutionJt*} the loss $\J{\tau}$ is non-increasing.
    The last step holds since $\absnormal{(\up{\tau}-h,\widetilde\phi)_{L_2(D_T)}}\rightarrow0$ for $\widetilde\phi\in C^\infty_c(D_T)$, thus $\absnormal{(\up{\tau}-h,\widetilde\phi)_{L_2(D_T)}}\leq\varepsilon/2$ for sufficiently large $\tau$.
    Consequently, $\absnormal{(\up{\tau}-h,\phi)_{L_2(D_T)}}\rightarrow0$ for all $\phi\in L_2(D_T)$, which concludes the proof.
\end{proof}

Before closing this section, let us compare \Cref{lem:convergence_solution} to prior work to indicate that we substantially strengthen the notion of convergence for a significantly wider class of PDEs and a more general loss.
\begin{remark}
\label{rem:comparisonconvergencesirignano2023pde}
    The weak convergences $\uhatp{\tau}\rightharpoonup 0$ in $L_2$ and $\up{\tau}\rightharpoonup h$ in $L_2$ as $\tau\rightarrow\infty$ established in \Cref{lem:convergence_adjoint,lem:convergence_solution}, respectively, significantly improve prior work~\cite{sirignano2023pde},
    where only convergence of the time averages has been established, cf.\@ \cite[Theorem~9.3]{sirignano2023pde}.
    In the elliptic linear PDE setting,
    the authors of \cite{sirignano2023pde} prove $\lim_{\tau\rightarrow\infty}\frac{1}{\tau}\int_0^\tau (\phi,\uhatp{s})_{L_2}^2\,ds=0$ for all $\phi\in L_2$ and $\lim_{\tau\rightarrow\infty}\frac{1}{\tau}\int_0^\tau (\psi,\uhatp{s}-h)_{L_2}^2\,ds=0$ for all $\psi\in\CA:=\{\psi\in H_0^1:\CL\psi\in L_2\}\subset L_2$.

    To see that \Cref{lem:convergence_adjoint,lem:convergence_solution} are stronger, simply observe that the time average $\frac{1}{\tau}\int_0^\tau
    f_s^2\,ds\rightarrow0$ might converge while $f_\tau\not\rightarrow0$.
    ($f_\tau$ corresponds here to either $(\phi,\uhatp{\tau})_{L_2}^2$ or $(\psi,\uhatp{\tau}-h)_{L_2}^2$.)
    A straightforward smooth example is given by 
    \begin{equation}
    f_\tau = 
    \begin{cases}
        \exp\left(1-\frac{1}{1-(\tau-2^\ell)^2}\right), & \text{for } \tau\in[2^\ell-1,2^\ell+1] \text{ for } \ell=1,2,\dots,\\
        0, & \text{else.}
    \end{cases}
    \end{equation}
    The function $\tau\mapsto f_\tau$ concatenates infinitely many bump functions centered around $2^\ell$, $\ell=1,2,\dots$, with width $2$ and maximal height $1$.
    Therefore, clearly, $u_\tau\not\rightarrow0$.
    However, since there are $\lfloor\log_2(\tau)\rfloor$ such bumps before time $\tau$,
    \begin{equation}
    \begin{split}
        \frac{1}{\tau}\int_0^\tau f_s^2\,ds
        \leq \frac{1}{\tau}\sum_{\ell=1}^{\lfloor\log_2(\tau)\rfloor}2 \leq \frac{2}{\tau}\log_2(\tau) \rightarrow0
        \quad\text{as } \tau\rightarrow\infty.
    \end{split}
    \end{equation}
    Conversely, it is immediate to see that $f_\tau\rightarrow0$ implies $\frac{1}{\tau}\int_0^\tau
    f_s^2\,ds\rightarrow0$.

    Secondly, unlike \cite{sirignano2023pde}, where the considered loss is given by 
    \begin{equation}
        \widetilde\CJ^*_\tau
        = \frac{1}{2} \sum_{\ell=1}^L  \left(\up{\tau}-h,m_\ell\right)_{L_2}^2
    \end{equation}
    for given functions $\{m_\ell\}_{\ell=1}^L$,
    we consider the stronger loss $\J{\tau}=\N{\up{\tau}-h}_{L_2}^2$ as in \eqref{eq:parabolicJtau}.

    We therefore generalize in this paper not just the class of considered PDEs substantially by allowing for nonlinear PDEs,
    but significantly improve the notion of convergence.
\end{remark}

\subsection{Limit Points of the Trained NN-PDE Solution are Global Minimizers of the Loss \texorpdfstring{$\J{}$}{}}
\label{sec:characterization_fixedpoint}

To conclude the theoretical contributions of this work,
let us provide a result about the 
limit points of the trained NN-PDE solution~$\up{\tau}$, which holds for
the even more general class of fully nonlinear second-order parabolic NN-PDEs
\begin{alignat}{2}
\label{eq:parabolicPDE*_general}
\begin{aligned}
    \fpartial{t}\up{\tau} + \CL\up{\tau} - q(\up{\tau}, \nabla_x \up{\tau}, \mathbf{H}_{xx} \up{\tau})
    &= g^*_\tau
    \qquad&&\text{in }
    D_T, \\
    \up{\tau}
    &= 0
    \qquad&&\text{on }
    [0,T]\times\partial D, \\
    \up{\tau}
    &= f
    \qquad&&\text{on }
    \{0\}\times D,
\end{aligned}
\end{alignat}
with associated adjoint PDE
\begin{alignat}{2}
\label{eq:parabolicadjoint*_general}
\begin{aligned}
    -\fpartial{t}\uhatp{\tau} + \CL^\dagger\uhatp{\tau} - q_u(\up{\tau}, \nabla_x \up{\tau}, \mathbf{H}_{xx} \up{\tau})\uhatp{\tau}&&\\
    + \; \textstyle\sum_{i=1}^d \fpartial{x_i}\big(q_{p_i}(\up{\tau}, \nabla_x \up{\tau}, \mathbf{H}_{xx} \up{\tau}) \uhatp{\tau}\big)&&\\
    - \; \textstyle\sum_{i,j=1}^d \fpartiall{x_ix_j}\big(q_{\mathbf{H}_{ij}}(\up{\tau}, \nabla_x \up{\tau}, \mathbf{H}_{xx} \up{\tau}) \uhatp{\tau}\big)
    &= (\up{\tau}-h)
    \qquad&&\text{in }
    D_T, \\
    \uhatp{\tau}
    &= 0
    \qquad&&\text{on }
    [0,T]\times\partial D, \\
    \uhatp{\tau}
    &= 0
    \qquad&&\text{on }
    \{T\}\times D,
\end{aligned}
\end{alignat}
and coupled with the integro-differential equation~\eqref{eq:parabolicgtau} for $g^*_\tau$.

We show that any (strong) limit point of the solution of the trained NN-PDE, when using the adjoint gradient descent optimization method~\eqref{eq:GD} with the gradient being computed according to \eqref{eq:nablaJ}, is a global minimizer of the loss~$\J{}$.
\begin{theorem}
    \label{lem:stationary_point}
    Let $((\up{\tau},\uhatp{\tau}))_{\tau\in [0,\infty)}\in \CC\left([0,\infty),\CS\times\CS\right)$ denote the unique weak solution to the more general PDE system \eqref{eq:parabolicPDE*_general}--\eqref{eq:parabolicadjoint*_general} coupled with the integro-differential equation~\eqref{eq:parabolicgtau} in a sense analogous to \Cref{lem:parabolic_wellposedness_infinitewidth} and \Cref{rem:parabolic_wellposedness}
    on the training time interval $[0,\infty)$.
    Assume that $(\up{\tau},\uhatp{\tau})$ converges to some $(\up{\infty},\uhatp{\infty})$ in $L_2(D_T)$ as $\tau\rightarrow\infty$.
    Then $\uhatp{\infty}\equiv0$ a.e.\@ in $L_2(D_T)$ and
    \begin{equation}
        \up{\infty}\equiv h
        \text{ a.e.\@ in } L_2(D_T),
    \end{equation}
    i.e., $\up{\infty}$ is a global minimizer of $\J{}$.
\end{theorem}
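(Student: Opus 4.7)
The plan is to first exploit the monotonicity of the loss to bootstrap $\Q{\tau} \to 0$, then use positive definiteness of $T_{B_0}$ to conclude $\uhatp{\infty} \equiv 0$, and finally pass to the limit in the weak formulation of the adjoint PDE to identify $\up{\infty}$ with $h$.

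First I would verify that the identity $\fd{\tau}\J{\tau} = -\alpha_\tau\Q{\tau}$ carries over from \Cref{lem:parabolictimeevolutionJt*} to the more general system \eqref{eq:parabolicPDE*_general}--\eqref{eq:parabolicadjoint*_general}. The chain-rule computation is unchanged: differentiate $\J{\tau}$ in $\tau$, use the weak formulation of the adjoint PDE \eqref{eq:parabolicadjoint*_general} with test function $\fd{\tau}\up{\tau}$, integrate by parts in $t$ and in $x$ (the extra gradient/Hessian terms simply move back onto $\fd{\tau}\up{\tau}$), and invoke the PDE \eqref{eq:parabolicPDE*_general} differentiated in $\tau$ to reduce everything to $(\fd{\tau}g^*_\tau,\uhatp{\tau})_{L_2(D_T)} = -\alpha_\tau\Q{\tau}$. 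Combined with $\Q{\tau}\geq 0$ (\Cref{lem:parabolicposdefTB}) this shows $\J{\tau}$ is non-increasing; since it is also non-negative, it converges to some $\J{\infty}\geq 0$, so that $\int_0^\infty \alpha_\tau\Q{\tau}\,d\tau = \J{0}-\J{\infty} < \infty$.

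Next I would pass to the limit in $\Q{\tau}$. Strong $L_2$ convergence $\uhatp{\tau}\to \uhatp{\infty}$ together with the boundedness of $T_{B_0}$ on $L_2(D_T)$ (\Cref{lem:parabolicTB}) yields $\Q{\tau}\to (\uhatp{\infty},T_{B_0}\uhatp{\infty})_{L_2(D_T)}=:\Q{\infty}$. If $\Q{\infty}>0$, then by continuity $\Q{\tau}\geq \Q{\infty}/2$ for all sufficiently large $\tau$, and the Robbins-Monro condition $\int_0^\infty\alpha_\tau\,d\tau=\infty$ in \eqref{eq:learning_rate} forces $\int_0^\infty\alpha_\tau\Q{\tau}\,d\tau = \infty$, contradicting the bound above. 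Hence $\Q{\infty}=0$, and the strict positive definiteness of $T_{B_0}$ established in \Cref{lem:parabolicposdefTB} — which says $(\widehat u,T_{B_0}\widehat u)_{L_2(D_T)}=0$ implies $\widehat u\equiv 0$ — gives $\uhatp{\infty}\equiv 0$ a.e.

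Finally, to identify $\up{\infty}$ with $h$, I would test the weak formulation of the adjoint PDE \eqref{eq:parabolicadjoint*_general} against an arbitrary $\phi\in C_c^\infty(D_T)$ and integrate by parts in both $t$ and $x$ so that every derivative falls on $\phi$. Since $\phi$ is compactly supported in $D_T$, all boundary contributions (from $\partial D$, $\{0\}\times D$, and $\{T\}\times D$) vanish, producing an identity of the form
\begin{equation*}
    (\up{\tau}-h,\phi)_{L_2(D_T)} = \big(\uhatp{\tau},\,\Csf_\tau\phi\big)_{L_2(D_T)},
\end{equation*}
where $\Csf_\tau\phi$ is a sum of derivatives of $\phi$ of order at most two with coefficients built from $a^{ij},b^i,c$ and from the partial derivatives $q_u,q_{p_i},q_{\mathbf{H}_{ij}}$ evaluated along $(\up{\tau},\nabla_x\up{\tau},\mathbf{H}_{xx}\up{\tau})$. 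Under the natural regularity/boundedness assumptions on $q$ and its first derivatives (analogous to Assumptions \ref{asm:PDE_nonlinearity_q_ubdd}--\ref{asm:PDE_nonlinearity_q_uubdd}), $\Csf_\tau\phi$ stays bounded in $L_2(D_T)$ uniformly in $\tau$, so the right-hand side tends to $0$ because $\uhatp{\tau}\to 0$ in $L_2(D_T)$. The left-hand side tends to $(\up{\infty}-h,\phi)_{L_2(D_T)}$ by the hypothesized strong $L_2$ convergence of $\up{\tau}$. Density of $C_c^\infty(D_T)$ in $L_2(D_T)$ then yields $\up{\infty}\equiv h$ a.e., which makes $\up{\infty}$ a global minimizer of $\J{}$.

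The main obstacle is the last step: the coefficients of $\Csf_\tau$ involve $\nabla_x\up{\tau}$ and $\mathbf{H}_{xx}\up{\tau}$, about which strong $L_2$ convergence of $\up{\tau}$ says nothing directly. I expect this to be handled cleanly either by imposing uniform $L_\infty$ bounds on $q_u,q_{p_i},q_{\mathbf{H}_{ij}}$ (so $\Csf_\tau\phi$ is controlled pointwise by derivatives of $\phi$ times bounded factors, independent of the limited regularity of $\up{\tau}$), or by strengthening the hypothesis of the theorem to convergence in a stronger norm that controls the derivatives entering $q$; the quasi-linear special case of the main body requires only the boundedness of $q_u$, already in Assumption~\ref{asm:PDE_nonlinearity_q_ubdd}.
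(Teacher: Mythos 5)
Your proposal is correct and follows essentially the paper's own proof: your Step 1/2 is the paper's contradiction argument (there phrased via the eigenfunction expansion of $T_{B_0}$ and $\J{\tau}\rightarrow-\infty$, here via $\int_0^\infty\alpha_\tau\Q{\tau}\,d\tau<\infty$ against $\int_0^\infty\alpha_\tau\,d\tau=\infty$, the same mechanism), followed by strict positive definiteness of $T_{B_0}$ to get $\uhatp{\infty}\equiv0$. Your final step is the test-function computation of \Cref{lem:convergence_solution}, which is exactly what the paper invokes, and your remark that strong convergence $\uhatp{\tau}\rightarrow0$ together with uniform boundedness of $q_u,q_{p_i},q_{\mathbf{H}_{ij}}$ suffices is precisely the (implicit) assumption the paper makes for the general system.
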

\begin{proof}
    Leveraging the adjoint PDE~\eqref{eq:parabolicadjoint*_general}, we can derive analogously to \Cref{lem:parabolictimeevolutionJt*} that $\fd{\tau}\J{\tau} = - \alpha_\tau(\uhatp{\tau},T_{B_0}\uhatp{\tau})_{L_2} = - \alpha_\tau\Q{\tau}$ for all $\tau\in[0,\infty)$.
    Moreover, by following the computations of \Cref{lem:parabolic_uhatL2}, we can derive a uniform (in the training time~$\tau$) estimate of the form \eqref{eq:lem:parabolic_uhatL2} for the adjoint~$\uhatp{\tau}$.
    
    \textbf{Step 1: $\uhatp{\infty}\equiv0$ a.e.\@ in $L_2(D_T)$.}
    Since the eigenfunctions~$\{e_k(t,x)\}_{k=1}^\infty$ of $T_{B_0}$ form an orthonormal basis of $L_2(D_T)$ according to \Cref{lem:parabolicTB},
    $\uhatp{\infty}$ has the expansion $\uhatpp{\infty}{t,x} = \sum_{k=1}^\infty c_ke_k(t,x)$.
    We now proceed by contradiction and suppose that $\uhatp{\infty}$ is not $0$ a.e.\@ in $L_2(D_T)$.
    Then there exists at least one $\tilde{k}\in\bbN$ with $c_{\tilde{k}}\neq0$.
    Using this, we can lower bound $\Q{\infty}=\left(\uhatp{\infty},T_{B_0}\uhatp{\infty}\right)_{L_2(D_T)} = \sum_{k=1}^\infty \lambda_kc^2_k \geq \lambda_{\tilde{k}}c_{\tilde{k}}^2>0$ after recalling that $\lambda_{\tilde{k}}>0$ according to \Cref{lem:parabolicposdefTB}.
    We furthermore have
    \begin{equation}
    \begin{split}
        \Q{\tau} &= \left(\uhatp{\tau},T_{B_0}\uhatp{\tau}\right)_{L_2(D_T)} \\
        &= \left(\uhatp{\tau}-\uhatp{\infty},T_{B_0}\uhatp{\tau}\right)_{L_2(D_T)} + \left(\uhatp{\infty},T_{B_0}(\uhatp{\tau}-\uhatp{\infty})\right)_{L_2(D_T)} + \left(\uhatp{\infty},T_{B_0}\uhatp{\infty}\right)_{L_2(D_T)} \\
        &\geq - 2C^B_2C^{\widehat{u}}\N{\uhatp{\tau}-\uhatp{\infty}}_{L_2(D_T)} + \left(\uhatp{\infty},T_{B_0}\uhatp{\infty}\right)_{L_2(D_T)},
    \end{split}
    \end{equation}
    where we used Cauchy-Schwarz inequality together with \Cref{lem:parabolicTB} in the last step.
    Since $\uhatp{\tau}$ converges to $\uhatp{\infty}$ in $L_2(D_T)$ by assumption as $\tau\rightarrow\infty$, there exists $\overbar{\tau}>0$ such that $\N{\uhatp{\tau}-\uhatp{\infty}}_{L_2(D_T)} \leq \lambda_{\tilde{k}}c_{\tilde{k}}^2/(4C^B_2C^{\widehat{u}})$ for all $\tau>\overbar{\tau}$.
    Thus, $\Q{\tau}\geq \lambda_{\tilde{k}}c_{\tilde{k}}^2/2$ for all $\tau>\overbar{\tau}$.
    With the fundamental theorem of calculus it then holds
    \begin{equation}
        \J{\tau} = \J{\overbarscript{\tau}} - \int_{\overbarscript{\tau}}^\tau \alpha_s \Q{s} \,ds \leq \J{\overbarscript{\tau}} - \frac{\lambda_{\tilde{k}}c_{\tilde{k}}^2}{2}\int_{\overbarscript{\tau}}^\tau \alpha_s \,ds \rightarrow - \infty
    \end{equation}
    as $\tau\rightarrow\infty$ due to condition~\eqref{eq:learning_rate} on the learning rate $\alpha_\tau$.
    This contradicts the positivity of the loss~$\J{}$.
    Therefore, $\uhatp{\infty}\equiv0$ a.e.\@ in $L_2(D_T)$.

    \textbf{Step 2: $\up{\infty}\equiv h$ a.e.\@ in $L_2(D_T)$.}
    By using that $\uhatp{\infty}$ is a weak solution to the adjoint PDE~\eqref{eq:parabolicadjoint*_general} in a sense analogous to \Cref{def:weak_sol_adjoint} with right-hand side $(\up{\infty}-h)$,
    we infer that the left-hand side vanishes for all test functions $\phi\in L_2(D_T)$ as in \Cref{lem:convergence_solution}.
    Thus, $\up{\infty}\equiv h$ a.e.\@ in $L_2(D_T)$.
\end{proof}


\acks{KR would like to profusely thank Tom Hickling for many insightful discussions about practical perspectives on the topic.

This research project was supported by ``DMS-EPSRC: Asymptotic Analysis of Online Training Algorithms in Machine Learning: Recurrent, Graphical, and Deep Neural Networks'' (NSF DMS-2311500).
The authors would like to acknowledge the use of the University of Oxford Advanced Research Computing (ARC) facility in carrying out this work. 
For the purpose of Open Access, the authors have applied a CC BY public copyright license to any Author Accepted Manuscript (AAM) version arising from this submission.}

\appendix

\section{Well-Posedness of the NN-PDE Training Dynamics}
\label{sec:WellPosedness_Proof}

In this appendix, we show the well-posedness of the NN-PDE training dynamics
in both the finite-width hidden layer regime and the infinite-width hidden layer limit.
In \Cref{sec:WellPosedness_Proof_infintiewidth} we prove \Cref{lem:parabolic_wellposedness_infinitewidth}, which is concerned with the latter, i.e., the well-posedness of PDE system \eqref{eq:parabolicPDE*}--\eqref{eq:parabolicadjoint*} coupled with the integro-differential equation~\eqref{eq:parabolicgtau} for $g^*_\tau$,
while \Cref{sec:WellPosedness_Proof_N} is concerned with \Cref{lem:parabolic_wellposedness_N}, i.e., the well-posedness of PDE system \eqref{eq:parabolicPDEN_plain}\,\&\,\eqref{eq:parabolicadjointN_plain} coupled with the gradient descent update~\eqref{eq:GD} for the NN parameters of the NN function $g_{\theta_\tau}^N$.

Recall that $\CS = L_2([0,T],H^1(D)) \cap L_\infty([0,T],L_2(D))$.

\subsection{Well-Posedness Proof of the NN-PDE Training Dynamics in the Infinite-Width Hidden Layer Limit}
\label{sec:WellPosedness_Proof_infintiewidth}

\begin{proof}[Proof of \Cref{lem:parabolic_wellposedness_infinitewidth}]
    \textbf{Existence.}
    The existence proof is based on a fixed point argument employing the Banach fixed point theorem.
    For a given training time horizon $\CT>0$, let us denote by $\CV_\CT=\CC\left([0,\CT],\CS\right)$ the Banach space consisting of elements with finite norm
    \begin{equation}
        \label{eq:CV_CT_norm}
        \N{u}_{\CV_\CT}
        = \sup_{\tau\in[0,\CT]} \left(\N{u_\tau}_{L_2([0,T],H^1(D))} + \N{u_\tau}_{L_\infty([0,T],L_2(D))}\right).
    \end{equation}
    A solution $((\up{\tau},\uhatp{\tau}))_{\tau\in[0,\CT]}$ to the PDE system \eqref{eq:parabolicPDE*}--\eqref{eq:parabolicadjoint*} is shown in what follows to be an element of the space $\CC\left([0,\CT],\CS\times\CS\right)$ (which we identify with the space $\CV_\CT\times\CV_\CT$) with additional regularity.

    \textit{Step~1: Existence and regularity for given right-hand side $\widetilde{g}_\tau= - \int_0^\tau \alpha_s b_s\,ds$.}
    For given $\CT>0$, let $b:[0,\CT] \rightarrow L_2(D_T)$ be a given function with $b_\tau$ being Lipschitz continuous on $\overbar{D_T}$ for each $\tau\in[0,\CT]$ and such that $\sup_{\tau\in[0,\CT]} \N{b_\tau}_{L_\infty(D_T)}\leq C_b$, where $C_b$ may depend in particular on $\CT$.
    Consider the auxiliary PDE system
    \begin{alignat}{2}
   \label{eq:parabolicPDE*_tilde}
    \begin{aligned}
        \fpartial{t}\upt{\tau} + \CL\upt{\tau} - q(\upt{\tau})
        &= \widetilde{g}_\tau = - \int_0^\tau \alpha_s b_s\,ds
        \qquad&&\text{in }
        D_T, \\
        \upt{\tau}
        &= 0
        \qquad&&\text{on }
        [0,T]\times\partial D, \\
        \upt{\tau}
        &= f
        \qquad&&\text{on }
        \{0\}\times D
    \end{aligned}
    \end{alignat}
    and
    \begin{alignat}{2}
    \label{eq:parabolicadjoint*_tilde}
    \begin{aligned}
        -\fpartial{t}\uhatpt{\tau} + \CL^\dagger\uhatpt{\tau} - q_u(\upt{\tau})\uhatpt{\tau}
        &= (\upt{\tau}-h)
        \qquad&&\text{in }
        D_T, \\
        \uhatpt{\tau}
        &= 0
        \qquad&&\text{on }
        [0,T]\times\partial D, \\
        \uhatpt{\tau}
        &= 0
        \qquad&&\text{on }
        \{T\}\times D.
    \end{aligned}
    \end{alignat}
    We first prove that there exists a solution~$(\upt{\tau},\uhatpt{\tau})\in \CS\times\CS$ to the system~\eqref{eq:parabolicPDE*_tilde}--\eqref{eq:parabolicadjoint*_tilde} for all $\tau\in[0,\CT]$ using classical existence results from \cite{ladyzhenskaia1968linear}.
    Such solution, as we show, enjoys the property that for all $\tau\in[0,\CT]$ it holds $(\partial_t\uppt{\tau}{t,\dummy},\partial_t\uhatppt{\tau}{t,\dummy})\in L_2(D)\times L_2(D)$ for a.e.\@ $t\in[0,T]$.

    \textit{Step~1a: Existence of solution to PDE \eqref{eq:parabolicPDE*_tilde}.}
    For the existence of a solution to the nonlinear PDE \eqref{eq:parabolicPDE*_tilde}, we invoke \cite[Chapter V, Theorem 6.2]{ladyzhenskaia1968linear}.
    To begin with,
    we notice that, in the notation of \cite[Chapter V, Theorem 6.2]{ladyzhenskaia1968linear}, the coefficients of the nonlinear PDE operator of the parabolic PDE~\eqref{eq:parabolicPDE*_tilde} are $a_{i}(t,x,u,p)=\sum_{j=1}^d a^{ji}(t,x)p_j$ and $a(t,x,u,p)=\sum_{i=1}^d b^i(t,x) p_i + c(t,x)u - q(t,x,u) + \int_0^\tau \alpha_s b_s(t,x)\,ds$, and thus also $A(t,x,u,p) = \sum_{i=1}^d b^i(t,x) p_i + c(t,x)u - q(t,x,u) + \int_0^\tau \alpha_s b_s(t,x)\,ds - \sum_{j=1}^d \fpartial{x_i}a^{ji}(t,x)p_j$.
    Clearly, for $(t,x)\in\overbar{D_T}$ and arbitrary $u$ it holds $\sum_{i,j=1}^d\fpartial{p_j} a_i(t,x,u,p)\xi_i\xi_j\big|_{p=0} = \sum_{i,j=1}^d a^{ji}(t,x)\xi_i\xi_j \geq \nu \N{\xi}^2\geq 0$
    by uniform parabolicity of $\fpartial{t}+\CL$, i.e., Assumption~\ref{asm:PDE_L_parabolicity},
    and it holds with Young's inequality 
    \begin{equation}
    \begin{split}
        &A(t,x,u,0)u
        = \left(c(t,x)u - q(t,x,u) + \int_0^\tau \alpha_s b_s(t,x)\,ds\right)u\\
        &\qquad\,\geq  -\!\N{c}_{L_\infty(D_T)}u^2 - C_q(1+\abs{u})\abs{u} + \frac{1}{2}\left(\int_0^\tau\! \alpha_s b_s(t,x)\,ds\right)^2 \!-\frac{1}{2}u^2
        \geq -b_1u^2-b_2\!\!
    \end{split}
    \end{equation}
    by Assumptions~\ref{asm:PDE_coefficients} and \ref{asm:WP_coefficients} for the first term, by Assumption~\ref{asm:WP_nonlinearity_q_growth} for the second term, and, for the last term, due to $\alpha_\tau$ being bounded from above together with $\sup_{\tau\in[0,\CT]} \N{b_\tau}_{L_\infty(D_T)}\leq C_b$ by assumption on $b$.
    Moreover, by Assumptions~\ref{asm:WP_coefficients} and \ref{asm:WP_nonlinearity_q_growth} the functions $a_{i}$ and $a$ are continuous w.r.t.\@ $t,x,u,p$ since again $b_\tau$ is continuous for every $\tau\in[0,\CT]$.
    Interchanging limits in the term $\int_0^\tau \alpha_s b_s(t,x)\,ds$ is warranted by the dominated convergence theorem since $\alpha_\tau$ is bounded from above and $\sup_{\tau\in[0,\CT]} \N{b_\tau}_{L_\infty(D_T)}\leq C_b$.
    In addition, the functions $a_{i}$ are differentiable w.r.t.\@ $x,u,p$ by Assumption~\ref{asm:WP_coefficients}.
    For $(t,x)\in\overbar{D_T}$, $\abs{u}\leq M$ and arbitrary $p$ we furthermore have
    \begin{equation}
    \begin{split}
        &\sum_{i=1}^d \left(\abs{a_i} + \abs{\fpartial{u}a_i}\right)(1+\N{p}) + \sum_{i,j=1}^d \abs{\fpartial{x_j} a_i} + \abs{a} \\
        &\qquad\,= \sum_{i=1}^d \abs{\sum_{j=1}^d a^{ji}(t,x)p_j} (1+\N{p}) + \sum_{i,j=1}^d \abs{\fpartial{x_j}\sum_{k=1}^d a^{ki}(t,x)p_k}\\
        &\qquad\,\quad\, + \abs{\sum_{i=1}^d b^i(t,x) p_i + c(t,x)u - q(t,x,u) + \int_0^\tau \alpha_s b_s(t,x)\,ds}
        \leq \mu (1+\N{p})^2,
    \end{split}
    \end{equation}
    where the last inequality holds due to Assumptions~\ref{asm:PDE_coefficients} and  \ref{asm:WP_nonlinearity_q_growth}, and due to the last term being uniformly bounded with the same arguments as above.
    Furthermore, for $(t,x)\in\overbar{D_T}$, $\abs{u}\leq M$ and $\N{p}\leq \widetilde{M}$, we have the following Hölder continuity properties in $(t,x,u,p)$ (we denote by $\star$ the exponent if the respective function does not depend on the variable, thus being Hölder continuous with any exponent):
    the functions $a_i$ are $(\gamma_1/2,\gamma_1,\star,1)$-Hölder continuous, 
    the functions $\partial_{p_j}a_i$ are $(\gamma_1/2,\gamma_1,\star,\star)$-Hölder continuous,
    the functions $\partial_{u}a_i$ are $(\star,\star,\star,\star)$-Hölder continuous,
    the functions $\partial_{x_i}a_i$ are $(\gamma_1/2,\gamma_1,\star,\star)$-Hölder continuous,
    and the function $a$ is $(\min\{\gamma_1/2,1\},\min\{\gamma_1,1\},1,1)$-Hölder continuous.
    The Hölder properties of all those functions are due to Assumption~\ref{asm:WP_coefficients}, except
    for the last function, where we further used that firstly $q$ is $(\gamma_1/2,\gamma_1,1)$-Hölder continuous in $(t,x,u)$ by Assumption~\ref{asm:WP_nonlinearity_q_growth} for $t,x$ and the mean-value theorem together with Assumption~\ref{asm:PDE_nonlinearity_q_ubdd} for $u$, and secondly that $b_\tau$ is $(1,1)$-Hölder continuous for every $\tau\in[0,\CT]$ by assumption together with $\alpha_\tau$ being bounded from above.
    Lastly, the boundary $\partial D$ and the initial condition~$f$ and boundary condition satisfy the assumptions due to Assumptions~\ref{asm:D} and \ref{asm:WP_initialcondition}, respectively. 
    Thus, \cite[Chapter V, Theorem 6.2]{ladyzhenskaia1968linear} ensures the existence of a solution~$\upt{\tau}\in H^{\gamma'/2,\gamma'}(\overbar{D_T})$ to \eqref{eq:parabolicPDE*_tilde} with $\partial_{x_i}\upt{\tau}$ being bounded in $\overbar{D_T}$.
    Since we are on a compact domain as of Assumptions~\ref{asm:WP_coefficients}, where Hölder continuity implies uniform boundedness, we proved $\upt{\tau}\in \CS$.
    \cite[Chapter V, Theorem 6.2]{ladyzhenskaia1968linear} further ensures that $\partial_t \upt{\tau} \in H^{\gamma'/2,\gamma'}(D_T)$, and thus also $\partial_t\uppt{\tau}{t,\dummy}\in L_2(D)$ for a.e.\@ $t\in[0,T]$ is proven.

    \textit{Step~1b: Existence of solution to adjoint PDE \eqref{eq:parabolicadjoint*_tilde}.}
    For the existence of a solution to the linear adjoint PDE \eqref{eq:parabolicadjoint*_tilde}, 
    we invoke the classical results \cite[Chapter~7.1, Theorem~3]{evans2010partial} and \cite[Chapter~7.1, Theorem~4]{evans2010partial} as well as \cite[Chapter IV, Theorem 9.1]{ladyzhenskaia1968linear} with $p=2$.
    To this end, let us first reverse the adjoint parabolic backward PDE \eqref{eq:parabolicadjoint*_tilde} in time to obtain with a time transformation
    for $\uhatptReverted{\tau}=\uhatpptReverted{\tau}{t,x}=\uhatppt{\tau}{T-t,x}$ the parabolic forward PDE
    \begin{alignat}{2}
\label{eq:parabolicadjoint*_tildereverted}
    \begin{aligned}
     \fpartial{t}\uhatptReverted{\tau} + \underbar{\CL}^*\uhatptReverted{\tau} \!-\! \underbar{q}_u(\uppt{\tau}{T\!-\!\dummy,\dummy})\uhatptReverted{\tau}
        &= (\uppt{\tau}{T\!-\!\dummy,\dummy} \!-\! h(T\!-\!\dummy,\dummy))
        \qquad&&\text{in }
        D_T, \\
        \uhatptReverted{\tau}
        &= 0
        \qquad&&\text{on }
        [0,T]\times\partial D, \\
        \uhatptReverted{\tau}
        &= 0
        \qquad&&\text{on }
        \{0\}\times D, 
    \end{aligned}
    \end{alignat}
    where $\underbar{\CL}^*=\underbar{\CL}^*(t,x)=\CL^\dagger(T-t,x)$ (analogously for the individual coefficients of the operator $\underbar{\CL}^*$) and $\underbar{q}=\underbar{q}(t,x,\widehat{u})=q(T-t,x,\widehat{u})$.
    Since the parabolic PDE \eqref{eq:parabolicadjoint*_tildereverted} is linear,
    existence and uniqueness of a weak solution of \eqref{eq:parabolicadjoint*_tildereverted} in the sense of \Cref{def:weak_sol_adjoint} follow from classical results,
    see, e.g., \cite[Chapter~7.1, Theorem~3]{evans2010partial} and \cite[Chapter~7.1, Theorem~4]{evans2010partial} for existence and uniqueness, respectively.
    To apply those results, note that the term $\underbar{q}_u(\uppt{\tau}{T-\dummy,\dummy})\uhatptReverted{\tau}$ can be absorbed into a PDE operator ${\widetilde{\underbar{\CL}}}^*$ with $\widetilde{\underbar{c}} = \underbar{c} - \sum_{i=1}^d \fpartial{x_i}\underbar{b}^i-\underbar{q}_u(\uppt{\tau}{T-\dummy,\dummy})\in L_\infty(D_T)$ due to Assumptions~\ref{asm:PDE_coefficients} and \ref{asm:PDE_nonlinearity_q_ubdd}.
    Moreover, since $\upt{\tau}\in L_2(D_T)$ by the former statement and since $h\in L_2(D_T)$ by assumption, the right-hand side~$(\uppt{\tau}{T-\dummy,\dummy}-h(T-\dummy,\dummy))\in L_2(D_T)$.
    With this, we proved $\uhatpt{\tau}\in L_2([0,T],H^1(D)) \cap L_\infty([0,T],L_2(D))$.
    To prove additional regularity, we invoke \cite[Chapter IV, Theorem 9.1]{ladyzhenskaia1968linear} with $p=2$.
    We now notice that, in the notation of \cite[Chapter IV, Theorem 9.1]{ladyzhenskaia1968linear}, the coefficients~$a_{ij}=\underbar{a}^{ij}$ of the linear PDE operator of the parabolic PDE~\eqref{eq:parabolicadjoint*_tildereverted} are bounded continuous functions in $D_T$ for all $i,j=1,\dots,d$ due to Assumptions~\ref{asm:PDE_coefficients} and \ref{asm:WP_coefficients},
    while the coefficients~$a_i=\underbar{b}^i-\sum_{j=1}^d\fpartial{x_j}\underbar{a}^{ji}$ and $a=\underbar{c} - \sum_{i=1}^d \fpartial{x_i}\underbar{b}^i - \underbar{q}_u(\uppt{\tau}{T-\dummy,\dummy})$ have finite norms $\N{a_i}_{L_r(D_T)}$ and $\N{a}_{L_s(D_T)}$ for any $r,s>0$.
    This is due to the uniform boundedness of the coefficients per Assumptions~\ref{asm:PDE_coefficients} and \ref{asm:PDE_nonlinearity_q_ubdd} combined with the boundedness of the domain per Assumption~\ref{asm:Dbdd}, see the subsequent computations with $T'=0$ and $\Delta T'=T$.
    Moreover, since it hold $\N{a_i}_{L_r(D_{T',T'+\Delta T'})} \leq \big(\Nnormal{b^i}_{L_\infty(D_T)}+\sum_{j=1}^d\Nnormal{\fpartial{x_j}a^{ji}}_{L_\infty(D_T)}\big)(\Delta T'\vol{D})^{1/r}$
    for all $i=1,\dots,d$ and
    $\Nnormal{a}_{L_s(D_{T',T'+\Delta T'})}\leq \big(\Nnormal{c}_{L_\infty(D_T)}+\sum_{i=1}^d\Nnormal{\fpartial{x_i}b^{i}}_{L_\infty(D_T)} + c_q\big)(\Delta T'\vol{D})^{1/s}$,
    $\N{a_i}_{L_r(D_{T',T'+\Delta T'})}$ and $\N{a}_{L_s(D_{T',T'+\Delta T'})}$ tend to zero as $\Delta T'\rightarrow0$.
    Furthermore, $\partial D$ is sufficiently smooth as of Assumption~\ref{asm:D}.
    The right-hand side $f=(\uppt{\tau}{T-\dummy,\dummy}-h(T-\dummy,\dummy))\in L_2(D_T)$ as argued before.
    Lastly, the initial and boundary conditions $\phi=0\in W^{1}_2(D)$ and $\Phi=0\in W^{3/4,3/2}_2(\partial D_T)$ satisfy the compatibility condition $\phi|_{\partial D}=\Phi|_{t=0}$.
    Thus, \cite[Chapter IV, Theorem 9.1]{ladyzhenskaia1968linear} ensures the existence of a unique solution~$\uhatptReverted{\tau}\in W^{1,2}_{2}(D_T)$ to \eqref{eq:parabolicadjoint*_tildereverted} and thus also a unique solution~$\uhatpt{\tau}\in W^{1,2}_{2}(D_T)$ to the parabolic backward PDE \eqref{eq:parabolicadjoint*_tilde}.
    We moreover have the bound
    \begin{equation}
        \Nbig{\uhatpt{\tau}}_{W^{1,2}_{2}(D_T)}
        = \Nbig{\uhatptReverted{\tau}}_{W^{1,2}_{2}(D_T)}
        \lesssim \N{\upt{\tau}}_{L_2(D_T)} + \N{h}_{L_2(D_T)}.
    \end{equation}
    In particular, since $\uhatpt{\tau}\in W^{1,2}_{2}(D_T)$, also $\partial_t\uhatppt{\tau}{t,\dummy}\in L_2(D)$ for a.e.\@ $t\in[0,T]$ is proven.
    
    \textit{Step~1c: Explicit norm bound for the solution to PDE system \eqref{eq:parabolicPDE*_tilde}--\eqref{eq:parabolicadjoint*_tilde}.}
    In this step, we compute explicit bounds on the norms $\N{\upt{\tau}}_{L_2([0,T],H^1(D))} + \N{\upt{\tau}}_{L_\infty([0,T],L_2(D))}$ as well as $\Nnormal{\uhatpt{\tau}}_{L_2([0,T],H^1(D))} + \Nnormal{\uhatpt{\tau}}_{L_\infty([0,T],L_2(D))}$, respectively.

    \textit{Step~1c(i): Energy estimate for solution to \eqref{eq:parabolicPDE*_tilde}.}
    For the norm of a solution to the nonlinear PDE \eqref{eq:parabolicPDE*_tilde} we conduct the following computations.
    We obtain by chain rule and by using that $\upt{\tau}$ is a weak solution to \eqref{eq:parabolicPDE*_tilde} in the sense of \Cref{def:weak_sol} that
    \begin{equation}
        \label{eq:proof:lem:parabolic_wellposedness_infinitewidth:NORM_0}
    \begin{split}
        \fpartial{t}\Nbig{\uppt{\tau}{t,\dummy}}^2_{L_2(D)}
        &=2\big(\uppt{\tau}{t,\dummy},\fpartial{t}\uppt{\tau}{t,\dummy}\big)_{L_2(D)}
        = 
        2\big\langle\fpartial{t}\uppt{\tau}{t,\dummy},\uppt{\tau}{t,\dummy}\big\rangle_{H^{-1}(D),H_0^1(D)}\\
        &= - 2\CB\big[\uppt{\tau}{t,\dummy},\uppt{\tau}{t,\dummy};t\big]
        + 2\big(q(t,\dummy,\uppt{\tau}{t,\dummy}),\uppt{\tau}{t,\dummy}\big)_{L_2(D)}\\
        &\quad\,+ 2\big(\widetilde{g}_\tau(t,\dummy),\uppt{\tau}{t,\dummy}\big)_{L_2(D)},
    \end{split}
    \end{equation}
    where the second step is due to the dual pairing between $H^{-1}(D)$ and $H_0^1(D)$ coinciding with the $L_2(D)$ scalar product \cite[Chapter~5.9, Theorem 1(iii)]{evans2010partial} since $\uppt{\tau}{t,\dummy}\in H_0^1(D)$ for a.e.\@ $t\in[0,T]$ and since $\fpartial{t}\uppt{\tau}{t,\dummy}\in L_2(D)$ for a.e.\@ $t\in[0,T]$.
    For the weak solution property in the third step of \eqref{eq:proof:lem:parabolic_wellposedness_infinitewidth:NORM_0} to hold, we note that $\upt{\tau}$ can be used as a test function in the weak formulation of~\eqref{eq:parabolicPDE*_tilde}, see \Cref{def:weak_sol}, since $\uppt{\tau}{t,\dummy}\in H_0^1(D)$ for a.e.\@ $t\in[0,T]$.
    To estimate the right-hand side of \eqref{eq:proof:lem:parabolic_wellposedness_infinitewidth:NORM_0} from above,
    we consider each of the three terms separately.
    For the first term of \eqref{eq:proof:lem:parabolic_wellposedness_infinitewidth:NORM_0}, by using the definition of the bilinear form $\CB$ as well as that by Assumption~\ref{asm:PDE_L_parabolicity} the PDE operator is uniformly parabolic and that by Assumption~\ref{asm:PDE_coefficients} the coefficients are in $L_\infty$, we can estimate with Cauchy-Schwarz and Young's inequality analogously to \eqref{eq:proof:lem:parabolic_uhatL2:21}
    \begin{equation}        \label{eq:proof:lem:parabolic_wellposedness_infinitewidth:NORM_2}
    \begin{split}
        -\CB[\uppt{\tau}{t,\dummy},\uppt{\tau}{t,\dummy};t]
        &\!\leq\!
        -\frac{\nu}{2}\! \abs{\uppt{\tau}{t,\dummy}}^2_{H^1(D)}\!+\! \left(\frac{1}{2\nu} \sum_{i=1}^d\N{b^i}_{L_\infty(D_T)}\!\!+\!\N{c}_{L_\infty(D_T)}\!\right)\!\N{\uppt{\tau}{t,\dummy}}^2_{L_2(D)}.
    \end{split}
    \end{equation}
    For the second term of \eqref{eq:proof:lem:parabolic_wellposedness_infinitewidth:NORM_0}
    we can estimate with Assumption~\ref{asm:WP_nonlinearity_q_growth} that
    \begin{equation}        \label{eq:proof:lem:parabolic_wellposedness_infinitewidth:NORM_3}
    \begin{split}
        \left(q(t,\dummy,\uppt{\tau}{t,\dummy}),\uppt{\tau}{t,\dummy}\right)_{L_2(D)}
        &\leq\N{q(t,\dummy,\uppt{\tau}{t,\dummy})}_{L_2(D)}\N{\uppt{\tau}{t,\dummy}}_{L_2(D)}\\
        &\leq C_q(1+\N{\uppt{\tau}{t,\dummy}}_{L_2(D)})\N{\uppt{\tau}{t,\dummy}}_{L_2(D)}\\
        & = C_q\left(\frac{1}{2} + \frac{3}{2}\N{\uppt{\tau}{t,\dummy}}_{L_2(D)}^2\right).
    \end{split}
    \end{equation}
    For the third and last term of \eqref{eq:proof:lem:parabolic_wellposedness_infinitewidth:NORM_0}, by Cauchy-Schwarz and Young’s inequality we upper bound
    \begin{equation}        \label{eq:proof:lem:parabolic_wellposedness_infinitewidth:NORM_4}
    \begin{split}
        \left(\widetilde{g}_\tau(t,\dummy),\uppt{\tau}{t,\dummy}\right)_{L_2(D)}
        &\leq \N{\widetilde{g}_\tau(t,\dummy)}_{L_2(D)}\N{\uppt{\tau}{t,\dummy}}_{L_2(D)}\\
        &\leq \frac{1}{2}\left(\N{\widetilde{g}_\tau(t,\dummy)}_{L_2(D)}^2 + \N{\uppt{\tau}{t,\dummy}}_{L_2(D)}^2\right).
    \end{split}
    \end{equation}
    Combining the bounds established in \eqref{eq:proof:lem:parabolic_wellposedness_infinitewidth:NORM_2}--\eqref{eq:proof:lem:parabolic_wellposedness_infinitewidth:NORM_4} and inserting them into \eqref{eq:proof:lem:parabolic_wellposedness_infinitewidth:NORM_0}, we arrive after reordering at
    \begin{equation}        \label{eq:proof:lem:parabolic_wellposedness_infinitewidth:NORM_5}
    \begin{split}        &\fpartial{t}\N{\uppt{\tau}{t,\dummy}}^2_{L_2(D)} + \frac{\nu}{2} \abs{\uppt{\tau}{t,\dummy}}^2_{H^1(D)}
        \leq C\N{\uppt{\tau}{t,\dummy}}^2_{L_2(D)} + \N{\widetilde{g}_\tau(t,\dummy)}_{L_2(D)}^2 + C,
    \end{split}
    \end{equation}
    for a constant $C=C(\CL,q)$.
    An application of Grönwall's inequality shows
    \begin{equation}        \label{eq:proof:lem:parabolic_wellposedness_infinitewidth:NORM_utilde}
        \N{\upt{\tau}}_{L_2([0,T],H^1(D))} + \N{\upt{\tau}}_{L_\infty([0,T],L_2(D))}
        \leq C\left(\N{f}_{L_2(D)} + \N{\widetilde{g}_\tau}_{L_2(D_T)} + 1\right)
    \end{equation}
    for some other, potentially larger, constant $C=C(T,\CL,q)$.
    Thus, in particular,
    \begin{equation}        \label{eq:proof:lem:parabolic_wellposedness_infinitewidth:NORM_utilde_UNIFORM}
        \N{\upt{}}_{\CV_\CT}
        \leq C\left(\N{f}_{L_2(D)} + \sup_{\tau\in[0,\CT]}\N{\widetilde{g}_\tau}_{L_2(D_T)} + 1\right).
    \end{equation}

    \textit{Step~1c(ii): Energy estimate for solution to \eqref{eq:parabolicadjoint*_tilde}.}
    For the norm of a solution to the PDE \eqref{eq:parabolicadjoint*_tilde} we proceed as follows using the time-reversed formulation \eqref{eq:parabolicadjoint*_tildereverted}.
    We obtain again by chain rule and by using that $\uhatptReverted{\tau}$ is a weak solution to \eqref{eq:parabolicPDE*_tilde} in the sense of \Cref{def:weak_sol_adjoint} that
    \begin{equation}        \label{eq:proof:lem:parabolic_wellposedness_infinitewidth:NORM2_1}
    \begin{split}        \fpartial{t}\Nnormal{\uhatpptReverted{\tau}{t,\dummy}}^2_{L_2(D)}        &=2\big(\uhatpptReverted{\tau}{t,\dummy},\fpartial{t}\uhatpptReverted{\tau}{t,\dummy}\big)_{L_2(D)}
        =         2\big\langle\fpartial{t}\uhatpptReverted{\tau}{t,\dummy},\uhatpptReverted{\tau}{t,\dummy}\big\rangle_{H^{-1}(D),H_0^1(D)}\\
        &= - 2\underbar{\CB}^*\big[\uhatpptReverted{\tau}{t,\dummy},\uhatpptReverted{\tau}{t,\dummy};t\big]
        + 2\big(\underbar{q}_u(t,\dummy,\uppt{\tau}{T-t,\dummy})\uhatpptReverted{\tau}{t,\dummy},\uhatpptReverted{\tau}{t,\dummy}\big)_{L_2(D)}\\
        &\quad\,+ 2\big(\uppt{\tau}{T-t,\dummy}-h(T-t,\dummy),\uhatpptReverted{\tau}{t,\dummy}\big)_{L_2(D)},
    \end{split}
    \end{equation}
    where the individual steps hold as before since $\uhatpptReverted{\tau}{t,\dummy}\in H_0^1(D)$ for a.e.\@ $t\in[0,T]$ and since $\fpartial{t}\uhatpptReverted{\tau}{t,\dummy}\in L_2(D)$ for a.e.\@ $t\in[0,T]$.
    To estimate the right-hand side of \eqref{eq:proof:lem:parabolic_wellposedness_infinitewidth:NORM2_1} from above,
    we consider each of the three terms separately.
    For the first term of \eqref{eq:proof:lem:parabolic_wellposedness_infinitewidth:NORM2_1}, by using the definition of the bilinear form $\CB^\dagger$ as well as that by Assumption~\ref{asm:PDE_L_parabolicity} the PDE operator is uniformly parabolic and that by Assumption~\ref{asm:PDE_coefficients} the coefficients are in $L_\infty$, we can estimate as in \eqref{eq:proof:lem:parabolic_wellposedness_infinitewidth:NORM_2} that 
    \begin{equation}        \label{eq:proof:lem:parabolic_wellposedness_infinitewidth:NORM2_2}
    \begin{split}
        &-\underbar{\CB}^*\big[\uhatpptReverted{\tau}{t,\dummy},\uhatpptReverted{\tau}{t,\dummy};t\big]
        =-\CB^\dagger\big[\uhatpptReverted{\tau}{t,\dummy},\uhatpptReverted{\tau}{t,\dummy};T-t\big]
        =-\CB\big[\uhatpptReverted{\tau}{t,\dummy},\uhatpptReverted{\tau}{t,\dummy};T-t\big]\\
        &\qquad\qquad\,\leq -\frac{\nu}{2} \absnormal{\uhatpptReverted{\tau}{t,\dummy}}^2_{H^1(D)}+ \left(\frac{1}{2\nu} \sum_{i=1}^d\N{b^i}_{L_\infty(D_T)}+\N{c}_{L_\infty(D_T)}\right)\Nnormal{\uhatpptReverted{\tau}{t,\dummy}}^2_{L_2(D)}.
    \end{split}
    \end{equation}
    For the second term of \eqref{eq:proof:lem:parabolic_wellposedness_infinitewidth:NORM2_1}
    we can estimate with Assumption~\ref{asm:PDE_nonlinearity_q_ubdd} that
    \begin{equation}        \label{eq:proof:lem:parabolic_wellposedness_infinitewidth:NORM2_3}
    \begin{split}
        \big(\underbar{q}_u(t,\dummy,\uppt{\tau}{T-t,\dummy})\uhatpptReverted{\tau}{t,\dummy},\uhatpptReverted{\tau}{t,\dummy}\big)_{L_2(D)}
        &\leq c_q\Nnormal{\uhatpptReverted{\tau}{t,\dummy}}_{L_2(D)}^2.
    \end{split}
    \end{equation}
    For the third and last term of \eqref{eq:proof:lem:parabolic_wellposedness_infinitewidth:NORM2_1}, by Cauchy-Schwarz and Young’s inequality we upper bound
    \begin{equation}        \label{eq:proof:lem:parabolic_wellposedness_infinitewidth:NORM2_4}
    \begin{split}
        &\big(\uppt{\tau}{T-t,\dummy}-h(T-t,\dummy),\uhatpptReverted{\tau}{t,\dummy}\big)_{L_2(D)}
        \leq \N{\uppt{\tau}{T-t,\dummy}-h(T-t,\dummy)}_{L_2(D)}\Nnormal{\uhatpptReverted{\tau}{t,\dummy}}_{L_2(D)}\\
        &\qquad\qquad\,\leq \frac{1}{2}\left(\N{\uppt{\tau}{T-t,\dummy}-h(T-t,\dummy)}_{L_2(D)}^2 + \Nnormal{\uhatpptReverted{\tau}{t,\dummy}}_{L_2(D)}^2\right).
    \end{split}
    \end{equation}
    Combining the bounds established in \eqref{eq:proof:lem:parabolic_wellposedness_infinitewidth:NORM2_2}--\eqref{eq:proof:lem:parabolic_wellposedness_infinitewidth:NORM2_4} and inserting them into \eqref{eq:proof:lem:parabolic_wellposedness_infinitewidth:NORM2_1}, we arrive after reordering at
    \begin{equation}        \label{eq:proof:lem:parabolic_wellposedness_infinitewidth:NORM2_5}
    \begin{split}        &\fpartial{t}\Nnormal{\uhatpptReverted{\tau}{t,\dummy}}^2_{L_2(D)} + \frac{\nu}{2} \absnormal{\uhatpptReverted{\tau}{t,\dummy}}^2_{H^1(D)}
        \leq C\Nnormal{\uhatpptReverted{\tau}{t,\dummy}}^2_{L_2(D)} + \N{\uppt{\tau}{T-t,\dummy}-h(T-t,\dummy)}_{L_2(D)}^2,
    \end{split}
    \end{equation}
    for a constant $C=C(\CL,q)$.
    Recalling that $\Nnormal{\uhatpptReverted{\tau}{0,\dummy}}^2_{L_2(D)} =0$, an application of Grönwall's inequality shows
    \begin{equation}        \label{eq:proof:lem:parabolic_wellposedness_infinitewidth:NORM_uhattilde}
    \begin{split}
        \Nnormal{\uhatpt{\tau}}_{L_2([0,T],H^1(D))} + \Nnormal{\uhatpt{\tau}}_{L_\infty([0,T],L_2(D))}
        &= \Nnormal{\uhatptReverted{\tau}}_{L_2([0,T],H^1(D))} + \Nnormal{\uhatptReverted{\tau}}_{L_\infty([0,T],L_2(D))} \\
        &\leq C\N{\upt{\tau}-h}_{L_2(D_T)}
        \leq C\left(\N{\upt{\tau}}_{L_2(D_T)} + \N{h}_{L_2(D_T)}\right)
    \end{split}
    \end{equation}
    for some other, potentially larger, constant $C=C(T,\CL,q)$.
    Thus, in particular,
    \begin{equation}       \label{eq:proof:lem:parabolic_wellposedness_infinitewidth:NORM_uhattilde_UNIFORM}
        \Nnormal{\uhatpt{}}_{\CV_\CT}
        \leq C\left(\N{\upt{}}_{\CV_\CT} + \N{h}_{L_2(D_T)}\right).
    \end{equation}

    \textit{Step~1d: Existence of solution to PDE system \eqref{eq:parabolicPDE*_tilde}--\eqref{eq:parabolicadjoint*_tilde}.}
    Summarizing the former results from \textit{Steps~1a, 1b} and \textit{1c}
    we thus proved that for each $\tau\in[0,\CT]$ there exists a solution $(\upt{\tau},\uhatpt{\tau})\in \CS\times\CS$ to \eqref{eq:parabolicPDE*_tilde}--\eqref{eq:parabolicadjoint*_tilde}.
    As $\widetilde{g}_{\tau}$ is Lipschitz continuous in $\tau$ by the dominated convergence theorem, which can be seen since
    \begin{equation}
        \N{\widetilde{g}_{\tau_2} - \widetilde{g}_{\tau_1}}_{L_2(D_T)}
        = \N{\int_{\tau_1}^{\tau_2} \alpha_s b_s\,ds}_{L_2(D_T)}
        \leq C \int_{\tau_1}^{\tau_2} \N{b_s}_{L_\infty(D_T)} ds
        \leq C\abs{\tau_2-\tau_1}
    \end{equation}
    for a constant $C=C(\alpha,T,D,C_b)$,
    the solution $(\upt{},\uhatpt{})$ is in particular continuous in the training time $\tau$, i.e., $(\upt{},\uhatpt{})\in \CV_\CT\times\CV_\CT$.
    As we further showed, for each $\tau\in[0,\CT]$ such solution satisfies $(\partial_t\uppt{\tau}{t,\dummy},\partial_t\uhatppt{\tau}{t,\dummy})\in L_2(D) \times L_2(D)$
    for a.e.\@ $t\in[0,T]$.

    \textit{Step~2: Existence for specific right-hand side $\widetilde{g}_\tau = g^*_\tau= - \int_0^\tau \alpha_s T_{B_0}\uhatp{s}\,ds$.}
    We now make a specific choice for the functions $b_\tau$.

    \textit{Step~2a: Choice of NN update function $b_\tau = T_{B_0}\uhatp{\tau}$.}
    For an arbitrarily given $\uhatp{\tau}\in \CS$, $\tau\in[0,\CT]$, with $\sup_{\tau\in[0,\CT]} \N{\uhatp{\tau}}_{L_2(D_T)}\leq M$ ($M$ may depend on $\CT$), 
    we set
    \begin{equation}
        b_\tau = T_{B_0}\uhatp{\tau}
    \end{equation}
    for all $\tau\in[0,\CT]$.
    It holds with \Cref{lem:parabolicTB} that
    \begin{equation}
    \begin{split}
        \N{b_\tau}_{L_2(D_T)}
        &= \N{T_{B_0}\uhatp{\tau}}_{L_2(D_T)}\leq C^B_2 \N{\uhatp{\tau}}_{L_2(D_T)} 
        \leq C^B_2 M,
    \end{split}
    \end{equation}
    which is a uniform bound in $\tau$.
    In fact, a more careful estimate employing \Cref{lem:TBboundednessLinfty} shows
    \begin{equation}
    \begin{split}
        \N{b_\tau}_{L_\infty(D_T)}
        &= \N{T_{B_0}\uhatp{\tau}}_{L_\infty(D_T)}\\
        &=\sup_{(t,x)\in D_T}\absbig{[T_{B_0}\uhatp{\tau}](t,x)}
        \leq C^{T_B}_\infty \N{\uhatp{\tau}}_{L_2(D_T)}
        \leq C^{T_B}_\infty M.
    \end{split}
    \end{equation}
    Since the right-hand side is uniform in $\tau$, $\sup_{\tau\in[0,\CT]} \N{b_\tau}_{L_\infty(D_T)} \leq C$, where $C$ may depend on $\CT$.
    Furthermore, it is immediate to see, that using the definition of $T_{B_0}$ in \eqref{eq:parabolicT_B} and that the kernel $B$ as given in \eqref{eq:parabolicB} is continuous in $t,x$ on $\overbar{D_T}$, the function $b_\tau=T_{B_0}\uhatp{\tau}$ is continuous on $\overbar{D_T}$ for each $\tau\in[0,\CT]$ by the dominated convergence theorem.
    In fact, the function $b_\tau=T_{B_0}\uhatp{\tau}$ is Lipschitz continuous on $\overbar{D_T}$ since it holds by \Cref{lem:TBLipschitz} that
    \begin{equation}
    \begin{split}
        \abs{b_\tau(t^1,x^1)\!-\!b_\tau(t^2,x^2)}
        &= \abs{[T_{B_0}\uhatp{\tau}](t^1,x^1)\!-\![T_{B_0}\uhatp{\tau}](t^2,x^2)} 
        \leq L_{T_B}\! \left(\abs{t^1\!-\!t^2} \!+\! \N{x^1\!-\!x^2}\right)\!\!
    \end{split}
    \end{equation}
    for all $(t^1,x^1),(t^2,x^2)\in\overbar{D_T}$.

    \textit{Step~2b: Definition of fixed point mapping.}
    Let us consider the fixed point map
    \begin{equation}        \label{eq:proof:lem:parabolic_wellposedness_infinitewidth:F}
        F: \CV_\CT\times\CV_\CT \rightarrow \CV_\CT\times\CV_\CT,\quad
        (\up{},\uhatp{})\mapsto(\upt{},\uhatpt{})
    \end{equation}
    and define for given $M<\infty$ and $\CT<\infty$ the function space $\CV_\CT(M)=\{u\in\CV_\CT\!:\!\N{u}_{\CV_\CT}\leq M\}$.

    We will first show in \textit{Step~2d} existence locally in the training time by proving that there exist $M_0>0$ and $\CT_0>0$ such that $F$ is a fixed point mapping on $\CV_{\CT_0}(M_0)\times\CV_{\CT_0}(M_0)$, which allows to apply the Banach fixed point theorem.
    In \textit{Step~2e} we will then extend the proof by a bootstrapping argument to any given (arbitrarily large) time horizon~$\CT$.
    
    \textit{Step~2c: Preliminary computations.}
    Let us start by conducting some preliminary computations on a generic space $\CV_{\widetilde{\CT}}(\widetilde{M})\times \CV_{\widetilde{\CT}}(\widetilde{M})$.
    
    \textit{Step~2c(i): Preliminary computations for self-mapping property of $F$.}
    Consider $(\upt{},\uhatpt{})$ together with its corresponding $(\up{},\uhatp{}) \in \CV_{\widetilde{\CT}}(\widetilde{M})\times \CV_{\widetilde{\CT}}(\widetilde{M})$.
    Using \eqref{eq:proof:lem:parabolic_wellposedness_infinitewidth:NORM_utilde_UNIFORM} and \eqref{eq:proof:lem:parabolic_wellposedness_infinitewidth:NORM_uhattilde_UNIFORM} in the first inequality and \Cref{lem:parabolicTB} in the last step, we establish
    \begin{equation}        \label{eq:proof:lem:parabolic_wellposedness_infinitewidth:SELFMAPPING}
    \begin{split}
        &\N{\upt{}}_{\CV_{\widetilde{\CT}}} + \Nnormal{\uhatpt{}}_{\CV_{\widetilde{\CT}}} \\
        &\qquad\,\leq C\sup_{\tau\in[0,\widetilde{\CT}]} \N{\widetilde{g}_\tau}_{L_2(D_T)} + C\left(\N{h}_{L_2(D_T)} + \N{f}_{L_2(D)} + 1\right)\\ 
        &\qquad\,\leq C\sup_{\tau\in[0,\widetilde{\CT}]} \N{\int_0^\tau \alpha_s T_{B_0}\uhatp{s}\,ds}_{L_2(D_T)} + C\left(\N{h}_{L_2(D_T)} + \N{f}_{L_2(D)} + 1\right)\\
        &\qquad\,\leq C \int_0^{\widetilde{\CT}} \N{\alpha_s T_{B_0}\uhatp{s}}_{L_2(D_T)}ds + C\left(\N{h}_{L_2(D_T)} + \N{f}_{L_2(D)} + 1\right)\\
        &\qquad\,\leq C_1 \int_0^{\widetilde{\CT}} \N{\uhatp{s}}_{L_2(D_T)}ds + C_1\left(\N{h}_{L_2(D_T)} + \N{f}_{L_2(D)} + 1\right)
    \end{split}
    \end{equation}
    for a constant $C_1=C_1(\alpha,\CL,q,C^B_2)$ (to be precise, $C_1=C\max\{\alpha_0C^B_2,1\}$).

    \textit{Step~2c(ii): Preliminary computations for contractivity of $F$.}
    Consider two pairs $(\uptu{}{1},\uhatptu{}{1})$, $(\uptu{}{2},\uhatptu{}{2})$ with their corresponding $(\upu{}{1},\uhatpu{}{1}), (\upu{}{2},\uhatpu{}{2}) \in \CV_{\widetilde{\CT}}(\widetilde{M})\times \CV_{\widetilde{\CT}}(\widetilde{M})$.

    \textit{A bound for $\Nnormal{\uptu{\tau}{1}-\uptu{\tau}{2}}_{L_2([0,T],H^1(D))} + \Nnormal{\uptu{\tau}{1}-\uptu{\tau}{2}}_{L_\infty([0,T],L_2(D))}$.}
    Since both $\uptu{\tau}{1}$ and $\uptu{\tau}{2}$ weakly satisfy \eqref{eq:parabolicPDE*_tilde} in the sense of \Cref{def:weak_sol},
    it weakly holds
    \begin{equation}        \label{eq:proof:lem:parabolic_wellposedness_infinitewidth:CONTRACTIVITYu1}        \fpartial{t}\left(\uptu{\tau}{1}-\uptu{\tau}{2}\right) + \CL\left(\uptu{\tau}{1}-\uptu{\tau}{2}\right) - \left(q(\uptu{\tau}{1})-q(\uptu{\tau}{2})\right) = \widetilde{g}_\tau^1-\widetilde{g}_\tau^2
    \end{equation}
    with zero initial and zero boundary conditions.
    We obtain by chain rule and by using that $\uptu{\tau}{1}-\uptu{\tau}{2}$ is a weak solution to \eqref{eq:proof:lem:parabolic_wellposedness_infinitewidth:CONTRACTIVITYu1} in the sense of \Cref{def:weak_sol} that
    \begin{equation}        \label{eq:proof:lem:parabolic_wellposedness_infinitewidth:CONTRACTIVITYu2}
    \begin{split}        &\fpartial{t}\N{\upptu{\tau}{1}{t,\dummy}-\upptu{\tau}{2}{t,\dummy}}^2_{L_2(D)}\\        &\qquad\,=2\left(\upptu{\tau}{1}{t,\dummy}-\upptu{\tau}{2}{t,\dummy},\fpartial{t}\left(\upptu{\tau}{1}{t,\dummy}-\upptu{\tau}{2}{t,\dummy}\right)\right)_{L_2(D)}\\
        &\qquad\,= 
        2\left\langle\fpartial{t}\left(\upptu{\tau}{1}{t,\dummy}-\upptu{\tau}{2}{t,\dummy}\right),\upptu{\tau}{1}{t,\dummy}-\upptu{\tau}{2}{t,\dummy}\right\rangle_{H^{-1}(D),H_0^1(D)}\\
        &\qquad\,= - 2\CB[\upptu{\tau}{1}{t,\dummy}-\upptu{\tau}{2}{t,\dummy},\upptu{\tau}{1}{t,\dummy}-\upptu{\tau}{2}{t,\dummy};t]\\
        &\qquad\,\quad\,+ 2\left(q(t,\dummy,\upptu{\tau}{1}{t,\dummy})-q(t,\dummy,\upptu{\tau}{2}{t,\dummy}),\upptu{\tau}{1}{t,\dummy}-\upptu{\tau}{2}{t,\dummy}\right)_{L_2(D)}\\
        &\qquad\,\quad\,+ 2\left(\widetilde{g}_\tau^1(t,\dummy)-\widetilde{g}_\tau^2(t,\dummy),\upptu{\tau}{1}{t,\dummy}-\upptu{\tau}{2}{t,\dummy}\right)_{L_2(D)},
    \end{split}
    \end{equation}
    where the individual steps hold as previously described.
    To estimate the right-hand side of \eqref{eq:proof:lem:parabolic_wellposedness_infinitewidth:CONTRACTIVITYu2} from above,
    we consider each of the three terms separately.
    For the first term, by using the definition of the bilinear form $\CB$ as well as that by Assumption~\ref{asm:PDE_L_parabolicity} the PDE operator is uniformly parabolic and that by Assumption~\ref{asm:PDE_coefficients} the coefficients are in $L_\infty$, we can estimate as in \eqref{eq:proof:lem:parabolic_wellposedness_infinitewidth:NORM_2} that
    \begin{equation}        \label{eq:proof:lem:parabolic_wellposedness_infinitewidth:CONTRACTIVITYu3}
    \begin{split}
        &-\CB[\upptu{\tau}{1}{t,\dummy}-\upptu{\tau}{2}{t,\dummy},\upptu{\tau}{1}{t,\dummy}-\upptu{\tau}{2}{t,\dummy};t]\\
        &\qquad\,\leq -\frac{\nu}{2} \abs{\upptu{\tau}{1}{t,\dummy}-\upptu{\tau}{2}{t,\dummy}}^2_{H^1(D)}\\
        &\qquad\,\quad\,+ \left(\frac{1}{2\nu} \sum_{i=1}^d\N{b^i}_{L_\infty(D_T)}+\N{c}_{L_\infty(D_T)}\right)\N{\upptu{\tau}{1}{t,\dummy}-\upptu{\tau}{2}{t,\dummy}}^2_{L_2(D)}.
    \end{split}
    \end{equation}
    For the second term, we first note that by the mean-value theorem, for any $(t,x)\in D_T$ there exists a $\xi(t,x)$ such that
    \begin{equation}        \label{eq:proof:lem:parabolic_wellposedness_infinitewidth:CONTRACTIVITYu4aux}
    \begin{split}
        q(t,x,\upptu{\tau}{1}{t,x})-q(t,x,\upptu{\tau}{2}{t,x})
        = q_u(t,x,\xi(t,x))\left(\upptu{\tau}{1}{t,x}-\upptu{\tau}{2}{t,x}\right).
    \end{split}
    \end{equation}
    Leveraging this while using that by Assumption~\ref{asm:PDE_nonlinearity_q_ubdd}  $q_u$ is bounded,
    we can estimate
    \begin{equation}        \label{eq:proof:lem:parabolic_wellposedness_infinitewidth:CONTRACTIVITYu4}
    \begin{split}
        &\left(q(t,\dummy,\upptu{\tau}{1}{t,\dummy})-q(t,\dummy,\upptu{\tau}{2}{t,\dummy}),\upptu{\tau}{1}{t,\dummy}-\upptu{\tau}{2}{t,\dummy}\right)_{L_2(D)}\\
        &\qquad\,= \int_D \left(q_u(t,x,\xi(t,x))\left(\upptu{\tau}{1}{t,x}-\upptu{\tau}{2}{t,x}\right)\right)\left(\upptu{\tau}{1}{t,x}-\uppu{\tau}{2}{t,x}\right) dx\\
        &\qquad\,\leq c_q\int_D \left(\upptu{\tau}{1}{t,x}-\uppu{\tau}{2}{t,x}\right)^2 dx
        = c_q \N{\upptu{\tau}{1}{t,\dummy}-\upptu{\tau}{2}{t,\dummy}}_{L_2(D)}^2.
    \end{split}
    \end{equation}
    For the third and last term, by Cauchy-Schwarz and Young’s inequality we upper bound
    \begin{equation}        \label{eq:proof:lem:parabolic_wellposedness_infinitewidth:CONTRACTIVITYu5}
    \begin{split}
        &\left(\widetilde{g}_\tau^1(t,\dummy)-\widetilde{g}_\tau^2(t,\dummy),\upptu{\tau}{1}{t,\dummy}-\upptu{\tau}{2}{t,\dummy}\right)_{L_2(D)} \\
        &\qquad\,\leq \N{\widetilde{g}_\tau^1(t,\dummy)-\widetilde{g}_\tau^2(t,\dummy)}_{L_2(D)}\N{\upptu{\tau}{1}{t,\dummy}-\upptu{\tau}{2}{t,\dummy}}_{L_2(D)}\\
        &\qquad\,\leq \frac{1}{2}\left(\N{\widetilde{g}_\tau^1(t,\dummy)-\widetilde{g}_\tau^2(t,\dummy)}_{L_2(D)}^2 + \N{\upptu{\tau}{1}{t,\dummy}-\upptu{\tau}{2}{t,\dummy}}_{L_2(D)}^2\right).
    \end{split}
    \end{equation}
    Combining the bounds established in \eqref{eq:proof:lem:parabolic_wellposedness_infinitewidth:CONTRACTIVITYu3}--\eqref{eq:proof:lem:parabolic_wellposedness_infinitewidth:CONTRACTIVITYu5} and inserting them into \eqref{eq:proof:lem:parabolic_wellposedness_infinitewidth:CONTRACTIVITYu2},
    we arrive after reordering at
    \begin{equation}        \label{eq:proof:lem:parabolic_wellposedness_infinitewidth:CONTRACTIVITYu6}
    \begin{split}        &\fpartial{t}\N{\upptu{\tau}{1}{t,\dummy}-\upptu{\tau}{2}{t,\dummy}}^2_{L_2(D)} + \frac{\nu}{2} \abs{\upptu{\tau}{1}{t,\dummy}-\upptu{\tau}{2}{t,\dummy}}^2_{H^1(D)}\\
        &\qquad\,\leq C\N{\upptu{\tau}{1}{t,\dummy}-\upptu{\tau}{2}{t,\dummy}}^2_{L_2(D)} + \N{\widetilde{g}_\tau^1(t,\dummy)-\widetilde{g}_\tau^2(t,\dummy)}_{L_2(D)}^2,
    \end{split}
    \end{equation}
    for a constant $C=C(\CL,q)$.
    Recalling that $\upptu{\tau}{1}{0,\dummy}=\upptu{\tau}{2}{0,\dummy}$, an application of Grönwall's inequality shows
    \begin{equation}        \label{eq:proof:lem:parabolic_wellposedness_infinitewidth:CONTRACTIVITY_utilde}
        \N{\uptu{\tau}{1}-\uptu{\tau}{2}}_{L_2([0,T],H^1(D))} + \N{\uptu{\tau}{1}-\uptu{\tau}{2}}_{L_\infty([0,T],L_2(D))}
        \leq C\N{\widetilde{g}_\tau^1-\widetilde{g}_\tau^2}_{L_2(D_T)}
    \end{equation}
    for some other, potentially larger, constant $C=C(T,\CL,q)$.
    Thus, in particular,
    \begin{equation}        \label{eq:proof:lem:parabolic_wellposedness_infinitewidth:CONTRACTIVITY_utilde_UNIFORM}
        \N{\uptu{\tau}{1}-\uptu{\tau}{2}}_{\CV_\CT}
        \leq C\sup_{\tau\in[0,\CT]}\N{\widetilde{g}_\tau^1-\widetilde{g}_\tau^2}_{L_2(D_T)}.
    \end{equation}

    \textit{A bound for $\Nnormal{\uptu{\tau}{1}-\uptu{\tau}{2}}_{L_\infty(D_T)}$.}
    For later use, let us further provide an $L_\infty$ bound for the weak solution to \eqref{eq:proof:lem:parabolic_wellposedness_infinitewidth:CONTRACTIVITYu1} by employing Morrey's inequality after leveraging \cite[Chapter IV, Theorem 9.1]{ladyzhenskaia1968linear} for any $p>1$.
    Therefore notice, that by the mean-value theorem, for any $(t,x)\in D_T$ there exists a $\xi(t,x)$ such that, in place of \eqref{eq:proof:lem:parabolic_wellposedness_infinitewidth:CONTRACTIVITYu1}, it weakly holds
    \begin{equation}        \label{eq:proof:lem:parabolic_wellposedness_infinitewidth:CONTRACTIVITYu1_aux}        \fpartial{t}\left(\uptu{\tau}{1}-\uptu{\tau}{2}\right) + \CL\left(\uptu{\tau}{1}-\uptu{\tau}{2}\right) - q_u(\xi)\left(\uptu{\tau}{1}-\uptu{\tau}{2}\right) = \widetilde{g}_\tau^1-\widetilde{g}_\tau^2
    \end{equation}
    with zero initial and zero boundary conditions.
    We now notice that, in the notation of \cite[Chapter IV, Theorem 9.1]{ladyzhenskaia1968linear}, the coefficients~$a_{ij}(t,x)=a^{ij}(t,x)$ of the linear PDE operator of the parabolic PDE~\eqref{eq:proof:lem:parabolic_wellposedness_infinitewidth:CONTRACTIVITYu1_aux} are bounded continuous functions in $D_T$ for all $i,j=1,\dots,d$,
    while the coefficients~$a_i(t,x)=b^i(t,x)-\sum_{j=1}^d\fpartial{x_j}a^{ji}(t,x)$ and $a(t,x)=c(t,x) - \sum_{i=1}^d \fpartial{x_i}b^i(t,x) - q_u(t,x,\xi(t,x))$ have finite norms $\N{a_i}_{L_r(D_T)}$ and $\N{a}_{L_s(D_T)}$ for any $r,s>0$.
    This is due to the uniform boundedness of the coefficients per Assumptions~\ref{asm:PDE_coefficients} and \ref{asm:PDE_nonlinearity_q_ubdd} combined with the boundedness of the domain per Assumption~\ref{asm:Dbdd}, see the subsequent computations with $T'=0$ and $\Delta T'=T$.
    Moreover, since it hold
    $\Nnormal{a_i}_{L_r(D_{T',T'+\Delta T'})}\leq \big(\Nnormal{b^i}_{L_\infty(D_T)}+\sum_{j=1}^d\Nnormal{\fpartial{x_j}a^{ji}}_{L_\infty(D_T)}\big)(\Delta T'\vol{D})^{1/r}$
    for all $i=1,\dots,d$ and
    $\Nnormal{a}_{L_s(D_{T',T'+\Delta T'})}\leq \big(\Nnormal{c}_{L_\infty(D_T)}+\sum_{i=1}^d\Nnormal{\fpartial{x_i}b^{i}}_{L_\infty(D_T)} + c_q\big)(\Delta T'\vol{D})^{1/s}$,
    $\N{a_i}_{L_r(D_{T',T'+\Delta T'})}$ and $\N{a}_{L_s(D_{T',T'+\Delta T'})}$ tend to zero as $\Delta T'\rightarrow0$.
    Furthermore, $\partial D$ is sufficiently smooth as of Assumption~\ref{asm:D}.
    The right-hand side $f=\widetilde{g}_\tau^1-\widetilde{g}_\tau^2\in L_p(D_T)$ for any $p\geq2$ since
    \begin{equation}        \label{eq:proof:lem:parabolic_wellposedness_infinitewidth:CONTRACTIVITYuinfty3}
    \begin{split}
        &\N{\widetilde{g}_\tau^1-\widetilde{g}_\tau^2}_{L_p(D_T)}
        = \N{\int_0^\tau \alpha_s T_{B_0}(\uhatpu{s}{1}-\uhatpu{s}{2})\,ds}_{L_p(D_T)} \\
        &\quad\,\leq \alpha_0 \int_0^\tau \!\N{T_{B_0}(\uhatpu{s}{1}\!-\!\uhatpu{s}{2})}_{L_p(D_T)}ds 
        = \alpha_0 \int_0^\tau\! \left(\int_0^T\!\!\!\int_D \absbig{[T_{B_0}(\uhatpu{s}{1}\!-\!\uhatpu{s}{2})](t,x)}^p\,dxdt\right)^{1/p} \!ds \\
        &\quad\,\leq \alpha_0C_\infty^{T_B} \int_0^\tau \left(\int_0^T\!\!\!\int_D \N{\uhatpu{s}{1}-\uhatpu{s}{2}}_{L_2(D_T)}^p\,dxdt\right)^{1/p} ds \\
        &\quad\,= \alpha_0C_\infty^{T_B}(T\vol{D})^{1/p} \int_0^\tau \N{\uhatpu{s}{1}-\uhatpu{s}{2}}_{L_2(D_T)} ds\\
        &\quad\,\leq \alpha_0\CT C_\infty^{T_B}(T\vol{D})^{1/p} \sup_{\tau\in[0,\CT]}\N{\uhatpu{\tau}{1}-\uhatpu{\tau}{2}}_{L_2(D_T)} ds,
    \end{split}
    \end{equation}
    where we used \Cref{lem:TBboundednessLinfty} in the third line.
    Moreover, the initial and boundary conditions $\phi=0\in W^{2-2/p}_p(D)$ and $\Phi=0\in W^{1-1/(2p),2-1/p}_p(\partial D_T)$ satisfy the compatibility condition $\phi|_{\partial D}=\Phi|_{t=0}$.
    Thus, \cite[Chapter IV, Theorem 9.1]{ladyzhenskaia1968linear} ensures that the unique solution~$(\uptu{\tau}{1}-\uptu{\tau}{2})\in W^{1,2}_{p}(D_T)$ to    \eqref{eq:proof:lem:parabolic_wellposedness_infinitewidth:CONTRACTIVITYu1_aux} and thus \eqref{eq:proof:lem:parabolic_wellposedness_infinitewidth:CONTRACTIVITYu1} obeys the bound
    \begin{equation}        \label{eq:proof:lem:parabolic_wellposedness_infinitewidth:CONTRACTIVITYuinfty4}
    \begin{split}
        \N{\uptu{\tau}{1}-\uptu{\tau}{2}}_{W^{1,2}_{p}(D_T)}
        &\leq C\N{\widetilde{g}_\tau^1-\widetilde{g}_\tau^2}_{L_p(D_T)} 
    \end{split}
    \end{equation}
    for a constant $C=C(T,\CL,q)$.
    
    With the conditions of \cite[Chapter IV, Theorem 9.1]{ladyzhenskaia1968linear} being fulfilled for any $p\geq2$ as we verified before, they are in particular fulfilled for $p>d+1$.
    Since we have for such $p$ the continuous embedding $W^{1,2}_{p}(D_{T})\hookrightarrow W^{1,1}_{p}(D_{T})\hookrightarrow L_{\infty}(\overbar{D}_T)$ by Morrey's inequality~\cite[Theorem 9.12]{brezis2011functional}, we have the first inequality in
    \begin{equation}        \label{eq:proof:lem:parabolic_wellposedness_infinitewidth:CONTRACTIVITYuinfty5}
    \begin{split}
        \N{\uptu{\tau}{1}-\uptu{\tau}{2}}_{L_\infty(D_T)}
        &\leq c(d,p)\N{\uptu{\tau}{1}-\uptu{\tau}{2}}_{W^{1,2}_{p}(D_T)}\\
        &\leq c(d,p)C\N{\widetilde{g}_\tau^1-\widetilde{g}_\tau^2}_{L_p(D_T)},
    \end{split}
    \end{equation}
    with the second one being due to \eqref{eq:proof:lem:parabolic_wellposedness_infinitewidth:CONTRACTIVITYuinfty4}.
    Hence, using the last two lines of \eqref{eq:proof:lem:parabolic_wellposedness_infinitewidth:CONTRACTIVITYuinfty3}, we have 
    \begin{equation}        \label{eq:proof:lem:parabolic_wellposedness_infinitewidth:CONTRACTIVITYuinfty6}
    \begin{split}
        \N{\uptu{\tau}{1}-\uptu{\tau}{2}}_{L_\infty(D_T)}
        &\leq C \int_0^\tau \N{\uhatpu{s}{1}-\uhatpu{s}{2}}_{L_2(D_T)} ds \\
        &\leq C \CT \sup_{\tau\in[0,\CT]}\N{\uhatpu{\tau}{1}-\uhatpu{\tau}{2}}_{L_2(D_T)} ds,
    \end{split}
    \end{equation}
    for a constant $C=C(\alpha,T,D,\CL,q,C^{T_B}_\infty)$.

    \textit{A bound for $\Nnormal{\uhatptu{\tau}{1}-\uhatptu{\tau}{2}}_{L_2([0,T],H^1(D))} + \Nnormal{\uhatptu{\tau}{1}-\uhatptu{\tau}{2}}_{L_\infty([0,T],L_2(D))}$.}
    Since both $\uhatptu{\tau}{1}$ and $\uhatptu{\tau}{2}$ weakly satisfy \eqref{eq:parabolicadjoint*_tilde} in the sense of \Cref{def:weak_sol_adjoint},
    it weakly holds
    \begin{equation}        \label{eq:proof:lem:parabolic_wellposedness_infinitewidth:CONTRACTIVITYuhat1'}
        -\fpartial{t}\big(\uhatptu{\tau}{1}-\uhatptu{\tau}{2}\big) + \CL^\dagger\big(\uhatptu{\tau}{1}-\uhatptu{\tau}{2}\big) - \big(q_u(\uptu{\tau}{1})\uhatptu{\tau}{1}-q_u(\uptu{\tau}{2})\uhatptu{\tau}{2}\big)
        = \uptu{\tau}{1}-\uptu{\tau}{2}
    \end{equation}
    with zero terminal and zero boundary conditions,
    or equivalently
    \begin{equation}        \label{eq:proof:lem:parabolic_wellposedness_infinitewidth:CONTRACTIVITYuhat1}
    \begin{split}
        -&\fpartial{t}\big(\uhatptu{\tau}{1}-\uhatptu{\tau}{2}\big) + \CL^\dagger\big(\uhatptu{\tau}{1}-\uhatptu{\tau}{2}\big) - \big(q_u(\uptu{\tau}{1})\big(\uhatptu{\tau}{1}-\uhatptu{\tau}{2}\big)\big)\\
        &\qquad\qquad\qquad\qquad\qquad\qquad\qquad\;\;= (\uptu{\tau}{1}-\uptu{\tau}{2}) + \big(q_u(\uptu{\tau}{1})-q_u(\uptu{\tau}{2})\big)\uhatptu{\tau}{2}.
    \end{split}
    \end{equation}
    Let us now first reverse \eqref{eq:proof:lem:parabolic_wellposedness_infinitewidth:CONTRACTIVITYuhat1} in time to obtain with a time transformation
    for $\uhatptuReverted{\tau}{k}=\uhatpptuReverted{\tau}{k}{t,x}=\uhatpptu{\tau}{k}{T-t,x}$ for $k=1,2$ the parabolic forward PDE
    \begin{equation}        \label{eq:proof:lem:parabolic_wellposedness_infinitewidth:CONTRACTIVITYuhat1reverted}
    \begin{split}
        &\fpartial{t}\big(\uhatptuReverted{\tau}{1}-\uhatptuReverted{\tau}{2}\big) + \underbar{\CL}^*\big(\uhatptuReverted{\tau}{1}-\uhatptuReverted{\tau}{2}\big) - \big(\underbar{q}_u(\upptu{\tau}{1}{T-\dummy,\dummy})\big(\uhatptuReverted{\tau}{1}-\uhatptuReverted{\tau}{2}\big)\big)\\
        &\qquad\,= (\upptu{\tau}{1}{T-\dummy,\dummy}-\upptu{\tau}{2}{T-\dummy,\dummy}) + \big(\underbar{q}_u(\upptu{\tau}{1}{T-\dummy,\dummy})-\underbar{q}_u(\upptu{\tau}{2}{T-\dummy,\dummy})\big)\uhatptuReverted{\tau}{2}
    \end{split}
    \end{equation}
    with zero initial and zero boundary conditions.
    We obtain by chain rule and by using that $\uhatptuReverted{\tau}{1}-\uhatptuReverted{\tau}{2}$ is a weak solution to \eqref{eq:proof:lem:parabolic_wellposedness_infinitewidth:CONTRACTIVITYuhat1reverted} in the sense of \Cref{def:weak_sol_adjoint} that
    \begin{equation}        \label{eq:proof:lem:parabolic_wellposedness_infinitewidth:CONTRACTIVITYuhat2}
    \begin{split}        &\fpartial{t}\Nbig{\uhatpptuReverted{\tau}{1}{t,\dummy}-\uhatpptuReverted{\tau}{2}{t,\dummy}}^2_{L_2(D)}\\       &\quad\,=2\big(\uhatpptuReverted{\tau}{1}{t,\dummy}-\uhatpptuReverted{\tau}{2}{t,\dummy},\fpartial{t}\big(\uhatpptuReverted{\tau}{1}{t,\dummy}-\uhatpptuReverted{\tau}{2}{t,\dummy}\big)\big)_{L_2(D)}\\
        &\quad\,=        2\big\langle\fpartial{t}\big(\uhatpptuReverted{\tau}{1}{t,\dummy}-\uhatpptuReverted{\tau}{2}{t,\dummy}\big),\uhatpptuReverted{\tau}{1}{t,\dummy}-\uhatpptuReverted{\tau}{2}{t,\dummy}\big\rangle_{H^{-1}(D),H_0^1(D)}\\
        &\quad\,= - 2\underbar{\CB}^*\big[\uhatpptuReverted{\tau}{1}{t,\dummy}-\uhatpptuReverted{\tau}{2}{t,\dummy},\uhatpptuReverted{\tau}{1}{t,\dummy}-\uhatpptuReverted{\tau}{2}{t,\dummy};t\big]\\
        &\quad\,\quad\,+ 2\big(\underbar{q}_u(t,\dummy,\upptu{\tau}{1}{T\!-\!t,\dummy})\big(\uhatpptuReverted{\tau}{1}{t,\dummy}-\uhatpptuReverted{\tau}{2}{t,\dummy}\big),\uhatpptuReverted{\tau}{1}{t,\dummy}-\uhatpptuReverted{\tau}{2}{t,\dummy}\big)_{L_2(D)}\\
        &\quad\,\quad\,+ 2\big(\upptu{\tau}{1}{T\!-\!t,\dummy}-\upptu{\tau}{2}{T\!-\!t,\dummy},\uhatpptuReverted{\tau}{1}{t,\dummy}-\uhatpptuReverted{\tau}{2}{t,\dummy}\big)_{L_2(D)}\\
        &\quad\,\quad\,+ 2\big(\big(\underbar{q}_u(t,\dummy,\upptu{\tau}{1}{T\!-\!t,\dummy})\!-\!\underbar{q}_u(t,\dummy,\upptu{\tau}{2}{T\!-\!t,\dummy})\big)\uhatpptuReverted{\tau}{2}{t,\dummy},\uhatpptuReverted{\tau}{1}{t,\dummy}\!-\!\uhatpptuReverted{\tau}{2}{t,\dummy}\big)_{L_2(D)},
    \end{split}
    \end{equation}
    where the individual steps hold as previously described.
    To estimate the right-hand side of \eqref{eq:proof:lem:parabolic_wellposedness_infinitewidth:CONTRACTIVITYuhat2} from above,
    we consider each of the four terms separately.
    For the first term, by using the definition of the bilinear form $\CB$ as well as that by Assumption~\ref{asm:PDE_L_parabolicity} the PDE operator is uniformly parabolic and that by Assumption~\ref{asm:PDE_coefficients} the coefficients are in $L_\infty$, we can estimate as in \eqref{eq:proof:lem:parabolic_wellposedness_infinitewidth:NORM2_2} that
    \begin{equation}        \label{eq:proof:lem:parabolic_wellposedness_infinitewidth:CONTRACTIVITYuhat3}
    \begin{split}
        &-\underbar{\CB}^*\big[\uhatpptuReverted{\tau}{1}{t,\dummy}-\uhatpptuReverted{\tau}{2}{t,\dummy},\uhatpptuReverted{\tau}{1}{t,\dummy}-\uhatpptuReverted{\tau}{2}{t,\dummy};t\big]\\
        &\qquad\,\leq -\frac{\nu}{2} \absbig{\uhatpptuReverted{\tau}{1}{t,\dummy}-\uhatpptuReverted{\tau}{2}{t,\dummy}}^2_{H^1(D)}\\
        &\qquad\,\quad\,+ \left(\frac{1}{2\nu} \sum_{i=1}^d\N{b^i}_{L_\infty(D_T)}+\N{c}_{L_\infty(D_T)}\right)\Nbig{\uhatpptuReverted{\tau}{1}{t,\dummy}-\uhatpptuReverted{\tau}{2}{t,\dummy}}^2_{L_2(D)}.
    \end{split}
    \end{equation}
    For the second term, using that by Assumption~\ref{asm:PDE_nonlinearity_q_ubdd}  $q_u$ is bounded,
    we can estimate directly
    \begin{equation}        \label{eq:proof:lem:parabolic_wellposedness_infinitewidth:CONTRACTIVITYuhat4}
    \begin{split}
        &\big(\underbar{q}_u(t,\dummy,\upptu{\tau}{1}{T-t,\dummy})\big(\uhatpptuReverted{\tau}{1}{t,\dummy}-\uhatpptuReverted{\tau}{2}{t,\dummy}\big),\uhatpptuReverted{\tau}{1}{t,\dummy}-\uhatpptuReverted{\tau}{2}{t,\dummy}\big)_{L_2(D)}\\
        &\qquad\, \leq c_q \Nbig{\uhatpptuReverted{\tau}{1}{t,\dummy}-\uhatpptuReverted{\tau}{2}{t,\dummy}}_{L_2(D)}^2.
    \end{split}
    \end{equation}
    For the third term, by Cauchy-Schwarz and Young’s inequality we upper bound
    \begin{equation}        \label{eq:proof:lem:parabolic_wellposedness_infinitewidth:CONTRACTIVITYuhat5}
    \begin{split}
        &\big(\upptu{\tau}{1}{T-t,\dummy}-\upptu{\tau}{2}{T-t,\dummy},\uhatpptuReverted{\tau}{1}{t,\dummy}-\uhatpptuReverted{\tau}{2}{t,\dummy}\big)_{L_2(D)} \\
        &\qquad\,\leq \N{\upptu{\tau}{1}{T-t,\dummy}-\upptu{\tau}{2}{T-t,\dummy}}_{L_2(D)}\Nbig{\uhatpptuReverted{\tau}{1}{t,\dummy}-\uhatpptuReverted{\tau}{2}{t,\dummy}}_{L_2(D)}\\
        &\qquad\,\leq \frac{1}{2}\left(\N{\upptu{\tau}{1}{T-t,\dummy}-\upptu{\tau}{2}{T-t,\dummy}}_{L_2(D)}^2 + \Nbig{\uhatpptuReverted{\tau}{1}{t,\dummy}-\uhatpptuReverted{\tau}{2}{t,\dummy}}_{L_2(D)}^2\right).
    \end{split}
    \end{equation}
    For the fourth and final term, we first note that by the mean-value theorem, for any $(t,x)\in D_T$ there exists a $\xi(t,x)$ such that
    \begin{equation}        \label{eq:proof:lem:parabolic_wellposedness_infinitewidth:CONTRACTIVITYuhat6aux}
    \begin{split}
        &\underbar{q}_u(t,x,\upptu{\tau}{1}{T\!-\!t,x})\!-\!\underbar{q}_u(t,x,\upptu{\tau}{2}{T\!-\!t,x})\\
        &\qquad\,= \underbar{q}_{uu}(t,x,\xi(t,x))\left(\upptu{\tau}{1}{T\!-\!t,x}\!-\!\upptu{\tau}{2}{T\!-\!t,x}\right).
    \end{split}
    \end{equation}
    Leveraging this while using that by Assumption~\ref{asm:PDE_nonlinearity_q_uubdd}  $q_{uu}$ is bounded,
    we can estimate by Cauchy-Schwarz and Young’s inequality
    \begin{equation}       \label{eq:proof:lem:parabolic_wellposedness_infinitewidth:CONTRACTIVITYuhat6}
    \begin{split}
        &\big(\big(\underbar{q}_u(t,\dummy,\upptu{\tau}{1}{T\!-\!t,\dummy})\!-\!\underbar{q}_u(t,\dummy,\upptu{\tau}{2}{T\!-\!t,\dummy})\big)\uhatpptuReverted{\tau}{2}{t,\dummy},\uhatpptuReverted{\tau}{1}{t,\dummy}\!-\!\uhatpptuReverted{\tau}{2}{t,\dummy}\big)_{L_2(D)} \\
        &\quad\,\leq \Nbig{\big(\underbar{q}_u(t,\dummy,\upptu{\tau}{1}{T\!-\!t,\dummy})\!-\!\underbar{q}_u(t,\dummy,\upptu{\tau}{2}{T\!-\!t,\dummy})\big)\uhatpptuReverted{\tau}{2}{t,\dummy}}_{L_2(D)}\\
        &\quad\,\qquad\cdot\Nbig{\uhatpptuReverted{\tau}{1}{t,\dummy}\!-\!\uhatpptuReverted{\tau}{2}{t,\dummy}}_{L_2(D)}\\
        &\quad\,\leq \Nbig{\underbar{q}_u(t,\dummy,\upptu{\tau}{1}{T\!-\!t,\dummy})\!-\!\underbar{q}_u(t,\dummy,\upptu{\tau}{2}{T\!-\!t,\dummy})}_{L_\infty(D)}\Nbig{\uhatpptuReverted{\tau}{2}{t,\dummy}}_{L_2(D)}\\
        &\quad\,\qquad\cdot\Nbig{\uhatpptuReverted{\tau}{1}{t,\dummy}\!-\!\uhatpptuReverted{\tau}{2}{t,\dummy}}_{L_2(D)}\\
        &\quad\,\leq c_q'\Nbig{\upptu{\tau}{1}{T\!-\!t,\dummy}\!-\!\upptu{\tau}{2}{T\!-\!t,\dummy}}_{L_\infty(D)}\Nbig{\uhatpptuReverted{\tau}{2}{t,\dummy}}_{L_2(D)}\Nbig{\uhatpptuReverted{\tau}{1}{t,\dummy}\!-\!\uhatpptuReverted{\tau}{2}{t,\dummy}}_{L_2(D)}\\
        &\quad\,\leq \frac{1}{2} \Big(\Nbig{\uhatpptuReverted{\tau}{2}{t,\dummy}}_{L_2(D)}^2\Nbig{\upptu{\tau}{1}{T\!-\!t,\dummy}\!-\!\upptu{\tau}{2}{T\!-\!t,\dummy}}_{L_\infty(D)}^2 \\
        &\quad\,\qquad\qquad+(c_q')^2\Nbig{\uhatpptuReverted{\tau}{1}{t,\dummy}\!-\!\uhatpptuReverted{\tau}{2}{t,\dummy}}_{L_2(D)}^2\Big).
    \end{split}
    \end{equation}
    Combining the bounds established in \eqref{eq:proof:lem:parabolic_wellposedness_infinitewidth:CONTRACTIVITYuhat3}--\eqref{eq:proof:lem:parabolic_wellposedness_infinitewidth:CONTRACTIVITYuhat6} and inserting them into \eqref{eq:proof:lem:parabolic_wellposedness_infinitewidth:CONTRACTIVITYuhat2},
    we arrive after reordering at
    \begin{equation}       \label{eq:proof:lem:parabolic_wellposedness_infinitewidth:CONTRACTIVITYuhat7}
    \begin{split}        &\fpartial{t}\Nbig{\uhatpptuReverted{\tau}{1}{t,\dummy}-\uhatpptuReverted{\tau}{2}{t,\dummy}}^2_{L_2(D)} + \frac{\nu}{2} \absbig{\uhatpptuReverted{\tau}{1}{t,\dummy}-\uhatpptuReverted{\tau}{2}{t,\dummy}}^2_{H^1(D)}\\
        &\qquad\,\leq C\Nbig{\uhatpptuReverted{\tau}{1}{t,\dummy}-\uhatpptuReverted{\tau}{2}{t,\dummy}}_{L_2(D)}^2 + \N{\upptu{\tau}{1}{T-t,\dummy}-\upptu{\tau}{2}{T-t,\dummy}}_{L_2(D)}^2 \\
        &\qquad\,\qquad\, + \Nbig{\uhatpptuReverted{\tau}{2}{t,\dummy}}_{L_2(D)}^2 \N{\upptu{\tau}{1}{T-t,\dummy}-\upptu{\tau}{2}{T-t,\dummy}}_{L_\infty(D)}^2\\
        &\qquad\,\leq C\Nbig{\uhatpptuReverted{\tau}{1}{t,\dummy}-\uhatpptuReverted{\tau}{2}{t,\dummy}}_{L_2(D)}^2 + \N{\upptu{\tau}{1}{T-t,\dummy}-\upptu{\tau}{2}{T-t,\dummy}}_{L_2(D)}^2 \\
        &\qquad\,\qquad\, + \Nbig{\uhatptu{\tau}{2}}_{L_\infty([0,T], L_2(D))}^2 \N{\upptu{\tau}{1}{T-t,\dummy}-\upptu{\tau}{2}{T-t,\dummy}}_{L_\infty(D)}^2
    \end{split}
    \end{equation}
    for a constant $C=C(\CL,q)$.
    Recalling that $\uhatpptu{\tau}{1}{0,\dummy}=\uhatpptu{\tau}{2}{0,\dummy}$, an application of Grönwall's inequality shows
    \begin{equation}        \label{eq:proof:lem:parabolic_wellposedness_infinitewidth:CONTRACTIVITY_uhattilde}
    \begin{split}
        &\Nbig{\uhatptu{\tau}{1}-\uhatptu{\tau}{2}}_{L_2([0,T],H^1(D))} + \Nbig{\uhatptu{\tau}{1}-\uhatptu{\tau}{2}}_{L_\infty([0,T],L_2(D))} \\
        &\qquad\,\leq C\left(\N{\uptu{\tau}{1}-\uptu{\tau}{2}}_{L_2(D_T)} + \Nbig{\uhatptu{\tau}{2}}_{L_\infty([0,T], L_2(D))}\N{\uptu{\tau}{1}-\uptu{\tau}{2}}_{L_\infty(D_T)}\right)
    \end{split}
    \end{equation}
    for some other, potentially larger, constant $C=C(T,\CL,q)$.
    Thus, in particular,
    \begin{equation}        \label{eq:proof:lem:parabolic_wellposedness_infinitewidth:CONTRACTIVITY_uhattilde_UNIFORM}
    \begin{split}
        &\Nbig{\uhatptu{}{1}\!-\!\uhatptu{}{2}}_{\CV_\CT}\\
        &\qquad\,\leq C\left(\sup_{\tau\in[0,\CT]}\!\N{\uptu{\tau}{1}\!-\!\uptu{\tau}{2}}_{L_2(D_T)} \!+\!\!\! \sup_{\tau\in[0,\CT]}\Nbig{\uhatptu{\tau}{2}}_{L_\infty([0,T], L_2(D))}\!\sup_{\tau\in[0,\CT]}\!\N{\uptu{\tau}{1}\!-\!\uptu{\tau}{2}}_{L_\infty(D_T)}\right).
    \end{split}
    \end{equation}

    \textit{Combination of the bounds.}
    Using \eqref{eq:proof:lem:parabolic_wellposedness_infinitewidth:CONTRACTIVITY_utilde_UNIFORM} and \eqref{eq:proof:lem:parabolic_wellposedness_infinitewidth:CONTRACTIVITY_uhattilde_UNIFORM} in the first inequality together with the first step of \eqref{eq:proof:lem:parabolic_wellposedness_infinitewidth:CONTRACTIVITYuinfty6} in the second inequality and \Cref{lem:parabolicTB} in the next-to-last step, we have
    \begin{allowdisplaybreaks}
    \label{eq:proof:lem:parabolic_wellposedness_infinitewidth:CONTRACTION}
    \begin{align}    
        &\N{\uptu{}{1}-\uptu{}{2}}_{\CV_{\widetilde{\CT}}} + \Nbig{\uhatptu{}{1}-\uhatptu{}{2}}_{\CV_{\widetilde{\CT}}}\notag\\
        &\qquad\,\leq C\sup_{\tau\in[0,\widetilde{\CT}]} \N{\widetilde{g}_\tau^1-\widetilde{g}_\tau^2}_{L_2(D_T)}\notag\\
        &\qquad\,\quad\,+ C\sup_{\tau\in[0,\widetilde{\CT}]}\Nbig{\uhatptu{\tau}{2}}_{L_\infty([0,T], L_2(D))} \sup_{\tau\in[0,\widetilde{\CT}]}\N{\uptu{\tau}{1}-\uptu{\tau}{2}}_{L_\infty(D_T)}\notag\\ 
        &\qquad\,\leq C\sup_{\tau\in[0,\widetilde{\CT}]} \N{\int_0^\tau \alpha_s T_{B_0}[\uhatpu{s}{1}-\uhatpu{s}{2}]\,ds}_{L_2(D_T)} \notag\\
        &\qquad\,\quad\,+ C\sup_{\tau\in[0,\widetilde{\CT}]}\Nbig{\uhatptu{\tau}{2}}_{L_\infty([0,T], L_2(D))} \sup_{\tau\in[0,\widetilde{\CT}]}\int_0^\tau \N{\uhatpu{s}{1}-\uhatpu{s}{2}}_{L_2(D_T)} ds \notag\\
        &\qquad\,\leq C \int_0^{\widetilde{\CT}} \N{\alpha_s T_{B_0}[\uhatpu{s}{1}-\uhatpu{s}{2}]}_{L_2(D_T)}ds \notag\\
        &\qquad\,\quad\,+  C\sup_{\tau\in[0,\widetilde{\CT}]}\Nbig{\uhatptu{\tau}{2}}_{L_\infty([0,T], L_2(D))} \int_0^{\widetilde{\CT}} \N{\uhatpu{s}{1}-\uhatpu{s}{2}}_{L_2(D_T)} ds \notag\\
        &\qquad\,\leq C_2 \left(1+\sup_{\tau\in[0,\widetilde{\CT}]}\Nbig{\uhatptu{\tau}{2}}_{L_\infty([0,T], L_2(D))}\right) \int_0^{\widetilde{\CT}} \N{\uhatpu{s}{1}-\uhatpu{s}{2}}_{L_2(D_T)} ds
    \end{align}
    \end{allowdisplaybreaks}
    for a constant $C_2=C_2(\alpha,\CL,q,C^B_2)$ (to be precise, $C_2=C\max\{\alpha_0C_2^B,1\}$).
    
    \textit{Step~2d: Existence locally in training time.}
    Let us choose
    \begin{equation}        \label{eq:proof:lem:parabolic_wellposedness_infinitewidth:M0CT0}
        M_0
        = 2C_1\!\left(\N{h}_{L_2(D_T)} \!+\! \N{f}_{L_2(D)} \!+\! 1\right)
        \quad\text{and}\quad
        \CT_0
        = \min\left\{\frac{1}{4C_1},\frac{1}{2C_2(1+M_0)}\right\}
    \end{equation}
    where the constants $C_1$ and $C_2$ are as given implicitly in \eqref{eq:proof:lem:parabolic_wellposedness_infinitewidth:SELFMAPPING} and \eqref{eq:proof:lem:parabolic_wellposedness_infinitewidth:CONTRACTION}, respectively.
    We show in what follows that there exists a unique solution $(\up{},\uhatp{})\in \CV_{\CT_0}(M_0)\times\CV_{\CT_0}(M_0)$.

    \textit{Step~2d(i): Self-mapping property of $F$.}
    Consider $(\upt{},\uhatpt{})$ together with its corresponding $(\up{},\uhatp{}) \in \CV_{\CT_0}(M_0)\times \CV_{\CT_0}(M_0)$.
    Using the definitions of $M_0$ and $\CT_0$ in \eqref{eq:proof:lem:parabolic_wellposedness_infinitewidth:M0CT0}, respectively, we can derive from \eqref{eq:proof:lem:parabolic_wellposedness_infinitewidth:SELFMAPPING} that
    \begin{equation}
    \begin{split}
        &\N{\upt{}}_{\CV_{\CT_0}} \!+\! \Nnormal{\uhatpt{}}_{\CV_{\CT_0}}
        \leq C_1 \int_0^{\CT_0} \N{\uhatp{s}}_{L_2(D_T)}ds \!+\! C_1\left(\N{h}_{L_2(D_T)} \!+\! \N{f}_{L_2(D)} \!+\! 1\right)\\
        &\qquad\,\leq C_1 \CT_0 \N{\uhatp{}}_{\CV_{\CT_0}} + \frac{M_0}{2}
        \leq C_1 \CT_0 M_0 + \frac{M_0}{2}
        \leq \frac{M_0}{4} + \frac{M_0}{2} \leq M_0.
    \end{split}
    \end{equation}
    Thus, $(\upt{},\uhatpt{}) \in \CV_{\CT_0}(M_0)\times \CV_{\CT_0}(M_0)$.

    \textit{Step~2d(ii): Contractivity of $F$.}
    Consider two pairs $(\uptu{}{1},\uhatptu{}{1})$, $(\uptu{}{2},\uhatptu{}{2})$ together with their corresponding $(\upu{}{1},\uhatpu{}{1}), (\upu{}{2},\uhatpu{}{2}) \in \CV_{\CT_0}(M_0)\times \CV_{\CT_0}(M_0)$.
    According to \textit{Step~2d(i)}, $(\uptu{}{1},\uhatptu{}{1}),(\uptu{}{2},\uhatptu{}{2}) \in \CV_{\CT_0}(M_0)\times \CV_{\CT_0}(M_0)$.
    Using the definitions of $M_0$ and $\CT_0$ in \eqref{eq:proof:lem:parabolic_wellposedness_infinitewidth:M0CT0}, we can derive from \eqref{eq:proof:lem:parabolic_wellposedness_infinitewidth:CONTRACTION} that
    \begin{equation}
    \begin{split}
        &\N{\uptu{}{1}-\uptu{}{2}}_{\CV_{\CT_0}} + \Nbig{\uhatptu{}{1}-\uhatptu{}{2}}_{\CV_{\CT_0}}\\
        &\quad\,\leq C_2 \left(1+\sup_{\tau\in[0,\CT_0]}\Nbig{\uhatptu{\tau}{2}}_{L_\infty([0,T], L_2(D))}\right) \int_0^{\CT_0} \N{\uhatpu{s}{1}-\uhatpu{s}{2}}_{L_2(D_T)} ds \\
        &\quad\,\leq C_2 \left(1+M_0\right) \int_0^{\CT_0} \N{\uhatpu{s}{1}-\uhatpu{s}{2}}_{L_2(D_T)} ds
        \leq C_2 \left(1+M_0\right)\CT_0 \Nbig{\uhatpu{}{1}-\uhatpu{}{2}}_{\CV_{\CT_0}} \\
        &\quad\,\leq \frac{1}{2} \Nbig{\uhatpu{}{1}-\uhatpu{}{2}}_{\CV_{\CT_0}}
        \leq \frac{1}{2} \left(\N{\upu{}{1}-\upu{}{2}}_{\CV_{\CT_0}} + \Nbig{\uhatpu{}{1}-\uhatpu{}{2}}_{\CV_{\CT_0}}\right),
    \end{split}
    \end{equation}
    showing that the map $F$ is a contraction.

    \textit{Step~2d(iii): Banach fixed point theorem.}
    Hence, the Banach fixed point theorem guarantees that there exists a unique solution $(\up{},\uhatp{})\in \CV_{\CT_0}(M_0)\times\CV_{\CT_0}(M_0)$,
    which satisfies $(\up{},\uhatp{}) = F(\up{},\uhatp{})$.
    We have thus established the existence of a unique local-in-training-time solution to the PDE system~\eqref{eq:parabolicPDE*}--\eqref{eq:parabolicadjoint*} coupled with the integro-differential equation~\eqref{eq:parabolicgtau} for $\widetilde{g}_\tau = g^*_\tau$ on the training time domain $[0,\CT_0]$.
    In particular, $(\up{\tau},\uhatp{\tau})\in \CS\times \CS$ for every $\tau\in[0,\CT_0]$.
    Reapplying the classical existence and regularity results from \textit{Steps 1a}, \textit{1b} and \textit{1c} further shows that for each $\tau\in[0,\CT]$ such solution satisfies $(\partial_t\uppt{\tau}{t,\dummy},\partial_t\uhatppt{\tau}{t,\dummy})\in L_2(D) \times L_2(D)$
    for a.e.\@ $t\in[0,T]$.
    
    \textit{Step~2e: Existence globally in training time.}
    Leveraging a bootstrapping argument, we now extend this argument to obtain a solution on a training time domain $[0,\CT]$ for an arbitrary $\CT<\infty$.
    To do so, we proceed inductively.
    Suppose we have a solution $(\up{},\uhatp{})\in \CV_{\CT_{k-1}}(M_{k-1})\times\CV_{\CT_{k-1}}(M_{k-1})$ which is such that for each $\tau\in[0,\CT_{k-1}]$ it satisfies $(\partial_t\upp{\tau}{t,\dummy},\partial_t\uhatpp{\tau}{t,\dummy})\in L_2(D) \times L_2(D)$
    for a.e.\@ $t\in[0,T]$. (We showed in \textit{Step~2d} before that this is the case for the induction start $k=1$.)

    On the training time interval $I = [0,\CT_{k-1}]$ we can now employ \Cref{lem:parabolictimeevolutionJt*} which ensures that $\fd{\tau}\J{\tau}\leq0$ for all $\tau\in I=[0,\CT_{k-1}]$.
    Thanks to this, \Cref{lem:parabolic_uhatL2} (applied in the setting $I=[0,\CT_{k-1}]$) provides a uniform (in the training time $\tau$ and on the training time interval $[0,\CT_{k-1}]$) bound $\sup_{\tau\in[0,\CT_{k-1}]}\N{\uhatp{\tau}}_{L_2(D_T)}\leq C^{\widehat{u}}$, where $C^{\widehat{u}}$ does not depend on $\CT_{k-1}$ but only on $\J{0}$.  
    Let us now choose 
    \begin{equation}
\label{eq:proof:lem:parabolic_wellposedness_infinitewidth:MkCTk}
        M_k
        = k C^{\widehat{u}} + 2C_1\!\left(\N{h}_{L_2(D_T)} \!+\! \N{f}_{L_2(D)} \!+\! 1\right)
        \quad\text{and}\quad
        \CT_k
        = \CT_{k-1} + \min\left\{\frac{1}{4C_1},\frac{1}{2C_2(1+M_k)}\right\}.
    \end{equation}
    We show in what follows that there exists a unique solution $(\up{},\uhatp{})\in \CV_{\CT_{k}}(M_{k})\times\CV_{\CT_{k}}(M_{k})$.

    \textit{Step~2e(i): Self-mapping property of $F$.}
    Consider $(\upt{},\uhatpt{})$ together with its corresponding $(\up{},\uhatp{}) \in \CV_{\CT_k}(M_k)\times \CV_{\CT_k}(M_k)$.
    Using the definitions of $M_k$ and $\CT_k$ in \eqref{eq:proof:lem:parabolic_wellposedness_infinitewidth:MkCTk}, we can derive from \eqref{eq:proof:lem:parabolic_wellposedness_infinitewidth:SELFMAPPING} that
    \begin{allowdisplaybreaks}
    \begin{align}
        &\N{\upt{}}_{\CV_{\CT_k}} + \Nnormal{\uhatpt{}}_{\CV_{\CT_k}} \leq C_1 \int_0^{\CT_k} \N{\uhatp{s}}_{L_2(D_T)}ds + C_1\left(\N{h}_{L_2(D_T)} + \N{f}_{L_2(D)} + 1\right)\nonumber\\
        &\qquad\,\leq C_1 \int_0^{\CT_{k-1}} \N{\uhatp{s}}_{L_2(D_T)}ds + C_1 \int_{\CT_{k-1}}^{\CT_k} \N{\uhatp{s}}_{L_2(D_T)}ds + C_1\left(\N{h}_{L_2(D_T)} + \N{f}_{L_2(D)} + 1\right)\nonumber\\
        &\qquad\,\leq C_1 \CT_{k-1} C^{\widehat{u}} + C_1 (\CT_k-\CT_{k-1}) \N{\uhatp{}}_{\CV_{\CT_k}} + \frac{M_k}{2}\nonumber\\
        &\qquad\,\leq C_1 \CT_{k-1} C^{\widehat{u}} + C_1 (\CT_k-\CT_{k-1}) M_k + \frac{M_k}{2}
        \leq \frac{M_k}{4} + \frac{M_k}{4} + \frac{M_k}{2} \leq M_k,\nonumber
    \end{align}
    \end{allowdisplaybreaks}%
    where we used for the first step in the last line that (with $\CT_{-1}:=0$) according to the definition of $\CT_k$ in \eqref{eq:proof:lem:parabolic_wellposedness_infinitewidth:MkCTk} it holds
    $C_1 \CT_{k-1} C^{\widehat{u}}= C_1 \sum_{\ell=0}^{k-1} (\CT_{\ell}-\CT_{\ell-1}) C^{\widehat{u}}\leq \frac{1}{4} k C^{\widehat{u}}\leq \frac{M_k}{4}.$
    Thus, $(\upt{},\uhatpt{}) \in \CV_{\CT_k}(M_k)\times \CV_{\CT_k}(M_k)$.

    \textit{Step~2e(ii): Contractivity of $F$.}
    Consider two pairs $(\uptu{}{1},\uhatptu{}{1})$, $(\uptu{}{2},\uhatptu{}{2})$ together with their corresponding $(\upu{}{1},\uhatpu{}{1}), (\upu{}{2},\uhatpu{}{2}) \in \CV_{\CT_k}(M_k)\times \CV_{\CT_k}(M_k)$.
    Using the definitions of $M_k$ and $\CT_k$ in \eqref{eq:proof:lem:parabolic_wellposedness_infinitewidth:MkCTk}, we can derive from \eqref{eq:proof:lem:parabolic_wellposedness_infinitewidth:CONTRACTION} that
    \begin{equation}
    \begin{split}
        &\N{\uptu{}{1}-\uptu{}{2}}_{\CV_{\CT_k}} + \Nbig{\uhatptu{}{1}-\uhatptu{}{2}}_{\CV_{\CT_k}}\\
        &\qquad\,\leq C_2 \left(1+\sup_{\tau\in[0,\CT_k]}\Nbig{\uhatptu{\tau}{2}}_{L_\infty([0,T],L_2(D))}\right) \int_0^{\CT_k} \N{\uhatpu{s}{1}-\uhatpu{s}{2}}_{L_2(D_T)} ds \\
        &\qquad\,\leq C_2 \left(1\!+\!M_k\right) \int_0^{\CT_k} \N{\uhatpu{s}{1}-\uhatpu{s}{2}}_{L_2(D_T)} ds \\
        &\qquad\,= C_2 \left(1\!+\!M_k\right) \int_{\CT_{k-1}}^{\CT_k} \N{\uhatpu{s}{1}\!-\!\uhatpu{s}{2}}_{L_2(D_T)} ds 
        \leq C_2 \left(1\!+\!M_k\right)(\CT_k\!-\!\CT_{k-1}) \Nbig{\uhatpu{}{1}\!-\!\uhatpu{}{2}}_{\CV_{\CT_k}} \\
        &\qquad\,\leq \frac{1}{2} \Nbig{\uhatpu{}{1}-\uhatpu{}{2}}_{\CV_{\CT_k}}
        \leq \frac{1}{2} \left(\N{\upu{}{1}-\upu{}{2}}_{\CV_{\CT_k}} + \Nbig{\uhatpu{}{1}-\uhatpu{}{2}}_{\CV_{\CT_k}}\right),
    \end{split}
    \end{equation}
    where the third step is due to the uniqueness of the solution on the training time interval $[0,\CT_{k-1}]$.
    Thus, the map $F$ is a contraction.

    \textit{Step~2e(iii): Banach fixed point theorem.}
    Hence, the Banach fixed point theorem guarantees that there exists a unique solution $(\up{},\uhatp{})\in \CV_{\CT_k}(M_k)\times\CV_{\CT_k}(M_k)$,
    which satisfies $(\up{},\uhatp{}) = F(\up{},\uhatp{})$.
    We have thus established the existence of a unique solution to the PDE system~\eqref{eq:parabolicPDE*}--\eqref{eq:parabolicadjoint*} coupled with the integro-differential equation~\eqref{eq:parabolicgtau} for $\widetilde{g}_\tau = g^*_\tau$ on the training time domain $[0,\CT_k]$.
    In particular, $(\up{\tau},\uhatp{\tau})\in \CS\times \CS$ for every $\tau\in[0,\CT_k]$.
    Reapplying the classical existence and regularity results from \textit{Steps 1a}, \textit{1b} and \textit{1c} further shows that for each $\tau\in[0,\CT_k]$ such solution satisfies $(\partial_t\uppt{\tau}{t,\dummy},\partial_t\uhatppt{\tau}{t,\dummy})\in L_2(D) \times L_2(D)$
    for a.e.\@ $t\in[0,T]$.

    \textit{Step~2e(iv): Globality of the construction in training time.}
    It remains to notice that, due to the definition of the times~$\CT_k$ in \eqref{eq:proof:lem:parabolic_wellposedness_infinitewidth:MkCTk}, the telescopic sum
    \begin{equation}
        \label{eq:proof:lem:parabolic_wellposedness_infinitewidth:harmonicseries}
    \begin{split}
        \sum_{k=1}^\infty (\CT_k-\CT_{k-1})
        &= \sum_{k=1}^\infty \min\left\{\frac{1}{4C_1},\frac{1}{2C_2(1+M_k)}\right\}
        \geq \sum_{k=K}^\infty \frac{1}{2C_2(1+M_k)}\\
        &= \sum_{k=K}^\infty \frac{1}{2C_2(1+k C^{\widehat{u}} + 2C_1(\N{h}_{L_2(D_T)} + \N{f}_{L_2(D)} + 1))}
    \end{split}
    \end{equation}
    diverges.
    To see this, simply note that the definition of the bounds~$M_k$ in \eqref{eq:proof:lem:parabolic_wellposedness_infinitewidth:MkCTk} grows linearly in $k$, thus the last term in \eqref{eq:proof:lem:parabolic_wellposedness_infinitewidth:harmonicseries} being a harmonic series for some sufficiently large integer $K$.
    This ensures that the above construction in \textit{Step~2d} and \textit{2e} gives a solution in $\CV_{\CT}(M)\times\CV_{\CT}(M)$ for any given $\CT<\infty$ and suitable associated $M>0$.

    \textbf{Uniqueness.}
    It remains to prove the uniqueness of a solution $(\up{\tau},\uhatp{\tau})$ to the PDE system~\eqref{eq:parabolicPDE*}--\eqref{eq:parabolicadjoint*} coupled with the integro-differential equation~\eqref{eq:parabolicgtau} for $g^*_\tau$.
    For this purpose, suppose that there are two weak solutions $(\upu{}{1},\uhatpu{}{1}),(\upu{}{2},\uhatpu{}{2})\in \CV_\CT(M)\times \CV_\CT(M)$.
    This means we have $(\upu{\tau}{1},\uhatpu{\tau}{1}),(\upu{\tau}{2},\uhatpu{\tau}{2})\in \CS\times \CS$ with satisfy $(\fpartial{t}\upu{\tau}{1},\fpartial{t}\uhatpu{\tau}{1}),(\fpartial{t}\upu{\tau}{2},\fpartial{t}\uhatpu{\tau}{2})\in L_2(D)\times L_2(D)$ for a.e.\@ $t\in[0,T]$ and where $g^{*,1}_\tau,g^{*,2}_\tau\in L_2(D_T)$ denote the corresponding integral terms \eqref{eq:parabolicgtau} for each $\tau\in[0,\CT]$.
    By repeating the computations of the existence proof in \textit{Step 2c(ii)} we obtain analogously to \eqref{eq:proof:lem:parabolic_wellposedness_infinitewidth:CONTRACTIVITY_utilde} and \eqref{eq:proof:lem:parabolic_wellposedness_infinitewidth:CONTRACTIVITYuinfty5} that
    \begin{equation}       \label{eq:proof:lem:parabolic_wellposedness_infinitewidth:UNIQUENESS1}
        \N{\upu{\tau}{1}-\upu{\tau}{2}}_{L_\infty([0,T],H^1(D))} + \N{\upu{\tau}{1}-\upu{\tau}{2}}_{L_2([0,T],L_2(D))}
        \leq C\N{g^{*,1}_\tau-g^{*,2}_\tau}_{L_2(D_T)}
    \end{equation}
    and for some $p>d+1$ that
    \begin{equation}        \label{eq:proof:lem:parabolic_wellposedness_infinitewidth:UNIQUENESS2}
    \begin{split}
        \N{\upu{\tau}{1}-\upu{\tau}{2}}_{L_\infty(D_T)}
        &\leq C\N{g^{*,1}_\tau-g^{*,2}_\tau}_{L_p(D_T)},
    \end{split}
    \end{equation}
    as well as analogously to \eqref{eq:proof:lem:parabolic_wellposedness_infinitewidth:CONTRACTIVITY_uhattilde} that 
    \begin{equation}        \label{eq:proof:lem:parabolic_wellposedness_infinitewidth:UNIQUENESS3}
    \begin{split}
        &\Nbig{\uhatpu{\tau}{1}-\uhatpu{\tau}{2}}_{L_2([0,T],H^1(D))} + \Nbig{\uhatpu{\tau}{1}-\uhatpu{\tau}{2}}_{L_\infty([0,T],L_2(D))}\\
        &\qquad\,\leq C\left(\N{\upu{\tau}{1}-\upu{\tau}{2}}_{L_2(D_T)} + M\N{\upu{\tau}{1}-\upu{\tau}{2}}_{L_\infty(D_T)}\right),
    \end{split}
    \end{equation}
    where we used directly that $\uhatpu{\tau}{2}\in\CV_\CT(M)$.
    Since it holds
    \begin{equation}        \label{eq:proof:lem:parabolic_wellposedness_infinitewidth:UNIQUENESS4}
    \begin{split}
        \N{g^{*,1}_\tau-g^{*,2}_\tau}_{L_p(D_T)}
        &\leq C \int_0^\tau \N{\uhatpu{s}{1}-\uhatpu{s}{2}}_{L_2(D_T)} ds,
    \end{split}
    \end{equation}
    according to the next-to-last step in \eqref{eq:proof:lem:parabolic_wellposedness_infinitewidth:CONTRACTIVITYuinfty3},
    we get
    \begin{equation}        \label{eq:proof:lem:parabolic_wellposedness_infinitewidth:UNIQUENESS5}
    \begin{split}
        &\N{g^{*,1}_\tau-g^{*,2}_\tau}_{L_2(D_T)} + \N{g^{*,1}_s-g^{*,2}_s}_{L_p(D_T)}
        \leq C \int_0^\tau \N{\uhatpu{s}{1}-\uhatpu{s}{2}}_{L_2(D_T)} ds \\
        &\qquad\,\leq C \int_0^\tau \N{\upu{s}{1}-\upu{s}{2}}_{L_2(D_T)} + \N{\upu{s}{1}-\upu{s}{2}}_{L_\infty(D_T)} ds \\
        &\qquad\,\leq C \int_0^\tau \N{g^{*,1}_s-g^{*,2}_s}_{L_2(D_T)} + \N{g^{*,1}_s-g^{*,2}_s}_{L_p(D_T)} ds,
    \end{split}
    \end{equation}
    where $C$ may depend in particular on $M$.
    
    Recalling that $g_0^1=g_0^2=0$, we can now employ Grönwall's inequality in its integral form to obtain
    \begin{equation}       \label{eq:proof:lem:parabolic_wellposedness_infinitewidth:UNIQUENESS6}
        \N{g^{*,1}_\tau-g^{*,2}_\tau}_{L_2(D_T)} + \N{g^{*,1}_\tau-g^{*,2}_\tau}_{L_p(D_T)}
        = 0
    \end{equation}
    for every $\tau\in[0,\CT]$.
    With \eqref{eq:proof:lem:parabolic_wellposedness_infinitewidth:UNIQUENESS1} and \eqref{eq:proof:lem:parabolic_wellposedness_infinitewidth:UNIQUENESS3}
    we hence conclude that for every $\tau\in[0,\CT]$ it hold
    \begin{equation}
        \N{\upu{\tau}{1}-\upu{\tau}{2}}_{L_2([0,T],H^1(D))} + \N{\upu{\tau}{1}-\upu{\tau}{2}}_{L_\infty([0,T],L_2(D))}
        =0
    \end{equation}
    and
    \begin{equation}
        \N{\uhatpu{\tau}{1}-\uhatpu{\tau}{2}}_{L_2([0,T],H^1(D))} + \N{\uhatpu{\tau}{1}-\uhatpu{\tau}{2}}_{L_\infty([0,T],L_2(D))}
        =0.
    \end{equation}
    Thus $\Nbig{\upu{}{1}-\upu{}{2}}_{\CV_\CT}=0$ and $\Nbig{\uhatpu{}{1}-\uhatpu{}{2}}_{\CV_\CT}=0$, proving uniqueness in $\CV_\CT(M)$.
\end{proof}

\subsection{Well-Posedness Proof of the NN-PDE Training Dynamics in the Finite-Width Hidden Layer Regime}
\label{sec:WellPosedness_Proof_N}

\begin{proof}[Proof of \Cref{lem:parabolic_wellposedness_N}]
    \textbf{Existence.}
    As in the proof of \Cref{lem:parabolic_wellposedness_infinitewidth},
    existence is shown using a fixed point argument.
    We denote for a given training time horizon $\CT>0$ by $\Theta_\CT=\CC\left([0,\CT],\Theta:=(\bbR^N\times (L_2(D_T))^N\times (L_2(D_T))^N\right)$ the Banach space consisting of elements with finite norm
    \begin{equation}
        \label{eq:Theta_CT_norm}
        \N{(c,n,m)}_{\Theta_\CT}
        = \sup_{\tau\in[0,\CT]} \sum_{i=1}^N \left(\abs{c^i_\tau} + \N{n_\tau^{i}}_{L_2(D_T)} + \N{m_\tau^{i}}_{L_2(D_T)}\right),
    \end{equation}
    where $n_\tau^{i}(t,x)=\sigma\big(w^{t,i}_\tau t + (w^i_\tau)^T x + \eta_\tau^i\big)$ and $m_\tau^{i}(t,x)=\sigma'\big(w^{t,i}_\tau t + (w^i_\tau)^T x + \eta_\tau^i\big)$,
    and by $\CV_\CT=\CC\left([0,\CT],\CS\right)$ again the Banach space consisting of elements with finite norm $\N{\dummy}_{\CV_\CT}$ as defined in \eqref{eq:CV_CT_norm}.
    
    A solution $(((c_\tau,n_\tau,m_\tau),\uN{\theta_\tau},\uNhat{\theta_\tau}))_{\tau\in[0,\CT]}$ to the PDE system \eqref{eq:parabolicPDEN_plain}\,\&\,\eqref{eq:parabolicadjointN_plain} coupled with the gradient descent update~\eqref{eq:GD} within the above definitions is shown in what follows to be an element of the space $\CC\left([0,\CT],\Theta\times\CS\times\CS\right)$ (which we identify with the space $\Theta_\CT\times\CV_\CT\times\CV_\CT$) with additional regularity.

    \textit{Step~1: Existence and regularity for given NN parameter updates $\widetilde{c}_\tau^i= c_0^i - \int_0^\tau \alpha_s b^{c^i}_{\theta_s}\,ds$ etc.}
    For given $\CT>0$, let $b^{c^i}_{\theta},b^{w^{t,i}}_{\theta},b^{\eta^i}_{\theta}:[0,\CT] \rightarrow \bbR$ and $b^{w^i}_{\theta}:[0,\CT] \rightarrow \bbR^d$ be given functions with $b^{c^i}_{\theta},b^{w^{t,i}}_{\theta},b^{w^i}_{\theta},b^{\eta^i}_{\theta}$ being such that $\sup_{\tau\in[0,\CT]} \sum_{i=1}^N\big(\absnormal{b^{c^i}_{\theta_\tau}}+\absnormal{b^{w^{t,i}}_{\theta_\tau}}+\Nnormal{b^{w^i}_{\theta_\tau}}+\absnormal{b^{\eta^i}_{\theta_\tau}}\big)\leq C_{b}$, where $C_{b}$ may depend in particular on $\CT$.
    Consider the auxiliary NN parameter update
    \begin{subequations}
        \label{eq:proof:lem:parabolic_wellposedness_N:setup:1}
    \begin{align}
        \widetilde{c}_\tau^i
        &= c_0^i - \int_0^\tau \alpha_s b^{c^i}_{\theta_s}\,ds, \label{eq:proof:lem:parabolic_wellposedness_N:setup:1a} \\
        \widetilde{w}_\tau^{t,i}
        &= w^{t,i}_0 - \int_0^\tau \alpha_s b^{w^{t,i}}_{\theta_s}\, ds, \label{eq:proof:lem:parabolic_wellposedness_N:setup:1b}\\
        \widetilde{w}_\tau^i
        &= w_0^i - \int_0^\tau \alpha_s b^{w^i}_{\theta_s}\, ds, \label{eq:proof:lem:parabolic_wellposedness_N:setup:1c}\\
        \widetilde{\eta}_\tau^i
        &= \eta_0^i - \int_0^\tau \alpha_s b^{\eta^i}_{\theta_s}\, ds, \label{eq:proof:lem:parabolic_wellposedness_N:setup:1d}
    \end{align}
    \end{subequations}
    as well as $\widetilde{n}_\tau^i(t,x) = \sigma\big(\widetilde{w}^{t,i}_\tau t + (\widetilde{w}^i_\tau)^T x + \widetilde{\eta}_\tau^i\big)$ and $\widetilde{m}_\tau^i(t,x) = \sigma'\big(\widetilde{w}^{t,i}_\tau t + (\widetilde{w}^i_\tau)^T x + \widetilde{\eta}_\tau^i\big)$
    together with the auxiliary PDE system
    \begin{alignat}{2}
        \label{eq:proof:lem:parabolic_wellposedness_N:setup:2}
    \begin{aligned}
        \fpartial{t}\uNt{\widetilde{\theta}_\tau} + \CL\uNt{\widetilde{\theta}_\tau} - q(\uNt{\widetilde{\theta}_\tau})
        &= \widetilde{g}_{\widetilde{\theta}_\tau}^N
        \qquad&&\text{in }
        D_T, \\
        \uNt{\widetilde{\theta}_\tau}
        &= 0
        \qquad&&\text{on }
        [0,T]\times\partial D, \\
        \uNt{\widetilde{\theta}_\tau}
        &= f
        \qquad&&\text{on }
        \{0\}\times D,
    \end{aligned}
    \end{alignat}
    where $\widetilde{g}_{\widetilde{\theta}_\tau}^N(t,x) = \frac{1}{N^\beta}\sum_{i=1}^N \widetilde{c}^i_\tau\widetilde{n}_\tau^i(t,x)= \frac{1}{N^\beta}\sum_{i=1}^N \widetilde{c}^i_\tau\sigma\big(\widetilde{w}^{t,i}_\tau t + (\widetilde{w}_\tau^i)^Tx + \widetilde{\eta}_\tau^i\big)$,
    and
    \begin{alignat}{2}
        \label{eq:proof:lem:parabolic_wellposedness_N:setup:3}
    \begin{aligned}
        -\fpartial{t}\uNhatt{\widetilde{\theta}_\tau} + \CL^\dagger\uNhatt{\widetilde{\theta}_\tau} - q_u(\upt{\widetilde{\theta}_\tau})\uNhatt{\widetilde{\theta}_\tau}
        &= (\uNt{\widetilde{\theta}_\tau}-h)
        \qquad&&\text{in }
        D_T, \\
        \uNhatt{\widetilde{\theta}_\tau}
        &= 0
        \qquad&&\text{on }
        [0,T]\times\partial D, \\
        \uNhatt{\widetilde{\theta}_\tau}
        &= 0
        \qquad&&\text{on }
        \{T\}\times D.
    \end{aligned}
    \end{alignat}
    We first prove that there exists a solution~$(\uNt{\widetilde{\theta}_\tau},\uNhatt{\widetilde{\theta}_\tau})\in \CS\times\CS$ to the system~\eqref{eq:proof:lem:parabolic_wellposedness_N:setup:2}--\eqref{eq:proof:lem:parabolic_wellposedness_N:setup:3} for all $\tau\in[0,\CT]$ using classical existence results from \cite{ladyzhenskaia1968linear}.
    Such solution, as we show, enjoys the property that for all $\tau\in[0,\CT]$ it holds $(\partial_t\uNpt{\widetilde{\theta}_\tau}{t,\dummy},\partial_t\uNhatpt{\widetilde{\theta}_\tau}{t,\dummy})\in L_2(D)\times L_2(D)$ for a.e.\@ $t\in[0,T]$.

    Computing $\fd{\tau}\widetilde{g}^N_{\widetilde{\theta}_\tau}$ by taking the training time derivative in $\widetilde{g}^N_{\widetilde{\theta}_\tau}$ as defined above and combining it with the expressions \eqref{eq:proof:lem:parabolic_wellposedness_N:setup:1},
    we obtain by the fundamental theorem of calculus that
    \begin{equation}
        \label{eq:proof:lem:parabolic_wellposedness_N:setup:4}
        \widetilde{g}^N_{\widetilde{\theta}_\tau}
        = g_{\theta_0}^N - \int_0^\tau \alpha_s b^N_{s} \,ds
        = g_{\theta_0}^N - \int_0^\tau\frac{\alpha_s}{N^\beta} \sum_{i=1}^N  b^{c^i}_{\theta_s}\sigma(\widetilde{\star}) + \widetilde{c}_s^i\sigma'(\widetilde{\star})\left(b^{w^{t,i}}_{\theta_s} t + \big(b^{w^{i}}_{\theta_s}\big)^T x + b^{\eta^{i}}_{\theta_s}\right)ds,
    \end{equation}
    where we abbreviated $\widetilde{\star}=\widetilde{w}^{t,i}_s t + (\widetilde{w}^i_s)^T x + \widetilde{\eta}^i_s$ and defined $b^N$ implicitly.
    We now notice that $b^N:[0,\CT] \rightarrow L_2(D_T)$
    is such that $b^N_{\tau}$ is Lipschitz continuous on $\overbar{D_T}$ for each $\tau\in[0,\CT]$ and such that $\sup_{\tau\in[0,\CT]} \N{b^N_{\tau}}_{L_\infty(D_T)}\leq C_{b^N}$, where $C_{b^N}$ may depend in particular on $\CT$.
    For the latter, uniform boundedness, we note that $b^{c^i}_{\theta},b^{w^{t,i}}_{\theta},b^{w^i}_{\theta},b^{\eta^i}_{\theta}$ are bounded by $C_b$, that $\sigma$ and $\sigma'$ are bounded as of Assumptions~\ref{asm:NN_sigma} and \ref{asm:NN_sigma'}, that the domain $D$ is bounded as of Assumption~\ref{asm:Dbdd}, and that by Cauchy-Schwarz inequality it holds with \eqref{eq:proof:lem:parabolic_wellposedness_N:setup:1a} that $\absnormal{\widetilde{c}_\tau^i}\leq \absnormal{c_0^i} + \absbig{\int_0^\tau \alpha_s b^{c^i}_{\theta_s}\,ds} \leq \absnormal{c_0^i} + \int_0^\tau \alpha_s^2\,ds\, \int_0^\tau\absnormal{b^{c^i}_{\theta_s}}^2\,ds$ is bounded as of Assumption~\ref{asm:NN_mu0}\ref{asm:NN_mu0ii} and due to condition~\eqref{eq:learning_rate} on the learning rate $\alpha_\tau$.
    Since $\sigma$ and $\sigma'$ are further Lipschitz continuous as of Assumptions~\ref{asm:NN_sigma} and \ref{asm:NN_sigma'}, and since also $\widetilde{w}_\tau^{t,i}$, $\widetilde{w}_\tau^i$ and $\widetilde{\eta}_\tau^i$ are bounded with the above argument and for given initial conditions $w_0^{t,i}$, $w_0^i$ and $\eta_0^i$, it is straightforward to check that $b^N_{\tau}$ is Lipschitz continuous on $\overbar{D_T}$ for each $\tau\in[0,\CT]$.
    We can thus follow \textit{Steps~1a} and \textit{b} in the proof of \Cref{lem:parabolic_wellposedness_infinitewidth}
    to show that there exists a solution~$(\uNt{\theta_\tau},\uNhatt{\theta_\tau})\in \CS\times\CS$ for all $\tau\in[0,\CT]$, which enjoys the property that for all $\tau\in[0,\CT]$ it holds $(\partial_t\uNpt{\theta_\tau}{t,\dummy},\partial_t\uNhatpt{\theta_\tau}{t,\dummy})\in L_2(D)\times L_2(D)$ for a.e.\@ $t\in[0,T]$.
    For this, one only needs to additionally notice that, by definition~\eqref{eq:gN},
    $g_{\theta_0}^N$ is continuous w.r.t.\@ $t,x$,
    uniformly bounded due to Assumptions~\ref{asm:NN_sigma} and \ref{asm:NN_mu0}\ref{asm:NN_mu0ii},
    i.e., $\N{g_{\theta_0}^N}_{L_\infty(D_T)}\leq C$ for a constant $C=C(N,\sigma,\mu_0)$, which may depend on $N$ (since it is fixed throughout the proof),
    and $(1,1)$-Hölder continuous due to Assumptions~\ref{asm:NN_sigma}.

    Repeating further the energy estimates of \textit{Step~1c} in the proof of \Cref{lem:parabolic_wellposedness_infinitewidth} in the above setting,
    we can derive the estimates
    \begin{equation}        \label{eq:proof:lem:parabolic_wellposedness_N:NORM_utilde}
        \N{\uNt{\theta_\tau}}_{L_2([0,T],H^1(D))} + \N{\uNt{\theta_\tau}}_{L_\infty([0,T],L_2(D))}
        \leq C\left(\N{f}_{L_2(D)} + \N{\widetilde{g}_{\theta_\tau}^N}_{L_2(D_T)} + 1\right)
    \end{equation}
    and, in particular,
    \begin{equation}        \label{eq:proof:lem:parabolic_wellposedness_N:NORM_utilde_UNIFORM}
        \N{\uNt{\theta}}_{\CV_\CT}
        \leq C\left(\N{f}_{L_2(D)} + \sup_{\tau\in[0,\CT]}\N{\widetilde{g}_{\theta_\tau}^N}_{L_2(D_T)} + 1\right)
    \end{equation}
    for some constant $C=C(T,\CL,q)$,
    as well as 
    \begin{equation}        \label{eq:proof:lem:parabolic_wellposedness_N:NORM_uhattilde}
    \begin{split}
        \Nbig{\uNhatt{\tau}}_{L_2([0,T],H^1(D))} + \Nbig{\uNhatt{\tau}}_{L_\infty([0,T],L_2(D))}
        &\leq C\N{\uNt{\theta_\tau}-h}_{L_2(D_T)} \\
        &\leq C\left(\N{\uNt{\theta_\tau}}_{L_2(D_T)} + \N{h}_{L_2(D_T)}\right)
    \end{split}
    \end{equation}
    and, in particular,
    \begin{equation}       \label{eq:proof:lem:parabolic_wellposedness_N:NORM_uhattilde_UNIFORM}
        \Nbig{\uNhatt{\theta}}_{\CV_\CT}
        \leq C\left(\N{\uNt{\theta}}_{\CV_\CT} + \N{h}_{L_2(D_T)}\right)
    \end{equation}
    for some other, potentially larger, constant $C=C(T,\CL,q)$.

    Since $\widetilde{g}_{\widetilde{\theta}_\tau}^N$ is Lipschitz continuous in $\tau$ with the argument from \textit{Step~1d} in the proof of \Cref{lem:parabolic_wellposedness_infinitewidth} together with the formerly established uniform boundedness of $b^N$ in \eqref{eq:proof:lem:parabolic_wellposedness_N:setup:4},
    we proved that there exists a continuous in the training time $\tau$ solution $(\upt{},\uhatpt{})\in \CV_\CT\times\CV_\CT$ to \eqref{eq:proof:lem:parabolic_wellposedness_N:setup:2}--\eqref{eq:proof:lem:parabolic_wellposedness_N:setup:3}.
    As we further ensured, for each $\tau\in[0,\CT]$ such solution satisfies $(\partial_t\uppt{\tau}{t,\dummy},\partial_t\uhatppt{\tau}{t,\dummy})\in L_2(D) \times L_2(D)$
    for a.e.\@ $t\in[0,T]$.

    \textit{Step~2: Existence for specific NN parameter updates $\widetilde{c}_\tau^i= c_0^i - \frac{1}{N^{1-\beta}} \int_0^\tau \alpha_s \int_0^T\!\!\!\int_D n_s^i(t,x)$ $\uNhatv{\theta_s}{t,x}\,dxdtds$ etc.}
    We now make specific choices for the functions $b^{c^i}_{\theta_\tau},b^{w^{t,i}}_{\theta_\tau},b^{w^i}_{\theta_\tau},b^{\eta^i}_{\theta_\tau}$.
    
    \textit{Step~2a: Choice of NN parameter update functions $b^{c^i}_{\theta_\tau}=\frac{1}{N^{1-\beta}} \int_0^T\!\!\!\int_D n_{\tau}^i(t,x)\uNhatv{\theta_{\tau}}{t,x}\,dxdt$ etc.}
    For arbitrarily given $(c_\tau,n_\tau,m_\tau)\in \Theta$ and $\uNhat{\theta_\tau}\in \CS$, $\tau\in[0,\CT]$, with $\sup_{\tau\in[0,\CT]} \sum_{i=1}^N (\absnormal{c^i_\tau} + \Nnormal{n_\tau^{i}}_{L_2(D_T)} + \Nnormal{m_\tau^{i}}_{L_2(D_T)})\leq M$ and $\sup_{\tau\in[0,\CT]} \N{\uNhat{\theta_\tau}}_{L_2(D_T)}\leq M$ ($M$ may depend on $\CT$), 
    we set    \begin{subequations}
    \begin{align}
        b^{c^i}_{\theta_\tau}
        &=\frac{1}{N^{1-\beta}} \int_0^T\!\!\!\int_D n_{\tau}^i(t,x)\uNhatv{\theta_{\tau}}{t,x}\,dxdt,  \\
        b^{w^{t,i}}_{\theta_\tau}
        &=\frac{1}{N^{1-\beta}} \int_0^T\!\!\!\int_D c^i_{\tau}m_{\tau}^i(t,x)t\uNhatv{\theta_{\tau}}{t,x}\,dxdt \\
        b^{w^i}_{\theta_\tau}
        &=\frac{1}{N^{1-\beta}} \int_0^T\!\!\!\int_D c^i_{\tau}m_{\tau}^i(t,x)x\uNhatv{\theta_{\tau}}{t,x}\,dxdt, \\
        b^{\eta^i}_{\theta_\tau}
        &=\frac{1}{N^{1-\beta}} \int_0^T\!\!\!\int_D c^i_{\tau}m_{\tau}^i(t,x)\uNhatv{\theta_{\tau}}{t,x}\,dxdt, 
    \end{align}
    \end{subequations}
    for all $\tau\in[0,\CT]$.
    By Cauchy-Schwarz inequality
    it holds   \begin{equation}
    \begin{split}
        &\sum_{i=1}^N \left(\absbig{b^{c^i}_{\theta_\tau}} + \absbig{b^{w^{t,i}}_{\theta_\tau}} + \Nbig{b^{w^i}_{\theta_\tau}} + \absbig{b^{\eta^i}_{\theta_\tau}}\right) \\
        &\qquad\,\leq C\sum_{i=1}^N\left(\N{n_{\tau}^i}_{L_2(D_T)}\N{\uNhat{\theta_\tau}}_{L_2(D_T)}+\absnormal{c^i_{\tau}}\N{m_{\tau}^i}_{L_2(D_T)}\N{\uNhat{\theta_\tau}}_{L_2(D_T)}\right) \\
        &\qquad\,\leq C \left(M^2+M^3\right)
    \end{split}
    \end{equation}
    for a constant $C=C(N,T,D)$, which is a uniform bound in $\tau$.
    Since the right-hand side is uniform in $\tau$, $\sup_{\tau\in[0,\CT]} \sum_{i=1}^N\big(\absnormal{b^{c^i}_{\theta_\tau}}+\absnormal{b^{w^{t,i}}_{\theta_\tau}}+\Nnormal{b^{w^i}_{\theta_\tau}}+\absnormal{b^{\eta^i}_{\theta_\tau}}\big)\leq C_{b^N}$, where $C_{b^N}$ may depend on $\CT$.

    \textit{Step~2b: Definition of fixed point mapping.}
    Let us consider the fixed point map
\begin{equation}        \label{eq:proof:lem:parabolic_wellposedness_N:F}
        F: \Theta_\CT\times\CV_\CT\times\CV_\CT \rightarrow \Theta_\CT\times\CV_\CT\times\CV_\CT,\quad
        ((c,n,m),\uN{},\uNhat{})\mapsto((\widetilde{c},\widetilde{n},\widetilde{m}),\uNt{},\uNhatt{})
    \end{equation}
    and define for given $M<\infty$ and $\CT<\infty$ the function spaces $\Theta_\CT(M)=\{(c,n,m)\in\Theta_\CT\!:\!\N{(c,n,m)}_{\Theta_\CT}\!\leq\! M\}$ and $\CV_\CT(M)=\{u\in\CV_\CT\!:\!\N{u}_{\CV_\CT}\!\leq\! M\}$.

    We will first show in \textit{Step~2d} existence locally in the training time by proving that there exist $M_0>0$ and $\CT_0>0$ such that $F$ is a fixed point mapping on $\Theta_{\CT_0}(M_0)\times\CV_{\CT_0}(M_0)\times\CV_{\CT_0}(M_0)$, which allows to apply the Banach fixed point theorem.
    In \textit{Step~2e} we will then extend the proof by a bootstrapping argument to any given (arbitrarily large) time horizon~$\CT$.
    
    \textit{Step~2c: Preliminary computations.}
    Let us start by conducting some preliminary computations on a generic space $\Theta_{\widetilde{\CT}}(\widetilde{M})\times\CV_{\widetilde{\CT}}(\widetilde{M})\times \CV_{\widetilde{\CT}}(\widetilde{M})$.
    
    \textit{Step~2c(i): Preliminary computations for self-mapping property of $F$.}
    Consider the triple $((\widetilde{c},\widetilde{n},\widetilde{m}),\uNt{},\uNhatt{})$ together with its corresponding $((c,n,m),\uN{},\uNhat{}) \in \Theta_{\widetilde{\CT}}(\widetilde{M})\times \CV_{\widetilde{\CT}}(\widetilde{M})\times \CV_{\widetilde{\CT}}(\widetilde{M})$.

    \textit{A bound for $\absnormal{\widetilde{c}_\tau^i}$.}
    Recalling \eqref{eq:proof:lem:parabolic_wellposedness_N:setup:1a},
    we can estimate with triangle inequality and two applications of Cauchy-Schwarz inequality, while using the boundedness of $\sigma$ as of Assumptions \ref{asm:NN_sigma} and that the domain $D$ has bounded volume as of Assumption~\ref{asm:Dbdd} together with
    the additional (and with \eqref{eq:learning_rate} compatible) assumption $\int_{0}^\infty\alpha_\tau^{4/3}\,d\tau<\infty$ on the learning rate,
    that
\begin{equation}
        \label{eq:proof:lem:parabolic_wellposedness_N:SELFMAPPING_prelim10_1}
    \begin{split}
        \abs{\widetilde{c}_\tau^i}
        &\leq \abs{c_0^i} + C \abs{\int_0^\tau \alpha_s \int_0^T\!\!\!\int_D   \sigma\big(w^{t,i}_s t + (w^i_s)^T x + \eta_s^i\big)\uNhatv{\theta_s}{t,x}\, dxdtds}\\
        &\leq \abs{c_0^i} + C \abs{\int_0^\tau \alpha_s \sqrt{\int_0^T\!\!\!\int_D  \big(\sigma\big(w^{t,i}_s t + (w^i_s)^T x + \eta_s^i\big)\big)^2\, dxdt} \N{\uNhat{\theta_s}}_{L_2(D_T)} ds}\\
        &\leq \abs{c_0^i} + C \abs{\int_0^\tau \alpha_s \N{\uNhat{\theta_s}}_{L_2(D_T)} ds}\\
        &\leq \abs{c_0^i} + C \left(\int^{\tau}_{0}\alpha_s^{4/3} \,ds\right)^{3/4} \left(\int_0^\tau \N{\uNhat{\theta_s}}_{L_2(D_T)}^4 ds\right)^{1/4}\\
        &\leq \abs{c_0^i} + C \left(\int_0^\tau \N{\uNhat{\theta_s}}_{L_2(D_T)}^4 ds\right)^{1/4}
    \end{split}
    \end{equation}
    for a constant $C=C(\alpha,N,T,D,\CL,q,\sigma)$.

    \textit{A bound for $\Nnormal{\widetilde{n}_\tau^i}_{L_2(D_T)}$.}
    Using the boundedness of $\sigma$ as of Assumptions \ref{asm:NN_sigma} and that the domain $D$ has bounded volume as of Assumption~\ref{asm:Dbdd}, clearly,
    \begin{equation}
        \label{eq:proof:lem:parabolic_wellposedness_N:SELFMAPPING_prelim10_4}
        \N{\widetilde{n}_\tau^i}_{L_2(D_T)} \leq C
    \end{equation}
    for a constant $C=C(T,D,\sigma)$.
    
    \textit{A bound for $\Nnormal{\widetilde{m}_\tau^i}_{L_2(D_T)}$.}
    Since also $\sigma'$ is bounded as of Assumptions \ref{asm:NN_sigma'},
    \begin{equation}
        \label{eq:proof:lem:parabolic_wellposedness_N:SELFMAPPING_prelim10_5}
        \N{\widetilde{m}_\tau^i}_{L_2(D_T)} \leq C.
    \end{equation}

    \textit{A bound for $\Nnormal{\widetilde{g}_{\widetilde{\theta}_\tau}^N}_{L_2(D_T)}$.}
    Using that $\sigma$ is bounded as of Assumption \ref{asm:NN_sigma} to obtain the first inequality, that the domain $D$ has bounded volume as of Assumption~\ref{asm:Dbdd} to get the equality in the second line, and \eqref{eq:proof:lem:parabolic_wellposedness_N:SELFMAPPING_prelim10_1} in the last step, we can upper bound
    \begin{equation}
        \label{eq:proof:lem:parabolic_wellposedness_N:SELFMAPPING_prelim11}
    \begin{split}
        \Nbig{\widetilde{g}_{\widetilde{\theta}_\tau}^N}_{L_2(D_T)}
        &= \N{\frac{1}{N^\beta}\sum_{i=1}^N \widetilde{c}^i_\tau\sigma\big(\widetilde{w}^{t,i}_\tau t + (\widetilde{w}_\tau^i)^Tx + \widetilde{\eta}_\tau^i\big)}_{L_2(D_T)} \\
        &\leq C \N{\sum_{i=1}^N \absnormal{\widetilde{c}^i_\tau}}_{L_2(D_T)} = C \sum_{i=1}^N \absnormal{\widetilde{c}^i_\tau}\\
        &\leq \N{\theta_0} + C \left(\int_0^\tau \N{\uNhat{\theta_s}}_{L_2(D_T)}^4 ds\right)^{1/4}
    \end{split}
    \end{equation}
    for a constant $C=C(N,T,D,\sigma)$.
    
    \textit{Combination of the bounds.}
    Using \eqref{eq:proof:lem:parabolic_wellposedness_N:NORM_utilde_UNIFORM} and \eqref{eq:proof:lem:parabolic_wellposedness_N:NORM_uhattilde_UNIFORM} in the first inequality,
    together with \eqref{eq:proof:lem:parabolic_wellposedness_N:SELFMAPPING_prelim10_1}, \eqref{eq:proof:lem:parabolic_wellposedness_N:SELFMAPPING_prelim10_4} and \eqref{eq:proof:lem:parabolic_wellposedness_N:SELFMAPPING_prelim11} in the second step,
    we establish (under the additional assumption $\int_{0}^\infty\alpha_\tau^{4/3}\,d\tau<\infty$ on the learning rate)
    \begin{equation}        \label{eq:proof:lem:parabolic_wellposedness_N:SELFMAPPING}
    \begin{split}
        &\Nnormal{(\widetilde{c},\widetilde{n},\widetilde{m})}_{\Theta_{\widetilde{\CT}}} + \N{\uNt{\theta}}_{\CV_{\widetilde{\CT}}} + \Nbig{\uNhatt{\theta}}_{\CV_{\widetilde{\CT}}} \\ 
        &\qquad\,\leq \Nnormal{(\widetilde{c},\widetilde{n},\widetilde{m})}_{\Theta_{\widetilde{\CT}}} + C\sup_{\tau\in[0,\CT]}\Nbig{\widetilde{g}_{\widetilde{\theta}_\tau}^N}_{L_2(D_T)} + C \left(\N{h}_{L_2(D_T)} + \N{f}_{L_2(D)} + 1\right)\\
        &\qquad\,\leq C_1 \sup_{\tau\in[0,\widetilde{\CT}]} \left(\int_0^\tau \N{\uNhat{\theta_s}}_{L_2(D_T)}^4 ds\right)^{1/4} + C_1\left(\N{h}_{L_2(D_T)} + \N{f}_{L_2(D)} + \N{\theta_0} + 1\right)\\
        &\qquad\,\leq C_1 \left(\int_0^{\widetilde{\CT}} \N{\uNhat{\theta_s}}_{L_2(D_T)}^4 ds\right)^{1/4} + C_1\left(\N{h}_{L_2(D_T)} + \N{f}_{L_2(D)} + \N{\theta_0} + 1\right)
    \end{split}
    \end{equation}
    for a constant $C_1=C_1(\alpha,N,T,D,\CL,q,\sigma)$.
    (Note that the bound on the right-hand side only grows like $\widetilde{\CT}^{1/2}$ instead of $\widetilde{\CT}^{1/4}$ due to the additional assumption on the learning rate.)

    \textit{Step~2c(ii): Preliminary computations for contractivity of $F$.}
    Consider two pairs of triples $((\widetilde{c}^1,\widetilde{n}^1,\widetilde{m}^1),\uNt{\widetilde{\theta}^1},\uNhatt{\widetilde{\theta}^1})$, $((\widetilde{c}^2,\widetilde{n}^2,\widetilde{m}^2),\uNt{\widetilde{\theta}^2},\uNhatt{\widetilde{\theta}^2})$ with their corresponding two pairs of triples $((c^1,n^1,m^1),\uN{\theta^1},\uNhat{\theta^1}), ((c^2,n^2,m^2),\uN{\theta^2},\uNhat{\theta^2}) \in \Theta_{\widetilde{\CT}}(\widetilde{M}) \times \CV_{\widetilde{\CT}}(\widetilde{M})\times \CV_{\widetilde{\CT}}(\widetilde{M})$.
    
    \textit{A bound for $\absnormal{\widetilde{c}^{i,1}_\tau-\widetilde{c}^{i,2}_\tau}$.}
    Recalling \eqref{eq:proof:lem:parabolic_wellposedness_N:setup:1a},
    we can estimate after inserting mixed terms with two applications of Cauchy-Schwarz inequality, while using the boundedness and Lipschitz continuity of $\sigma$ as of Assumption \ref{asm:NN_sigma} and that the domain $D$ has bounded volume as of Assumption~\ref{asm:Dbdd} together with the fact that the learning rate is decreasing, that
    \begin{equation}
        \label{eq:proof:lem:parabolic_wellposedness_N:CONTRACTIVITY_ctilde}
    \begin{split}
        \abs{\widetilde{c}_\tau^{i,1}-\widetilde{c}_\tau^{i,2}}
        &= C\;\! \bigg|\!\int_0^\tau \!\alpha_s\! \int_0^T\!\!\!\int_D \sigma\big(w^{t,i,1}_s t + (w^{i,1}_s)^T x + \eta_s^{i,1}\big)\uNhatv{\theta^1_s}{t,x} \\
        &\qquad\qquad\qquad\qquad\,- \sigma\big(w^{t,i,2}_s t + (w^{i,2}_s)^T x + \eta_s^{i,2}\big)\uNhatv{\theta^2_s}{t,x}\, dxdtds\bigg|\\
        &= C\;\! \bigg|\!\int_0^\tau \!\alpha_s \!\int_0^T\!\!\!\int_D \Big(\sigma\big(w^{t,i,1}_s t \!+\! (w^{i,1}_s)^T x \!+\! \eta_s^{i,1}\big)\!-\!\sigma\big(w^{t,i,2}_s t \!+\! (w^{i,2}_s)^T x \!+\! \eta_s^{i,2}\big)\Big)\uNhatv{\theta^1_s}{t,x} \\
        &\qquad\qquad\qquad\qquad\,+ \sigma\big(w^{t,i,2}_s t + (w^{i,2}_s)^T x + \eta_s^{i,2}\big)\left(\uNhatv{\theta^1_s}{t,x}-\uNhatv{\theta^2_s}{t,x}\right) dxdtds\bigg|\\
        &\leq C\int_0^\tau \!\alpha_s  \Bigg(\sqrt{\int_0^T\!\!\!\int_D \Big(\sigma\big(w^{t,i,1}_s t \!+\! (w^{i,1}_s)^T x \!+\! \eta_s^{i,1}\big)\!-\!\sigma\big(w^{t,i,2}_s t \!+\! (w^{i,2}_s)^T x \!+\! \eta_s^{i,2}\big)\Big)^2\, dxdt} \\
        &\qquad\qquad\qquad\qquad\,\cdot\Nbig{\uNhat{\theta^1_s}}_{L_2(D_T)}\\
        &\qquad\qquad\qquad\,+ \sqrt{\int_0^T\!\!\!\int_D  \big(\sigma\big(w^{t,i,2}_s t + (w^{i,2}_s)^T x + \eta_s^{i,2}\big)\big)^2\, dxdt}\Nbig{\uNhat{\theta^1_s}-\uNhat{\theta^2_s}}_{L_2(D_T)} \Bigg)\, ds\\
        &\leq C \int_0^\tau \!\alpha_s \left(\Nbig{n^{i,1}_s-n^{i,2}_s}_{L_2(D_T)}\Nbig{\uNhat{\theta^1_s}}_{L_2(D_T)} + \Nbig{\uNhat{\theta^1_s}-\uNhat{\theta^2_s}}_{L_2(D_T)}\right) ds\\
        &\leq C \int_0^\tau \Nbig{n^{i,1}_s-n^{i,2}_s}_{L_2(D_T)}\Nbig{\uNhat{\theta^1_s}}_{L_2(D_T)} + \Nbig{\uNhat{\theta^1_s}-\uNhat{\theta^2_s}}_{L_2(D_T)} \, ds.
    \end{split}
    \end{equation}
    Proceeding analogously for \eqref{eq:proof:lem:parabolic_wellposedness_N:setup:1b}--\eqref{eq:proof:lem:parabolic_wellposedness_N:setup:1d} while using that also $\sigma'$ is bounded and Lipschitz continuous as of Assumption \ref{asm:NN_sigma'}, we obtain
    \begin{equation}
        \label{eq:proof:lem:parabolic_wellposedness_N:CONTRACTIVITY_wwetatilde}
    \begin{split}
        &\abs{\widetilde{w}_\tau^{t,i,1}-\widetilde{w}_\tau^{t,i,2}} + \Nbig{\widetilde{w}_\tau^{i,1}-\widetilde{w}_\tau^{i,2}} + \absbig{\widetilde{\eta}_\tau^{i,1}-\widetilde{\eta}_\tau^{i,2}}\\
        &\qquad\,\leq C \int_0^\tau \absbig{c_s^{i,1}-c_s^{i,2}} \Nbig{\uNhat{\theta^1_s}}_{L_2(D_T)} + \absbig{c_s^{i,2}} \Nbig{m^{i,1}_s-m^{i,2}_s}_{L_2(D_T)} \Nbig{\uNhat{\theta^1_s}}_{L_2(D_T)}\\
        &\qquad\,\qquad\qquad+ \absbig{c_s^{i,2}}\Nbig{\uNhat{\theta^1_s}-\uNhat{\theta^2_s}}_{L_2(D_T)} \, ds
    \end{split}
    \end{equation}

    \textit{A bound for $\Nnormal{\widetilde{n}^{i,1}_\tau-\widetilde{n}^{i,2}_\tau}_{L_2(D_T)}$.}
    Using that $\sigma$ is Lipschitz as of Assumption \ref{asm:NN_sigma} and that the domain $D$ is bounded and has bounded volume as of Assumption~\ref{asm:Dbdd}, we can estimate
    \begin{equation}
        \label{eq:proof:lem:parabolic_wellposedness_N:CONTRACTIVITY_ntilde}
    \begin{split}
        \N{\widetilde{n}^{i,1}_\tau-\widetilde{n}^{i,2}_\tau}_{L_2(D_T)}
        &= \N{\sigma\big(\widetilde{w}^{t,i,1}_\tau t + (\widetilde{w}_\tau^{i,1})^Tx + \widetilde{\eta}_\tau^{i,1}\big)-\sigma\big(\widetilde{w}^{t,i,2}_\tau t + (\widetilde{w}_\tau^{i,2})^Tx + \widetilde{\eta}_\tau^{i,2}\big)}_{L_2(D_T)} \\
        &\leq C \left(\abs{\widetilde{w}_\tau^{t,i,1}-\widetilde{w}_\tau^{t,i,2}} + \Nbig{\widetilde{w}_\tau^{i,1}-\widetilde{w}_\tau^{i,2}} + \absbig{\widetilde{\eta}_\tau^{i,1}-\widetilde{\eta}_\tau^{i,2}}\right),
    \end{split}
    \end{equation}
    i.e., resulting in the same bound as in \eqref{eq:proof:lem:parabolic_wellposedness_N:CONTRACTIVITY_wwetatilde}.

    \textit{A bound for $\Nnormal{\widetilde{m}^{i,1}_\tau-\widetilde{m}^{i,2}_\tau}_{L_2(D_T)}$.}
    Since also $\sigma'$ is Lipschitz as of Assumption~\ref{asm:NN_sigma'},
    \begin{equation}
        \label{eq:proof:lem:parabolic_wellposedness_N:CONTRACTIVITY_mtilde}
    \begin{split}
        \N{\widetilde{m}^{i,1}_\tau-\widetilde{m}^{i,2}_\tau}_{L_2(D_T)}
        &\leq C \left(\abs{\widetilde{w}_\tau^{t,i,1}-\widetilde{w}_\tau^{t,i,2}} + \Nbig{\widetilde{w}_\tau^{i,1}-\widetilde{w}_\tau^{i,2}} + \absbig{\widetilde{\eta}_\tau^{i,1}-\widetilde{\eta}_\tau^{i,2}}\right),
    \end{split}
    \end{equation}
    i.e., resulting in the same bound as in \eqref{eq:proof:lem:parabolic_wellposedness_N:CONTRACTIVITY_wwetatilde}.

    \textit{A bound for $\Nnormal{\widetilde{g}_{\widetilde{\theta}^1_\tau}^N-\widetilde{g}_{\widetilde{\theta}^2_\tau}^N}_{L_p(D_T)}$.}
    Using that $\sigma$ is bounded and Lipschitz continuous as of Assumption \ref{asm:NN_sigma} and that the domain $D$ has bounded volume as of Assumption~\ref{asm:Dbdd} to obtain the second inequality,
    the latter again to get the equality thereafter,
    \eqref{eq:proof:lem:parabolic_wellposedness_N:SELFMAPPING_prelim10_1} in the subsequent step, and \eqref{eq:proof:lem:parabolic_wellposedness_N:CONTRACTIVITY_ctilde}--\eqref{eq:proof:lem:parabolic_wellposedness_N:CONTRACTIVITY_wwetatilde} in the next-to-last step,
    we can upper bound for $p\geq2$ with triangle inequality to obtain the first inequality that 
    \begin{allowdisplaybreaks}
    \begin{align}        \label{eq:proof:lem:parabolic_wellposedness_N:CONTRACTIVITYuinfty3}
        &\Nbig{\widetilde{g}_{\widetilde{\theta}^1_\tau}^N-\widetilde{g}_{\widetilde{\theta}^2_\tau}^N}_{L_p(D_T)}\nonumber\\
        &\qquad\, = \N{\frac{1}{N^\beta}\sum_{i=1}^N \widetilde{c}^{i,1}_\tau\sigma\big(\widetilde{w}^{t,i,1}_\tau t + (\widetilde{w}_\tau^{i,1})^Tx + \widetilde{\eta}_\tau^{i,1}\big)-\widetilde{c}^{i,2}_\tau\sigma\big(\widetilde{w}^{t,i,2}_\tau t + (\widetilde{w}_\tau^{i,2})^Tx + \widetilde{\eta}_\tau^{i,2}\big)}_{L_p(D_T)} \nonumber\\
        &\qquad\, \leq \N{\frac{1}{N^\beta}\sum_{i=1}^N (\widetilde{c}^{i,1}_\tau-\widetilde{c}^{i,2}_\tau)\sigma\big(\widetilde{w}^{t,i,1}_\tau t + (\widetilde{w}_\tau^{i,1})^Tx + \widetilde{\eta}_\tau^{i,1}\big)}_{L_p(D_T)}\nonumber\\
        &\qquad\,\quad\, +\N{\frac{1}{N^\beta}\sum_{i=1}^N \widetilde{c}^{i,2}_\tau\left(\sigma\big(\widetilde{w}^{t,i,1}_\tau t + (\widetilde{w}_\tau^{i,1})^Tx + \widetilde{\eta}_\tau^{i,1}\big)-\sigma\big(\widetilde{w}^{t,i,2}_\tau t + (\widetilde{w}_\tau^{i,2})^Tx + \widetilde{\eta}_\tau^{i,2}\big)\right)}_{L_p(D_T)} \nonumber\\
        &\qquad\, \leq C\N{\sum_{i=1}^N \abs{\widetilde{c}^{i,1}_\tau-\widetilde{c}^{i,2}_\tau}}_{L_p(D_T)}\nonumber\\
        &\qquad\,\quad\, + C\N{\sum_{i=1}^N \abs{\widetilde{c}^{i,2}_\tau}\left(\abs{\widetilde{w}^{t,i,1}_s-\widetilde{w}^{t,i,2}_s} + \N{\widetilde{w}^{i,1}_s-\widetilde{w}^{i,2}_s} + \abs{\widetilde{\eta}_s^{i,1}-\widetilde{\eta}_s^{i,2}}\right)}_{L_p(D_T)} \nonumber\\
        &\qquad\,= C \sum_{i=1}^N \abs{\widetilde{c}^{i,1}_\tau-\widetilde{c}^{i,2}_\tau} + \abs{\widetilde{c}^{i,2}_\tau}\left(\abs{\widetilde{w}^{t,i,1}_s-\widetilde{w}^{t,i,2}_s} + \N{\widetilde{w}^{i,1}_s-\widetilde{w}^{i,2}_s} + \abs{\widetilde{\eta}_s^{i,1}-\widetilde{\eta}_s^{i,2}}\right) \nonumber\\
        &\qquad\,\leq C\sum_{i=1}^N \abs{\widetilde{c}^{i,1}_\tau-\widetilde{c}^{i,2}_\tau} + \left(\abs{\widetilde{w}^{t,i,1}_s-\widetilde{w}^{t,i,2}_s} + \N{\widetilde{w}^{i,1}_s-\widetilde{w}^{i,2}_s} + \abs{\widetilde{\eta}_s^{i,1}-\widetilde{\eta}_s^{i,2}}\right)\nonumber\\
        &\qquad\,\qquad\qquad\qquad\qquad\qquad\qquad\qquad\qquad\quad\cdot\left(\N{\theta_0} + \left(\int_0^\tau \Nbig{\uNhat{\theta^2_s}}_{L_2(D_T)}^4 ds\right)^{1/4}\right)\nonumber\\
        &\qquad\,\leq C \int_0^\tau \sum_{i=1}^N \Nbig{n^{i,1}_s-n^{i,2}_s}_{L_2(D_T)}\Nbig{\uNhat{\theta^1_s}}_{L_2(D_T)} + \Nbig{\uNhat{\theta^1_s}-\uNhat{\theta^2_s}}_{L_2(D_T)}\nonumber \\
        &\qquad\,\quad\,\qquad\quad + \Big( \absbig{c_s^{i,1}-c_s^{i,2}} \Nbig{\uNhat{\theta^1_s}}_{L_2(D_T)} + \absbig{c_s^{i,2}} \Nbig{m^{i,1}_s-m^{i,2}_s}_{L_2(D_T)} \Nbig{\uNhat{\theta^1_s}}_{L_2(D_T)}  \nonumber\\
        &\qquad\,\qquad\qquad\qquad+ \absbig{c_s^{i,2}}\Nbig{\uNhat{\theta^1_s}-\uNhat{\theta^2_s}}_{L_2(D_T)}\Big)\left(\N{\theta_0} + \left(\int_0^\tau \Nbig{\uNhat{\theta_s^2}}_{L_2(D_T)}^4 ds\right)^{1/4}\right)  ds\nonumber\\
        &\qquad\,\leq C \int_0^\tau \sum_{i=1}^N \left(\absbig{c_s^{i,1}\!-\!c_s^{i,2}} \!+\!\Nbig{n^{i,1}_s\!-\!n^{i,2}_s}_{L_2(D_T)}\!+\!\Nbig{m^{i,1}_s\!-\!m^{i,2}_s}_{L_2(D_T)}\!+\!\Nbig{\uNhat{\theta^1_s}\!-\!\uNhat{\theta^2_s}}_{L_2(D_T)}\right)\nonumber\\
        &\qquad\,\qquad\qquad\qquad\cdot\left(1+\absbig{c_s^{i,2}}+\Nbig{\uNhat{\theta^1_s}}_{L_2(D_T)}+\absbig{c_s^{i,2}}\Nbig{\uNhat{\theta^1_s}}_{L_2(D_T)} \right)\nonumber\\
        &\qquad\,\qquad\qquad\qquad\cdot\left(1+\N{\theta_0} + \left(\int_0^\tau \Nbig{\uNhat{\theta_s^2}}_{L_2(D_T)}^4 ds\right)^{1/4}\right) ds
    \end{align}
    \end{allowdisplaybreaks}
    for a constant $C=C(p,N,T,D,\sigma)$.
    The last inequality is just a rough upper bound.

    \textit{A bound for $\Nnormal{\uNt{\widetilde{\theta}^1_\tau}-\uNt{\widetilde{\theta}^2_\tau}}_{L_2([0,T],H^1(D))} + \Nnormal{\uNt{\widetilde{\theta}^1_\tau}-\uNt{\widetilde{\theta}^2_\tau}}_{L_\infty([0,T],L_2(D))}$.}
    By following the computations of \textit{Step 2c(ii)} in the proof of \Cref{lem:parabolic_wellposedness_infinitewidth} in \Cref{sec:WellPosedness_Proof} that lead to \eqref{eq:proof:lem:parabolic_wellposedness_infinitewidth:CONTRACTIVITY_utilde},
    it holds
    \begin{equation}        \label{eq:proof:lem:parabolic_wellposedness_N:CONTRACTIVITY_utilde}
        \Nbig{\uNt{\widetilde{\theta}^1_\tau}-\uNt{\widetilde{\theta}^2_\tau}}_{L_2([0,T],H^1(D))} + \Nbig{\uNt{\widetilde{\theta}^1_\tau}-\uNt{\widetilde{\theta}^2_\tau}}_{L_\infty([0,T],L_2(D))}
        \leq C\Nbig{\widetilde{g}_{\widetilde{\theta}^1_\tau}^N-\widetilde{g}_{\widetilde{\theta}^2_\tau}^N}_{L_2(D_T)}
    \end{equation}
    for a constant $C=C(T,\CL,q)$.
    Thus, in particular,
    \begin{equation}        \label{eq:proof:lem:parabolic_wellposedness_N:CONTRACTIVITY_utilde_UNIFORM}
        \Nbig{\uNt{\widetilde{\theta}^1}-\uNt{\widetilde{\theta}^2}}_{\CV_\CT}
        \leq C\sup_{\tau\in[0,\CT]}\Nbig{\widetilde{g}_{\widetilde{\theta}^1_\tau}^N-\widetilde{g}_{\widetilde{\theta}^2_\tau}^N}_{L_2(D_T)}.
    \end{equation}

    \textit{A bound for $\Nnormal{\uNt{\widetilde{\theta}^1_\tau}-\uNt{\widetilde{\theta}^2_\tau}}_{L_\infty(D_T)}$.}
    By following the computations of \textit{Step 2c(ii)} in the proof of \Cref{lem:parabolic_wellposedness_infinitewidth} in \Cref{sec:WellPosedness_Proof} that lead to \eqref{eq:proof:lem:parabolic_wellposedness_infinitewidth:CONTRACTIVITYuinfty6},
    it holds for $p>d+1$ that
    \begin{equation}        \label{eq:proof:lem:parabolic_wellposedness_N:CONTRACTIVITYuinfty5}
    \begin{split}
        \Nbig{\uNt{\widetilde{\theta}^1_\tau}-\uNt{\widetilde{\theta}^2_\tau}}_{L_\infty(D_T)}
        &\leq c(d,p)C\Nbig{\widetilde{g}_{\widetilde{\theta}^1_\tau}^N-\widetilde{g}_{\widetilde{\theta}^2_\tau}^N}_{L_p(D_T)} 
    \end{split}
    \end{equation}
    for a constant $C=C(d,p,N,T,D,\CL,q,\sigma)$.
    
    \textit{A bound for $\Nnormal{\uNhatt{\widetilde{\theta}^1}-\uNhatt{\widetilde{\theta}^2}}_{L_2([0,T],H^1(D))} + \Nnormal{\uNhatt{\widetilde{\theta}^1}-\uNhatt{\widetilde{\theta}^2}}_{L_\infty([0,T],L_2(D))}$.}
    By following the computations of \textit{Step 2c(ii)} in the proof of \Cref{lem:parabolic_wellposedness_infinitewidth} in \Cref{sec:WellPosedness_Proof} that lead to \eqref{eq:proof:lem:parabolic_wellposedness_infinitewidth:CONTRACTIVITY_uhattilde},
    it holds
    \begin{equation}        \label{eq:proof:lem:parabolic_wellposedness_N:CONTRACTIVITY_uhattilde}
    \begin{split}
        &\Nbig{\uNhatt{\widetilde{\theta}^1_\tau}-\uNhatt{\widetilde{\theta}^2_\tau}}_{L_2([0,T],H^1(D))} + \Nbig{\uNhatt{\widetilde{\theta}^1_\tau}-\uNhatt{\widetilde{\theta}^2_\tau}}_{L_\infty([0,T],L_2(D))} \\
        &\qquad\,\leq C\left(\Nbig{\uNt{\widetilde{\theta}^1_\tau}-\uNt{\widetilde{\theta}^2_\tau}}_{L_2(D_T)} + \Nbig{\uNhatt{\widetilde{\theta}^2_\tau}}_{L_\infty([0,T], L_2(D))}\Nbig{\uNt{\widetilde{\theta}^1_\tau}-\uNt{\widetilde{\theta}^2_\tau}}_{L_\infty(D_T)}\right)
    \end{split}
    \end{equation}
    for a constant $C=C(T,\CL,q)$.
    Thus, in particular,
    \begin{equation}        \label{eq:proof:lem:parabolic_wellposedness_N:CONTRACTIVITY_uhattilde_UNIFORM}
    \begin{split}
        &\Nbig{\uNhatt{\widetilde{\theta}^1}-\uNhatt{\widetilde{\theta}^2}}_{\CV_\CT}\\
        &\qquad\,\leq C\left(\sup_{\tau\in[0,\CT]}\!\Nbig{\uNt{\widetilde{\theta}^1_\tau}-\uNt{\widetilde{\theta}^2_\tau}}_{L_2(D_T)} \!+\!\!\! \sup_{\tau\in[0,\CT]}\Nbig{\uNhatt{\widetilde{\theta}^2_\tau}}_{L_\infty([0,T], L_2(D))}\!\sup_{\tau\in[0,\CT]}\!\Nbig{\uNt{\widetilde{\theta}^1_\tau}-\uNt{\widetilde{\theta}^2_\tau}}_{L_\infty(D_T)}\right).
    \end{split}
    \end{equation}

    \textit{Combination of the bounds.}
    Using \eqref{eq:proof:lem:parabolic_wellposedness_N:CONTRACTIVITY_utilde_UNIFORM} and \eqref{eq:proof:lem:parabolic_wellposedness_N:CONTRACTIVITY_uhattilde_UNIFORM} in the first inequality together with \eqref{eq:proof:lem:parabolic_wellposedness_N:CONTRACTIVITYuinfty5} in the second inequality as well as \eqref{eq:proof:lem:parabolic_wellposedness_N:CONTRACTIVITY_ctilde}--\eqref{eq:proof:lem:parabolic_wellposedness_N:CONTRACTIVITYuinfty3} in the third, we have
    \begin{allowdisplaybreaks}
    \begin{align}    \label{eq:proof:lem:parabolic_wellposedness_N:CONTRACTION}
        &\Nbig{(\widetilde{c}^1,\widetilde{n}^1,\widetilde{m}^1)-(\widetilde{c}^2,\widetilde{n}^2,\widetilde{m}^2)}_{\Theta_{\widetilde{\CT}}} + \Nbig{\uNt{\widetilde{\theta}^1}-\uNt{\widetilde{\theta}^2}}_{\CV_{\widetilde{\CT}}} + \Nbig{\uNhatt{\widetilde{\theta}^1}-\uNhatt{\widetilde{\theta}^2}}_{\CV_{\widetilde{\CT}}}\nonumber\\
        &\qquad\,\leq \Nbig{(\widetilde{c}^1,\widetilde{n}^1,\widetilde{m}^1)-(\widetilde{c}^2,\widetilde{n}^2,\widetilde{m}^2)}_{\Theta_{\widetilde{\CT}}} + C\sup_{\tau\in[0,\widetilde{\CT}]}\Nbig{\widetilde{g}_{\widetilde{\theta}^1_\tau}^N-\widetilde{g}_{\widetilde{\theta}^2_\tau}^N}_{L_2(D_T)}\nonumber\\
        &\qquad\,\quad\,+ C\sup_{\tau\in[0,\widetilde{\CT}]}\Nbig{\uNhatt{\widetilde{\theta}^2_\tau}}_{L_\infty([0,T], L_2(D))}\!\sup_{\tau\in[0,\widetilde{\CT}]}\!\Nbig{\uNt{\widetilde{\theta}^1_\tau}-\uNt{\widetilde{\theta}^2_\tau}}_{L_\infty(D_T)} \nonumber\\ 
        &\qquad\,\leq \Nbig{(\widetilde{c}^1,\widetilde{n}^1,\widetilde{m}^1)-(\widetilde{c}^2,\widetilde{n}^2,\widetilde{m}^2)}_{\Theta_{\widetilde{\CT}}} + \nonumber\\
        &\qquad\,\quad\,+ C\left(1+\sup_{\tau\in[0,\widetilde{\CT}]}\Nbig{\uNhatt{\widetilde{\theta}^2_\tau}}_{L_\infty([0,T], L_2(D))}\right) \sup_{\tau\in[0,\widetilde{\CT}]}\Nbig{\widetilde{g}_{\widetilde{\theta}^1_\tau}^N-\widetilde{g}_{\widetilde{\theta}^2_\tau}^N}_{L_2(D_T)} \nonumber\\ 
        &\qquad\,\leq C_2 \left(1+\sup_{\tau\in[0,\widetilde{\CT}]}\Nbig{\uNhatt{\widetilde{\theta}^2_\tau}}_{L_\infty([0,T], L_2(D))}\right) \left(1+\N{\theta_0} + \left(\int_0^{\widetilde{\CT}} \N{\uNhat{\theta_s}}_{L_2(D_T)}^4 ds\right)^{1/4}\right)\nonumber \\
        &\qquad\, \qquad\cdot\int_0^{\widetilde{\CT}} \sum_{i=1}^N \left(\absbig{c_s^{i,1}\!-\!c_s^{i,2}} \!+\!\Nbig{n^{i,1}_s\!-\!n^{i,2}_s}_{L_2(D_T)}\!+\!\Nbig{m^{i,1}_s\!-\!m^{i,2}_s}_{L_2(D_T)}\!+\!\Nbig{\uNhat{\theta^1_s}\!-\!\uNhat{\theta^2_s}}_{L_2(D_T)}\right)\nonumber\\
        &\qquad\,\qquad\qquad\qquad\cdot\left(1+\absbig{c_s^{i,2}}+\Nbig{\uNhat{\theta^1_s}}_{L_2(D_T)}+\absbig{c_s^{i,2}}\Nbig{\uNhat{\theta^1_s}}_{L_2(D_T)} \right) ds\nonumber\\
    \end{align}
    \end{allowdisplaybreaks}%
    for a constant $C_2=C_2(\alpha,N,T,D,\CL,q,\sigma)$.
    (Note that multiple higher-order (up to fourth-order) product terms appear on the right-hand side due to the NTK varying during training.)
    
    \textit{Step~2d: Existence locally in training time.}
    Let us choose
\begin{equation}  \label{eq:proof:lem:parabolic_wellposedness_N:M0}
    \begin{split}
        M_0
        &= 2C_1\!\left(\N{h}_{L_2(D_T)} \!+\! \N{f}_{L_2(D)} \!+\! \N{\theta_0} \!+\! 1\right)
    \end{split}
    \end{equation}
    and
    \begin{equation}        \label{eq:proof:lem:parabolic_wellposedness_N:CT0}
    \begin{split}
        \CT_0
        &= \min\left\{\frac{1}{2^5C_1^4},\frac{1}{2C_2c_2N(1+M_0)^4}\right\}
    \end{split}
    \end{equation}
    where the constants $C_1$ and $C_2$ are as given implicitly in \eqref{eq:proof:lem:parabolic_wellposedness_N:SELFMAPPING} and \eqref{eq:proof:lem:parabolic_wellposedness_N:CONTRACTION}, respectively, and where $c_2=\max\{1+\N{\theta_0},1/(2C_1)\}$.
    We show in what follows that there exists a unique solution $((c,n,m), \uN{\theta},\uNhat{\theta})\in \Theta_{\CT_0}(M_0) \times \CV_{\CT_0}(M_0)\times\CV_{\CT_0}(M_0)$.

    \textit{Step~2d(i): Self-mapping property of $F$.}
    Consider $((\widetilde{c},\widetilde{n},\widetilde{m}),\uNt{\theta},\uNhatt{\theta})$ together with its corresponding $((c,n,m),\uN{\theta},\uNhat{\theta}) \in \Theta_{\CT_0}(M_0) \times \CV_{\CT_0}(M_0)\times \CV_{\CT_0}(M_0)$.
    Using the definitions of $M_0$ and $\CT_0$ in \eqref{eq:proof:lem:parabolic_wellposedness_N:M0} and \eqref{eq:proof:lem:parabolic_wellposedness_N:CT0}, respectively, we can derive from \eqref{eq:proof:lem:parabolic_wellposedness_N:SELFMAPPING} that    \begin{equation}
    \begin{split}
        &\Nnormal{(\widetilde{c},\widetilde{n},\widetilde{m})}_{\Theta_{\CT_0}} + \N{\uNt{\theta}}_{\CV_{\CT_0}} + \Nbig{\uNhatt{\theta}}_{\CV_{\CT_0}} \\ 
        &\qquad\,\leq C_1 \left(\int_0^{\CT_0} \N{\uNhat{\theta_s}}_{L_2(D_T)}^4 ds\right)^{1/4} + C_1\left(\N{h}_{L_2(D_T)} + \N{f}_{L_2(D)} + \N{\theta_0} + 1\right) \\
        &\qquad\,\leq C_1 \CT_0^{1/4} \Nbig{\uNhat{\theta}}_{\CV_{\CT_0}} + \frac{M_0}{2}
        \leq C_1 \CT_0^{1/4} M_0 + \frac{M_0}{2}
        \leq \frac{M_0}{2} + \frac{M_0}{2} \leq M_0.
    \end{split}
    \end{equation}
    Thus, $((\widetilde{c},\widetilde{n},\widetilde{m}),\uNt{\theta},\uNhatt{\theta}) \in \Theta_{\CT_0}(M_0) \times \CV_{\CT_0}(M_0)\times \CV_{\CT_0}(M_0)$.

    \textit{Step~2d(ii): Contractivity of $F$.}
    Consider triples $((\widetilde{c}^1,\widetilde{n}^1,\widetilde{m}^1),\uNt{\widetilde{\theta}^1},\uNhatt{\widetilde{\theta}^1})$, $((\widetilde{c}^2,\widetilde{n}^2,\widetilde{m}^2),\uNt{\widetilde{\theta}^2},\uNhatt{\widetilde{\theta}^2})$ with their corresponding $((c^1,n^1,m^1),\uN{\theta^1},\uNhat{\theta^1}), ((c^2,n^2,m^2),\uN{\theta^2},\uNhat{\theta^2}) \in \Theta_{\CT_0}(M_0) \times \CV_{\CT_0}(M_0)\times \CV_{\CT_0}(M_0)$.
    According to \textit{Step~2d(i)}, we have  that $((\widetilde{c}^1,\widetilde{n}^1,\widetilde{m}^1),\uNt{\widetilde{\theta}^1},\uNhatt{\widetilde{\theta}^1}),$ $((\widetilde{c}^2,\widetilde{n}^2,\widetilde{m}^2),\uNt{\widetilde{\theta}^2},\uNhatt{\widetilde{\theta}^2}) \in \Theta_{\CT_0}(M_0) \times \CV_{\CT_0}(M_0)\times \CV_{\CT_0}(M_0)$.
    Using the definitions of $M_0$ and $\CT_0$ in \eqref{eq:proof:lem:parabolic_wellposedness_N:M0} and \eqref{eq:proof:lem:parabolic_wellposedness_N:CT0}, we can derive from \eqref{eq:proof:lem:parabolic_wellposedness_N:CONTRACTION} that    \begin{allowdisplaybreaks}
    \begin{align}
        &\Nbig{(\widetilde{c}^1,\widetilde{n}^1,\widetilde{m}^1)-(\widetilde{c}^2,\widetilde{n}^2,\widetilde{m}^2)}_{\Theta_{\CT_0}} + \Nbig{\uNt{\widetilde{\theta}^1}-\uNt{\widetilde{\theta}^2}}_{\CV_{\CT_0}} + \Nbig{\uNhatt{\widetilde{\theta}^1}-\uNhatt{\widetilde{\theta}^2}}_{\CV_{\CT_0}} \notag \\
        &\qquad\,\leq C_2 \left(1+\sup_{\tau\in[0,\CT_0]}\Nbig{\uNhatt{\widetilde{\theta}^2_\tau}}_{L_\infty([0,T], L_2(D))}\right) \left(1+\N{\theta_0} + \left(\int_0^{\CT_0} \N{\uNhat{\theta_s}}_{L_2(D_T)}^4 ds\right)^{1/4}\right) \notag \\
        &\qquad\, \qquad\cdot\int_0^{\CT_0} \sum_{i=1}^N \left(\absbig{c_s^{i,1}\!-\!c_s^{i,2}} \!+\!\Nbig{n^{i,1}_s\!-\!n^{i,2}_s}_{L_2(D_T)}\!+\!\Nbig{m^{i,1}_s\!-\!m^{i,2}_s}_{L_2(D_T)}\!+\!\Nbig{\uNhat{\theta^1_s}\!-\!\uNhat{\theta^2_s}}_{L_2(D_T)}\right) \notag \\
        &\qquad\,\qquad\qquad\qquad\cdot\left(1+\absbig{c_s^{i,2}}+\Nbig{\uNhat{\theta^1_s}}_{L_2(D_T)}+\absbig{c_s^{i,2}}\Nbig{\uNhat{\theta^1_s}}_{L_2(D_T)} \right) ds \notag \\
        &\qquad\,\leq C_2 \left(1+M_0\right) \left(1+\N{\theta_0} + \CT_0^{1/4}M_0\right) \notag \\
        &\qquad\, \qquad\cdot\int_0^{\CT_0} \sum_{i=1}^N \left(\absbig{c_s^{i,1}\!-\!c_s^{i,2}} \!+\!\Nbig{n^{i,1}_s\!-\!n^{i,2}_s}_{L_2(D_T)}\!+\!\Nbig{m^{i,1}_s\!-\!m^{i,2}_s}_{L_2(D_T)}\!+\!\Nbig{\uNhat{\theta^1_s}\!-\!\uNhat{\theta^2_s}}_{L_2(D_T)}\right) \notag \\
        &\qquad\,\qquad\qquad\qquad\cdot\left(1+2M_0+M_0^2\right) ds \notag\\
        &\qquad\,\leq C_2c_2 \left(1+M_0\right)^4 \notag \\
        &\qquad\, \qquad\cdot\int_0^{\CT_0} \sum_{i=1}^N \Big(\absbig{c_s^{i,1}\!-\!c_s^{i,2}} \!+\!\Nbig{n^{i,1}_s\!-\!n^{i,2}_s}_{L_2(D_T)}\!+\!\Nbig{m^{i,1}_s\!-\!m^{i,2}_s}_{L_2(D_T)}\notag\\
        &\qquad\, \qquad\qquad\qquad\qquad+\Nbig{\uNhat{\theta^1_s}\!-\!\uNhat{\theta^2_s}}_{L_2(D_T)}\Big) \,ds \notag\\
        &\qquad\,\leq C_2c_2 \left(1+M_0\right)^4 \CT_0 \left(\N{(c^1,n^1,m^1)-(c^2,n^2,m^2)}_{\Theta_{\CT_0}}+N\Nbig{\uNhat{\theta^1}-\uNhat{\theta^2}}_{\CV_{\CT_0}}\right) ds \notag\\
        &\qquad\,
        \leq \frac{1}{2} \left(\N{(c^1,n^1,m^1)-(c^2,n^2,m^2)}_{\Theta_{\CT_0}} + \Nbig{\uN{\theta^1}-\uN{\theta^2}}_{\CV_{\CT_0}} + \Nbig{\uNhat{\theta^1}-\uNhat{\theta^2}}_{\CV_{\CT_0}}\right),
    \end{align}
    \end{allowdisplaybreaks}
    showing that the map $F$ is a contraction.

    \textit{Step~2d(iii): Banach fixed point theorem.}
    Hence, the Banach fixed point theorem guarantees that there exists a unique solution $((c,n,m), \uN{\theta},\uNhat{\theta})\in \Theta_{\CT_0}(M_0) \times\CV_{\CT_0}(M_0)\times\CV_{\CT_0}(M_0)$,
    which satisfies $((c,n,m), \uN{\theta},\uNhat{\theta}) = F((c,n,m), \uN{\theta},\uNhat{\theta})$.
    We have thus established the existence of a unique local-in-training-time solution to the PDE system \eqref{eq:parabolicPDEN_plain}\,\&\,\eqref{eq:parabolicadjointN_plain} coupled with the gradient descent update~\eqref{eq:GD} on the training time domain $[0,\CT_0]$.
    In particular, $(\uN{\theta_\tau},\uNhat{\theta_\tau})\in \CS\times \CS$ for every $\tau\in[0,\CT_0]$.
    Reapplying the classical existence and regularity results from \textit{Steps 1a}, \textit{1b} and \textit{1c} further shows that for each $\tau\in[0,\CT_0]$ such solution satisfies $(\partial_t\uNv{\theta_\tau}{t,\dummy},\partial_t\uNhatv{\theta_\tau}{t,\dummy})\in L_2(D) \times L_2(D)$
    for a.e.\@ $t\in[0,T]$.
    
    \textit{Step~2e: Existence globally in training time.}
    Leveraging a bootstrapping argument, we now extend this argument to obtain a solution on a training time domain $[0,\CT]$ for an arbitrary $\CT<\infty$.
    To do so, we proceed inductively.
    Suppose we have a solution $(\uN{},\uNhat{})\in \CV_{\CT_{k-1}}(M_{k-1})\times\CV_{\CT_{k-1}}(M_{k-1})$ which is such that for each $\tau\in[0,\CT_{k-1}]$ it satisfies $(\partial_t\uNv{\tau}{t,\dummy},\partial_t\uNhatv{\tau}{t,\dummy})\in L_2(D) \times L_2(D)$
    for a.e.\@ $t\in[0,T]$. (We showed in \textit{Step~2d} before that this is the case for the induction start $k=1$.)

    On the training time interval $I = [0,\CT_{k-1}]$ we can now employ \Cref{lem:parabolictimeevolutionJtN} which ensures that $\fd{\tau}\JN{\theta_\tau}\leq0$ for all $\tau\in I=[0,\CT_{k-1}]$.
    Thanks to this, \Cref{lem:parabolic_uNhatL2} (applied in the setting $I=[0,\CT_{k-1}]$) provides a uniform (in the training time $\tau$ and on the training time interval $[0,\CT_{k-1}]$) bound $\sup_{\tau\in[0,\CT_{k-1}]}\N{\uNhat{\theta_\tau}}_{L_2(D_T)}\leq C^{\widehat{u}^N}$, where $C^{\widehat{u}^N}$ does not depend on $\CT_{k-1}$ but only on $\JN{\theta_0}$.
    Let us now choose
    \begin{equation}
    \label{eq:proof:lem:parabolic_wellposedness_N:Mk}
        M_k
        = k^{1/4} C^{\widehat{u}^N} + 2C_1\!\left(\N{h}_{L_2(D_T)} \!+\! \N{f}_{L_2(D)} \!+\! \N{\theta_0} \!+\! 1\right)
    \end{equation}
    (note that here we can leverage the slower growth of the worst-case bound and impose it in the proposed bound~$M_k$) and
    \begin{equation}
    \label{eq:proof:lem:parabolic_wellposedness_N:CTk}
        \CT_k
        = \CT_{k-1} + \min\left\{\frac{1}{2^5C_1^4},\frac{1}{2C_2c_2N(1+M_k)^4}\right\}.
    \end{equation}
    We show in what follows that there exists a unique solution $((c,n,m),\uN{\theta},\uNhat{\theta}) \in \Theta_{\CT_k}(M_k) \times \CV_{\CT_k}(M_k)\times \CV_{\CT_k}(M_k)$.

    \textit{Step~2e(i): Self-mapping property of $F$.}
    Consider $((\widetilde{c},\widetilde{n},\widetilde{m}),\uNt{\theta},\uNhatt{\theta})$ together with its corresponding $((c,n,m),\uN{\theta},\uNhat{\theta}) \in \Theta_{\CT_k}(M_k) \times \CV_{\CT_k}(M_k)\times \CV_{\CT_k}(M_k)$.
    Using the definitions of $M_k$ and $\CT_k$ in \eqref{eq:proof:lem:parabolic_wellposedness_N:Mk} and \eqref{eq:proof:lem:parabolic_wellposedness_N:CTk}, we can derive from \eqref{eq:proof:lem:parabolic_wellposedness_N:SELFMAPPING} that
    \begin{equation}       \label{eq:proof:lem:parabolic_wellposedness_N:SELFMAPPINGk}
    \begin{split}
        &\Nnormal{(\widetilde{c},\widetilde{n},\widetilde{m})}_{\Theta_{\CT_k}} + \N{\uNt{\theta}}_{\CV_{\CT_k}} + \Nbig{\uNhatt{\theta}}_{\CV_{\CT_k}} \\ 
        &\qquad\,\leq C_1 \left(\int_0^{\CT_{k}} \N{\uNhat{\theta_s}}_{L_2(D_T)}^4 ds\right)^{1/4} + C_1\left(\N{h}_{L_2(D_T)} + \N{f}_{L_2(D)} + \N{\theta_0} + 1\right) \\
        &\qquad\,\leq C_1 \left(\int_0^{\CT_{k-1}} \N{\uNhat{\theta_s}}_{L_2(D_T)}^4 ds + \int_{\CT_{k-1}}^{\CT_k} \N{\uNhat{\theta_s}}_{L_2(D_T)}^4 ds\right)^{1/4} \\
        &\qquad\,\quad\,+ C_1\left(\N{h}_{L_2(D_T)} + \N{f}_{L_2(D)} + \N{\theta_0} + 1\right) \\
        &\qquad\,\leq C_1 \left(\CT_{k-1}(C^{\widehat{u}^N})^4 + (\CT_k-\CT_{k-1})M_k^4\right)^{1/4} + \frac{M_k}{2} \\
        &\qquad\,\leq \left(\frac{1}{2^5} k (C^{\widehat{u}^N})^4 + \frac{1}{2^5}M_k^4\right)^{1/4} + \frac{M_k}{2} \leq \left(\frac{1}{2^5} M_k^4 + \frac{1}{2^5}M_k^4\right)^{1/4} + \frac{M_k}{2}
        \leq \frac{M_k}{2} + \frac{M_k}{2} \leq M_k,
    \end{split}
    \end{equation}
    where we used for the first step in the last line that (with $\CT_{-1}:=0$) according to the definition of $\CT_k$ in \eqref{eq:proof:lem:parabolic_wellposedness_N:CTk} it holds
    $\CT_{k-1} (C^{\widehat{u}^N})^4 = \sum_{\ell=0}^{k-1} (\CT_{\ell}-\CT_{\ell-1}) (C^{\widehat{u}^N})^4 \leq \frac{1}{2^5C_1^4} k (C^{\widehat{u}^N})^4.$
    Thus, $((\widetilde{c},\widetilde{n},\widetilde{m}),\uNt{\theta},\uNhatt{\theta}) \in \Theta_{\CT_k}(M_k)\times\CV_{\CT_k}(M_k)\times \CV_{\CT_k}(M_k)$.

    \textit{Step~2e(ii): Contractivity of $F$.}
    Consider triples $((\widetilde{c}^1,\widetilde{n}^1,\widetilde{m}^1),\uNt{\widetilde{\theta}^1},\uNhatt{\widetilde{\theta}^1})$, $((\widetilde{c}^2,\widetilde{n}^2,\widetilde{m}^2),\uNt{\widetilde{\theta}^2},\uNhatt{\widetilde{\theta}^2})$ with their corresponding $((c^1,n^1,m^1),\uN{\theta^1},\uNhat{\theta^1}), ((c^2,n^2,m^2),\uN{\theta^2},\uNhat{\theta^2}) \in \Theta_{\CT_k}(M_k) \times \CV_{\CT_k}(M_k)\times \CV_{\CT_k}(M_k)$.
    Using the definitions of $M_k$ and $\CT_k$ in \eqref{eq:proof:lem:parabolic_wellposedness_N:Mk} and \eqref{eq:proof:lem:parabolic_wellposedness_N:CTk}, we can derive from \eqref{eq:proof:lem:parabolic_wellposedness_N:CONTRACTION} that
    \begin{allowdisplaybreaks}
    \begin{align}\label{Eq:ContractivityFfiniteSystem}
        &\Nbig{(\widetilde{c}^1,\widetilde{n}^1,\widetilde{m}^1)-(\widetilde{c}^2,\widetilde{n}^2,\widetilde{m}^2)}_{\Theta_{\CT_k}} + \Nbig{\uNt{\widetilde{\theta}^1}-\uNt{\widetilde{\theta}^2}}_{\CV_{\CT_k}} + \Nbig{\uNhatt{\widetilde{\theta}^1}-\uNhatt{\widetilde{\theta}^2}}_{\CV_{\CT_k}} \nonumber\\
        &\qquad\,\leq C_2 \left(1+\sup_{\tau\in[0,\CT_k]}\Nbig{\uNhatt{\widetilde{\theta}^2_\tau}}_{L_\infty([0,T], L_2(D))}\right) \left(1+\N{\theta_0} + \left(\int_0^{\CT_k} \N{\uNhat{\theta_s}}_{L_2(D_T)}^4 ds\right)^{1/4}\right)\nonumber \\
        &\qquad\, \qquad\cdot\int_0^{\CT_k} \sum_{i=1}^N \left(\absbig{c_s^{i,1}\!-\!c_s^{i,2}} \!+\!\Nbig{n^{i,1}_s\!-\!n^{i,2}_s}_{L_2(D_T)}\!+\!\Nbig{m^{i,1}_s\!-\!m^{i,2}_s}_{L_2(D_T)}\!+\!\Nbig{\uNhat{\theta^1_s}\!-\!\uNhat{\theta^2_s}}_{L_2(D_T)}\right)\nonumber\\
        &\qquad\,\qquad\qquad\qquad\cdot\left(1+\absbig{c_s^{i,2}}+\Nbig{\uNhat{\theta^1_s}}_{L_2(D_T)}+\absbig{c_s^{i,2}}\Nbig{\uNhat{\theta^1_s}}_{L_2(D_T)} \right) ds \nonumber \\
        &\qquad\,\leq C_2c_2 \left(1+M_k\right)^4 \nonumber \\
        &\qquad\, \qquad\cdot\int_0^{\CT_k} \sum_{i=1}^N \Big(\absbig{c_s^{i,1}\!-\!c_s^{i,2}} \!+\!\Nbig{n^{i,1}_s\!-\!n^{i,2}_s}_{L_2(D_T)}\!+\!\Nbig{m^{i,1}_s\!-\!m^{i,2}_s}_{L_2(D_T)}\nonumber \\
        &\qquad\, \qquad\qquad\qquad\qquad\Nbig{\uNhat{\theta^1_s}\!-\!\uNhat{\theta^2_s}}_{L_2(D_T)}\Big) \,ds\nonumber \\
        &\qquad\,= C_2c_2 \left(1+M_k\right)^4 \nonumber \\
        &\qquad\, \qquad\cdot\int_{\CT_{k-1}}^{\CT_k} \sum_{i=1}^N \Big(\absbig{c_s^{i,1}\!-\!c_s^{i,2}} \!+\!\Nbig{n^{i,1}_s\!-\!n^{i,2}_s}_{L_2(D_T)}\!+\!\Nbig{m^{i,1}_s\!-\!m^{i,2}_s}_{L_2(D_T)}\nonumber \\
        &\qquad\, \qquad\qquad\qquad\qquad+\Nbig{\uNhat{\theta^1_s}\!-\!\uNhat{\theta^2_s}}_{L_2(D_T)}\Big)\, ds\nonumber \\&\qquad\,\leq C_2c_2 \left(1\!+\!M_k\right)^4 (\CT_k\!-\!\CT_{k-1}) \left(\N{(c^1,n^1,m^1)\!-\!(c^2,n^2,m^2)}_{\Theta_{\CT_k}}\!+\!N\Nbig{\uNhat{\theta^1}\!-\!\uNhat{\theta^2}}_{\CV_{\CT_k}}\right) ds\nonumber \\
        &\qquad\,
        \leq \frac{1}{2} \left(\N{(c^1,n^1,m^1)-(c^2,n^2,m^2)}_{\Theta_{\CT_k}} + \Nbig{\uN{\theta^1}-\uN{\theta^2}}_{\CV_{\CT_k}} + \Nbig{\uNhat{\theta^1}-\uNhat{\theta^2}}_{\CV_{\CT_k}}\right),
    \end{align}
    \end{allowdisplaybreaks}
    where the second step reuses from \eqref{eq:proof:lem:parabolic_wellposedness_N:SELFMAPPINGk} that $\big(\int_0^{\CT_{k}} \Nnormal{\uNhat{\theta_s}}_{L_2(D_T)}^4 ds\big)^{1/4} \leq M_k/(2C_1)$, while the third step is due to the uniqueness of the solution on the training time interval $[0,\CT_{k-1}]$.
    Thus, the map $F$ is a contraction.

    \textit{Step~2e(iii): Banach fixed point theorem.}
    Hence, the Banach fixed point theorem guarantees that there exists a unique solution $((c,n,m), \uN{\theta},\uNhat{\theta})\in \Theta_{\CT_k}(M_k) \times\CV_{\CT_k}(M_k)\times\CV_{\CT_k}(M_k)$,
    which satisfies $((c,n,m), \uN{\theta},\uNhat{\theta}) = F((c,n,m), \uN{\theta},\uNhat{\theta})$.
    We have thus established the existence of a unique local-in-training-time solution to the PDE system \eqref{eq:parabolicPDEN_plain}\,\&\,\eqref{eq:parabolicadjointN_plain} coupled with the gradient descent update~\eqref{eq:GD} on the training time domain $[0,\CT_k]$.
    In particular, $(\uN{\theta_\tau},\uNhat{\theta_\tau})\in \CS\times \CS$ for every $\tau\in[0,\CT_k]$.
    Reapplying the classical existence and regularity results from \textit{Steps 1a}, \textit{1b} and \textit{1c} further shows that for each $\tau\in[0,\CT_k]$ such solution satisfies $(\partial_t\uNv{\theta_\tau}{t,\dummy},\partial_t\uNhatv{\theta_\tau}{t,\dummy})\in L_2(D) \times L_2(D)$
    for a.e.\@ $t\in[0,T]$.

    \textit{Step~2e(iv): Globality of the construction in training time.}
    It remains to notice that, due to the definition of the times~$\CT_k$ in \eqref{eq:proof:lem:parabolic_wellposedness_N:CTk}, the telescopic sum
    \begin{equation}
        \label{eq:proof:lem:parabolic_wellposedness_N:harmonicseries}
    \begin{split}
        \sum_{k=1}^\infty (\CT_k-\CT_{k-1})
        &= \sum_{k=1}^\infty \min\left\{\frac{1}{2^5C_1^4},\frac{1}{2C_2c_2N(1+M_k)^4}\right\}
        \geq \sum_{k=K}^\infty \frac{1}{2C_2c_2N(1+M_k)^4}\\
        &= \sum_{k=K}^\infty \frac{1}{2C_2c_2N(1+k^{1/4} C^{\widehat{u}^N} + 2C_1(\N{h}_{L_2(D_T)} + \N{f}_{L_2(D)} +  \N{\theta_0} + 1))^4}
    \end{split}
    \end{equation}
    diverges.
    To see this, simply note that the last term in \eqref{eq:proof:lem:parabolic_wellposedness_N:harmonicseries} is a harmonic series for some sufficiently large integer $K$.
    This ensures that the above construction in \textit{Step~2d} and \textit{2e} gives a solution in $\CV_{\CT}(M)\times\CV_{\CT}(M)$ for any given $\CT<\infty$ and suitable associated $M>0$.
    (Note that here we exploit that we correctly balanced the appearing higher-order product terms with the slower worst-case growth, enabling to get a diverging series, and thus a global existence result.
    With only \eqref{eq:learning_rate} and without the additional assumption $\int_{0}^\infty\alpha_\tau^{4/3}\,d\tau<\infty$ on the learning rate, one would have only been able to get $M_k\propto k^{1/2}$, which would have lead to a geometric series of the form $\sum_{k=K}^\infty \frac{c}{1+k^2}$, which does not diverge, thus leading to no global existence result.)

    \textbf{Uniqueness.}
    It remains to prove the uniqueness of a solution $(\uN{\theta_\tau},\uNhat{\theta_\tau})$ to the PDE system \eqref{eq:parabolicPDEN_plain}\,\&\,\eqref{eq:parabolicadjointN_plain} coupled with the gradient descent update~\eqref{eq:GD}.
    For this purpose, suppose that there are two weak solutions $((c^1,n^1,m^1),\uN{\theta^1},\uNhat{\theta^1}), ((c^2,n^2,m^2),\uN{\theta^2},\uNhat{\theta^2}) \in \Theta_{\CT}(M) \times \CV_{\CT}(M)\times \CV_{\CT}(M)$.
    This means we have two $(\uN{\theta^1_\tau},\uNhat{\theta^1_\tau})$, $(\uN{\theta^2_\tau},\uNhat{\theta^2_\tau})\in \CS\times \CS$ which satisfy $(\partial_t\uNv{\theta_\tau^1}{t,\dummy},\partial_t\uNhatv{\theta_\tau^1}{t,\dummy}),(\partial_t\uNv{\theta_\tau^2}{t,\dummy},\partial_t\uNhatv{\theta_\tau^2}{t,\dummy})\in L_2(D) \times L_2(D)$ for a.e.\@ $t\in[0,T]$ and where $g_{\theta^1_\tau}^N,g_{\theta^2_\tau}^N\in L_2(D_T)$ denote the corresponding NN functions \eqref{eq:gN} for each $\tau\in[0,\CT]$.
    By repeating the computations of the existence proof in \textit{Step 2c(ii)} we obtain analogously to \eqref{eq:proof:lem:parabolic_wellposedness_N:CONTRACTIVITY_utilde} and \eqref{eq:proof:lem:parabolic_wellposedness_N:CONTRACTIVITYuinfty5} that
    \begin{equation}       \label{eq:proof:lem:parabolic_wellposedness_N:UNIQUENESS1}
        \Nbig{\uN{\theta^1_\tau}-\uN{\theta^2_\tau}}_{L_2([0,T],H^1(D))} + \Nbig{\uN{\theta^1_\tau}-\uN{\theta^2_\tau}}_{L_\infty([0,T],L_2(D))}
        \leq C\Nbig{g_{\theta^1_\tau}^N-g_{\theta^2_\tau}^N}_{L_2(D_T)}
    \end{equation}
    and for some $p>d+1$ that
    \begin{equation}        \label{eq:proof:lem:parabolic_wellposedness_N:UNIQUENESS2}
    \begin{split}
        \Nbig{\uN{\theta^1_\tau}-\uN{\theta^2_\tau}}_{L_\infty(D_T)}
        &\leq c(d,p)C\Nbig{g_{\theta^1_\tau}^N-g_{\theta^2_\tau}^N}_{L_p(D_T)},
    \end{split}
    \end{equation}
    as well as analogously to \eqref{eq:proof:lem:parabolic_wellposedness_N:CONTRACTIVITY_uhattilde} that 
    \begin{equation}        \label{eq:proof:lem:parabolic_wellposedness_N:UNIQUENESS3}
    \begin{split}
        &\Nbig{\uNhat{\theta^1_\tau}-\uNhat{\theta^2_\tau}}_{L_2([0,T],H^1(D))} + \Nbig{\uNhat{\theta^1_\tau}-\uNhat{\theta^2_\tau}}_{L_\infty([0,T],L_2(D))} \\
        &\qquad\,\leq C\left(\Nbig{\uN{\theta^1_\tau}-\uN{\theta^2_\tau}}_{L_2(D_T)} + M\Nbig{\uN{\theta^1_\tau}-\uN{\theta^2_\tau}}_{L_\infty(D_T)}\right),
    \end{split}
    \end{equation}
    where we used directly that $\uNhat{\theta^2_\tau}\in\CV_\CT(M)$.
    Since it holds    \begin{equation}
    \begin{split}
        &\Nbig{g_{\theta^1_\tau}^N-g_{\theta^2_\tau}^N}_{L_p(D_T)}\\
        &\qquad\,\leq C \int_0^\tau \sum_{i=1}^N \left(\absbig{c_s^{i,1}\!-\!c_s^{i,2}} \!+\!\Nbig{n^{i,1}_s\!-\!n^{i,2}_s}_{L_2(D_T)}\!+\!\Nbig{m^{i,1}_s\!-\!m^{i,2}_s}_{L_2(D_T)}\!+\!\Nbig{\uNhat{\theta^1_s}\!-\!\uNhat{\theta^2_s}}_{L_2(D_T)}\right)\\
        &\qquad\,\qquad\qquad\qquad\cdot(1+M)^2(1+\N{\theta_0} + \CT^{1/4}M) \, ds
    \end{split}
    \end{equation}
    by repeating the computations \eqref{eq:proof:lem:parabolic_wellposedness_N:CONTRACTIVITYuinfty3} of the existence proof in \textit{Step 2c(ii)} and using directly that $(c^i,n^i,m^i)\in\Theta_{\CT}(M)$ as well as  $\uNhat{\theta^i_\tau}\in\CV_\CT(M)$,
    and since it furthermore holds    \begin{equation}
    \begin{split}
        &\sum_{i=1}^N \left(\abs{c_\tau^{i,1}-c_\tau^{i,2}} + \N{n^{i,1}_\tau-n^{i,2}_\tau}_{L_2(D_T)} + \N{m^{i,1}_\tau-m^{i,2}_\tau}_{L_2(D_T)}\right)\\
        &\qquad\,\leq C \int_0^\tau \sum_{i=1}^N \left(\absbig{c_s^{i,1}\!-\!c_s^{i,2}} \!+\!\Nbig{n^{i,1}_s\!-\!n^{i,2}_s}_{L_2(D_T)}\!+\!\Nbig{m^{i,1}_s\!-\!m^{i,2}_s}_{L_2(D_T)}\!+\!\Nbig{\uNhat{\theta^1_s}\!-\!\uNhat{\theta^2_s}}_{L_2(D_T)}\right)\\
        &\qquad\,\qquad\qquad\qquad\cdot(1+M)^2 \, ds
    \end{split}
    \end{equation}
    by repeating and combining the computations \eqref{eq:proof:lem:parabolic_wellposedness_N:CONTRACTIVITY_ctilde}--\eqref{eq:proof:lem:parabolic_wellposedness_N:CONTRACTIVITY_mtilde} of the existence proof in \textit{Step 2c(ii)} and using directly that $(c^i,n^i,m^i)\in\Theta_{\CT}(M)$ as well as  $\uNhat{\theta^i_\tau}\in\CV_\CT(M)$,
    we get    \begin{equation}        \label{eq:proof:lem:parabolic_wellposedness_N:UNIQUENESS5}
    \begin{split}
        &\Nbig{g_{\theta^1_\tau}^N\!-\!g_{\theta^2_\tau}^N}_{L_2(D_T)} \!\!+\! \Nbig{g_{\theta^1_\tau}^N\!-\!g_{\theta^2_\tau}^N}_{L_p(D_T)} \!\!+\! \sum_{i=1}^N\! \left(\abs{c_\tau^{i,1}\!-\!c_\tau^{i,2}} \!+\! \N{n^{i,1}_\tau\!-\!n^{i,2}_\tau}_{L_2(D_T)} \!\!+\! \N{m^{i,1}_\tau\!-\!m^{i,2}_\tau}_{L_2(D_T)}\right)\\
        &\qquad\,\leq  C \int_0^\tau \sum_{i=1}^N \left(\absbig{c_s^{i,1}\!-\!c_s^{i,2}} \!+\!\Nbig{n^{i,1}_s\!-\!n^{i,2}_s}_{L_2(D_T)}\!\!+\!\Nbig{m^{i,1}_s\!-\!m^{i,2}_s}_{L_2(D_T)}\!\!+\!\Nbig{\uNhat{\theta^1_s}\!-\!\uNhat{\theta^2_s}}_{L_2(D_T)}\right) ds \\
        &\qquad\,\leq  C \int_0^\tau \sum_{i=1}^N \Big(\absbig{c_s^{i,1}\!-\!c_s^{i,2}} \!+\!\Nbig{n^{i,1}_s\!-\!n^{i,2}_s}_{L_2(D_T)}\!\!+\!\Nbig{m^{i,1}_s\!-\!m^{i,2}_s}_{L_2(D_T)}\\
        &\qquad\,\qquad\qquad\qquad+\Nbig{\uN{\theta^1_\tau}-\uN{\theta^2_\tau}}_{L_2(D_T)} + \Nbig{\uN{\theta^1_\tau}-\uN{\theta^2_\tau}}_{L_\infty(D_T)}\Big) \,ds \\
        &\qquad\,\leq  C \int_0^\tau \sum_{i=1}^N \Big(\absbig{c_s^{i,1}\!-\!c_s^{i,2}} \!+\!\Nbig{n^{i,1}_s\!-\!n^{i,2}_s}_{L_2(D_T)}\!\!+\!\Nbig{m^{i,1}_s\!-\!m^{i,2}_s}_{L_2(D_T)} \\
        &\qquad\,\qquad\qquad\qquad+\Nbig{g_{\theta^1_\tau}^N-g_{\theta^2_\tau}^N}_{L_2(D_T)} + \Nbig{g_{\theta^1_\tau}^N-g_{\theta^2_\tau}^N}_{L_p(D_T)}\Big) \,ds,
    \end{split}
    \end{equation}
    where $C$ may depend in particular on $M$ and $\CT$.
    
    Recalling that $\theta^1_0=\theta^2_0$ and consequently  $c^{i,1}_0=c^{i,2}_0$, $n^{i,1}_0=n^{i,2}_0$, $m^{i,1}_0=m^{i,2}_0$ as well as $g_{\theta^1_0}^N=g_{\theta^2_0}^N=0$,
    we can now employ Grönwall's inequality in its integral form to obtain
    \begin{equation}       \label{eq:proof:lem:parabolic_wellposedness_N:UNIQUENESS6}
        \Nbig{g_{\theta^1_\tau}^N-g_{\theta^2_\tau}^N}_{L_2(D_T)} + \Nbig{g_{\theta^1_\tau}^N-g_{\theta^2_\tau}^N}_{L_p(D_T)}
        = 0
    \end{equation}
    for every $\tau\in[0,\CT]$.
    With \eqref{eq:proof:lem:parabolic_wellposedness_N:UNIQUENESS1} and \eqref{eq:proof:lem:parabolic_wellposedness_N:UNIQUENESS3}
    we hence conclude that for every $\tau\in[0,\CT]$ it hold
    \begin{equation}
        \Nbig{\uN{\theta^1_\tau}-\uN{\theta^2_\tau}}_{L_2([0,T],H^1(D))} + \Nbig{\uN{\theta^1_\tau}-\uN{\theta^2_\tau}}_{L_\infty([0,T],L_2(D))}
        =0
    \end{equation}
    and
    \begin{equation}
        \Nbig{\uNhat{\theta^1_\tau}-\uNhat{\theta^2_\tau}}_{L_2([0,T],H^1(D))} + \Nbig{\uNhat{\theta^1_\tau}-\uNhat{\theta^2_\tau}}_{L_\infty([0,T],L_2(D))}
        =0.
    \end{equation}
    Thus $\Nbig{\uN{\theta^1_\tau}-\uN{\theta^2_\tau}}_{\CV_\CT}=0$ and $\Nbig{\uNhat{\theta^1_\tau}-\uNhat{\theta^2_\tau}}_{\CV_\CT}=0$, proving uniqueness in $\CV_\CT(M)$.
\end{proof}

We now provide auxiliary results that were used in the proof of \Cref{lem:parabolic_wellposedness_N}.
First, we establish that the loss $\JN{\theta_\tau}$ defined in \eqref{eq:parabolicJ} is monotonically non-increasing in the training time $\tau$.

\begin{lemma}[Decay of the loss $\JN{\theta_\tau}$]    \label{lem:parabolictimeevolutionJtN}
    Assume that the learning rate satisfies additionally $\int_{0}^\infty\alpha_\tau^{4/3}\,d\tau<\infty$.
    Let $((\uN{\theta_\tau},\uNhat{\theta_\tau}))_{\tau\in I}\in \CC\left(I,\CS\times\CS\right)$ denote the unique weak solution to the PDE system \eqref{eq:parabolicPDEN_plain}\,\&\,\eqref{eq:parabolicadjointN_plain} coupled with the gradient descent update~\eqref{eq:GD} in the sense of \Cref{lem:parabolic_wellposedness_N} on the training time interval $I$.
    Define the loss $\JN{\theta_\tau}$ as in \eqref{eq:parabolicJ}.
    Then, for the training time derivative $\fd{\tau}\JN{\theta_\tau}$ it holds
    \begin{equation}
    \begin{split}
        \fd{\tau}\JN{\theta_\tau}
        &= - \alpha_\tau (\uNhat{\theta_\tau},T_{B(\mu^N_\tau)}\uNhat{\theta_\tau})_{L_2(D_T)} \\
        &= -\alpha_\tau \int_0^T\!\!\!\int_D \uNhatv{\theta_\tau}{t,x} \int_0^T\!\!\!\int_D B(t,x,t',x';\mu^N_\tau)\uNhatv{\theta_\tau}{t',x'} \,dx'dt'dxdt
    \end{split}
    \end{equation}
    for all $\tau\in I$
    with the operator $T_{B(\mu^N_\tau)}$ defined in \eqref{eq:parabolicT_B} and where the kernel $B(\mu^N_\tau)$ is as in \eqref{eq:parabolicB}.
    In particular, we have $\fd{\tau}\JN{\theta_\tau}\leq 0$ for all $\tau\in I$.
\end{lemma}

\begin{proof}
    Taking the training time derivative of our loss $\JN{\theta_\tau}$, i.e., the derivative w.r.t.\@ the training time~$\tau$, 
    we obtain by chain rule and by using that $\uNhat{\theta_\tau}$ is a weak solution to the adjoint PDE~\eqref{eq:parabolicadjointN_plain} in the sense of \Cref{def:weak_sol_adjoint} with right-hand side $(\uN{\theta_\tau}-h)$ that
    \begin{equation}        \label{eq:proof:parabolicdtJN1}
    \begin{split}
        &\fd{\tau}\JN{\theta_\tau}
        = \fd{\tau} \frac{1}{2} \int_0^T\!\!\!\int_D (\uNv{\theta_\tau}{t,x} \!-\! h(t,x))^2 \,dxdt \\
        &\,\,\,\,\,= \int_0^T\!\!\!\int_D (\uNv{\theta_\tau}{t,x} \!-\! h(t,x)) \fd{\tau}\uNv{\theta_\tau}{t,\dummy} \, dxdt 
        = \int_0^T \!\left(\uNv{\theta_\tau}{t,\dummy} \!-\! h(t,\dummy), \fd{\tau}\uNv{\theta_\tau}{t,\dummy}\right)_{L_2(D)}\!dt\\
        &\,\,\,\,\,= \int_0^T \left\langle-\fpartial{t}\uNhatv{\theta_\tau}{t,\dummy},\fd{\tau}\uNv{\theta_\tau}{t,\dummy}\right\rangle_{H^{-1}(D),H_0^1(D)} + \CB^\dagger\left[\uNhatv{\theta_\tau}{t,\dummy},\fd{\tau}\uNv{\theta_\tau}{t,\dummy};t\right] \\
        &\,\,\,\,\, \qquad\quad\, - \left(q_u(t,\dummy,\uNv{\theta_\tau}{t,\dummy})\uNhatv{\theta_\tau}{t,\dummy}, \fd{\tau}\uNv{\theta_\tau}{t,\dummy}\right)_{L_2(D)} dt \\
        &\,\,\,\,\,= \int_0^T \left\langle\fpartial{t}\fd{\tau}\uNv{\theta_\tau}{t,\dummy},\uNhatv{\theta_\tau}{t,\dummy}\right\rangle_{H^{-1}(D),H_0^1(D)} + \CB\left[\fd{\tau}\uNv{\theta_\tau}{t,\dummy},\uNhatv{\theta_\tau}{t,\dummy};t\right] \\
        &\,\,\,\,\, \qquad\quad\, - \left(q_u(t,\dummy,\uNv{\theta_\tau}{t,\dummy})\fd{\tau}\uNv{\theta_\tau}{t,\dummy},\uNhatv{\theta_\tau}{t,\dummy}\right)_{L_2(D)} dt \\
        &\,\,\,\,\,= \int_0^T \left(\fd{\tau}g_{\theta_\tau}^N(t,\dummy),\uNhatv{\theta_\tau}{t,\dummy}\right)_{L_2(D)} dt
        = \int_0^T\!\!\!\int_D \left(\fd{\tau}g_{\theta_\tau}^N(t,x)\right)\uNhatv{\theta_\tau}{t,x}\, dxdt,
    \end{split}
    \end{equation}
    where the individual steps are analogous to the ones taken in \eqref{eq:proof:parabolicdtJ*1} in the proof of \Cref{lem:parabolictimeevolutionJt*}.
    
    Now, recalling the representation of the right-hand side~$g_{\theta_\tau}^N$ from \eqref{eq:parabolicgNtau} and taking its training time derivative to obtain $\fd{\tau} g_{\theta_\tau}^N = - \alpha_\tau T_{B(\mu^N_\tau)}\uNhat{\theta_\tau}$, as well as recalling the definition of the operator $T_{B(\mu^N_\tau)}$ from \eqref{eq:parabolicT_B},
    we can continue \eqref{eq:proof:parabolicdtJN1} to obtain
    \begin{equation}
    \begin{split}
        \fd{\tau}\JN{\theta_\tau}
        &= - \alpha_\tau \int_0^T\!\!\!\int_D \uNhatv{\theta_\tau}{t,x} \int_0^T\!\!\!\int_D B(t,x,t',x';\mu^N_\tau)\uNhatv{\theta_\tau}{t',x'} \,dx'dt'dxdt \\
        &= - \alpha_\tau (\uNhat{\theta_\tau},T_{B(\mu^N_\tau)}\uNhat{\theta_\tau})_{L_2(D_T)},
    \end{split}
    \end{equation}
    which concludes the first part of the proof.

    The second part now follows immediately thanks to the operator $T_{B(\mu^N_\tau)}$ being positive semi-definite for every $\tau\in I$,
    as can be seen by noting that
    with computations analogous to \eqref{eq:proof:lem:parabolicposdefTB:1}
    it holds
    \begin{equation}       
    \begin{split}
        (\widehat{u},T_{B(\mu^N_\tau)}\widehat{u})_{L_2(D_T)}
        &= \int_0^T\!\!\!\int_D \widehat{u}(t,x) \int_0^T\!\!\!\int_D B(t,x,t',x';\mu^N_\tau)\widehat{u}(t',x') \,dx'dt'\,dxdt\\
        &\geq \int \left(\int_0^T\!\!\!\int_D\sigma(w^tt+w^Tx+\eta)\widehat{u}(t,x)\,dxdt\right)^2d\mu^N_{\tau,(w^t,w,\eta)}(w^t,w,\eta) \\
        &\geq0,
    \end{split}
    \end{equation}
    which concludes the proof.
\end{proof}

An immediate consequence of the loss $\JN{\theta_\tau}$ being monotonically non-increasing,
are uniform (in the training time $\tau$) bounds on the $L_2$ norm of the PDE solution $\uN{\theta_\tau}$ and the $L_2([0,T],H^1(D))$ and $L_\infty([0,T],L_2(D))$ norm of the adjoint $\uNhat{\theta_\tau}$.

\begin{lemma}
    \label{lem:parabolic_uNL2}
    Assume that the learning rate satisfies additionally $\int_{0}^\infty\alpha_\tau^{4/3}\,d\tau<\infty$.
    Let $((\uN{\theta_\tau},\uNhat{\theta_\tau}))_{\tau\in I}\in \CC\left(I,\CS\times\CS\right)$ denote the unique weak solution to the PDE system \eqref{eq:parabolicPDEN_plain}\,\&\,\eqref{eq:parabolicadjointN_plain} coupled with the gradient descent update~\eqref{eq:GD} in the sense of \Cref{lem:parabolic_wellposedness_N} on the training time interval $I$.
    Then the solution~$\uN{\theta_\tau}$ is uniformly (in the training time $\tau$) bounded in $L_2(D_T)$ on that interval $I$, i.e., it holds
    \begin{equation}
        \sup_{\tau\in I}\N{\uN{\theta_\tau}}_{L_2(D_T)}\leq C^{u^N}
    \end{equation}
    for the constant $C^{u^N}=4\JN{\theta_0} + 2\N{h}_{L_2(D_T)}^2$.
\end{lemma}

\begin{proof}
    Using \Cref{lem:parabolictimeevolutionJtN}, the proof follows the one of \Cref{lem:parabolic_uL2}.
\end{proof}

\begin{lemma}
    \label{lem:parabolic_uNhatL2}
    Assume that the learning rate satisfies additionally $\int_{0}^\infty\alpha_\tau^{4/3}\,d\tau<\infty$.
    Let $((\uN{\theta_\tau},\uNhat{\theta_\tau}))_{\tau\in I}\in \CC\left(I,\CS\times\CS\right)$ denote the unique weak solution to the PDE system \eqref{eq:parabolicPDEN_plain}\,\&\,\eqref{eq:parabolicadjointN_plain} coupled with the gradient descent update~\eqref{eq:GD} in the sense of \Cref{lem:parabolic_wellposedness_N} on the training time interval $I$.
    Then the adjoint~$\uNhat{\theta_\tau}$ in \eqref{eq:parabolicadjointN_plain} is uniformly (in the training time $\tau$) bounded in $L_2([0,T],H^1(D))$ and $L_\infty([0,T],L_2(D))$ on that interval $I$,
    i.e., it holds
    \begin{equation}
        \sup_{\tau\in I} \left(\N{\uNhat{\theta_\tau}}_{L_2([0,T],H^1(D))} + \N{\uNhat{\theta_\tau}}_{L_\infty([0,T],L_2(D))} \right)
        \leq C^{\widehat{u}^N}
    \end{equation}
    for a constant $C^{\widehat{u}^N}=C^{\widehat{u}^N}(T,\CL,\J{0})$.
\end{lemma}

\begin{proof}
    Using \Cref{lem:parabolictimeevolutionJtN}, the proof follows the one of \Cref{lem:parabolic_uhatL2}.
\end{proof}

\addcontentsline{toc}{section}{References}
\bibliographystyle{abbrv}
\bibliography{biblio.bib}

\end{document}